\documentclass{article}

    \PassOptionsToPackage{numbers, sort&compress}{natbib}
    \usepackage[final]{neurips_2019}

\usepackage[table]{xcolor}
\usepackage[utf8]{inputenc} 
\usepackage[T1]{fontenc}    
\usepackage{hyperref}       
\usepackage{url}            
\usepackage{booktabs}       
\usepackage{amsfonts}       
\usepackage{nicefrac}       
\usepackage{microtype}      


\usepackage{amsfonts}
\usepackage{amsmath}
\usepackage{enumitem} 
\usepackage[pdftex]{graphicx}
\usepackage{epstopdf}

\usepackage{amsthm}
\usepackage{amssymb}
\usepackage{subcaption}

\usepackage{amsthm}
\usepackage{hyperref}
\usepackage{bm}

\usepackage{multirow}
\usepackage{ctable} 
\usepackage{floatrow}
\usepackage{float}
\restylefloat{table} 
\allowdisplaybreaks 

\usepackage{adjustbox}
\usepackage{wrapfig}


   \newcommand{\myComment}[1]{}  
\usepackage{appendix}




\def\diag{\text{diag}}

\def\norm#1{\left\|#1\right\|}

\def\<{\left<} \def\>{\right>}

\def\sym{\mathrm{sym}}
\newtheorem{theorem}{Theorem}

\newtheorem{corollary}{Corollary}

\newtheorem{lemma}{Lemma}
\DeclareMathOperator*{\argmin}{arg\,min}
\DeclareMathOperator*{\argmax}{arg\,max}

 \newenvironment{talign*}
 {\csname align*\endcsname}
 {\endalign}
 
 \def\y{\mathbf y}  
\def\u{\mathbf u} \def\v{\mathbf v}

\newcommand{\abs}[1]{\left|#1\right|}
\newcommand{\bchi}{\boldsymbol{\chi}}



\def\p{p_{\mathrm{in}}}
\def\q{p_{\mathrm{out}}}

\def\diag{\mathrm{diag}}
\def\one{\mathbf{1}}

\def\to{\rightarrow}
\def\R{\mathbb{R}}
\usepackage[ruled,linesnumbered]{algorithm2e}

\newcommand{\fillcaption}[1]{ 
\textbf{Figure \arabic{figure}:} #1 
\addtocounter {figure} {1} 
} 

\newif\ifpaper
\papertrue


\title{ 
Generalized Matrix Means for Semi-Supervised Learning with Multilayer Graphs }

\author{%
  Pedro Mercado$^1$, Francesco Tudisco$^2$ and Matthias Hein$^1$  \\
  $^1$University of T\"{u}bingen, Germany \\
  $^2$Gran Sasso Science Institute, Italy
}

\begin{document}

\maketitle

\begin{abstract}
We study the task of semi-supervised learning on multilayer graphs by taking into account both labeled and unlabeled observations together with the information encoded by each individual graph layer. We propose a regularizer based on the generalized matrix mean, which is a one-parameter family of matrix means that includes the arithmetic, geometric and harmonic means as particular cases. We analyze it in expectation under a Multilayer Stochastic Block Model and verify numerically that it outperforms state of the art methods. Moreover, we introduce a matrix-free numerical scheme based on contour integral quadratures and Krylov subspace solvers that scales to large sparse multilayer graphs.
\end{abstract}

\section{Introduction}
The task of graph-based Semi-Supervised Learning (SSL) is to build a classifier that takes into account both labeled and unlabeled observations, together with the information encoded by a given graph\cite{Subramanya:2014:GSL:2765801,Chapelle:2010:SL}.
A common and successful approach is to take a suitable loss function on the labeled nodes and a regularizer which provides information encoded by the graph~\cite{Zhou:2003:LLG:2981345.2981386,Zhu:2003:SLU:3041838.3041953,Belkin:2004,Yang:2016:RSL:3045390.3045396,kipf2017semi}. Whereas this task is well studied, traditionally these methods assume that the graph is composed by interactions of one single kind, i.e.\ only one graph is available. 

For the case where multiple graphs, or equivalently, multiple layers are available, the challenge is to 
boost the classification performance by merging the information encoded in each graph. 
The arguably most popular approach for this task consists of finding some form
of convex combination of graph matrices, where more informative graphs receive a larger weight~\cite{Tsuda:2005:FPC:1093772.1181531,zhou2007spectral,Argyriou:2005,Nie:2016:PAM:3060832.3060884,Karasuyama:2013,Kato:2009,pmlr-v89-viswanathan19a,DBLP:conf/pakdd/YeA18}. %

Note that a convex combination of graph matrices can be seen as a weighted arithmetic mean of graph matrices. 
In the context of multilayer graph clustering, previous studies~\cite{Mercado:2018:powerMean,Mercado:2016:Geometric,Mercado:2019:powerMean} have shown that weighted arithmetic means are suboptimal under certain benchmark generative graph models, whereas other matrix means, such as the geometric~\cite{Mercado:2016:Geometric} and harmonic means~\cite{Mercado:2018:powerMean}, are able to discover clustering structures that the arithmetic means overlook.

In this paper we study the task of semi-supervised learning with multilayer graphs with a novel regularizer based on the power mean Laplacian. The power mean Laplacian is a one-parameter family of Laplacian matrix means that includes as special cases the arithmetic, geometric and harmonic mean of Laplacian matrices.%
We show that in expectation under a Multilayer Stochastic Block Model, our approach 
provably correctly classifies unlabeled nodes in settings where state of the art approaches fail. In particular, a limit case of our method is provably robust against noise, yielding good classification performance as long as one layer is informative and remaining layers are potentially just noise. %
We verify the analysis in expectation with extensive experiments with random graphs, showing that our approach compares favorably with state of the art methods, yielding a good classification performance on several relevant settings where state of the art approaches fail.

Moreover, our approach scales to large datasets: even though the computation of the power mean Laplacian is in general prohibitive for large graphs, we present a matrix-free numerical scheme  based on integral quadratures methods and Krylov subspace solvers which allows us to apply the power mean Laplacian regularizer to large sparse graphs.
Finally, we perform numerical experiments on real world datasets and verify that our approach is competitive to state of the art approaches.

\section{The Power Mean Laplacian}\label{sec:power-mean-laplacian}
In this section we introduce our multilayer graph regularizer based on the power mean Laplacian.
We define a multilayer graph $\mathbb{G}$ with $T$ layers as the set $\mathbb{G}=\{G^{(1)},\ldots,G^{(T)}\}$, with each graph layer defined as $G^{(t)}=(V,W^{(t)})$, where
$V=\{v_1,\ldots,v_n\}$ is the node set and
$W^{(t)}\in\R^{n\times n}_+$ is the corresponding adjacency matrix, 
which we assume symmetric and  nonnegative.
We further denote the layers' normalized Laplacians as $L^{(t)}_\sym = I - (D^{(t)})^{-1/2}W(D^{(t)})^{-1/2}$, where $D^{(t)}$ is the degree diagonal matrix with $(D^{(t)})_{ii}=\sum_{j=1}^n W^{(t)}_{ij}$.

The scalar power mean is a one-parameter family of scalar means defined as
\begin{itemize}[topsep=-3pt,leftmargin=*]\setlength\itemsep{-3pt}
 \item[] \centering $m_p(x_1,\ldots,x_T)= ( \frac{1}{T} \sum_{i=1}^T x_i^p )^{1/p}$
\end{itemize}
where $x_1,\ldots,x_T$ are nonnegative scalars and $p$ is a real parameter. Particular choices of $p$ yield specific means such as the arithmetic, geometric and harmonic means, as illustrated in Table~\ref{table:powerMeans}.

The \textbf{Power Mean Laplacian}, introduced in~\cite{Mercado:2018:powerMean}, is 
a matrix extension of the scalar power mean applied to the Laplacians of a multilayer graph and proposed as a more robust way to blend the information encoded across the layers. It is defined as
\begin{itemize}[topsep=-3pt,leftmargin=*]\setlength\itemsep{-3pt}
 \item[] \centering $L_p = \left( \frac{1}{T} \sum_{i=1}^T  (L^{(i)}_\sym) ^p \right)^{1/p}$
\end{itemize}
where $A^{1/p}$ is the unique positive definite solution of the matrix equation $X^p=A$.
For the case $p\leq 0$ a small diagonal shift $\varepsilon>0$ is added to each Laplacian, i.e.\ 
we replace $L^{(i)}_\sym$ with
$L^{(i)}_\sym+\varepsilon$, to ensure that $L_p$ is well defined as suggested in~\cite{bhagwat_subramanian_1978}. 
In what follows all the proofs hold for an arbitrary shift.  Following~\cite{Mercado:2018:powerMean}, we set $\varepsilon=\log_{10}(1+\abs{p})+10^{-6}$ for $p\leq 0$ in the numerical experiments.
\begin{table}[t]
\centering
\small
\begin{tabular}{cccccc}
name & minimum & harmonic mean & geometric mean& arithmetic mean & maximum
\\\hline 
$p$  & $p\to -\infty$ & $p=-1$ & $p\to 0$ & $p=1$ & $p\to \infty$
\\\hline 
$m_p(a,b)$ & $\min \{a,b\}$ & $2\big(\frac 1 a + \frac 1 b\big)^{-1}$ & $\sqrt{ab}$ & $(a+b)/2$ & $\max \{a,b\}$
\\
\noalign{\vskip 1pt}    
\hline
\end{tabular}
\caption{Particular cases of scalar power means}
\label{table:powerMeans}
\vspace{-5pt}
\end{table}

\section{Multilayer Semi-Supervised Learning with the Power Mean Laplacian}
In this paper we consider the following optimization problem for the task of semi-supervised learning in multilayer graphs: Given $k$ classes $r=1,\dots,k$ and membership vectors $Y^{(r)}\in\R^{n}$ 
defined by
$Y^{(r)}_i=1$ if node $v_i$ belongs to class $r$ and $Y^{(r)}_i=0$ otherwise, we let 
\begin{align}\label{local-global-optimization-problem-rephrased-power-mean-laplacian}
  f^{(r)} = \argmin_{f\in\R^n} \| f-Y^{(r)}\|^2 + \lambda f^T L_p f \, .
\end{align}
The final class assignment for an unlabeled node $v_i$ is
$y_i = \argmax \{ f^{(1)}_i, \ldots, f^{(k)}_i \}$.
Note that the solution $f$ of~\eqref{local-global-optimization-problem-rephrased-power-mean-laplacian}, for a particular class $r$, is such that $(I + \lambda L_p)f = Y^{(r)}$.
Equation~\eqref{local-global-optimization-problem-rephrased-power-mean-laplacian} has
two terms: the first term is a loss function based on the labeled nodes whereas the second term is a regularization term based on the power mean Laplacian $L_p$, which accounts for the multilayer graph structure.
It is worth noting that the Local-Global approach of~\cite{Zhou:2003:LLG:2981345.2981386} 
is a particular case of our approach when only one layer ($T=1$) is considered. Moreover, not that when $p=1$ we obtain a regularizer term based on the arithmetic mean of Laplacians $L_1=\frac{1}{T}\sum_{i=1}^T L_\sym^{(i)}$.
In the following section we analyze our proposed approach~\eqref{local-global-optimization-problem-rephrased-power-mean-laplacian} under the Multilayer Stochastic Block Model.

\section{Multilayer Stochastic Block Model}\label{Section:SBM}
In this section we provide an analysis of semi-supervised learning for multilayer graphs with the power mean Laplacian as a regularizer under the Multilayer Stochastic Block Model (\textbf{MSBM}).
The MSBM is a generative model for graphs showing certain prescribed clusters/classes structures via a set of membership parameters $\p^{(t)}$ and $\q^{(t)}$, $t=1,\dots,T$. These parameters designate the edge probabilities: given nodes $v_i$ and $v_j$ the probability of observing an edge between them on layer $t$ is $\p^{(t)}$ (resp. $\q^{(t)}$), if $v_i$ and $v_j$ belong to the same (resp. different) cluster/class. %
Note that, unlike the Labeled Stochastic Block Model~\cite{heimlicher2012community}, the MSBM allows multiple edges between the same pairs of nodes across the layers.
For SSL with one layer under the SBM we refer the reader to~\cite{Saade_2018,Kanade:2016,Mossel:2016:LAB:2840728.2840749}.

We present an analysis in expectation. We consider $k$ clusters/classes $\mathcal{C}_1,\ldots,\mathcal{C}_k$ of equal size $\abs{\mathcal{C}}=n/k$. 
%
%
We denote with calligraphic letters the layers of a multilayer graph in expectation $E(\mathbb{G})=\{E(G^{(1)},\ldots,E(G^{(T)})\}$,
%
%
i.e. $\mathcal{W}^{(t)}$ is the expected adjacency matrix of the $t^{th}$-layer. We assume that our multilayer graphs are non-weighted, i.e. edges are zero or one, and hence we have $\mathcal{W}^{(t)}_{ij}=\p^{(t)}$, (resp.\ $\mathcal{W}^{(t)}_{ij}=\q^{(t)}$) for nodes $v_i,v_j$ belonging to the same (resp. different) cluster/class.

In order to grasp how different methods classify the nodes  in  multilayer  graphs  following  the  MSBM  we analyze two different settings.  In the first setting (Section \ref{sec:complementaryInformation}) all layers have the same class structure and we study the 
conditions for different regularizers $L_p$ to correctly predict class labels. We further show that our approach is robust against the presence of noise layers, in the sense that it achieves a small classification error when at least one layer is informative and the remaining layers are potentially just noise. In this setting we distinguish the case where each class has the same amount of initial labels  and the case where different classes have different number of labels.   In  the  second  setting (Section \ref{sec:independentInformation})  we consider  the case where each layer taken alone would lead to a large classification error whereas considering all the layers together can lead to a small classification error. 

\subsection{Complementary Information Layers}\label{sec:complementaryInformation}
A common assumption in multilayer semi-supervised learning is that at least one layer encodes relevant information in the label prediction task. 
The next theorem  
discusses the classification error of the expected  power mean Laplacian regularizer in this setting.

\begin{theorem}\label{theorem:generalization-equallySized-equallyLabelled}
 Let $E(\mathbb{G})$ be the expected multilayer graph with $T$ layers following the multilayer SBM with
 $k$ 
 classes 
 $\mathcal{C}_1, \ldots, \mathcal{C}_k$ of equal size and parameters $\left(\p^{(t)},\q^{(t)}\right)_{t=1}^{T}$. 
 Assume the same number of labeled nodes are available per class.
 Then, the solution of \eqref{local-global-optimization-problem-rephrased-power-mean-laplacian} yields zero test error if and only if 
\begin{align}\label{eq:condition_p}
  m_p(\boldsymbol{\rho_\epsilon}) < 1+\epsilon\, ,
\end{align}
 where $(\boldsymbol{\rho_\epsilon})_t = 1-(\p^{(t)} - \q^{(t)})/(\p^{(t)} + (k-1)\q^{(t)})+\epsilon$, and $t=1,\ldots,T$.
 \end{theorem}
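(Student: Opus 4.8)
The plan is to exploit the fact that, in expectation, every layer shares the same block structure dictated by the common class partition, so that all the expected Laplacians (and hence $L_p$) are simultaneously diagonalizable, reducing the whole statement to a scalar spectral comparison. First I would compute the expected normalized Laplacian of layer $t$. Since the expected adjacency matrix $\mathcal{W}^{(t)}$ equals $\p^{(t)}$ on within-class pairs and $\q^{(t)}$ on between-class pairs, every node has the same expected degree $d^{(t)}=(n/k)(\p^{(t)}+(k-1)\q^{(t)})$, so $\mathcal{D}^{(t)}=d^{(t)}I$ and $L^{(t)}_\sym = I-\frac{1}{d^{(t)}}\mathcal{W}^{(t)}$. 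Writing $\mathcal{W}^{(t)}=(\p^{(t)}-\q^{(t)})J+\q^{(t)}\one\one^T$, with $J$ the $n\times n$ block-diagonal all-ones matrix of the partition, I read off three common eigenspaces that do not depend on $t$: the span of $\one$, the $(k-1)$-dimensional space $V_1$ of vectors that are constant on each class and orthogonal to $\one$, and the $(n-k)$-dimensional space $V_2$ of vectors summing to zero on each class. On these spaces the shifted Laplacian $L^{(t)}_\sym+\epsilon I$ has eigenvalues $\epsilon$, $(\boldsymbol{\rho_\epsilon})_t=1-(\p^{(t)}-\q^{(t)})/(\p^{(t)}+(k-1)\q^{(t)})+\epsilon$, and $1+\epsilon$, respectively.

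Because these eigenspaces are independent of $t$, the matrices $L^{(t)}_\sym+\epsilon I$ commute, and the matrix power mean acts eigenspace-by-eigenspace as the scalar power mean $m_p$ of the corresponding eigenvalues. Hence $L_p$ shares the same three eigenspaces, with eigenvalues $\mu_0=\epsilon$ on $\sp(\one)$, $\mu_1=m_p(\boldsymbol{\rho_\epsilon})$ on $V_1$, and $\mu_2=1+\epsilon$ on $V_2$. Next I would solve the optimality system $f^{(r)}=(I+\lambda L_p)^{-1}Y^{(r)}$ by decomposing each label vector $Y^{(r)}$, supported on the $\ell$ labeled nodes of class $r$, into these eigenspaces: its projections are $\frac{\ell}{n}\one$ onto $\sp(\one)$, then $\frac{\ell}{\abs{\mathcal C}}\chi_{\mathcal{C}_r}-\frac{\ell}{n}\one$ onto $V_1$, and the remainder onto $V_2$. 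Here the assumption of an equal number $\ell$ of labels per class is exactly what makes the $\sp(\one)$ and $V_1$ contributions identical across $r$.

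Applying $(1+\lambda\mu_j)^{-1}$ on the $j$-th eigenspace and evaluating at an unlabeled node $v_i$ of class $c$, every term common to all $r$ cancels in the comparison, leaving, for each $s\neq c$, the margin $f^{(c)}_i-f^{(s)}_i=\frac{\ell}{\abs{\mathcal C}}\big(\tfrac{1}{1+\lambda\mu_1}-\tfrac{1}{1+\lambda\mu_2}\big)$. This expression is identical for every unlabeled node by the symmetry of the expected model. Since $\lambda>0$ and $\frac{\ell}{\abs{\mathcal C}}>0$, this margin is positive exactly when $\mu_1<\mu_2$, zero when $\mu_1=\mu_2$, and negative otherwise; as the class assignment is the argmax of $f^{(1)}_i,\ldots,f^{(k)}_i$, zero test error holds if and only if the margin is strictly positive at every unlabeled node, i.e. $m_p(\boldsymbol{\rho_\epsilon})<1+\epsilon$. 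This is the claimed equivalence.

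The main obstacle I anticipate is the first part: rigorously justifying that the matrix power mean of the commuting shifted Laplacians acts as the scalar power mean on each shared eigenspace, including the $p\le 0$ regime where the $1/p$-th matrix root and the diagonal shift must be handled through the defining equation $X^p=A$ (which fixes $L_p$ as the unique positive definite solution). Once this simultaneous diagonalization is established, the remaining steps, namely the eigenspace decomposition of $Y^{(r)}$ and the margin evaluation, are routine linear-algebra bookkeeping.
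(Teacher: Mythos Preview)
Your proposal is correct and follows essentially the same route as the paper: both arguments rest on the simultaneous diagonalization of the expected shifted layer Laplacians, the fact that the matrix power mean acts as the scalar power mean on each common eigenspace (the paper's Lemma~\ref{lemma:eigenvalues_and_eigenvectors_of_generalized_mean_V2} and Lemma~\ref{lemma:eigVecs-eigVals-PowerMean}), and then a comparison of the entries of $f^{(r)}=(I+\lambda L_p)^{-1}Y^{(r)}$ at unlabeled nodes. Your eigenspace-level decomposition of $Y^{(r)}$ is slightly more streamlined than the paper's explicit basis $\bchi_1,\ldots,\bchi_k$ and its three-case analysis via Lemma~\ref{lemma:solutionMatrix-F-supp}, but the key quantity you isolate, $\tfrac{\ell}{|\mathcal C|}\big(\tfrac{1}{1+\lambda\mu_1}-\tfrac{1}{1+\lambda\mu_2}\big)$, is exactly the paper's $\beta$ up to a positive factor, and both proofs conclude by observing that $\beta>0\Leftrightarrow m_p(\boldsymbol{\rho_\epsilon})<1+\epsilon$.
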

This theorem shows that  the power mean Laplacian regularizer allows to correctly classify the nodes if $p$ is such that condition \eqref{eq:condition_p} holds. In order to better understand how this condition changes when $p$ varies, we analyze in the next corollary the limit cases $p\to\pm \infty$.
 \begin{corollary}\label{corollary:limit_cases}
   Let $E(\mathbb{G})$ be an expected multilayer graph as in Theorem~\ref{theorem:generalization-equallySized-equallyLabelled}.
 Then,
 \begin{itemize}[topsep=-3pt,leftmargin=*]\setlength\itemsep{-3pt}
 \item For $p\to\infty$, the test error is zero if and only if $\q^{(t)}<\p^{(t)}$ for \textit{all} $t=1,\ldots,T$.
 \item For $p\!\to\! -\infty$,  the test error is zero if and only there exists a $t\!\in\!\{1,\ldots,T\}$ such that $\q^{(t)}<\p^{(t)}$.
\end{itemize}
\end{corollary}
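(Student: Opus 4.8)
The plan is to read off both limits directly from the if-and-only-if characterization in Theorem~\ref{theorem:generalization-equallySized-equallyLabelled}, using only the limiting behavior of the scalar power mean recorded in Table~\ref{table:powerMeans}. First I would abbreviate $r_t = (\p^{(t)}-\q^{(t)})/(\p^{(t)}+(k-1)\q^{(t)})$, so that $(\boldsymbol{\rho_\epsilon})_t = 1 - r_t + \epsilon$ and condition~\eqref{eq:condition_p} reads $m_p(\boldsymbol{\rho_\epsilon}) < 1+\epsilon$. Since $\p^{(t)},\q^{(t)}\ge 0$ are edge probabilities with $\p^{(t)}+(k-1)\q^{(t)}>0$, the denominator of $r_t$ is strictly positive, so the sign of $r_t$ is governed entirely by $\p^{(t)}-\q^{(t)}$; in particular $r_t>0$ is equivalent to $\q^{(t)}<\p^{(t)}$. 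This single observation is what converts the analytic condition into the combinatorial statements in the two bullets.

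Second, I would invoke the two limits $\lim_{p\to\infty} m_p(x_1,\dots,x_T)=\max_t x_t$ and $\lim_{p\to-\infty} m_p(x_1,\dots,x_T)=\min_t x_t$. For $p\to\infty$ condition~\eqref{eq:condition_p} turns into $\max_t(\boldsymbol{\rho_\epsilon})_t<1+\epsilon$; substituting $(\boldsymbol{\rho_\epsilon})_t = 1-r_t+\epsilon$ and cancelling $1+\epsilon$ this becomes $\max_t(-r_t)<0$, i.e.\ $\min_t r_t>0$, i.e.\ $r_t>0$ for every $t$, which by the previous paragraph is exactly $\q^{(t)}<\p^{(t)}$ for all $t$. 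For $p\to-\infty$ the condition becomes $\min_t(\boldsymbol{\rho_\epsilon})_t<1+\epsilon$, i.e.\ $\max_t r_t>0$, i.e.\ $r_t>0$ for at least one $t$, which is the existence of some $t$ with $\q^{(t)}<\p^{(t)}$. These are precisely the two claimed characterizations.

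The one point that needs genuine care, and which I expect to be the main obstacle, is that Theorem~\ref{theorem:generalization-equallySized-equallyLabelled} is stated for each finite $p$, so replacing $m_p$ by its $p\to\pm\infty$ limit inside the strict inequality must be justified rather than assumed. Here I would rely on the monotonicity of $p\mapsto m_p(\boldsymbol{\rho_\epsilon})$ together with the fact that the entries $(\boldsymbol{\rho_\epsilon})_t$ are fixed strictly positive scalars independent of $p$: the supremum over $p$ equals $\max_t(\boldsymbol{\rho_\epsilon})_t$ and the infimum equals $\min_t(\boldsymbol{\rho_\epsilon})_t$, while the right-hand side $1+\epsilon$ does not depend on $p$. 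Checking that no boundary degeneracy arises when some $r_t=0$ (equivalently $\p^{(t)}=\q^{(t)}$), so that strictness is neither spuriously gained nor lost as $p\to\pm\infty$, is the only place where I would argue carefully rather than by direct substitution; note that the claimed characterizations correspond to the strict regime $\min_t r_t>0$ (resp.\ $\max_t r_t>0$), which keeps the limiting inequality strict and hence consistent.
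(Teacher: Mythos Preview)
Your proposal is correct and follows essentially the same route as the paper: the paper's proof simply notes the limits $\lim_{p\to\pm\infty} m_p = \max/\min$ and substitutes them into condition~\eqref{eq:condition_p}, without any further detail. Your argument is in fact more careful than the paper's, since you explicitly unpack the sign equivalence $r_t>0 \Leftrightarrow \q^{(t)}<\p^{(t)}$ and flag the issue of passing a strict inequality to the limit (which the paper leaves implicit).
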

This corollary implies that the limit case $p\to\infty$ requires that \textit{all layers} convey information regarding the clustering/class structure of the multilayer graph, whereas the case $p\to-\infty$ requires that \textit{at least one layer} encodes clustering/class information,
and hence it is clear that conditions for the limit $p\to-\infty$ are less restrictive than the conditions for the limit case $p\to \infty$.
The next Corollary shows that the smaller the power parameter $p$ is, the less restrictive are the conditions to yield a zero test error. 
\begin{corollary}\label{corollary:contention}
 Let $E(\mathbb{G})$ be an expected multilayer graph as in Theorem~\ref{theorem:generalization-equallySized-equallyLabelled}.
 Let $p\leq q$. If $\mathcal{L}_q$ yields zero test error, then $\mathcal{L}_p$ yields a zero test error.
\end{corollary}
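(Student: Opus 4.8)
The plan is to reduce the statement to a purely scalar monotonicity property of the power mean. By Theorem~\ref{theorem:generalization-equallySized-equallyLabelled}, for any real power $s$ the regularizer $\mathcal{L}_s$ yields zero test error if and only if $m_s(\boldsymbol{\rho_\epsilon}) < 1+\epsilon$, where the argument vector $\boldsymbol{\rho_\epsilon}$, with entries $(\boldsymbol{\rho_\epsilon})_t = 1-(\p^{(t)}-\q^{(t)})/(\p^{(t)}+(k-1)\q^{(t)})+\epsilon$, depends only on the MSBM parameters and on the (fixed) shift $\epsilon$, and in particular does \emph{not} depend on the power $s$. Hence the conditions characterizing zero test error for $\mathcal{L}_p$ and for $\mathcal{L}_q$ differ only in the value of the power used to form the scalar power mean of one and the same positive vector, and the whole claim collapses to: if $m_q(\boldsymbol{\rho_\epsilon}) < 1+\epsilon$ and $p\le q$, then $m_p(\boldsymbol{\rho_\epsilon}) < 1+\epsilon$.

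The key tool is the classical monotonicity of the scalar power mean in its parameter: for a fixed vector $x=(x_1,\ldots,x_T)$ with positive entries, the map $s\mapsto m_s(x)$ is non-decreasing, so that $p\le q$ implies $m_p(x)\le m_q(x)$. First I would check that the argument vector is admissible, i.e.\ that each entry is strictly positive: since $(\p^{(t)}-\q^{(t)})/(\p^{(t)}+(k-1)\q^{(t)})\le 1$ (equivalently $0\le k\,\q^{(t)}$), every entry satisfies $(\boldsymbol{\rho_\epsilon})_t\ge\epsilon>0$, so $m_s(\boldsymbol{\rho_\epsilon})$ is well defined for every $s$. I would then invoke the power mean inequality $m_p(\boldsymbol{\rho_\epsilon})\le m_q(\boldsymbol{\rho_\epsilon})$, or, if a self-contained argument is preferred, derive it from Jensen's inequality applied to the power map $t\mapsto t^{q/p}$, whose convexity/concavity changes with the sign of $q/p$ and yields the stated ordering.

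Combining the two steps, $p\le q$ gives $m_p(\boldsymbol{\rho_\epsilon})\le m_q(\boldsymbol{\rho_\epsilon}) < 1+\epsilon$, so by Theorem~\ref{theorem:generalization-equallySized-equallyLabelled} the regularizer $\mathcal{L}_p$ also yields zero test error, which is the claim. The only genuinely delicate point is the monotonicity step, and in particular arguing it uniformly across the sign change at $s=0$, where $m_0$ is the geometric mean obtained as a limit, as well as in the limiting regimes $s\to\pm\infty$; this is why I would state and use monotonicity over the full extended parameter range rather than piecewise. Everything else is bookkeeping, and it is precisely the independence of $\boldsymbol{\rho_\epsilon}$ from the power $s$ that reduces the corollary to this one scalar inequality.
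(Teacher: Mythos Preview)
Your proposal is correct and follows essentially the same route as the paper: invoke Theorem~\ref{theorem:generalization-equallySized-equallyLabelled} to translate zero test error into the scalar condition $m_s(\boldsymbol{\rho_\epsilon})<1+\epsilon$, then use the monotonicity of the scalar power mean in its exponent to conclude $m_p(\boldsymbol{\rho_\epsilon})\le m_q(\boldsymbol{\rho_\epsilon})<1+\epsilon$. The paper's proof is slightly terser (it simply cites the power mean inequality as a known theorem rather than discussing positivity of the entries or the $s=0$ limit), but the argument is the same.
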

The previous results show the effectivity of  the power mean Laplacian regularizer in expectation. 
We now present a numerical evaluation  based on Theorem~\ref{theorem:generalization-equallySized-equallyLabelled} and Corollaries~\ref{corollary:limit_cases} and~\ref{corollary:contention} on random graphs sampled from the SBM. 
The corresponding results are presented in Fig.~\ref{figure:SBM:2layers} for classification with regularizers $L_{-10},L_{-1},L_{0},L_{1},L_{10}$ and $\lambda=1$.
\begin{figure}[t]
 \centering
 \textbf{Classification Error\\}
\includegraphics[width=0.3\linewidth, clip,trim=200 50 150 480]{./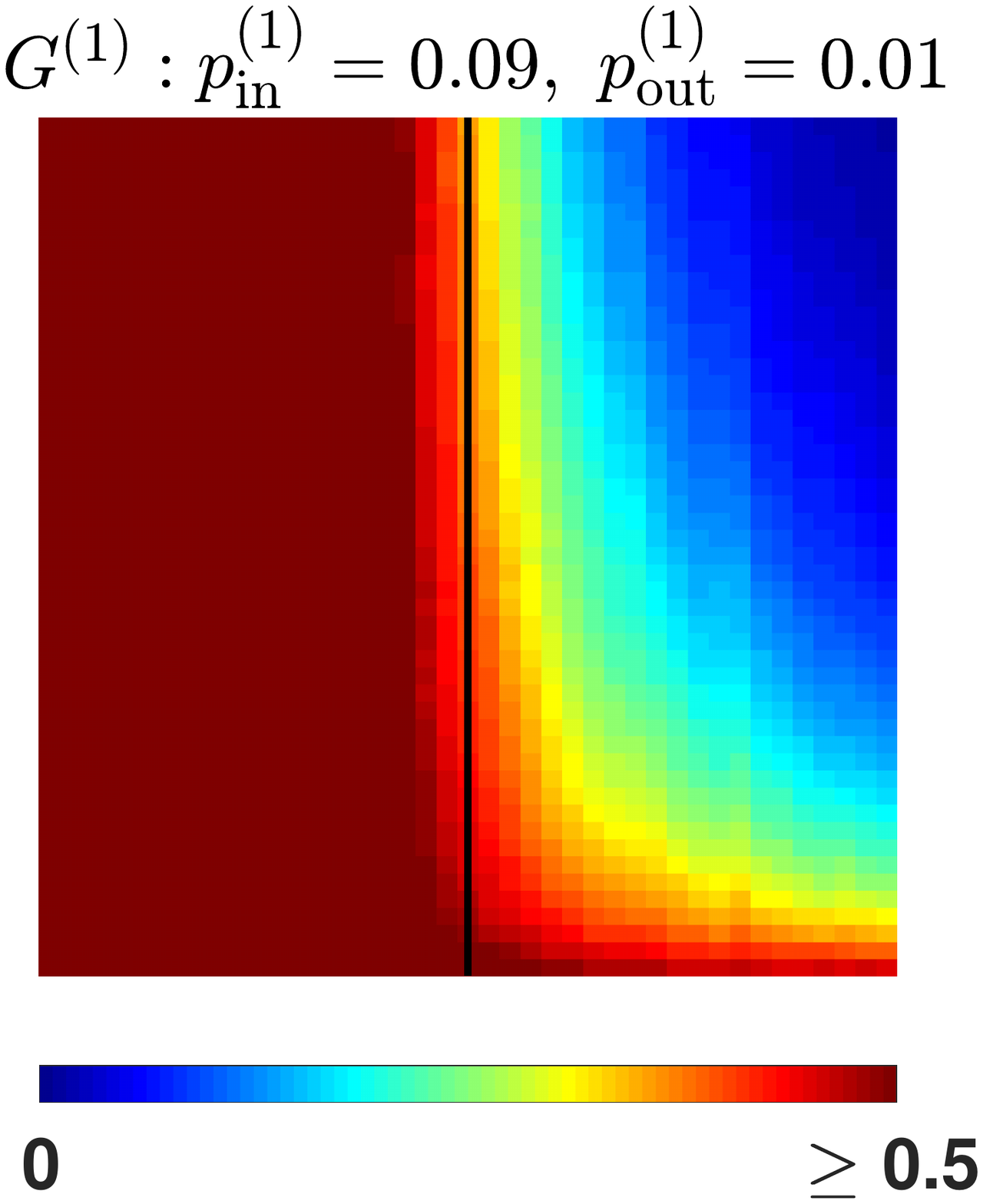}
\vspace{-5pt}
\\
 \hfill
 \begin{subfigure}[]{0.18\linewidth}
 \includegraphics[width=1\linewidth, clip,trim=120 30 170 35]{./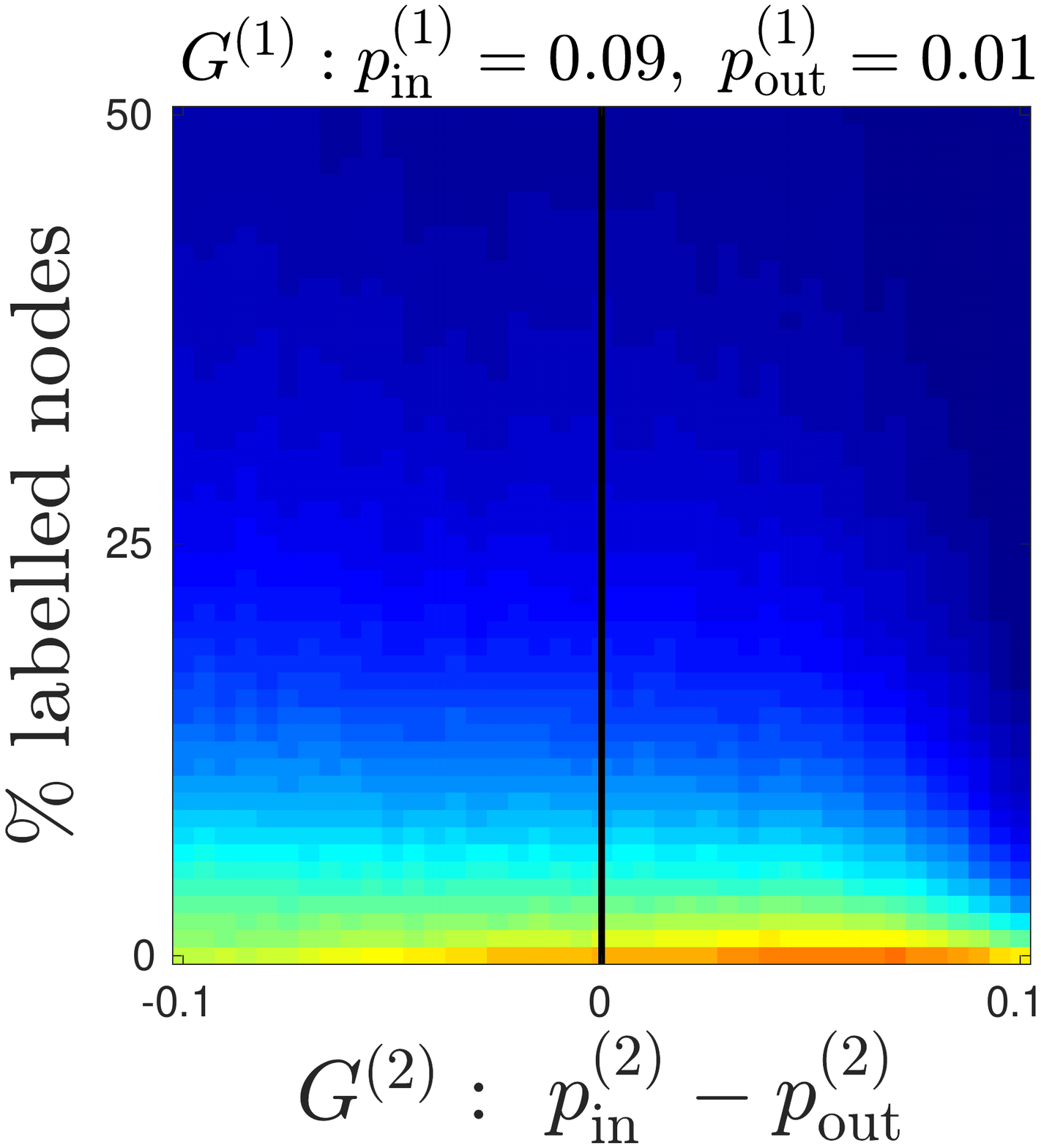}
 \caption{$L_{-10}$}
 \label{figure:SBM:2layers:p=-10}
 \end{subfigure}
 \hfill 
 \begin{subfigure}[]{0.18\linewidth}
 \includegraphics[width=1\linewidth, clip,trim=120 30 170 35]{./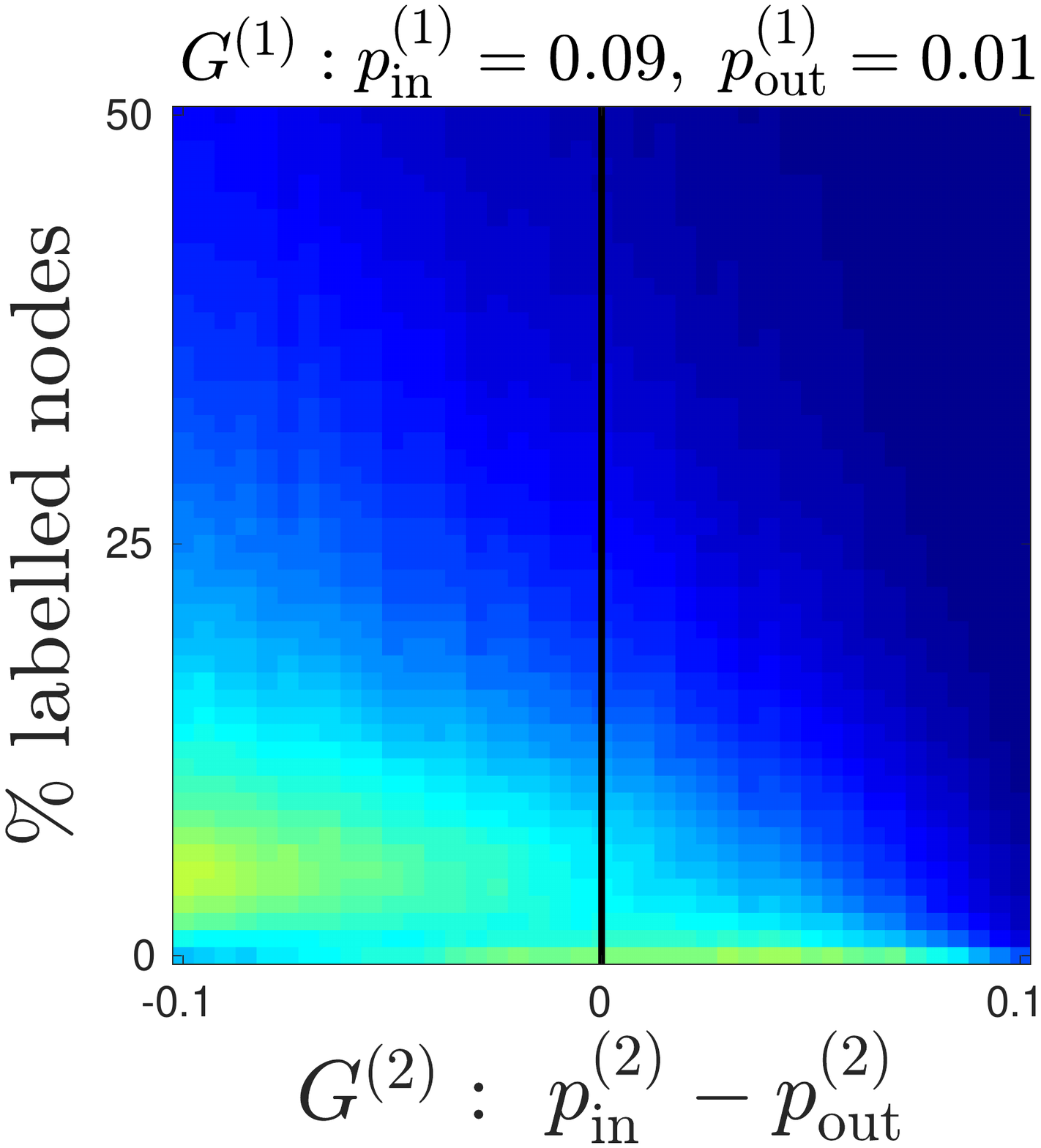}
 \caption{$L_{-1}$}
 \label{figure:SBM:2layers:p=-1}
 \end{subfigure}
 \hfill 
 \begin{subfigure}[]{0.18\linewidth}
 \includegraphics[width=1\linewidth, clip,trim=120 30 170 35]{./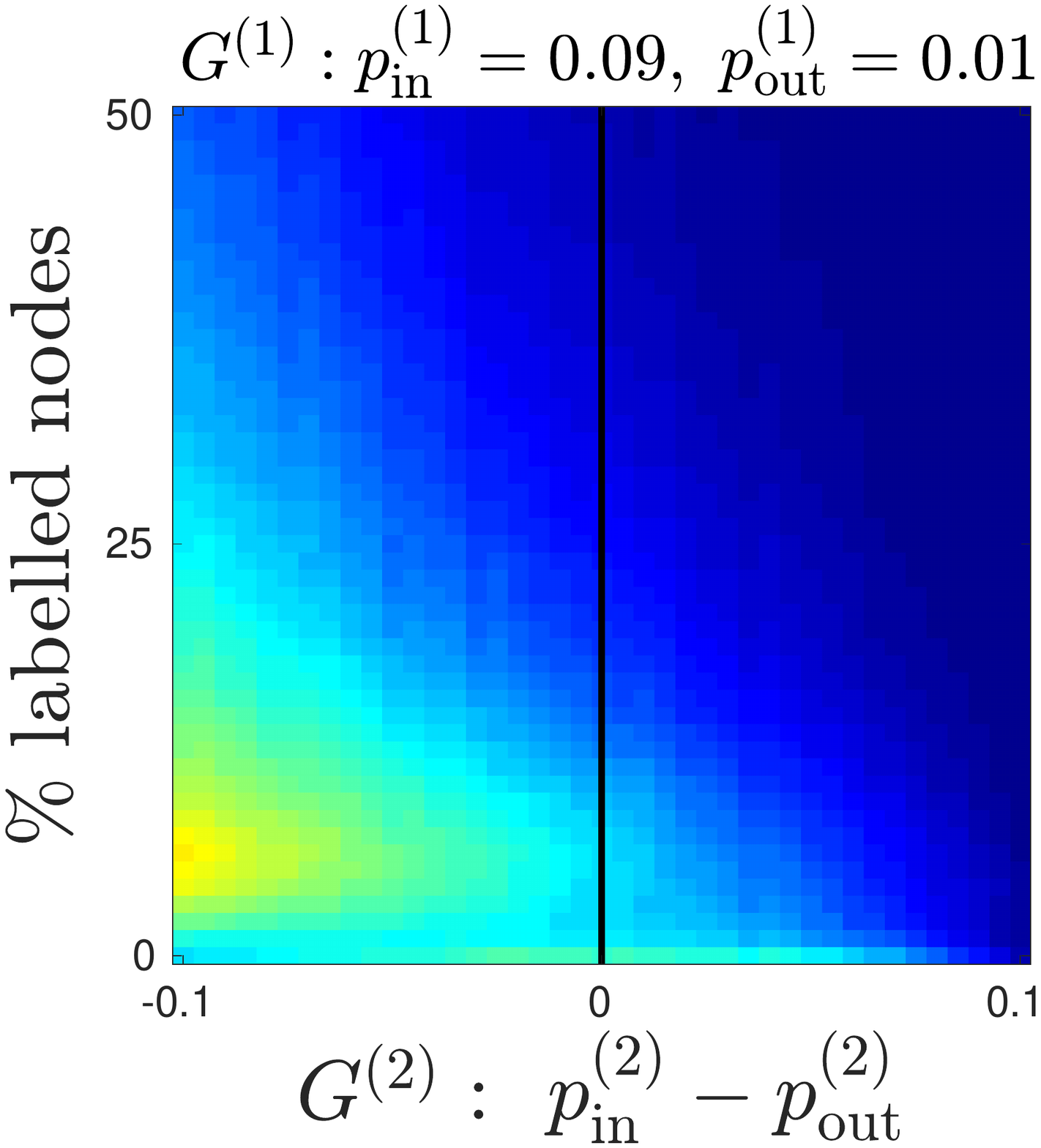}
 \caption{$L_{0}$}
 \label{figure:SBM:2layers:p=0}
 \end{subfigure}
 \hfill 
 \begin{subfigure}[]{0.18\linewidth}
 \includegraphics[width=1\linewidth, clip,trim=120 30 170 35]{./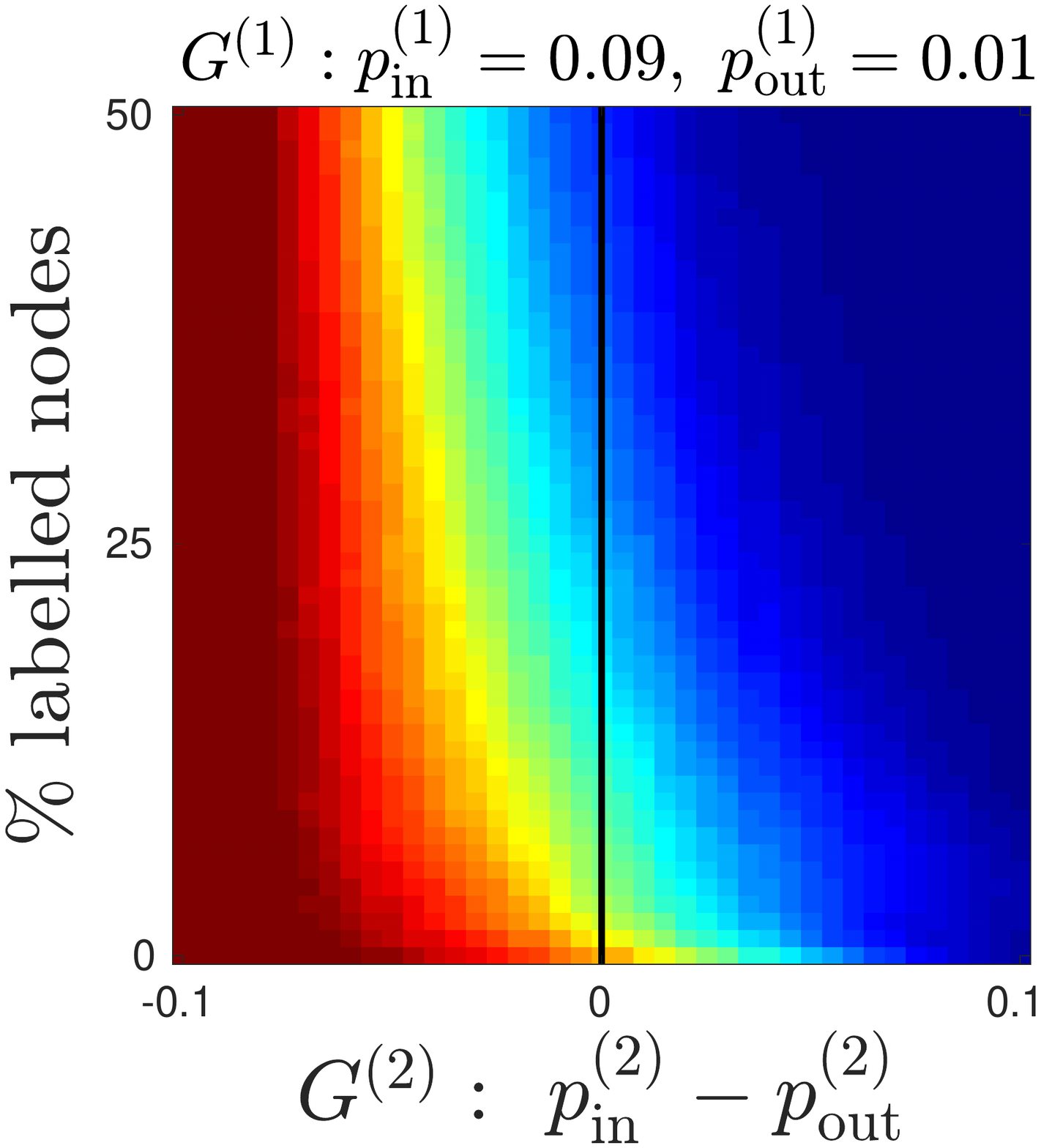}
 \caption{$L_{1}$}
 \label{figure:SBM:2layers:p=1}
 \end{subfigure}
 \hfill 
 \begin{subfigure}[]{0.18\linewidth}
 \includegraphics[width=1\linewidth, clip,trim=120 30 170 35]{./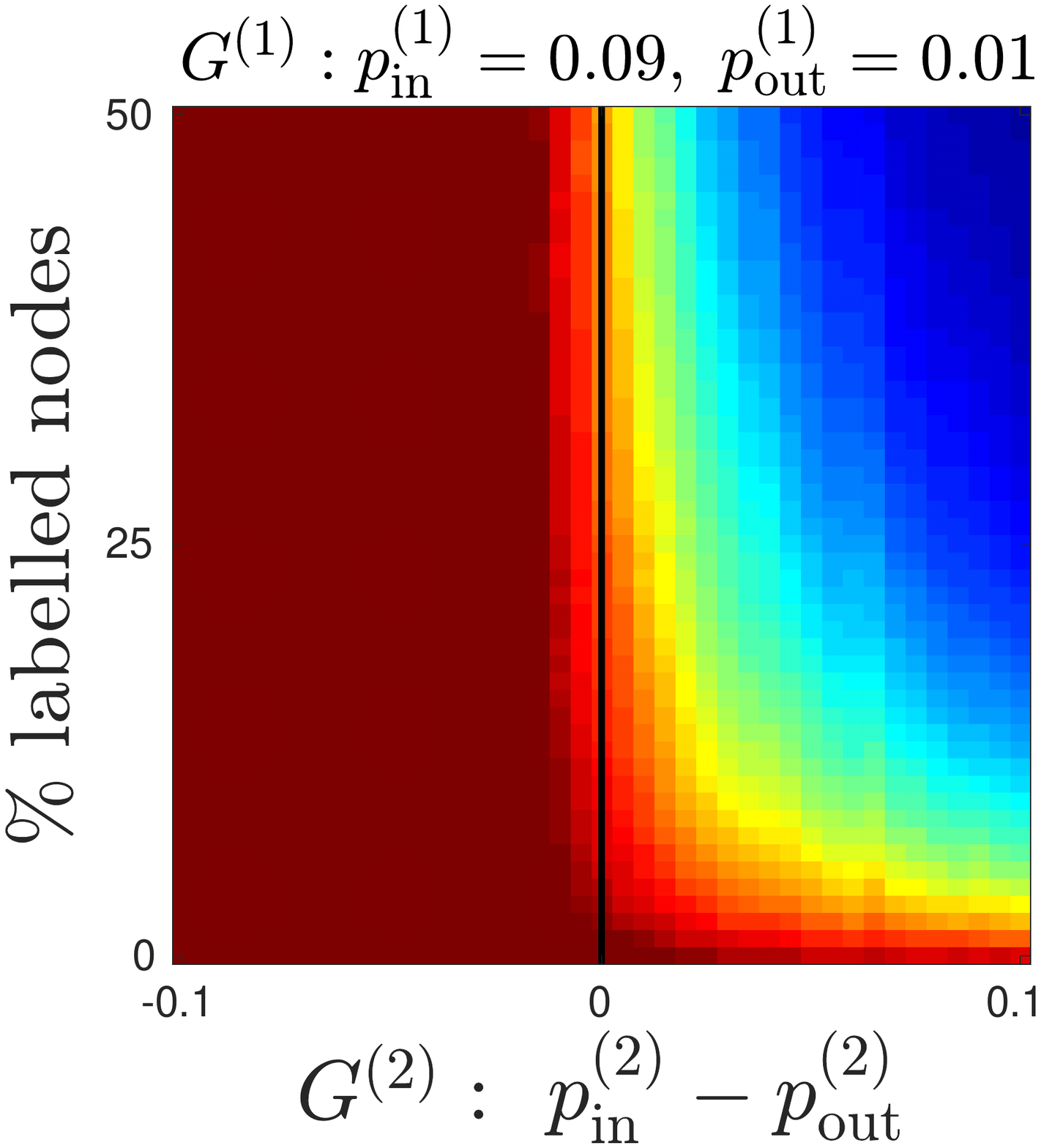}
 \caption{$L_{10}$}
 \label{figure:SBM:2layers:p=10}
 \end{subfigure}
 \hfill

\hfill
 \begin{subfigure}[]{0.18\linewidth}
 \includegraphics[width=1\linewidth, clip,trim=120 30 170 35]{./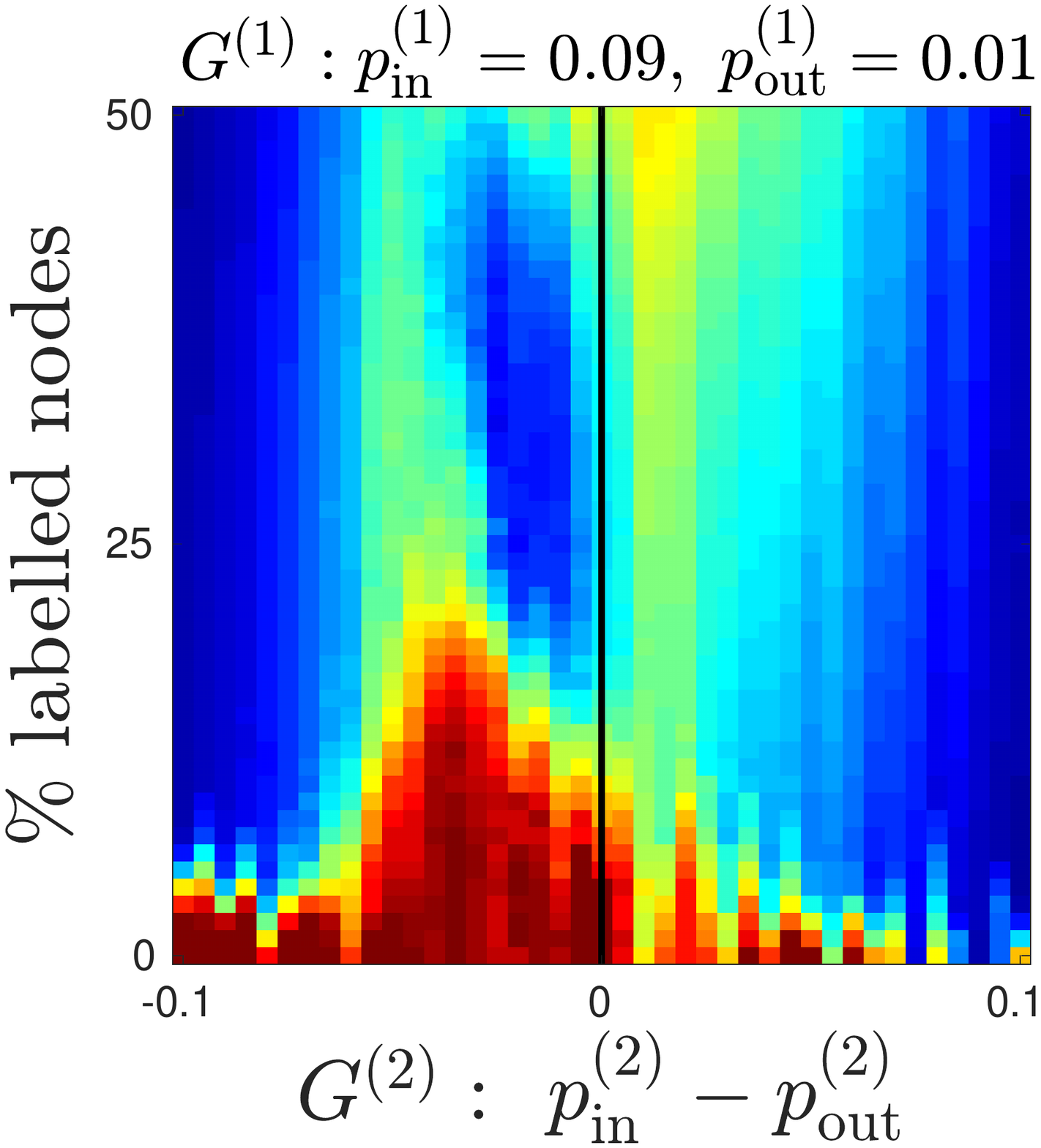}
 \caption{SMACD}
 \end{subfigure}
 \hfill 
 \begin{subfigure}[]{0.18\linewidth}
\includegraphics[width=1\linewidth, clip,trim=120 30 170 35]{./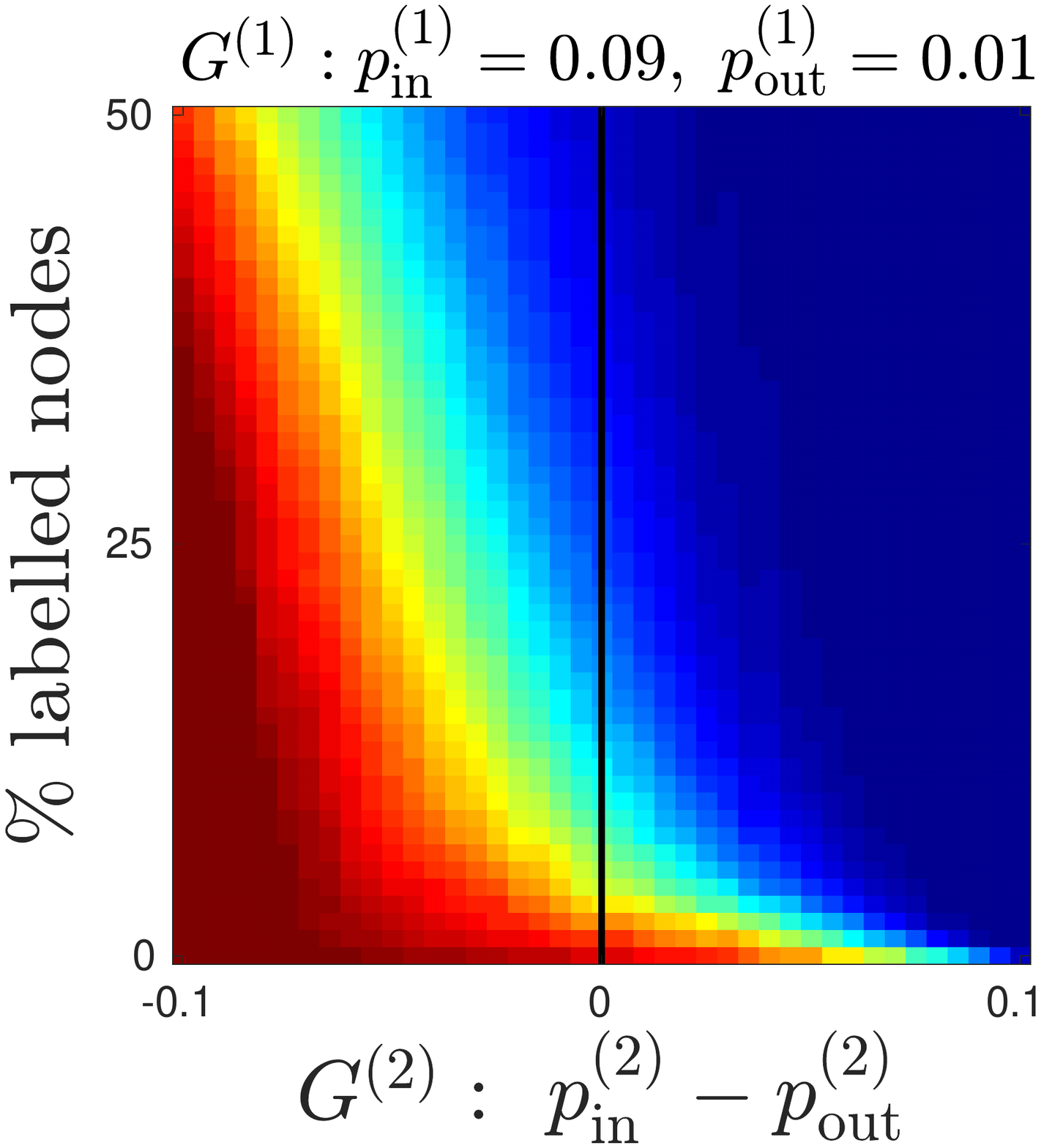}
 \caption{AGML}
 \end{subfigure}
 \hfill 
 \begin{subfigure}[]{0.18\linewidth}
 \includegraphics[width=1\linewidth, clip,trim=120 30 170 35]{./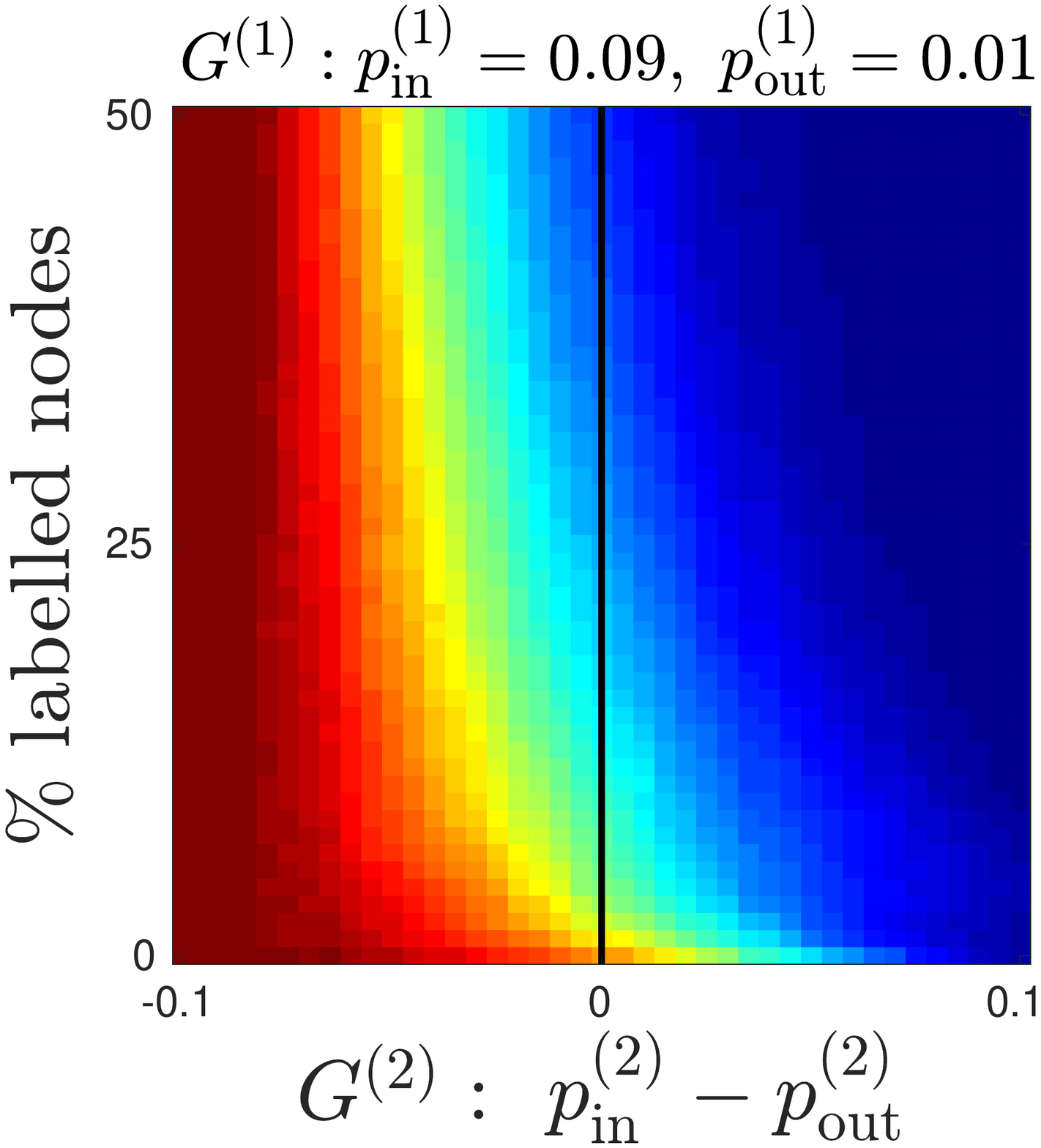}
 \caption{TLMV}
 \end{subfigure}
 \hfill 
 \begin{subfigure}[]{0.18\linewidth}
 \includegraphics[width=1\linewidth, clip,trim=120 30 170 35]{./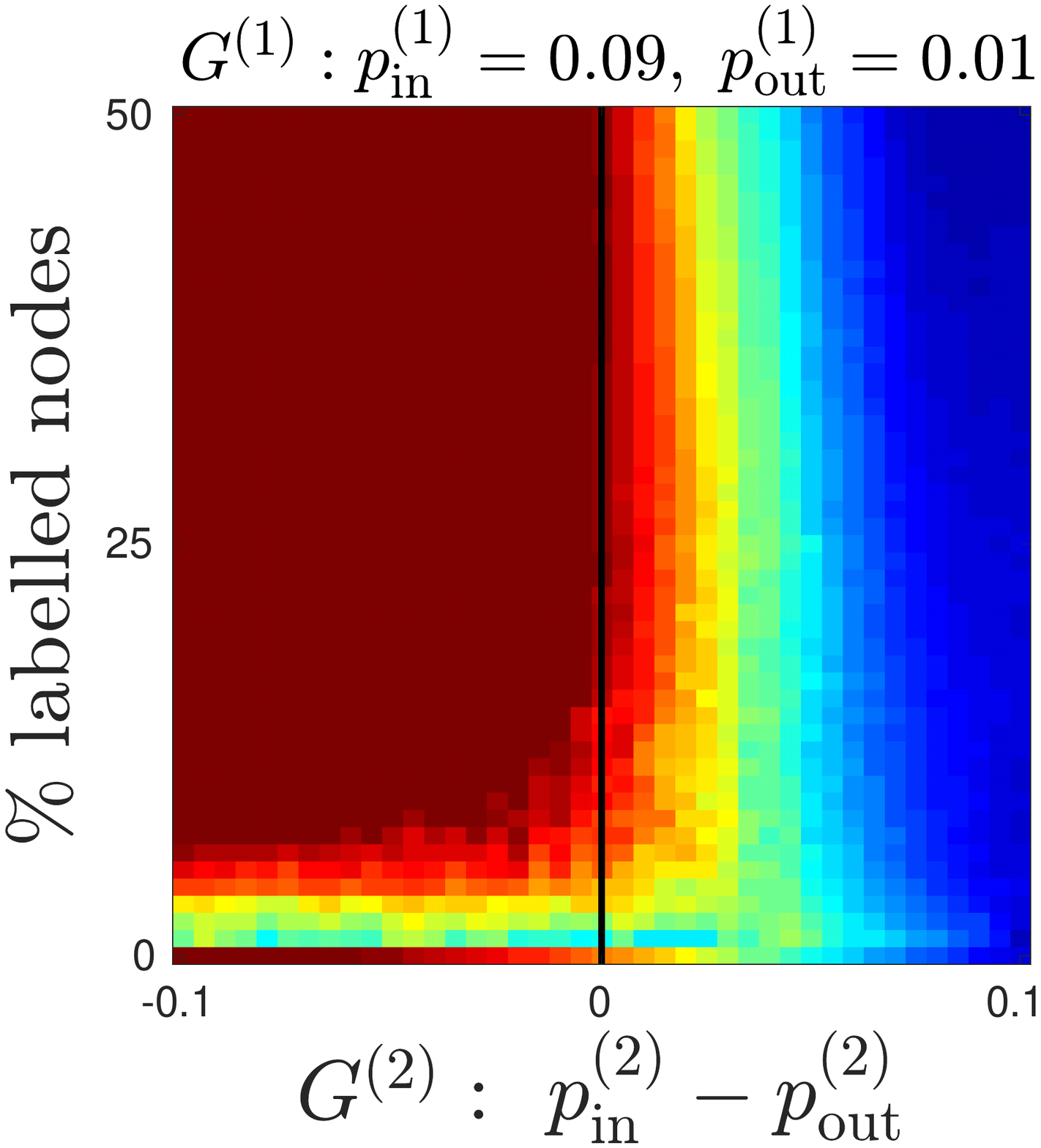}
 \caption{SGMI}
 \end{subfigure}
 \hfill 
 \begin{subfigure}[]{0.18\linewidth}
 \includegraphics[width=1\linewidth, clip,trim=120 30 170 35]{./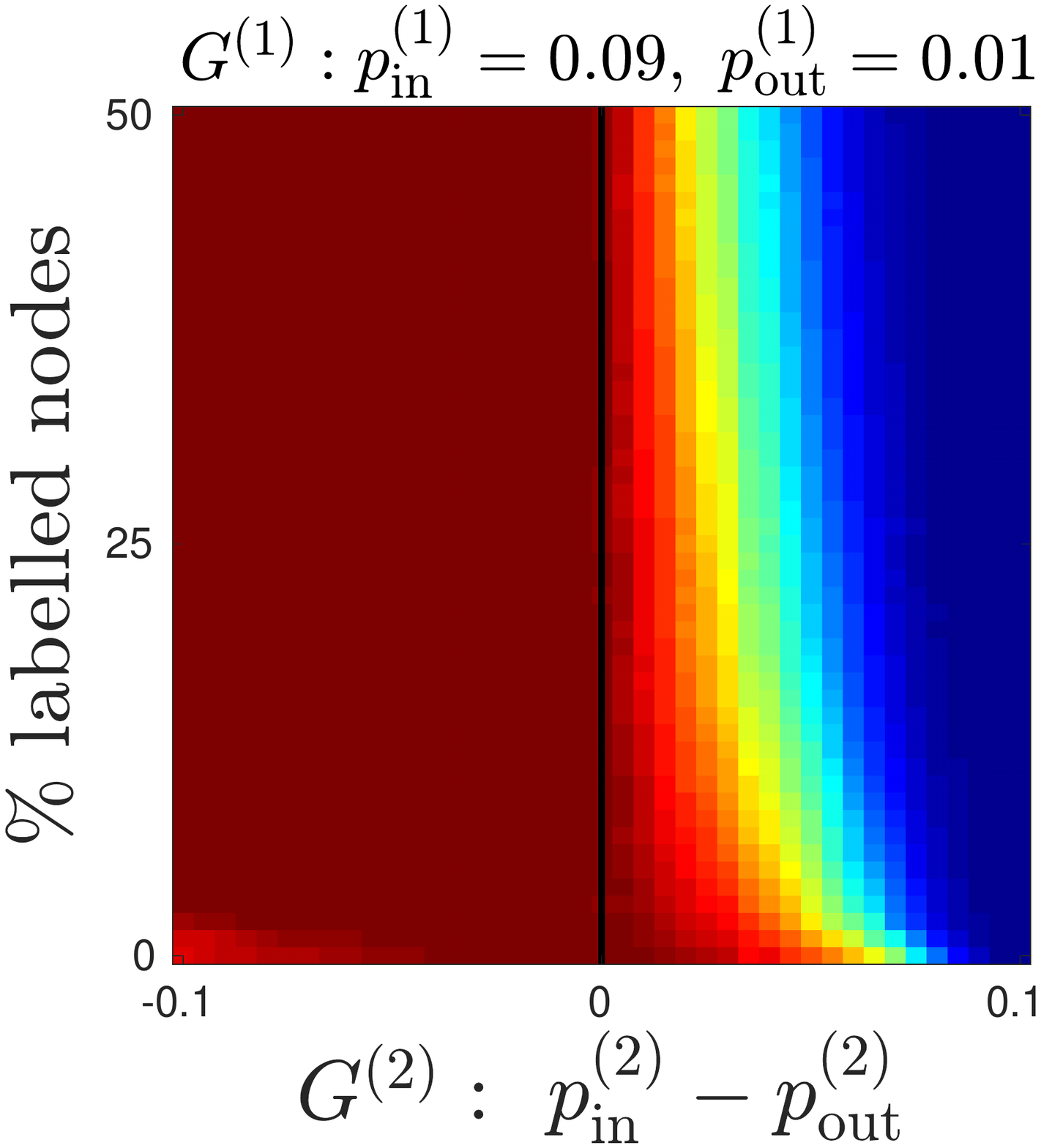}
 \caption{TSS}
 \end{subfigure}
 \hfill
 \caption{Average classification error under the Stochastic Block Model computed from 100 runs. 
 \textbf{Top Row:} Particular cases with the power mean Laplacian.
 \textbf{Bottom Row:} State of the art models.
 \vspace{-20pt}
 }
 \label{figure:SBM:2layers}
\end{figure}
We first describe the setting we consider:
we generate random multilayer graphs with two layers ($T=2$) and two classes ($k=2$) each composed by 100 nodes ($\abs{\mathcal{C}}=100$). For each parameter configuration $(\p^{(1)},\q^{(1)},\p^{(2)},\q^{(2)})$ we generate 10 random multilayer graphs and 10 random samples of labeled nodes, yielding a total of 100 runs per parameter configuration, and report the average test error.
Our goal is to evaluate the classification performance under different SBM parameters and different amounts of labeled nodes. To this end, we fix the first layer $G^{(1)}$ to be informative of the class structure ($\p^{(1)}-\q^{(1)}=0.08$), i.e. one can achieve a low classification error by taking this layer alone, provided sufficiently many labeled nodes are given.
The second layer will go from non-informative (noisy) configurations ($\p^{(2)}<\q^{(2)}$, left half of $x$-axis) to informative configurations~($\p^{(2)}>\q^{(2)}$, right half of $x$-axis),
with $\p^{(t)}+\q^{(t)}=0.1$ for both layers.
Moreover, we consider different amounts of labeled nodes: going from $1\%$ to $50\%$ ($y$-axis). The corresponding results are presented in Figs.~\ref{figure:SBM:2layers:p=-10},\ref{figure:SBM:2layers:p=-1},\ref{figure:SBM:2layers:p=0},\ref{figure:SBM:2layers:p=1}, and~\ref{figure:SBM:2layers:p=10}.

In general one can expect a low classification error when both layers $G^{(1)}$ and $G^{(2)}$ are informative (right half of $x$-axis). We can see that this is the case for all power mean Laplacian regularizers here considered (see top row of Fig.~\ref{figure:SBM:2layers}). In particular, we can see in Fig.~\ref{figure:SBM:2layers:p=10} that $L_{10}$ performs well only when \textbf{both} layers are informative and completely fails when the second layer is not informative, regardless of the amount of labeled nodes. 
On the other side we can see in Fig.~\ref{figure:SBM:2layers:p=-10} that $L_{-10}$ achieves in general a low classification error, regardless of the configuration of the second layer $G^{(2)}$,
i.e. when $G^{(1)}$ \textbf{or} $G^{(2)}$ are informative. Moreover, we can see that overall the areas with low classification error (dark blue) increase when the parameter $p$ decreases, verifying the result from Corollary~\ref{corollary:contention}.
In the bottom row of Fig.~\ref{figure:SBM:2layers} we present the performance of state of the art methods. We can observe that most of them present a classification performance that resembles the one of the power mean Laplacian regularizer $L_1$. In general their classification performance 
drops when the level of noise increases, i.e. for non-informative configurations of the second layer $G^{(2)}$, and they are outperformed by the power mean Laplacian regularizer for small values of $p$.
\begin{figure*}[t]%
%
  \hfill
 \begin{subfigure}[]{0.23\linewidth}
 \includegraphics[width=1\linewidth, clip,trim=120 30 170 35]{./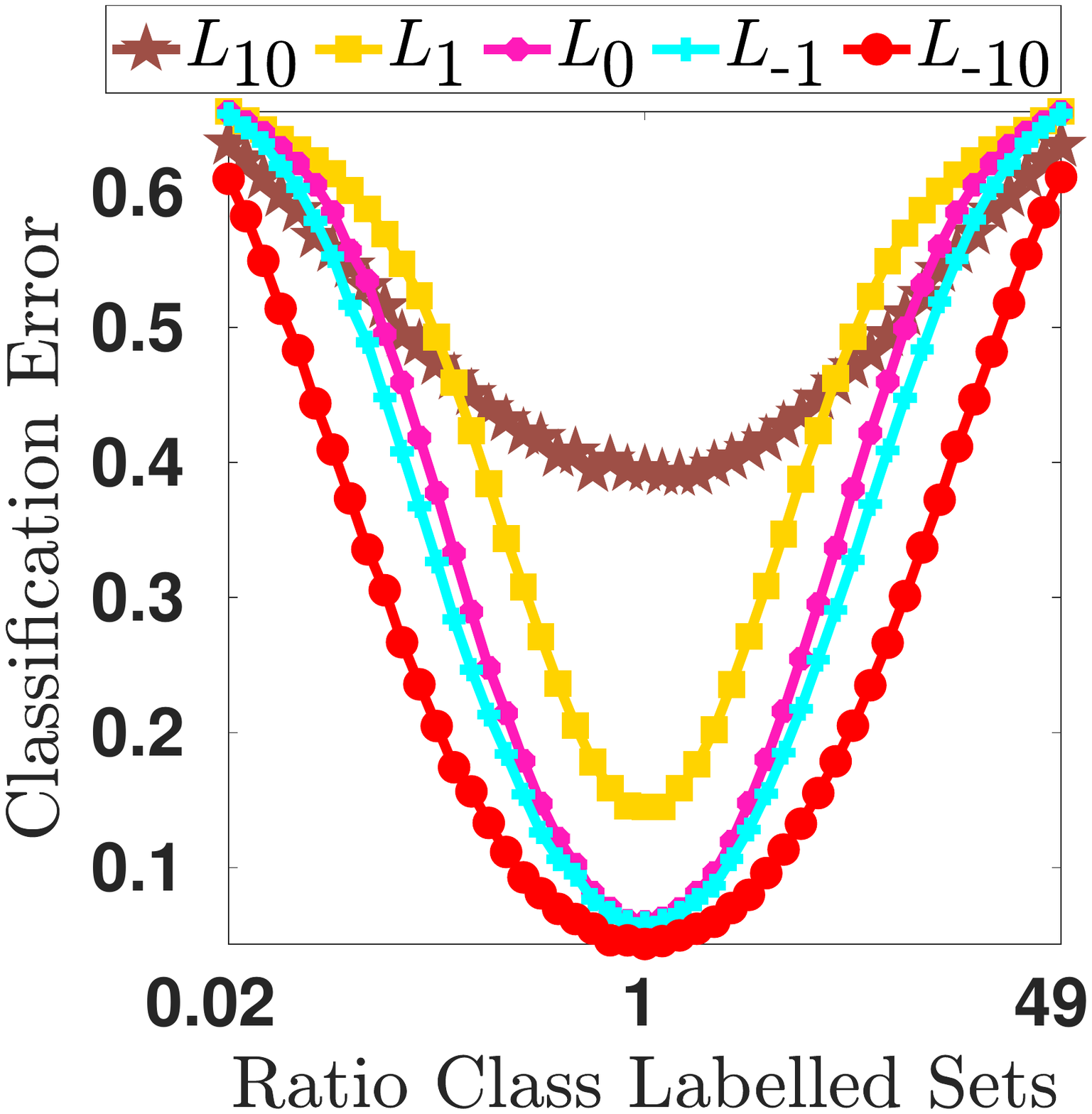}
 \caption{Uniform loss}
 \label{fig:lossFunctions:homogeneous}
 \end{subfigure}
\hfill
 \begin{subfigure}[]{0.23\linewidth}
 \includegraphics[width=1\linewidth, clip,trim=120 30 170 35]{./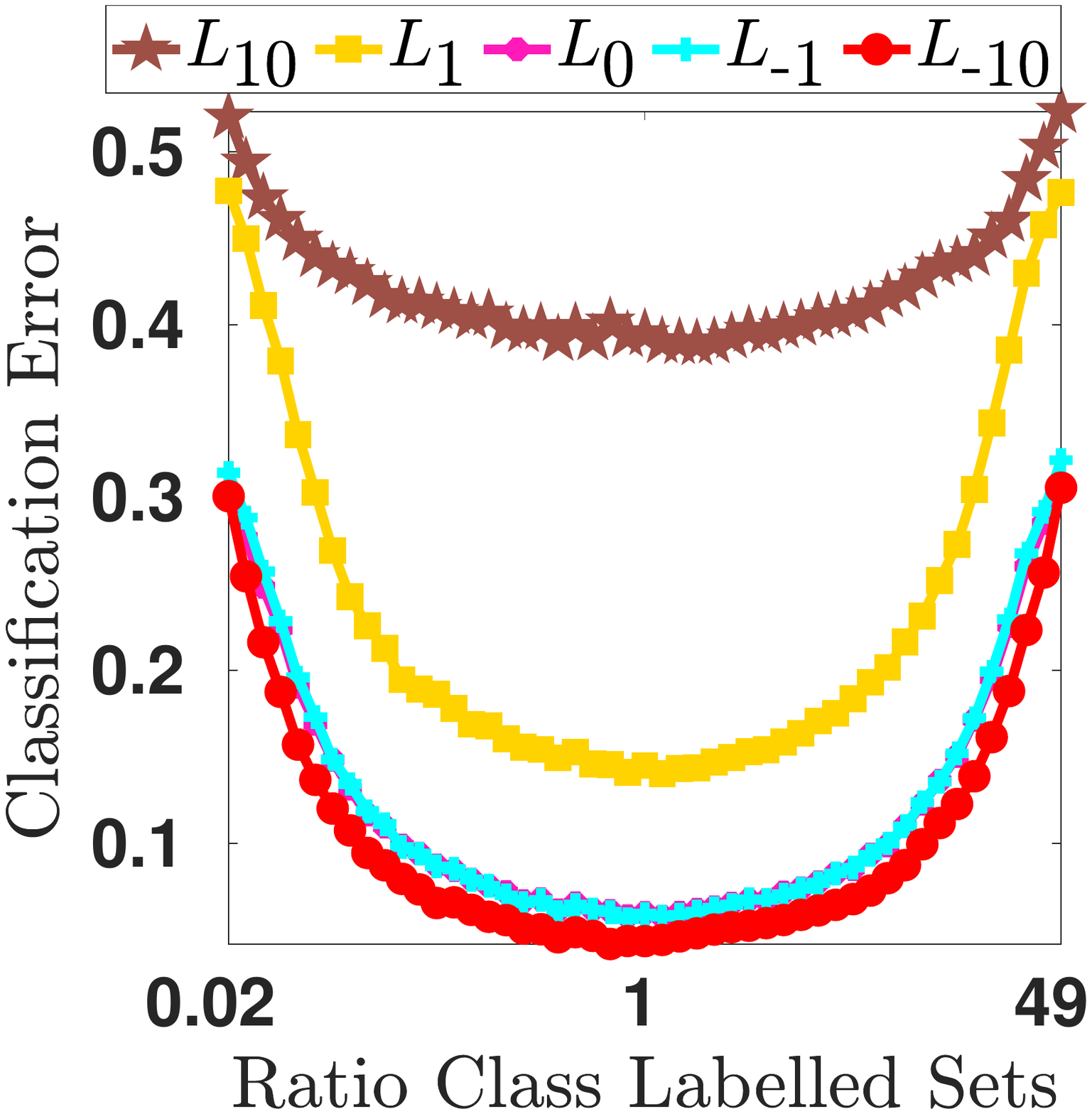}
 \caption{Weighted loss}
 \label{fig:lossFunctions:weighted}
 \end{subfigure}
 \hfill 
 \begin{subfigure}[]{0.23\linewidth}
 \includegraphics[width=1\linewidth, clip,trim=120 30 170 35]{./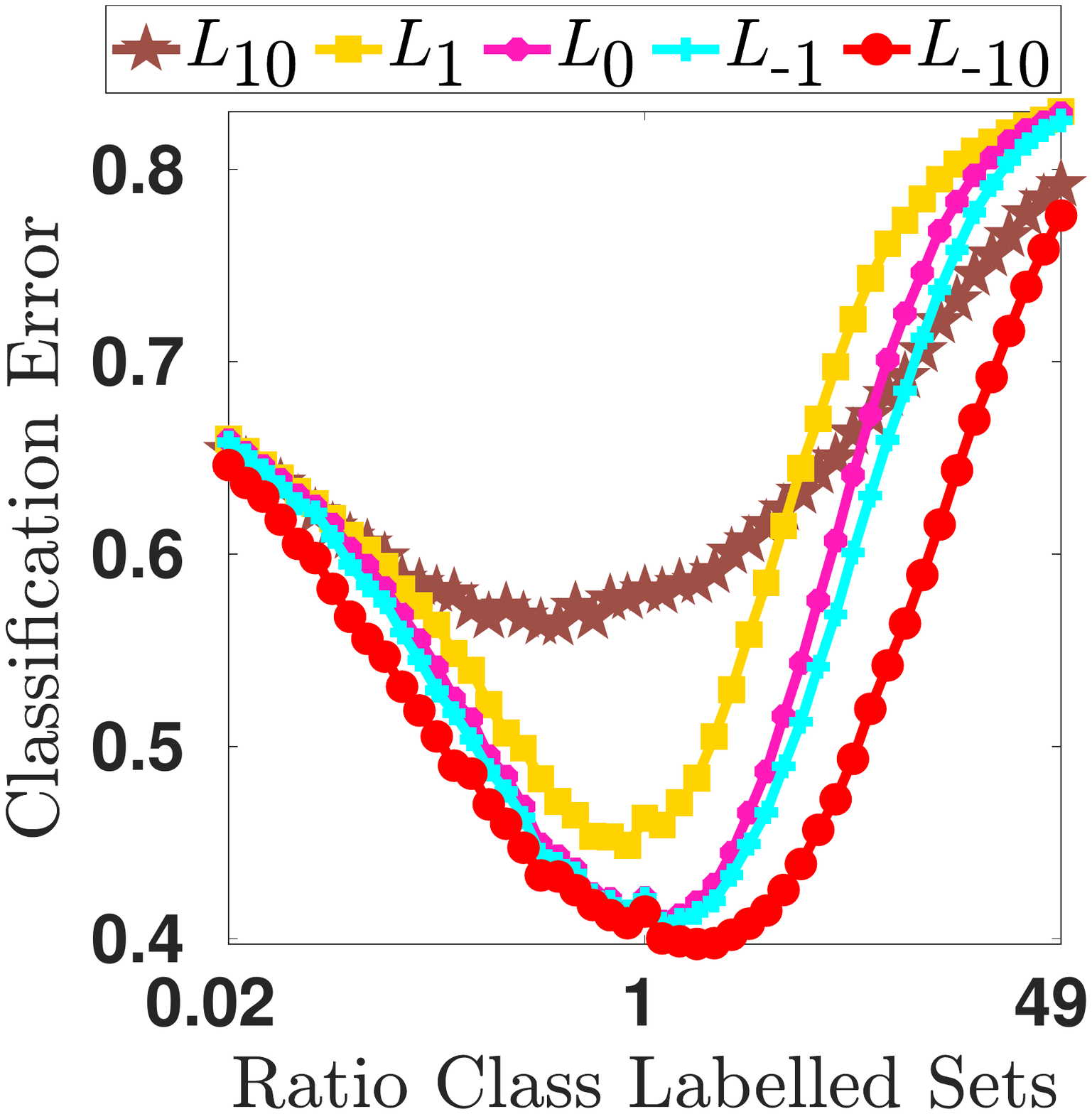}
 \caption{CMN}
 \label{fig:lossFunctions:CMN}
 \end{subfigure}
  \hfill 
  \hfill
  \hfill
 \begin{subfigure}[]{0.23\linewidth}
\fillcaption{Different class weighted loss strategies. 
Left to right: uniform loss, weighted loss, and Class Mass Normalization.
} 
\vspace{10pt}
 \end{subfigure}
 \hfill
\label{fig:lossFunctions}
\vspace{-15pt}
\end{figure*}

\textbf{Unbalanced Class Proportion on Labeled Datasets.}
In the previous analysis we assumed that we had the same amount of labeled nodes per class. 
We consider now the case where the number of labeled nodes per class is different. This setting was considered in~\cite{Zhu:2003:SLU:3041838.3041953}, where the goal was to overcome unbalanced class proportions in labeled nodes. To this end, they propose a Class Mass Normalization (CMN) strategy, whose performance was also tested in~\cite{zhu2002learning}.
In the following result we show that, provided the ground truth classes have the same size, different amounts of labeled nodes per class affect the conditions in expectation for zero classification error of \eqref{local-global-optimization-problem-rephrased-power-mean-laplacian}. For simplicity, we consider here only the case of two classes.
\begin{theorem}\label{theorem:unequalLabels:epsilon-goes-to-zero}
 Let $E(\mathbb{G})$ be the expected multilayer graph with $T$ layers following the multilayer SBM with
 two classes $\mathcal{C}_1,\mathcal{C}_2$ of equal size and parameters $\left(\p^{(t)},\q^{(t)}\right)_{t=1}^{T}$. 
 Assume $n_1,n_2$ nodes from $\mathcal{C}_1,\mathcal{C}_2$ are labeled, respectively. 
 Let $\lambda = 1$.
 Then \eqref{local-global-optimization-problem-rephrased-power-mean-laplacian} yields zero test error if
\begin{align}
 m_p(\boldsymbol{\rho_\epsilon})< \min\left\{ \frac{n_1}{n_2}, \frac{n_2}{n_1} \right\}
\end{align}
 where $(\boldsymbol{\rho_\epsilon})_t = 1-(\p^{(t)} - \q^{(t)})/(\p^{(t)} + (k-1)\q^{(t)})
 +\epsilon
 $, and $t=1,\ldots,T$.
\end{theorem}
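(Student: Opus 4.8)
The plan is to exploit the fact that, in expectation, every layer shares the same block structure, so that all the shifted Laplacians $\mathcal{L}^{(t)}_\sym+\epsilon I$ are simultaneously diagonalizable and the whole problem collapses to a few scalar inequalities. First I would record the common eigenstructure. Because the two classes have equal size, each expected degree matrix $\mathcal{D}^{(t)}$ is a scalar multiple of $I$, and $\mathcal{W}^{(t)}=\q^{(t)}\one\one^\t+(\p^{(t)}-\q^{(t)})(\bchi_1\bchi_1^\t+\bchi_2\bchi_2^\t)$. Consequently $\mathcal{L}^{(t)}_\sym+\epsilon I$ acts as $\epsilon$ on $\one$, as $\rho^{(t)}+\epsilon$ on the cluster-contrast direction $\bchi_1-\bchi_2$ (where $\rho^{(t)}=1-(\p^{(t)}-\q^{(t)})/(\p^{(t)}+\q^{(t)})$), and as $1+\epsilon$ on the within-cluster zero-sum subspace. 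Since these eigenspaces do not depend on $t$, the functional-calculus definition of the power mean shows that $\mathcal{L}_p$ is diagonal in the same basis, with eigenvalue $\epsilon$ on $\one$, $m_p(\boldsymbol{\rho_\epsilon})$ on $\bchi_1-\bchi_2$, and $1+\epsilon$ on the remaining subspace.

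Next I would reduce the two-class decision rule to a single sign test. Writing $g=f^{(1)}-f^{(2)}$ and using that both $f^{(r)}$ solve $(I+\mathcal{L}_p)f^{(r)}=Y^{(r)}$ with the same matrix, linearity gives $g=(I+\mathcal{L}_p)^{-1}(Y^{(1)}-Y^{(2)})$, and the rule $y_i=\argmax\{f^{(1)}_i,f^{(2)}_i\}$ is correct exactly when $g_i>0$ on unlabeled nodes of $\mathcal{C}_1$ and $g_i<0$ on unlabeled nodes of $\mathcal{C}_2$. The key computation is to project $Y:=Y^{(1)}-Y^{(2)}$ (which is $+1$ on the $n_1$ labeled nodes of $\mathcal{C}_1$, $-1$ on the $n_2$ labeled nodes of $\mathcal{C}_2$, and $0$ elsewhere) onto the three eigenspaces: the cluster averages are $\bar Y_1=n_1/\abs{\mathcal{C}}$ and $\bar Y_2=-n_2/\abs{\mathcal{C}}$, so the $\one$-component is proportional to $n_1-n_2$, the $(\bchi_1-\bchi_2)$-component is proportional to $n_1+n_2$, and the leftover within-cluster part equals $Y_i-\bar Y_r$ on $\mathcal{C}_r$. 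Applying $(I+\mathcal{L}_p)^{-1}$ eigenvalue-by-eigenvalue then yields a closed form for $g$ in which these three components are divided by $1+\epsilon$, $1+m_p(\boldsymbol{\rho_\epsilon})$, and $2+\epsilon$ respectively (using $\lambda=1$).

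Finally I would evaluate this closed form at an unlabeled node of each class. On such nodes the within-cluster part is the constant $-\bar Y_r$, so $g$ is constant across the unlabeled nodes of each class and the test reduces to two scalar inequalities. Collecting terms and letting $\epsilon\to 0$, the $\one$- and within-cluster contributions combine so that the $\mathcal{C}_1$ condition $g>0$ simplifies to $m_p(\boldsymbol{\rho_\epsilon})<n_1/n_2$ and the $\mathcal{C}_2$ condition $g<0$ to $m_p(\boldsymbol{\rho_\epsilon})<n_2/n_1$; intersecting the two gives the claimed bound $m_p(\boldsymbol{\rho_\epsilon})<\min\{n_1/n_2,\,n_2/n_1\}$. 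The main obstacle is the careful bookkeeping of the within-cluster zero-sum component: verifying that $g$ really is constant on the unlabeled part of each class and that this constant, after the $\epsilon\to 0$ cancellation, produces exactly the ratio $n_1/n_2$ rather than a messier expression. The spectral reduction of the first step is what makes everything else routine, and the balanced case $n_1=n_2$ of Theorem~\ref{theorem:generalization-equallySized-equallyLabelled}, where $\min\{n_1/n_2,n_2/n_1\}=1$, serves as a consistency check.
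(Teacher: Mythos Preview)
Your spectral reduction is correct and essentially the paper's own route: the expected shifted Laplacians are simultaneously diagonalizable with eigenvalues $\epsilon$, $m_p(\boldsymbol{\rho_\epsilon})$, $1+\epsilon$ on $\one$, on $\one_{\mathcal{C}_1}-\one_{\mathcal{C}_2}$, and on the within-cluster zero-sum space. Working with the difference $g=f^{(1)}-f^{(2)}$ is a mild streamlining over the paper, which computes each $f^{(r)}$ separately from a general solution-matrix lemma and then compares. Either way, at an unlabeled node of $\mathcal{C}_1$ one lands on the exact scalar condition
\[
\frac{n_1-n_2}{1+\epsilon}\;+\;\frac{n_1+n_2}{1+m_p(\boldsymbol{\rho_\epsilon})}\;-\;\frac{2n_1}{2+\epsilon}\;>\;0,
\]
equivalently $m_p(\boldsymbol{\rho_\epsilon})<g(\epsilon)$ with $g(\epsilon)=\dfrac{(n_1+n_2)\bigl((1+\epsilon)^2+1\bigr)-2n_2}{2n_2+(n_1+n_2)\epsilon}$, and the symmetric bound $h(\epsilon)$ for $\mathcal{C}_2$.

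The gap is your final step. ``Letting $\epsilon\to 0$'' only identifies $g(0)=n_1/n_2$ and $h(0)=n_2/n_1$; it does not prove that $m_p(\boldsymbol{\rho_\epsilon})<\min\{n_1/n_2,n_2/n_1\}$ suffices at the \emph{fixed} $\epsilon>0$ for which the theorem is stated and at which $\boldsymbol{\rho_\epsilon}$ is evaluated. Sending $\epsilon\to 0$ in the $\one$- and within-cluster contributions while retaining $m_p(\boldsymbol{\rho_\epsilon})$ is not a legitimate simplification. What is actually needed---and what the paper supplies---is the inequality $\min\{g(\epsilon),h(\epsilon)\}\ge \min\{g(0),h(0)\}$ for every $\epsilon>0$: the paper first records the necessary and sufficient condition $m_p(\boldsymbol{\rho_\epsilon})<\min\{g(\epsilon),h(\epsilon)\}$ (this is the ``general version'' alluded to after the theorem statement), and then argues that $g(\epsilon)\ge g(0)$, so the cleaner bound is sufficient. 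Concretely, a short computation gives
\[
g(\epsilon)-\frac{n_1}{n_2}\;=\;\frac{(n_1+n_2)\,\epsilon\,\bigl[(n_2-n_1)+n_2(1+\epsilon)\bigr]}{n_2\bigl(2n_2+(n_1+n_2)\epsilon\bigr)}\;\ge\;0
\]
when $n_1\le n_2$, and $h(\epsilon)\ge 1\ge n_1/n_2$ in that regime as well; the case $n_1\ge n_2$ is symmetric. Replace your limit by this monotonicity check and the proof goes through; the ``bookkeeping of the within-cluster component'' that you flag as the main obstacle is, by contrast, routine.
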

Observe that Theorem~\ref{theorem:unequalLabels:epsilon-goes-to-zero} provides only a sufficient condition. A  necessary and sufficient condition for zero test error in terms of $p$, $n_1$ and $n_2$ is given in the supplementary material.   

A different objective function can be employed for the case of classes with different number of labels per class. Let $C$ be the diagonal matrix  defined by $C_{ii} = n/n_r$, if node $v_i$ has been labeled to belong to class $\mathcal{C}_r$.  Consider the following modification of \eqref{local-global-optimization-problem-rephrased-power-mean-laplacian}
\begin{align}\label{eq:objFun:weightedLoss}
 \argmin_{f\in\R^n} \norm{f-CY}^2 
 +  
 \lambda
 f^T L_pf
\end{align}
The next Theorem shows that using \eqref{eq:objFun:weightedLoss} in place of \eqref{local-global-optimization-problem-rephrased-power-mean-laplacian} allows us to retrieve the same condition of Theorem \ref{theorem:generalization-equallySized-equallyLabelled} for zero test error in expectation in the setting where the number of labeled nodes per class are not equal.
\begin{theorem}\label{theorem:generalization-equallySized-NOTequallyLabelled}
 Let $E(\mathbb{G})$ be the expected multilayer graph with $T$ layers following the multilayer SBM 
 $k$ classes $\mathcal{C}_1, \ldots, \mathcal{C}_k$ of equal size and parameters $\left(\p^{(t)},\q^{(t)}\right)_{t=1}^{T}$. 
 Let $n_1,\ldots,n_k$ be the number of labeled nodes per class.
 Let $C\in\R^{n\times n}$ be a diagonal matrix with $C_{ii} = n/n_r$ for $v_i\in\mathcal{C}_r$.
 The solution to \eqref{eq:objFun:weightedLoss} yields a zero test classification error if and only if 
\begin{align}
  m_p(\boldsymbol{\rho_\epsilon}) < 1+\epsilon\, ,
\end{align}
 where $(\boldsymbol{\rho_\epsilon})_t = 1-(\p^{(t)} - \q^{(t)})/(\p^{(t)} + (k-1)\q^{(t)})+\epsilon$, and $t=1,\ldots,T$.
\end{theorem}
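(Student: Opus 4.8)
The plan is to exploit the fact that, in expectation, the layer Laplacians are simultaneously diagonalizable, so that $\mathcal{L}_p$ and hence $(I+\mathcal{L}_p)^{-1}$ are available in closed form, and then to read off the classification rule at the unlabeled nodes. First I would record the spectral structure that already underlies Theorem~\ref{theorem:generalization-equallySized-equallyLabelled}. Because the $k$ classes have equal size, every node has the same expected degree $d^{(t)}=(n/k)(\p^{(t)}+(k-1)\q^{(t)})$ in each layer, so $\mathcal{D}^{(t)}=d^{(t)}I$ and $\mathcal{L}^{(t)}_\sym=I-(d^{(t)})^{-1}\mathcal{W}^{(t)}$. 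Each $\mathcal{W}^{(t)}$ is block constant, hence all the $\mathcal{L}^{(t)}_\sym$ share the common invariant subspaces $E_0=\sp(\one)$, $E_1=\{$block-constant vectors orthogonal to $\one\}$ (dimension $k-1$), and $E_2=\{$vectors summing to zero on each class$\}$ (dimension $n-k$). On these subspaces $\mathcal{L}^{(t)}_\sym+\epsilon$ has eigenvalues $\epsilon$, $(\boldsymbol{\rho_\epsilon})_t$, and $1+\epsilon$. Since the subspaces do not depend on $t$, the power mean acts on $E_0,E_1,E_2$ by the scalar power mean of these eigenvalues, giving for $\mathcal{L}_p$ the eigenvalues $\epsilon$, $m_p(\boldsymbol{\rho_\epsilon})$, and $1+\epsilon$. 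I would then write $(I+\mathcal{L}_p)^{-1}=\tfrac{1}{1+\epsilon}P_0+\tfrac{1}{1+m_p(\boldsymbol{\rho_\epsilon})}P_1+\tfrac{1}{2+\epsilon}P_2$, with $P_0,P_1,P_2$ the orthogonal projectors onto $E_0,E_1,E_2$.

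Next I would solve \eqref{eq:objFun:weightedLoss}. With $\lambda=1$ its minimizer for class $r$ solves $(I+\mathcal{L}_p)f^{(r)}=CY^{(r)}$, and since $Y^{(r)}$ is the indicator of the $n_r$ labeled nodes of class $r$ we have $CY^{(r)}=\tfrac{n}{n_r}Y^{(r)}$. The crucial step is the decomposition of $Y^{(r)}$ along the three subspaces: writing $\chi_r$ for the indicator of the class $\mathcal{C}_r$, one finds $P_0Y^{(r)}=\tfrac{n_r}{n}\one$, $P_1Y^{(r)}=\tfrac{kn_r}{n}\chi_r-\tfrac{n_r}{n}\one$, and $P_2Y^{(r)}=Y^{(r)}-\tfrac{kn_r}{n}\chi_r$. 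Multiplying by $\tfrac{n}{n_r}$ is precisely what cancels the class-dependent factor $n_r$ in the $E_0$ and $E_1$ components; this is the whole point of introducing $C$, and it is where the argument departs from the unweighted Theorem~\ref{theorem:unequalLabels:epsilon-goes-to-zero}, in which those uncancelled factors yield only a sufficient condition.

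Finally I would evaluate $f^{(r)}=\tfrac{n}{n_r}(I+\mathcal{L}_p)^{-1}Y^{(r)}$ at an unlabeled node $v_i$ of true class $c(i)$. There $Y^{(r)}_i=0$, and collecting terms (writing $\mu=m_p(\boldsymbol{\rho_\epsilon})$ for brevity) gives $f^{(r)}_i=\big(\tfrac{1}{1+\epsilon}-\tfrac{1}{1+\mu}\big)+k\big(\tfrac{1}{1+\mu}-\tfrac{1}{2+\epsilon}\big)[r=c(i)]$, whose only $r$-dependence sits in the indicator $[r=c(i)]$. Hence $\argmax_r f^{(r)}_i=c(i)$ for every unlabeled node exactly when the coefficient of the indicator is positive, i.e.\ $\tfrac{1}{1+\mu}>\tfrac{1}{2+\epsilon}$, which rearranges to $m_p(\boldsymbol{\rho_\epsilon})<1+\epsilon$; a nonpositive coefficient sends every unlabeled node to a wrong class, giving the converse. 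The main obstacle is the bookkeeping: one must verify the three projections of $Y^{(r)}$ and confirm that, after the $\tfrac{n}{n_r}$ rescaling, the class-size imbalance survives only through the node-independent additive constant and not through the indicator coefficient, so that the $\argmax$ comparison collapses to the single scalar inequality above.
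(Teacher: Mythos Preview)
Your argument is correct and mirrors the paper's: both diagonalize $\mathcal L_p$ via the common eigenstructure of the expected layer Laplacians and then observe that the weighting $c_r=n/n_r$ makes the product $c_r n_r$ (equivalently, the $E_0$- and $E_1$-components of $CY^{(r)}$) independent of $r$, so the comparison of $f^{(r)}_i$ across $r$ reduces verbatim to the one in Theorem~\ref{theorem:generalization-equallySized-equallyLabelled}. One cosmetic point: you fixed $\lambda=1$, while the paper (via Lemma~\ref{lemma:solutionMatrix-F-supp}) keeps it general; this is harmless, since replacing $(1+\epsilon,\,1+\mu,\,2+\epsilon)$ by $(1+\lambda\epsilon,\,1+\lambda\mu,\,1+\lambda(1+\epsilon))$ in your final line yields the same inequality $m_p(\boldsymbol{\rho_\epsilon})<1+\epsilon$ for every $\lambda>0$.
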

In Figs.~\ref{fig:lossFunctions:homogeneous},~\ref{fig:lossFunctions:weighted}, and~\ref{fig:lossFunctions:CMN}.
 we present a numerical experiment with random graphs of our analysis in expectation. 
We consider the following setting: we generate multilayer graphs with two layers ($T=2$) and two classes ($k=2$) each composed by 100 nodes ($\abs{\mathcal{C}}=100$). 
We fix $\p^{(1)}-\q^{(1)}=0.08$ and $\p^{(2)}-\q^{(2)}=0$, with $\p^{(t)}+\q^{(t)}=0.1$ for both layers. We fix the total amount of labeled nodes to be $n_1+n_2 =50$ and let $n_1,n_2=1,\ldots 49$.
For each setting we generate 10 multilayer graphs and 10 sets of labeled nodes, yielding a total of 100 runs per setting, and report the average test classification error.
In Fig.~\ref{fig:lossFunctions:homogeneous} we can see the performance of the power mean Laplacian regularizer without modifications. We can observe how different proportions of labeled nodes per class affect the performance. In Fig.~\ref{fig:lossFunctions:weighted}, we present the performance of the modified approach~\eqref{eq:objFun:weightedLoss} and observe that it yields a better performance against different class label proportions. Finally in Fig.~\ref{fig:lossFunctions:CMN} we present the performance based on Class Mass Normalization
\footnote{We follow the authors' implementation: http://pages.cs.wisc.edu/\textasciitilde jerryzhu/pub/harmonic\_function.m}, where we can see that its effect is slightly skewed to one class and its overall performance is larger than the proposed approach.

\begin{figure}[t]
 \centering
 \textbf{Classification Error\\}
\includegraphics[width=0.3\linewidth, clip,trim=200 50 150 480]{./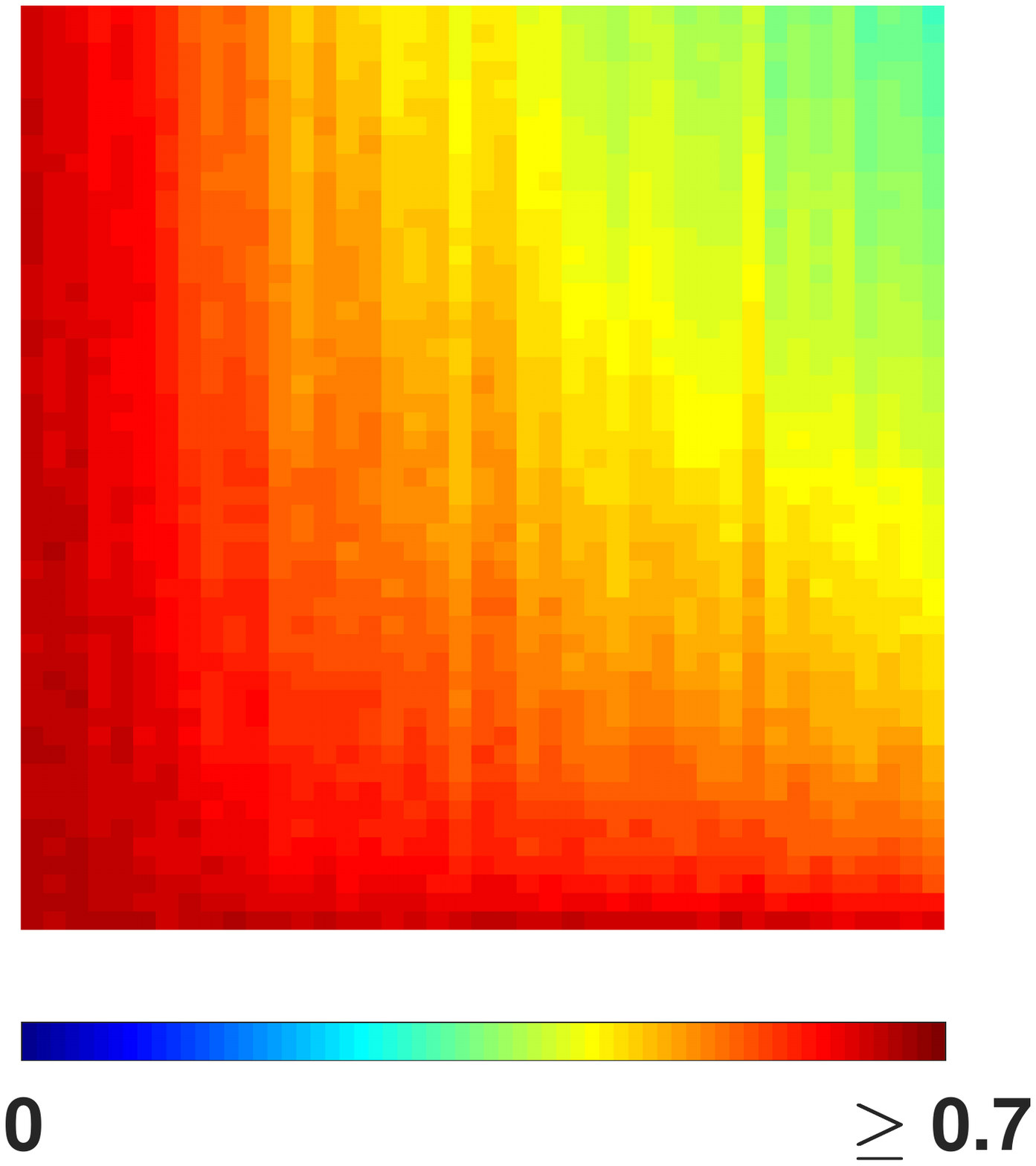}
\vspace{-5pt}
\\
\hfill
 \begin{subfigure}[]{0.18\linewidth}
 \includegraphics[width=1\linewidth, clip,trim=120 30 170 35]{./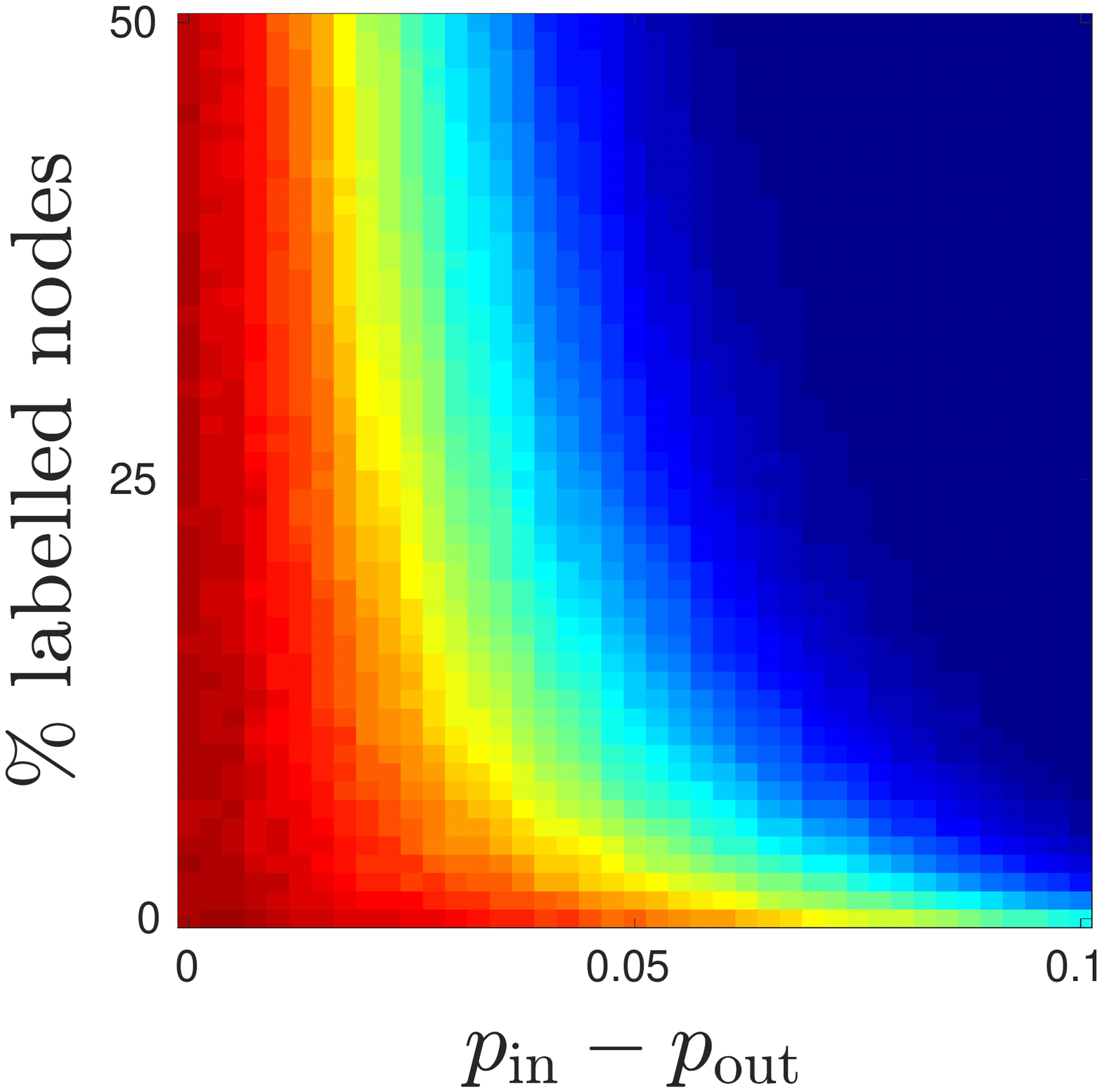}
 \caption{$L_{-10}$}
 \end{subfigure}
 \hfill 
 \begin{subfigure}[]{0.18\linewidth}
 \includegraphics[width=1\linewidth, clip,trim=120 30 170 35]{./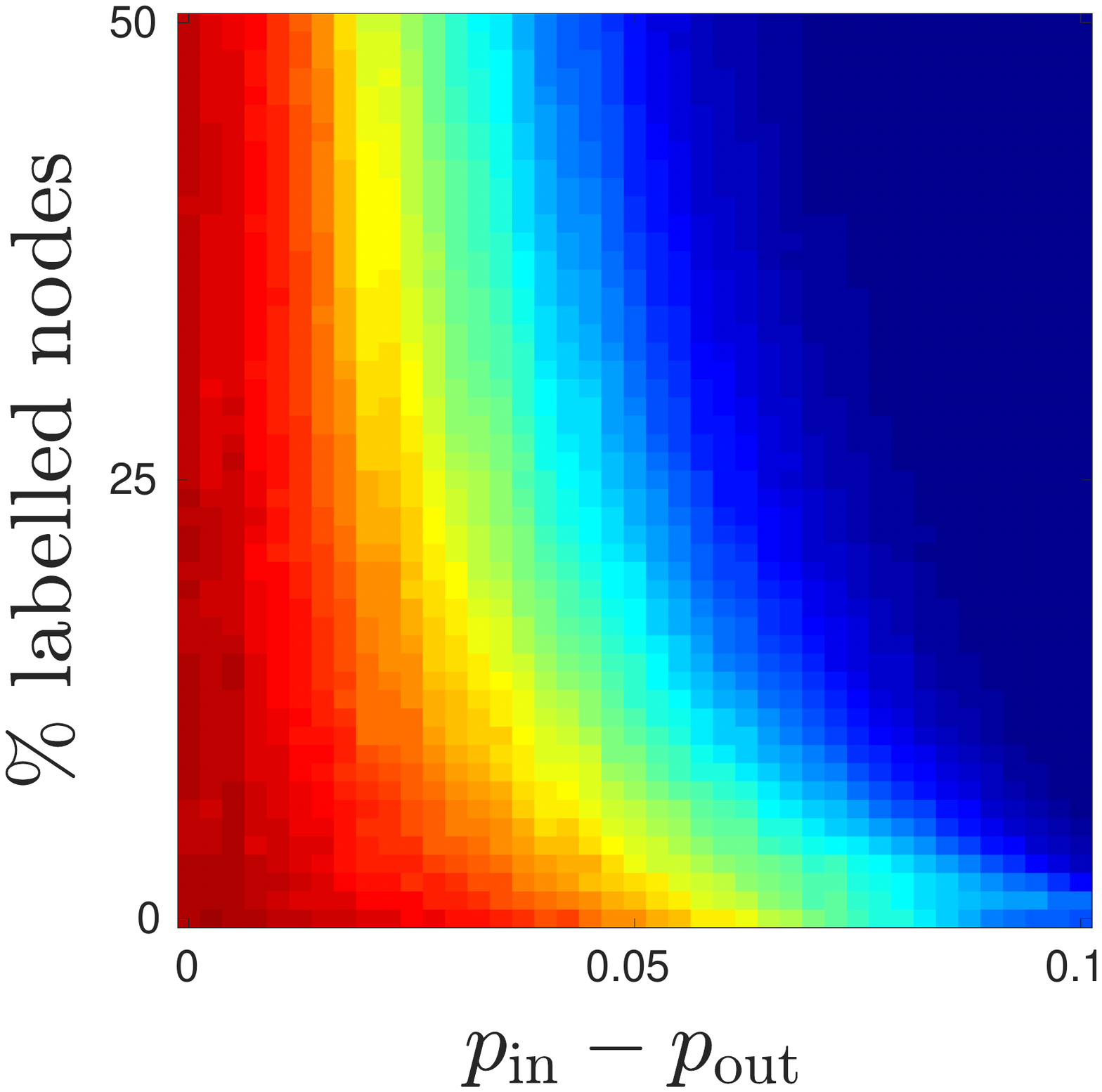}
 \caption{$L_{-1}$}
 \end{subfigure}
 \hfill 
 \begin{subfigure}[]{0.18\linewidth}
 \includegraphics[width=1\linewidth, clip,trim=120 30 170 35]{./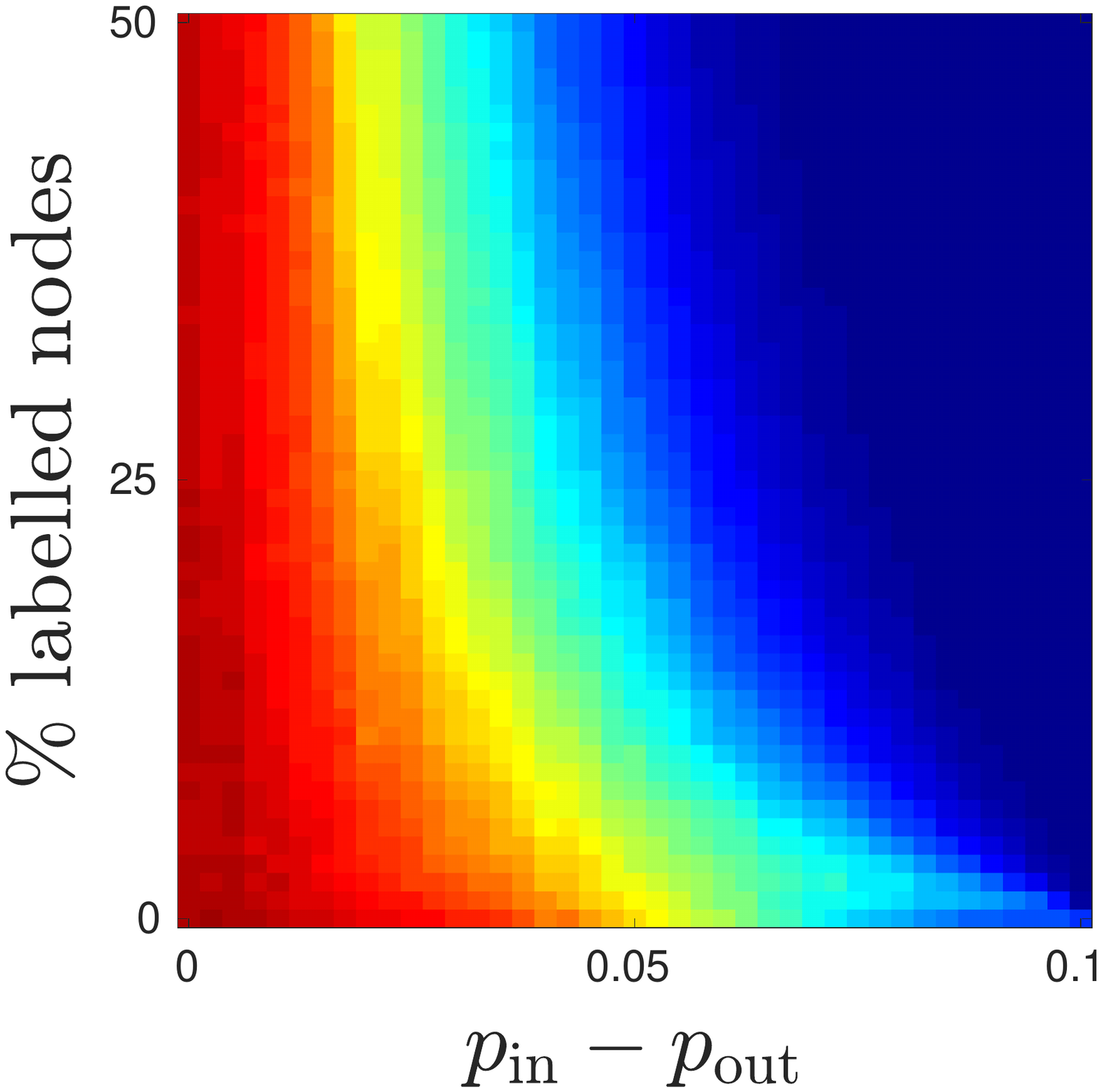}
 \caption{$L_{0}$}
 \end{subfigure}
 \hfill 
 \begin{subfigure}[]{0.18\linewidth}
 \includegraphics[width=1\linewidth, clip,trim=120 30 170 35]{./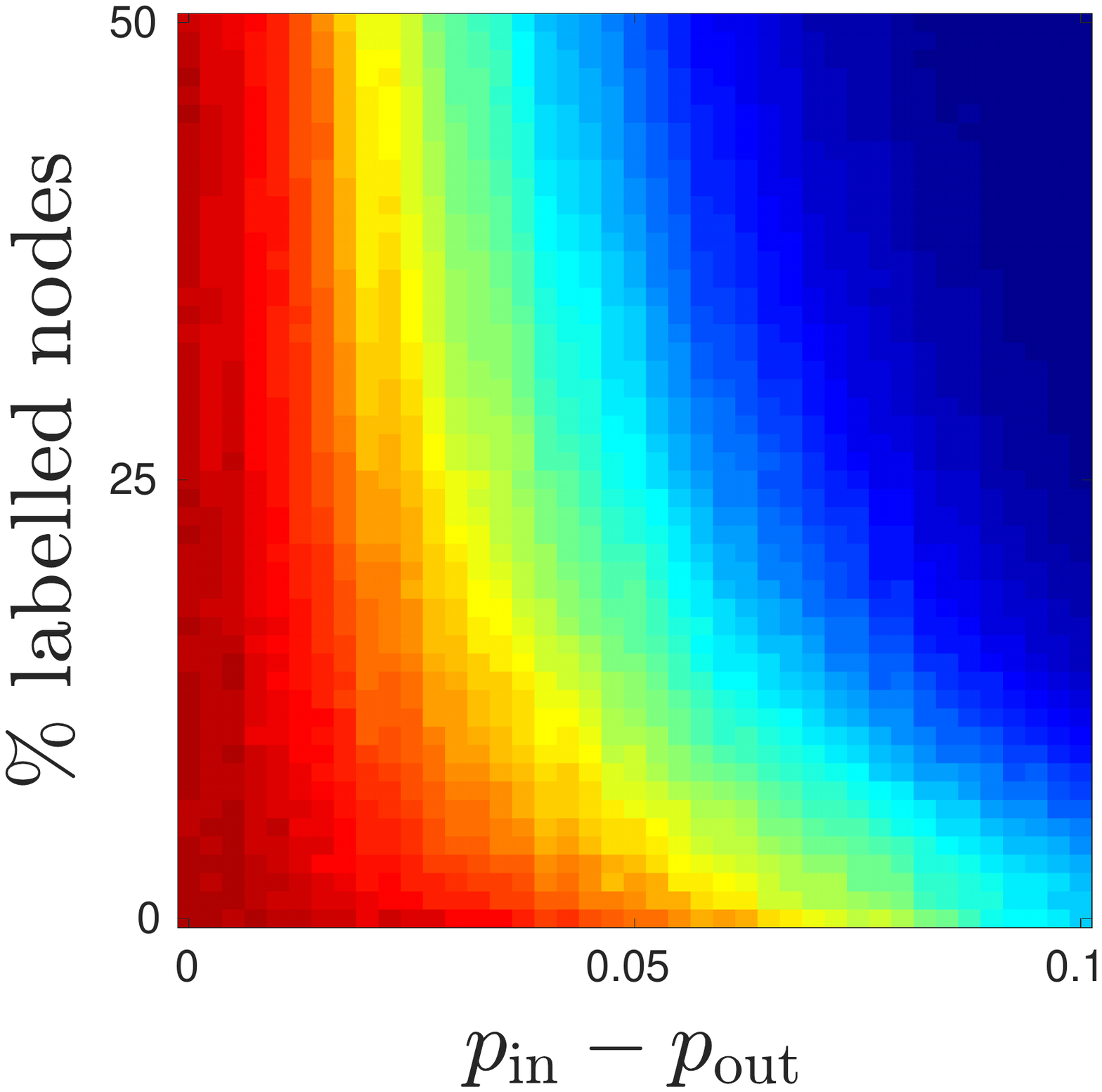}
 \caption{$L_{1}$}
 \end{subfigure}
 \hfill 
 \begin{subfigure}[]{0.18\linewidth}
 \includegraphics[width=1\linewidth, clip,trim=120 30 170 35]{./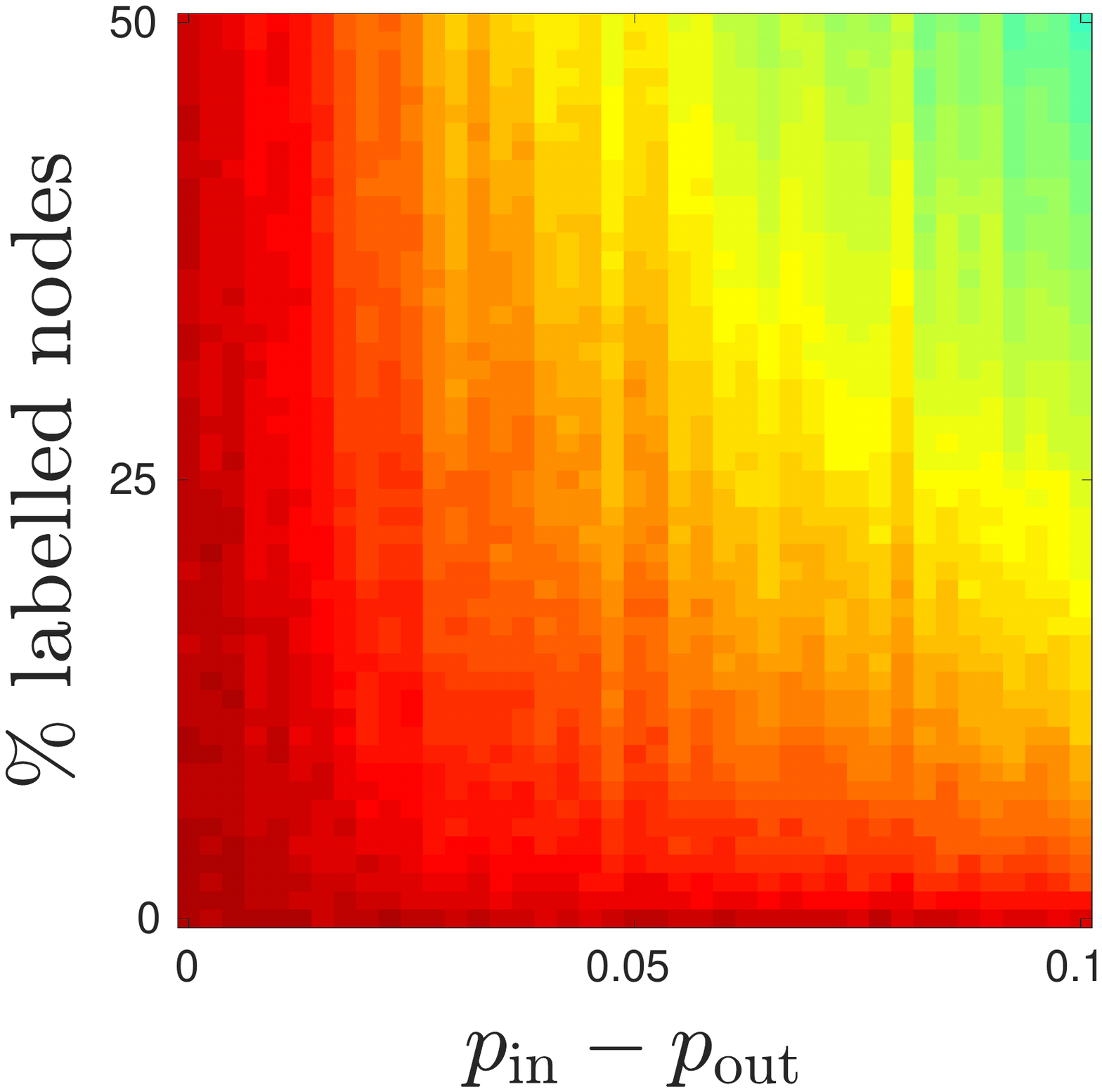}
 \caption{$L_{10}$}
 \end{subfigure}
 \hfill

 \hfill
 \begin{subfigure}[]{0.18\linewidth}
 \includegraphics[width=1\linewidth, clip,trim=120 30 170 35]{./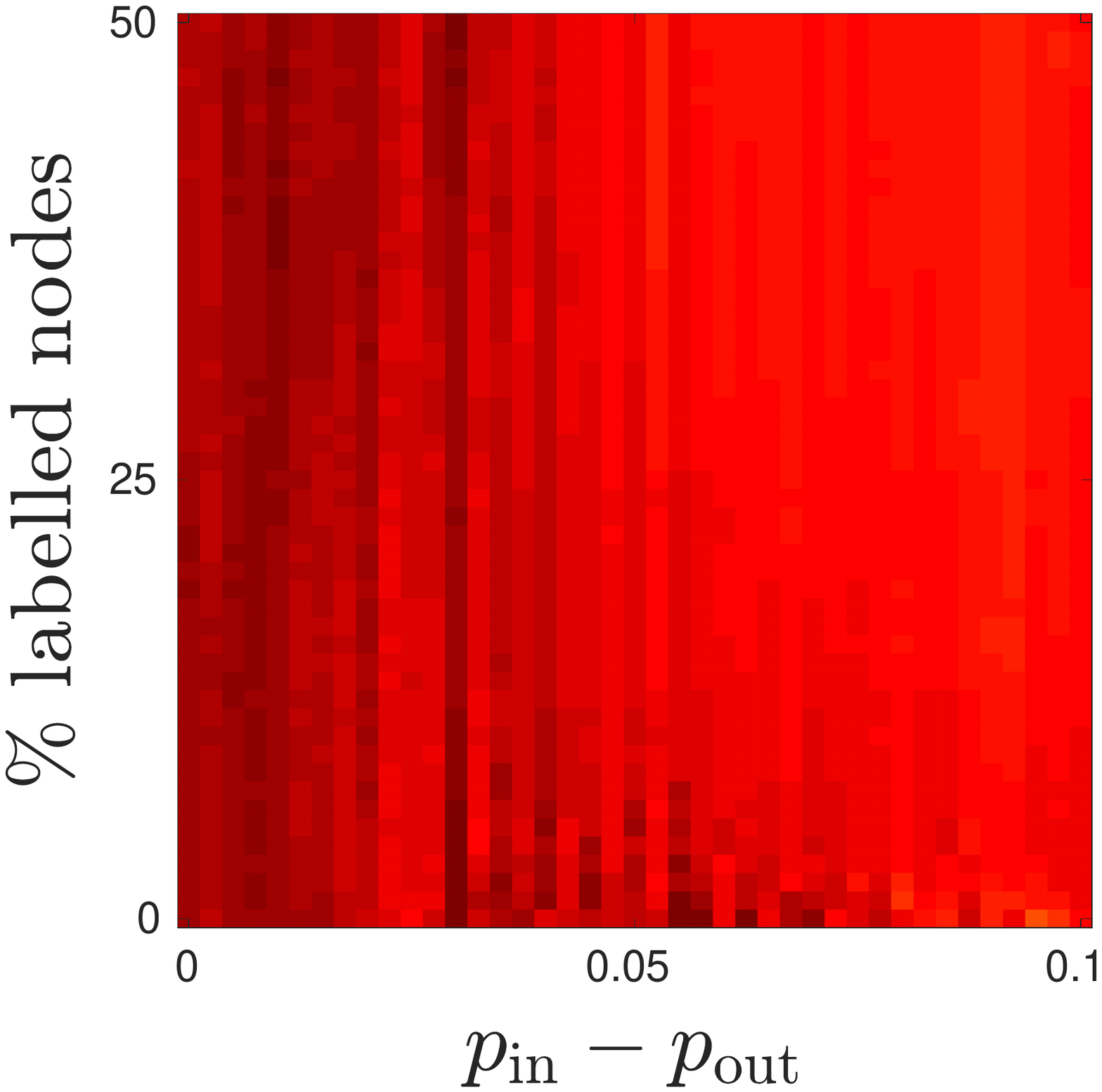}
 \caption{SMACD}
 \end{subfigure}
 \hfill 
 \begin{subfigure}[]{0.18\linewidth}
 \includegraphics[width=1\linewidth, clip,trim=120 30 170 35]{./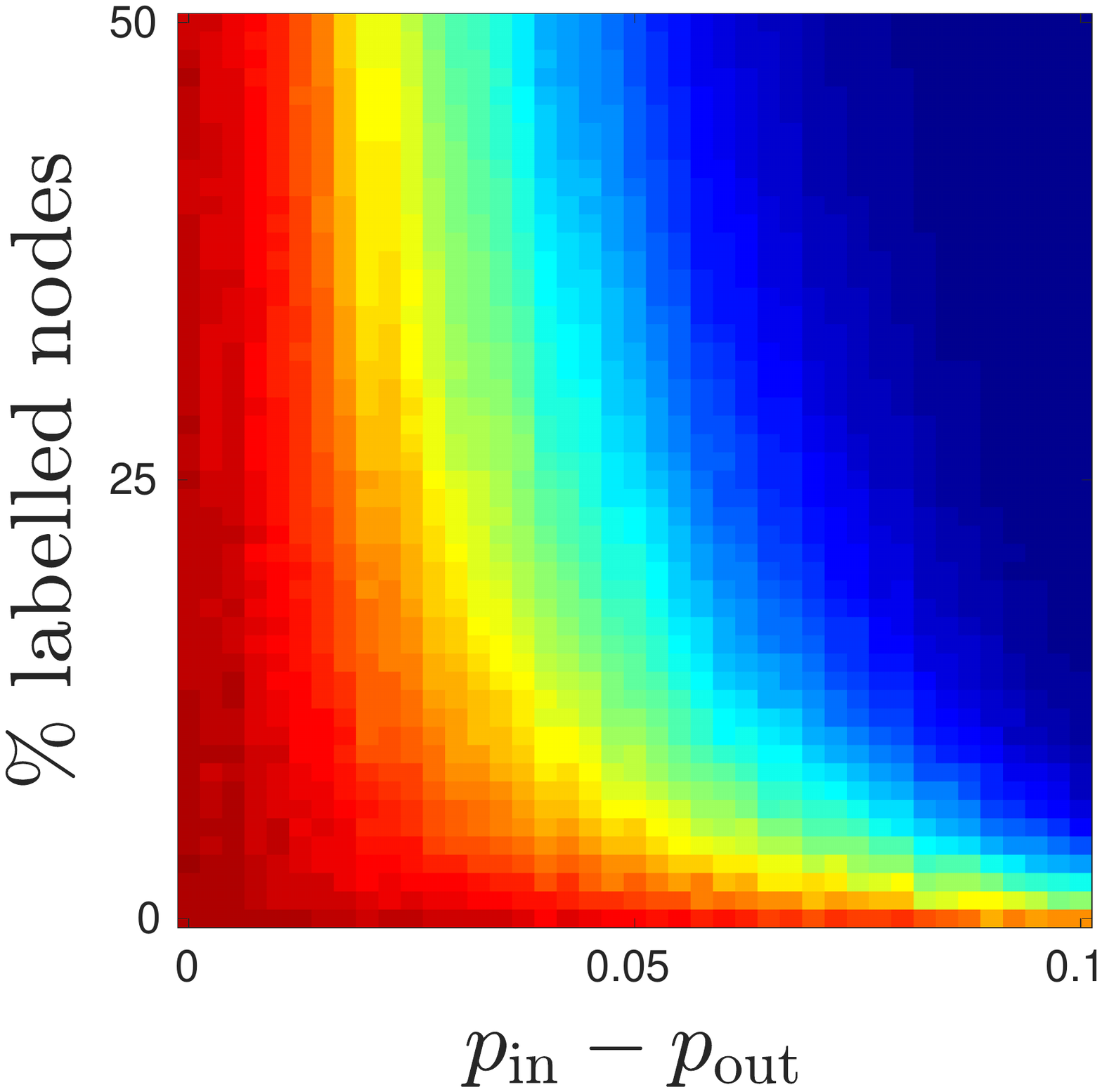}
 \caption{AGML}
 \end{subfigure}
 \hfill 
 \begin{subfigure}[]{0.18\linewidth}
 \includegraphics[width=1\linewidth, clip,trim=120 30 170 35]{./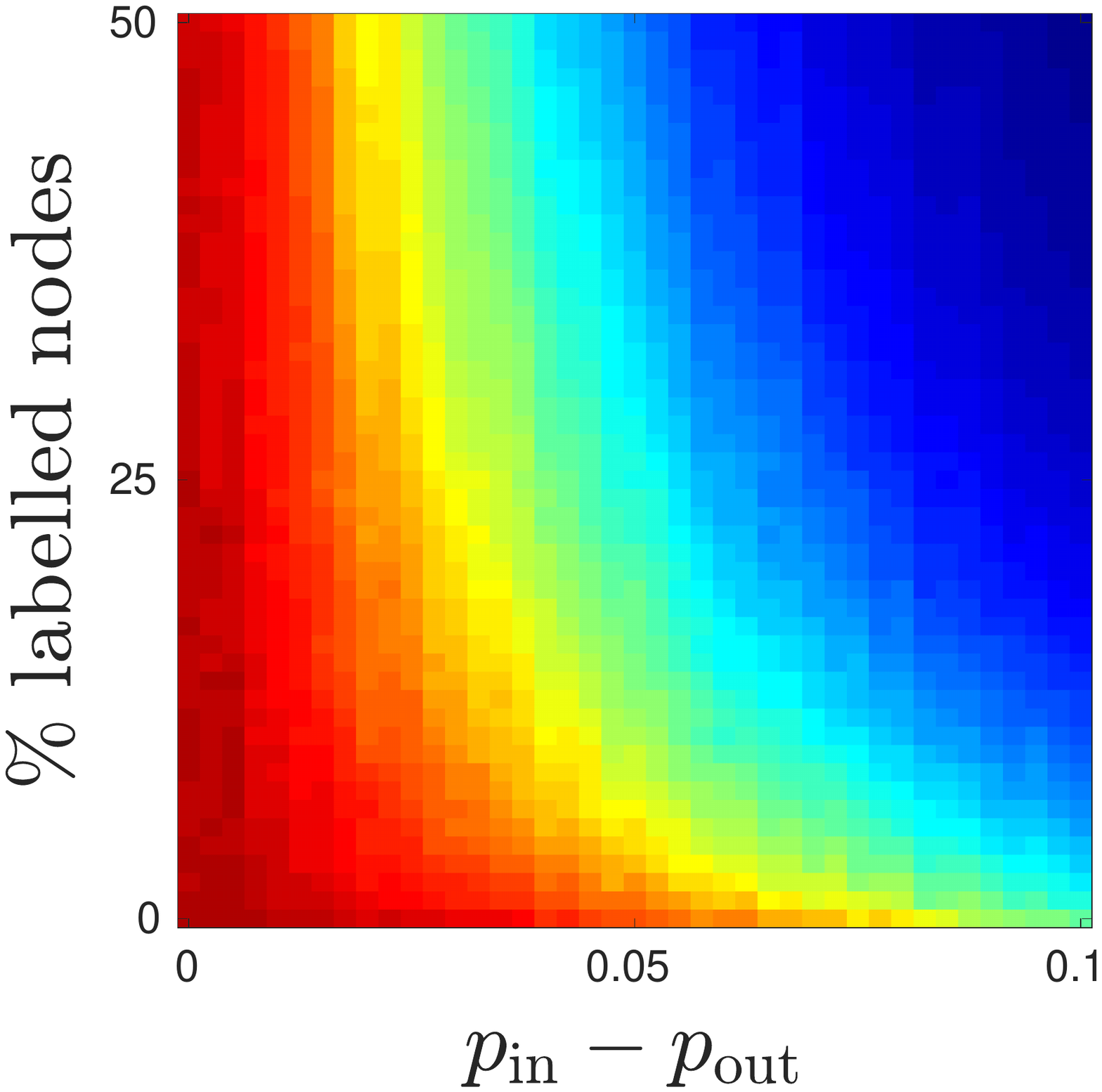}
 \caption{TLMV}
 \end{subfigure}
 \hfill 
 \begin{subfigure}[]{0.18\linewidth}
 \includegraphics[width=1\linewidth, clip,trim=120 30 170 35]{./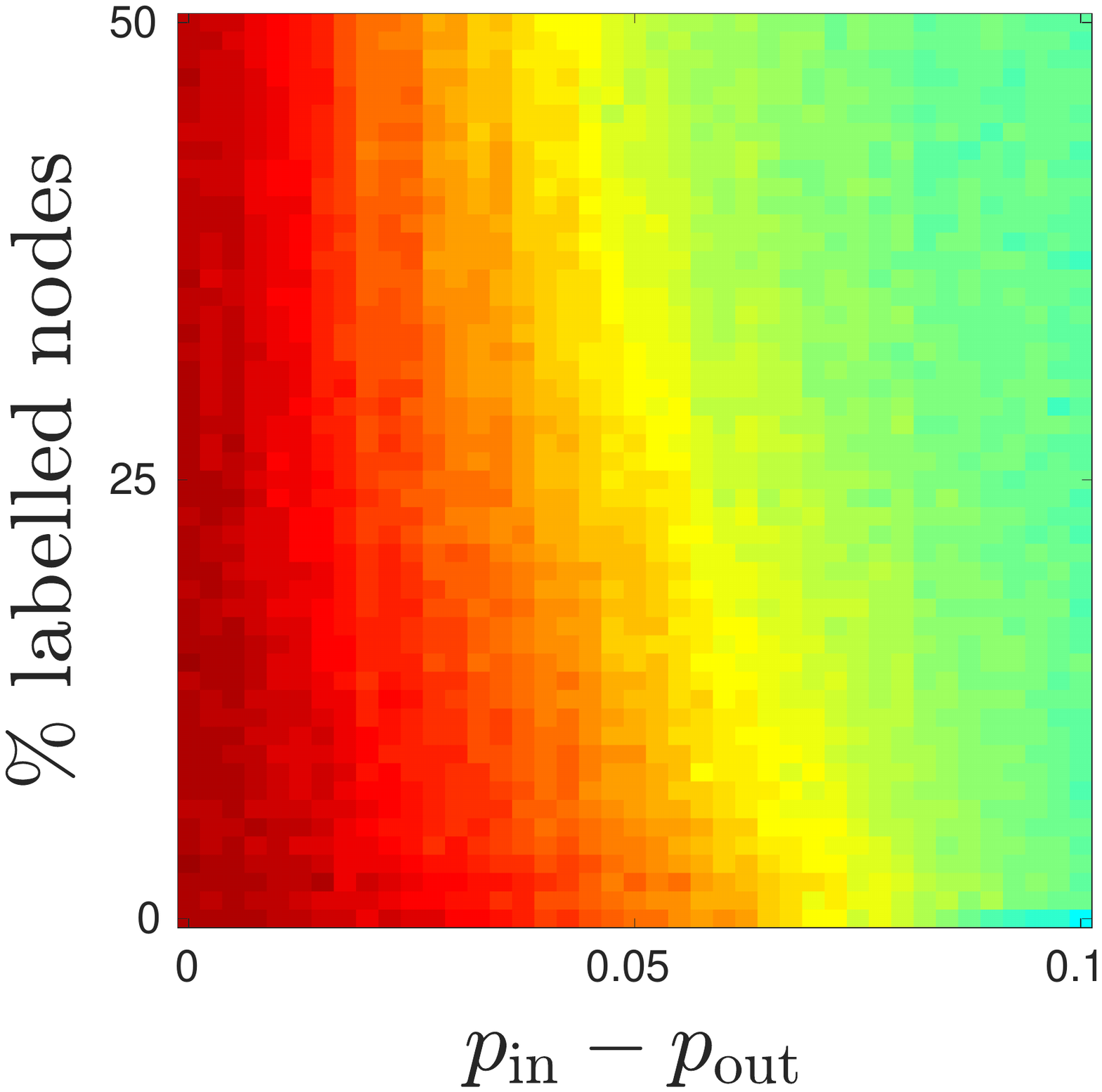}
 \caption{SGMI}
 \end{subfigure}
 \hfill 
 \begin{subfigure}[]{0.18\linewidth}
 \includegraphics[width=1\linewidth, clip,trim=120 30 170 35]{./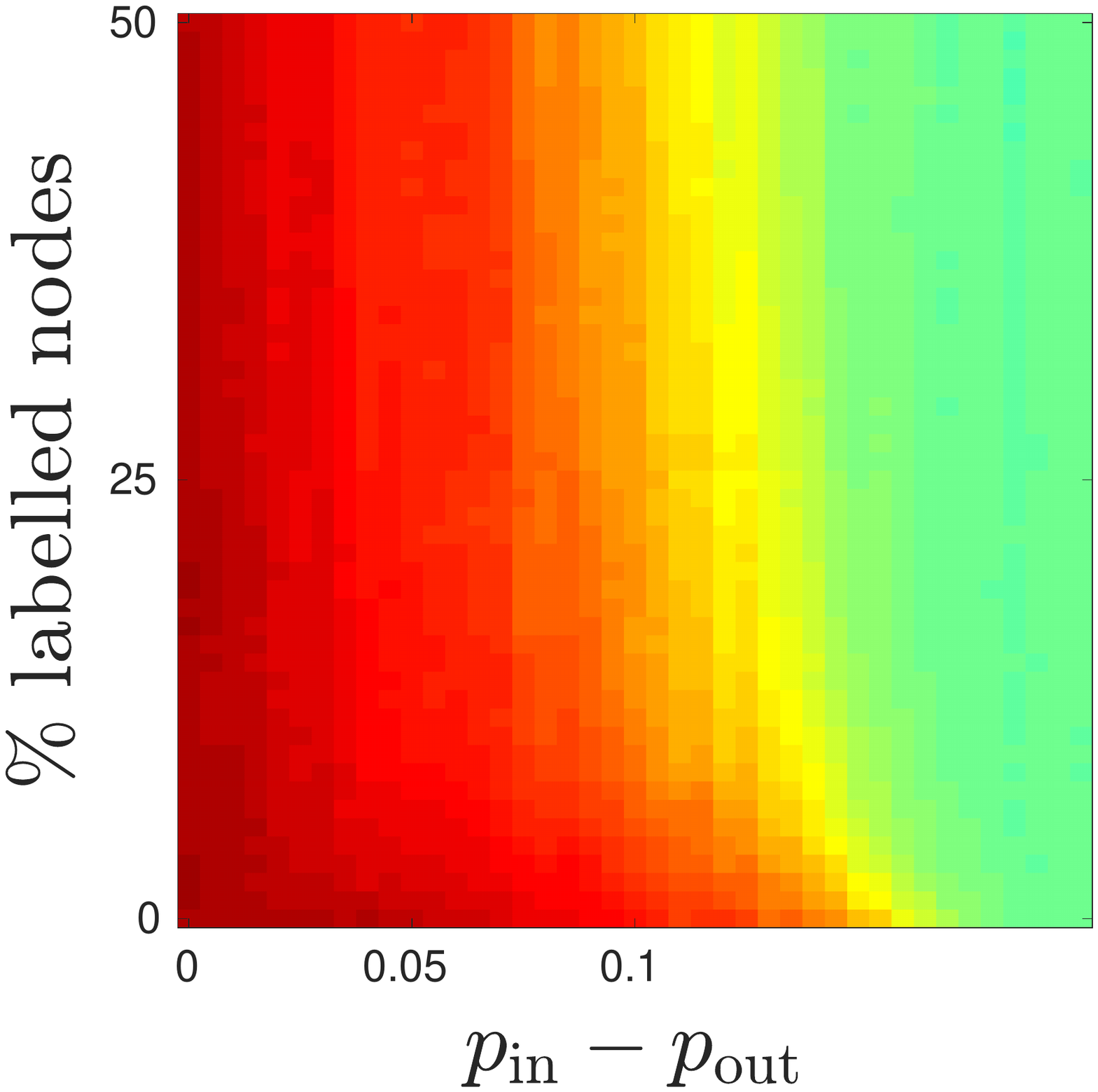}
 \caption{TSS}
 \end{subfigure}
 \hfill
 \caption{Average test error under the SBM.
Multilayer graph with 3 layers and 3 classes.
 \textbf{Top Row:} Particular cases with the power mean Laplacian.
 \textbf{Bottom Row:} State of the art models.
%
 }
 \label{fig:3layers}
 \vspace{-5pt}
\end{figure}
\subsection{Information-Independent Layers}\label{sec:independentInformation}

In the previous section we considered the case where at least one layer had enough information to correctly estimate node class labels. In this section we now consider the case where single layers taken alone obtain a large classification error, whereas when all the layers are taken together it is possible to obtain a good classification performance.
For this setting we consider multilayer graphs with 3 layers ($T=3$) and three classes ($k=3$)  $\mathcal{C}_1,\mathcal{C}_2,\mathcal{C}_3$, each composed by 100 nodes ($\abs{\mathcal{C}}=100$) with the following expected adjacency matrix per layer:
\begin{align}\label{eq:3Clusters}
 \mathcal W^{(t)}_{i,j} = 
\begin{cases} 
      \p, & \quad v_i,v_j\in \mathcal C_t \text{ or }  v_i,v_j\in \overline{\mathcal C_t}\\
      \q, & \quad \textrm{else} \\
   \end{cases} 
\end{align}
for ${t=1,2,3}$, i.e. layer $G^{(t)}$ is informative of class $\mathcal{C}_t$ but not of the remaining classes, and hence any classification method using one single layer will provide a poor classification performance.
In Fig.~\ref{fig:3layers} we present numerical experiments:
for each parameter setting $(\p,\q)$ we generate 5 multilayer graphs together with 5 samples of labeled nodes yielding a total of 25 runs per setting, and report the average test classification error.
Also in this case we observe that
the power mean Laplacian regularizer does identify the global class structure and that it leverages the information provided by labeled nodes, particularly for smaller values of $p$. On the other hand, this is not the case for all other state of the art methods. In fact, we can see that SGMI and TSS performs similarly to $L_{10}$  which has the largest classification error. 
Moreover, we can see that AGML and TLMV perform similarly to the arithmetic mean of Laplacians $L_1$, which in turn is outperformed by the power mean Laplacian regularizer $L_{-10}$.
Please see the supplementary material for a more detailed comparison.
\def\phi{\varphi}
\section{A Scalable Matrix-free Numerical Method for the System $(I+\lambda L_p)f = Y$}\label{Section:numerics} 
In this section we introduce a matrix-free method for the solution of the system $(I+\lambda L_p)f = Y$ based on contour integrals and Krylov subspace methods.
The method exploits the sparsity of the Laplacians of each layer and is matrix-free, in the sense that it requires only to compute the matrix-vector product $L_\sym^{(i)}\times vector$, without requiring to store the matrices. 
Thus, when the layers are sparse, the method scales to large datasets. Observe that this is a critical requirement as $L_p$ is in general a dense matrix, even for very sparse layers, and thus computing and storing $L_p$ is very prohibitive for large multilayer graphs. 
We present a method for negative integer values $p<0$, leaving aside the limit case $p\to 0$ as it requires a particular treatment.
The following is a brief overview of the proposed approach. Further details are available in the supplementary material.

Let $A_1,\ldots,A_T$ be symmetric positive definite matrices,
$\phi:\mathbb{C}\to\mathbb{C}$ defined by $\phi(z)= z^{1/p}$ and $L_p = T^{-1/p}\phi(S_p)$, where $S_p = A_1^p+\cdots+A_T^p$. The proposed method consists of three main steps:
\begin{enumerate}[topsep=-3pt,leftmargin=*]\setlength\itemsep{-3pt}
 \item 
 We  solve the system $(I+\lambda L_p)^{-1}Y$ 
 via a Krylov method (e.g. PCG or GMRES) 
 with convergence rate
 $ O( (\frac{\kappa^2-1}{\kappa^2})^{h/2} )$~\cite{saad1986gmres}, where 
$\kappa=\lambda_\text{max}(L_p)/\lambda_\text{min}(L_p)$. At iteration $h$, this method projects the problem onto the Krylov subspace spanned by 
$ \{Y,\lambda L_p Y , (\lambda L_p)^2Y,\ldots,(\lambda L_p)^h Y\}, $
and efficiently solve the projected problem.

\item The previous step requires the matrix-vector product $L_p Y=T^{-1/p}\phi(S_p)Y$ which we compute by approximating the  Cauchy integral form of the function $\phi$ with  the trapezoidal rule in the complex plane~\cite{hale2008computing}. Taking $N$ suitable contour points and coefficients $\beta_0,\ldots,\beta_N$, we have
\begin{align}
\textstyle{\phi_N(S_p)Y = \beta_0 S_p\,\, \mathrm{Im}\left\{\sum_{i=1}^N \beta_i (z_i^2I-S_p)^{-1}Y \right\},}
\end{align}
which has geometric convergence~\cite{hale2008computing}:
$ \|\phi(S_p)Y - \phi_N(S_p)Y\| = O(e^{-2\pi^2N/(\ln(M/m)+6)})$, 
where $m,M$ are such that $M\geq\lambda_\text{max}(S_p)$ and $m\leq \lambda_\text{min}(S_p)$.

\item The previous step requires to solve linear systems of the form $(zI-S_p)^{-1}Y$. We solve each of these systems via a Krylov subspace method, projecting, at each iteration $h$, onto the subspace spanned by $  \{Y,S_p Y , S_p^2Y,\ldots,S_p^h Y\}$. 
Since $S_p=\sum_{i=1}^T A_i^{-\abs{p}}$
this problem reduces to computing $\abs{p}$ linear systems with $A_i$ as coefficient matrix, for $i=1\ldots,T$. Provided that $A_1,\ldots,A_T$ are sparse matrices, this is done efficiently using pcg with incomplete Cholesky preconditioners.
\end{enumerate}
Notice that the method allows a high level of parallelism. In fact, the $N$ (resp.\ $p$) linear systems solvers at step $2$ (resp.\ $3$) are independent and can be run in parallel. Moreover, note that the main task of the method is solving linear systems with Laplacian matrices, which can be solved linearly in the number of edges in the corresponding adjacency matrix. Hence, the proposed approach scales to large sparse graphs and is highly parallelizable. 
A time execution analysis is provided in Fig~\ref{fig:timeComparison}, where we can see that the time execution of
our approach is competitive to the state of the art as TSS\cite{Tsuda:2005:FPC:1093772.1181531}, outperforming AGML\cite{Nie:2016:PAM:3060832.3060884}, SGMI\cite{Karasuyama:2013} and SMACD\cite{gujral2018smacd}.
%

%
\begin{figure*}[t]
\floatbox[{\capbeside\thisfloatsetup{capbesideposition={right,top},capbesidewidth=.6\textwidth}}]{figure}[\FBwidth]%
{
\vspace{-5pt}
\caption{
Mean execution time of 10 runs for different methods. 
$L_{-1}(\textrm{ours})$
stands for the power mean Laplacian regularizer together with our proposed matrix-free contour integral based method. 
We generate multilayer graphs with two layers, each with two classes of same size with parameters $p_{\mathrm{in}} = 0.05$ and $p_{\mathrm{in}} = 0.025$
and graphs of of sizes $[0.5,1,2,4,8]\times 10^4$.
Observe that our matrix free approach for $L_{-1}$ (solid blue curve) is competitive to state of the art approaches 
as TSS\cite{Tsuda:2005:FPC:1093772.1181531}, outperforming AGML\cite{Nie:2016:PAM:3060832.3060884}, SGMI\cite{Karasuyama:2013} and SMACD\cite{gujral2018smacd}.
For TLMV\cite{zhou2007spectral} and SGMI we use our own implementation.
}
    \label{fig:timeComparison}}{
\begin{minipage}{.35\textwidth}
  \begin{center}
    \includegraphics[width=1.0\linewidth, clip,trim=00 200 35 390]{./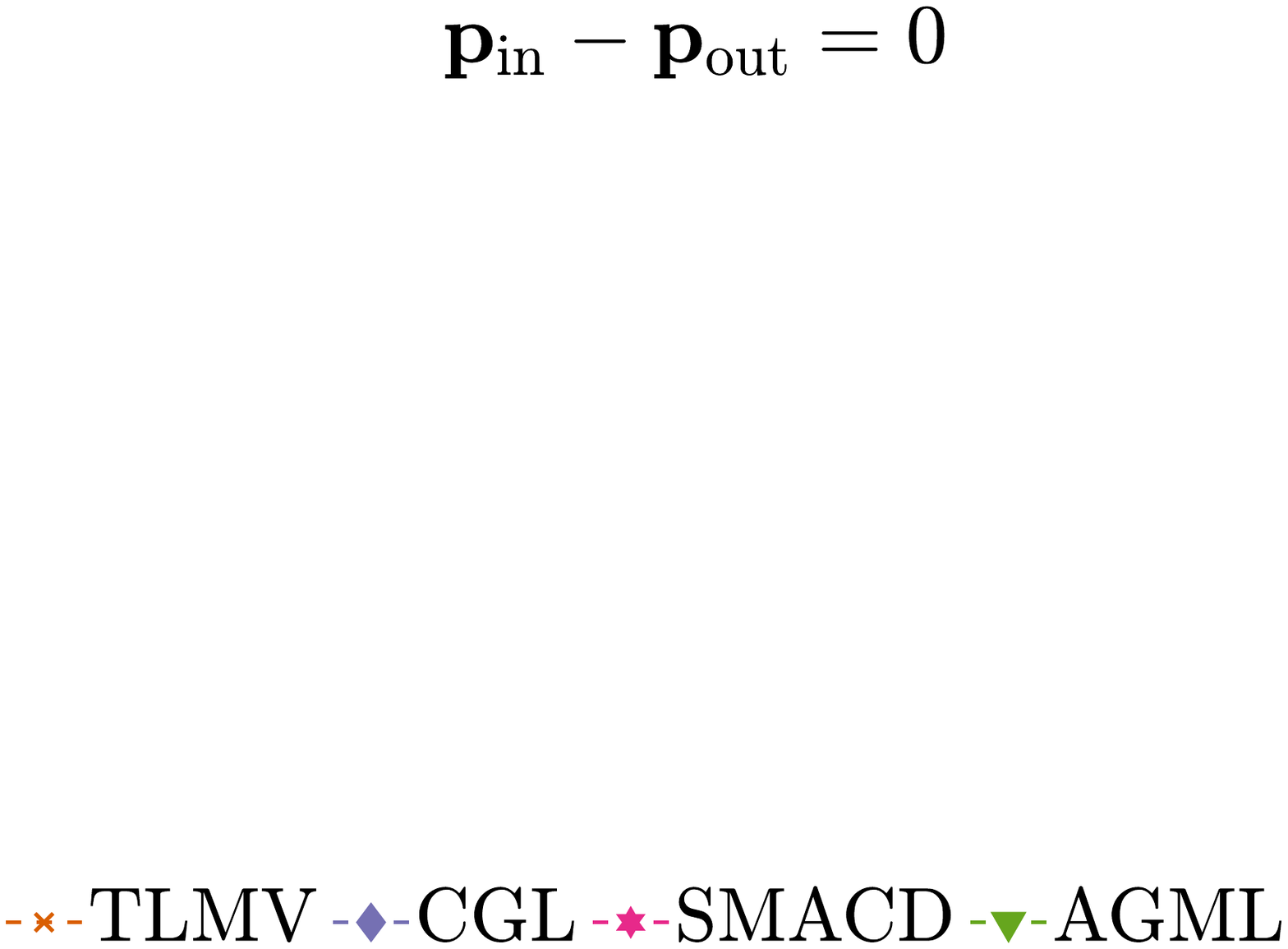}
    \vspace{-12pt}
    \\
    \includegraphics[width=1.0\linewidth, clip,trim=00 282 35 300]{./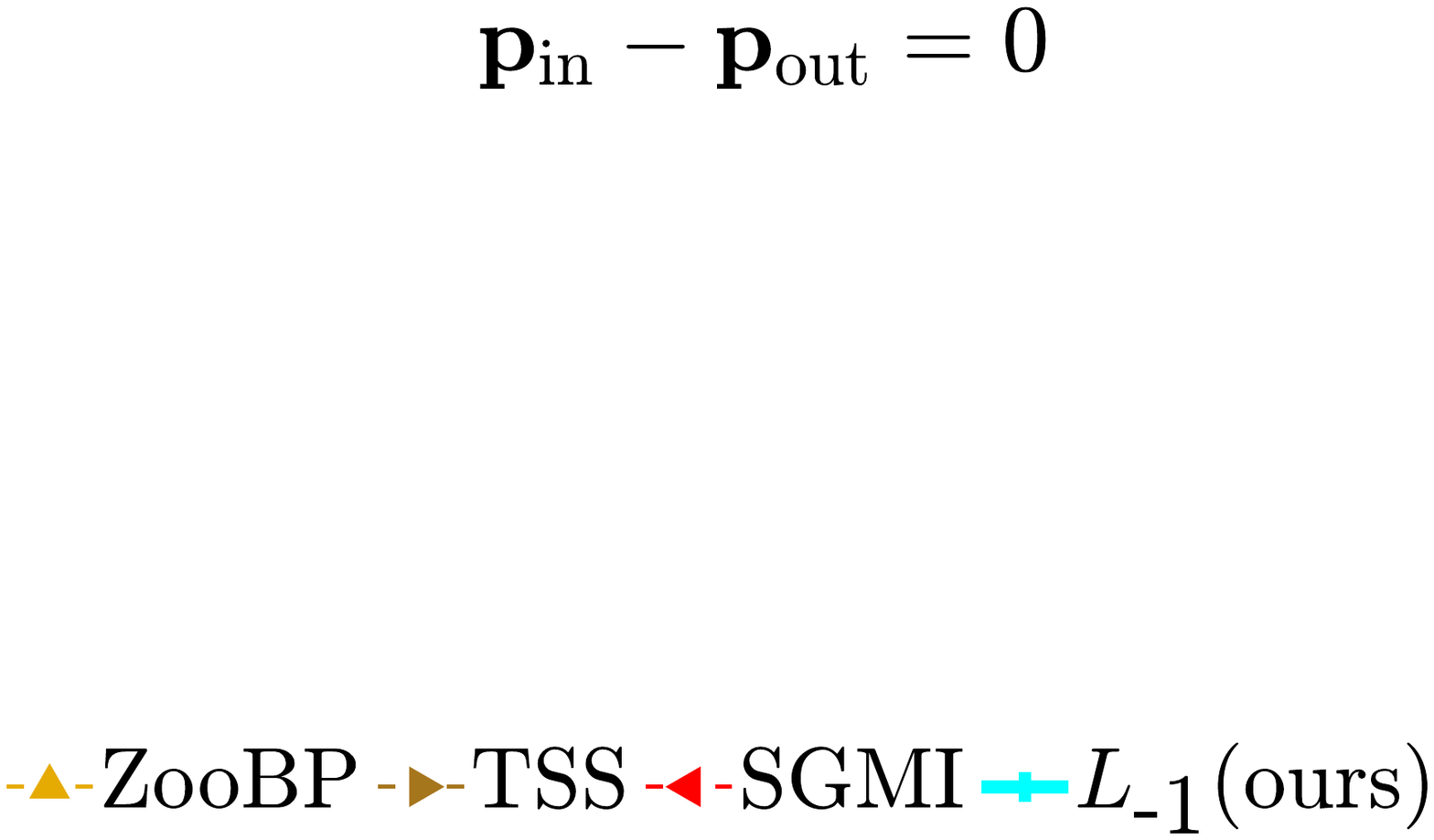}
    \\
    \includegraphics[width=1.0\textwidth, clip,trim=40 35 80 65]{./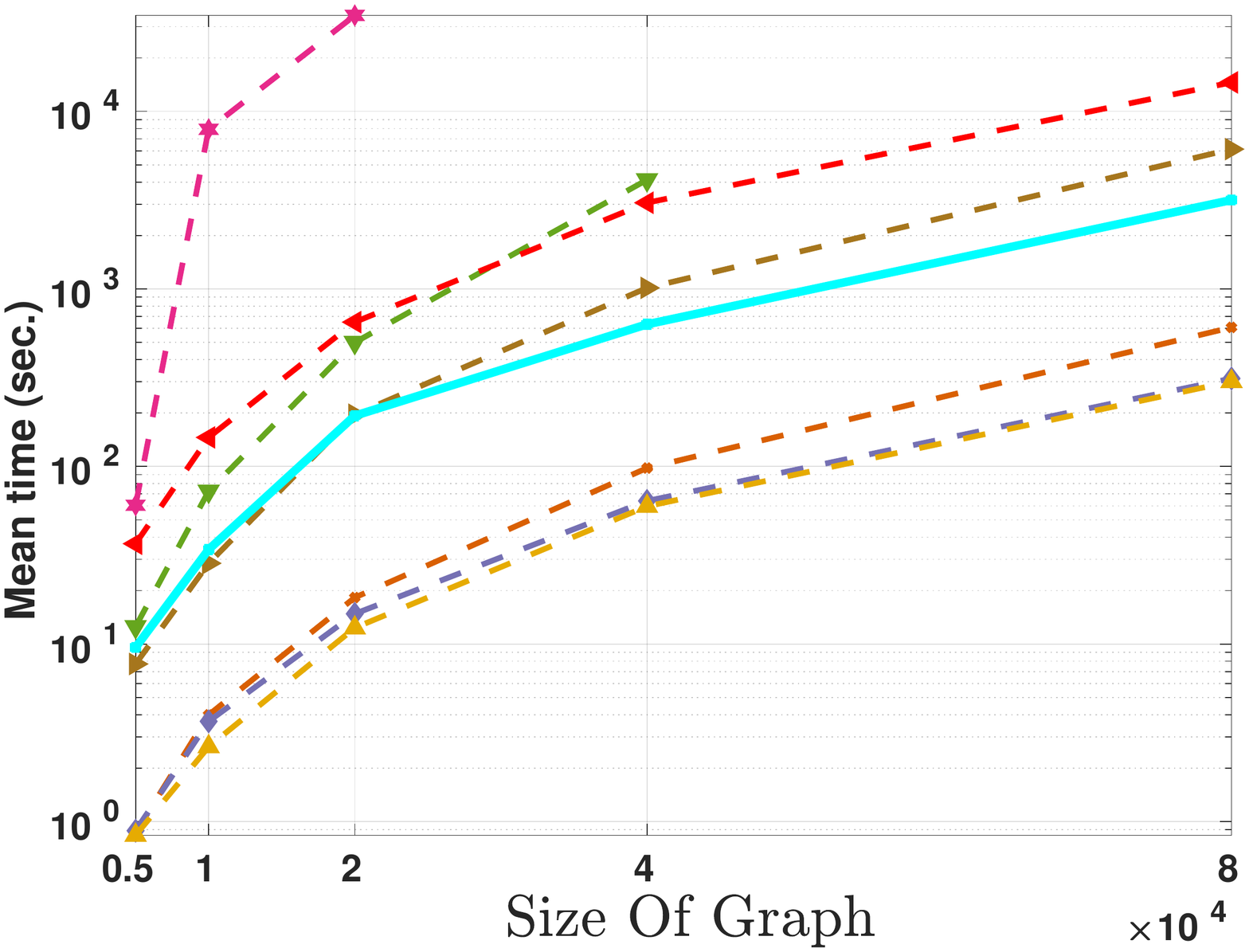}
  \end{center}
\end{minipage}
}
    \vspace{-15pt}
\end{figure*}
%

%
\section{Experiments on Real Datasets}\label{Section:experiments}
In this section we compare the performance of the proposed approach with state of the art methods on real world datasets. 
We consider the following datasets:
\textit{3-sources}~\cite{liu2013multi}, which consists of news articles that were covered by news sources BBC, Reuters and Guardian;
\textit{BBC}\cite{Greene2005} and \textit{BBC Sports}\cite{greene2009matrix} news articles,
a dataset of Wikipedia articles with ten different classes~\cite{rasiwasia2010new}, 
the hand written \textit{UCI} digits dataset with six different set of
features, 
and citations datasets
\textit{CiteSeer}\cite{lu:icml03},
\textit{Cora}\cite{mccallum2000automating} and
\textit{WebKB}(Texas)\cite{Craven:1998:LES:295240.295725}. 
For each dataset we build the corresponding layer adjacency matrices by taking the symmetric $k$-nearest neighbour graph using as similarity measure the Pearson linear correlation, (i.e. we take the $k$ neighbours with highest correlation), and take the unweighted version of it.
Datasets CiteSeer, Cora and WebKB have only two layers, where the first one is a fixed precomputed citation layer, and the second one is the corresponding  $k$-nearest neighbour graph built from document features.

As \textbf{baseline methods} we consider:
TSS~\cite{Tsuda:2005:FPC:1093772.1181531} which identifies an optimal linear combination of graph Laplacians,
SGMI~\cite{Karasuyama:2013} which performs label propagation by sparse integration,
TLMV~\cite{zhou2007spectral} which is a weighted arithmetic mean of adjacency matrices,
CGL~\cite{Argyriou:2005} which is a convex combination of the pseudo inverse Laplacian kernel,
AGML~\cite{Nie:2016:PAM:3060832.3060884} which is a parameter-free method for optimal graph layer weights,
ZooBP~\cite{Eswaran:2017:ZBP:3055540.3055554} which is a fast approximation of Belief Propagation, and
SMACD~\cite{gujral2018smacd} which is a tensor factorization method designed for semi-supervised learning.
Finally we set parameters for TSS to ($c=10,c_0=0.4$), SMACD ($\lambda=0.01$)\footnote{this is the default value in the code released by the authors: https://github.com/egujr001/SMACD},
TLMV ($\lambda = 1$), SGMI ($\lambda_1=1,\lambda_2=10^{-3}$)
and $\lambda=0.1$ for $L_1$ and $\lambda=10$ for $L_{-1}$ and $L_{-10}$.
We do not perform cross validation in our experimental setting due to the large execution time in some of the methods here considered. Hence we fix the parameters for each method in all experiments.

We fix nearest neighbourhood size to $k=10$ and generate 10 samples of labeled nodes, where the percentage of labeled nodes per class is in the range $\{1\%,5\%,10\%,15\%,20\%,25\%\}$. The average test errors are presented in table~\ref{table:experiments}, where the \colorbox{gray!25}{\textbf{best}} (resp.\colorbox{gray!25}{second best}) performances are marked with bold fonts and gray background (resp. with only gray background).
We can see that the first and second best positions are in general taken by the power mean Laplacian regularizers $L_1,L_{-1},L_{-10}$, being clear for all datasets except with 3-sources. Moreover we can see that in $77\%$ of all cases $L_{-1}$ presents either the best or the second best performance,
further verifying that our proposed approach based on the power mean Laplacian for semi-supervised learning in multilayer graph is a competitive alternative to 
state of the art methods\footnote{Communications with the authors of~\cite{gujral2018smacd} could not clarify the bad performance of SMACD.}. 
\vfill
\begin{table}[t]\label{table:experiments}         
\scriptsize
\begin{tabular}{|ccccccc|}
\hline
\multicolumn{7}{|c|}{\textbf{3sources}}  \\
\hline                                                         
  & $1\%$ & $5\%$ & $10\%$ & $15\%$ & $20\%$ & $25\%$ \\       
\specialrule{1pt}{-1pt}{1pt}                                                                                                                                        
$\textrm{TLMV}$ & 29.8 & \cellcolor{gray!25}21.5 & \cellcolor{gray!25}\textbf{20.8} & 20.3 & 15.5 & 16.5 \\                               
$\textrm{CGL}$ & 50.2 & 45.5 & 36.4 & 30.6 & 23.8 & 19.8 \\                                                                               
$\textrm{SMACD}$ & 91.5 & 91.1 & 91.2 & 90.9 & 90.7 & 91.3 \\                                                                             
$\textrm{AGML}$ & \cellcolor{gray!25}\textbf{23.9} & 26.3 & 33.9 & 33.3 & 26.1 & 22.0 \\                                                  
$\textrm{ZooBP}$ & 31.0 & 21.9 & \cellcolor{gray!25}21.3 & \cellcolor{gray!25}19.8 & \cellcolor{gray!25}15.0 & 15.3 \\                    
$\textrm{TSS}$ & 29.8 & 23.9 & 33.1 & 34.6 & 34.8 & 35.0 \\                                                                               
$\textrm{SGMI}$ & 34.4 & 26.6 & 25.4 & 24.4 & 19.1 & 17.9 \\                                                                              
$L_{\textrm{1}}$ & 33.5 & 23.9 & 23.4 & 20.1 & 15.6 & \cellcolor{gray!25}\textbf{14.6} \\                                                 
$L_{\textrm{-1}}$ & \cellcolor{gray!25}28.4 & \cellcolor{gray!25}\textbf{20.0} & 21.8 & 22.0 & 17.2 & 17.9 \\                             
$L_{\textrm{-10}}$ & 40.9 & 29.1 & 21.9 & \cellcolor{gray!25}\textbf{19.3} & \cellcolor{gray!25}\textbf{14.8} & \cellcolor{gray!25}14.7 \\
\hline                                               
\end{tabular} 
\hfill
\hfill
\begin{tabular}{|ccccccc|}   
\hline
\multicolumn{7}{|c|}{\textbf{BBC}}  \\
\hline
& $1\%$ & $5\%$ & $10\%$ & $15\%$ & $20\%$ & $25\%$ \\       
\specialrule{1pt}{-1pt}{1pt}                                                      
$\textrm{TLMV}$ & \cellcolor{gray!25}\textbf{29.0} & \cellcolor{gray!25}19.3 & \cellcolor{gray!25}13.2 & \cellcolor{gray!25}11.1 & \cellcolor{gray!25}9.3 & \cellcolor{gray!25}8.8 \\                                      
$\textrm{CGL}$ & 72.5 & 52.3 & 36.1 & 27.4 & 22.0 & 17.1 \\                                                                                                                                                                
$\textrm{SMACD}$ & 74.4 & 73.5 & 72.8 & 72.6 & 72.5 & 72.4 \\                                                                                                                                                              
$\textrm{AGML}$ & 60.0 & 34.2 & 18.6 & 13.1 & 11.0 & 9.5 \\                                                                                                                                                                
$\textrm{ZooBP}$ & 31.1 & 20.1 & 15.0 & 12.2 & 10.0 & 9.1 \\                                                                                                                                                               
$\textrm{TSS}$ & 40.4 & 26.1 & 20.9 & 20.1 & 19.8 & 19.7 \\                                                                                                                                                                
$\textrm{SGMI}$ & 37.6 & 28.9 & 24.9 & 22.8 & 20.7 & 19.3 \\                                                                                                                                                               
$L_{\textrm{1}}$ & 31.3 & 22.8 & 17.4 & 13.5 & 10.2 & 8.9 \\                                                                                                                                                               
$L_{\textrm{-1}}$ & \cellcolor{gray!25}31.0 & \cellcolor{gray!25}\textbf{17.0} & \cellcolor{gray!25}\textbf{11.5} & \cellcolor{gray!25}\textbf{10.5} & \cellcolor{gray!25}\textbf{9.2} & \cellcolor{gray!25}\textbf{8.7} \\
$L_{\textrm{-10}}$ & 51.6 & 26.9 & 16.6 & 12.8 & 10.3 & 9.5 \\                                                                                                                                                             
\hline                                                          
\end{tabular}  
\hfill
\\
\vspace{10pt}
\begin{tabular}{|ccccccc|}   
\hline
\multicolumn{7}{|c|}{\textbf{BBCS}}  \\
\hline                                                         
  & $1\%$ & $5\%$ & $10\%$ & $15\%$ & $20\%$ & $25\%$ \\       
\specialrule{1pt}{-1pt}{1pt}     
$\textrm{TLMV}$ & 25.6 & \cellcolor{gray!25}12.6 & \cellcolor{gray!25}10.5 & 7.5 & 6.4 & 5.4 \\                                                                                                                                   
$\textrm{CGL}$ & 79.2 & 51.6 & 34.9 & 23.4 & 16.5 & 12.7 \\                                                                                                                                                                       
$\textrm{SMACD}$ & 77.8 & 80.6 & 82.4 & 96.4 & 98.4 & 98.3 \\                                                                                                                                                                     
$\textrm{AGML}$ & 34.6 & 17.4 & 12.1 & \cellcolor{gray!25}7.0 & \cellcolor{gray!25}6.0 & \cellcolor{gray!25}5.4 \\                                                                                                                
$\textrm{ZooBP}$ & 33.8 & 13.9 & 11.3 & 8.8 & 7.6 & 6.2 \\                                                                                                                                                                        
$\textrm{TSS}$ & \cellcolor{gray!25}23.9 & 13.2 & 14.1 & 12.3 & 13.1 & 12.2 \\                                                                                                                                                    
$\textrm{SGMI}$ & 31.9 & 19.6 & 16.6 & 15.5 & 14.8 & 12.1 \\                                                                                                                                                                      
$L_{\textrm{1}}$ & 29.9 & 15.0 & 13.5 & 10.6 & 8.7 & 7.2 \\                                                                                                                                                                       
$L_{\textrm{-1}}$ & \cellcolor{gray!25}\textbf{23.8} & \cellcolor{gray!25}\textbf{11.6} & \cellcolor{gray!25}\textbf{8.7} & \cellcolor{gray!25}\textbf{6.3} & \cellcolor{gray!25}\textbf{5.8} & \cellcolor{gray!25}\textbf{5.1} \\
$L_{\textrm{-10}}$ & 48.7 & 22.5 & 14.2 & 9.1 & 7.8 & 6.1 \\                                                                                                                                                                      
\hline                                                 
\end{tabular} 
\hfill
\begin{tabular}{|ccccccc|}       
\hline
\multicolumn{7}{|c|}{\textbf{Wikipedia}}  \\
\hline
& $1\%$ & $5\%$ & $10\%$ & $15\%$ & $20\%$ & $25\%$ \\       
\specialrule{1pt}{-1pt}{1pt}                                                      
$\textrm{TLMV}$ & \cellcolor{gray!25}65.7 & \cellcolor{gray!25}56.8 & 46.4 & 43.1 & 40.8 & 39.2 \\                                                                                                                                    
$\textrm{CGL}$ & 87.3 & 83.0 & 82.5 & 82.2 & 83.0 & 83.0 \\                                                                                                                                                                           
$\textrm{SMACD}$ & 85.4 & 85.6 & 85.4 & 85.3 & 86.8 & 90.0 \\                                                                                                                                                                         
$\textrm{AGML}$ & 71.3 & 66.6 & 48.1 & 42.1 & 38.4 & 37.3 \\                                                                                                                                                                          
$\textrm{ZooBP}$ & 67.6 & 58.0 & 47.0 & 43.8 & 41.2 & 39.8 \\                                                                                                                                                                         
$\textrm{TSS}$ & 87.7 & 84.7 & 83.3 & 81.9 & 82.3 & 81.4 \\                                                                                                                                                                           
$\textrm{SGMI}$ & 69.3 & 84.8 & 84.5 & 83.8 & 83.2 & 82.8 \\                                                                                                                                                                          
$L_{\textrm{1}}$ & 68.2 & 61.1 & 53.6 & 48.3 & 44.1 & 42.3 \\                                                                                                                                                                         
$L_{\textrm{-1}}$ & \cellcolor{gray!25}\textbf{59.1} & \cellcolor{gray!25}\textbf{52.3} & \cellcolor{gray!25}\textbf{40.2} & \cellcolor{gray!25}\textbf{36.3} & \cellcolor{gray!25}\textbf{35.1} & \cellcolor{gray!25}\textbf{34.1} \\
$L_{\textrm{-10}}$ & 66.9 & 57.2 & \cellcolor{gray!25}43.2 & \cellcolor{gray!25}38.7 & \cellcolor{gray!25}36.3 & \cellcolor{gray!25}34.9 \\                                                                                           
\hline                                                         
\end{tabular}  
\hfill
\\
\vspace{10pt}
\begin{tabular}{|ccccccc|}    
\hline
\multicolumn{7}{|c|}{\textbf{UCI}}  \\
\hline                                                         
  & $1\%$ & $5\%$ & $10\%$ & $15\%$ & $20\%$ & $25\%$ \\       
\specialrule{1pt}{-1pt}{1pt}     
$\textrm{TLMV}$ & 28.9 & 20.4 & 16.3 & 14.4 & 13.7 & 12.7 \\                                                                                                                                              
$\textrm{CGL}$ & 81.8 & 64.0 & 54.6 & 49.1 & 46.7 & 46.7 \\                                                                                                                                               
$\textrm{SMACD}$ & 73.6 & 81.0 & 90.0 & 90.0 & 86.2 & 81.9 \\                                                                                                                                             
$\textrm{AGML}$ & \cellcolor{gray!25}25.3 & \cellcolor{gray!25}17.2 & \cellcolor{gray!25}15.2 & \cellcolor{gray!25}13.2 & \cellcolor{gray!25}12.5 & \cellcolor{gray!25}12.0 \\                            
$\textrm{ZooBP}$ & 30.8 & 21.7 & 17.6 & 15.1 & 14.1 & 13.0 \\                                                                                                                                             
$\textrm{TSS}$ & \cellcolor{gray!25}\textbf{24.0} & 17.6 & 16.6 & 15.9 & 15.8 & 15.6 \\                                                                                                                   
$\textrm{SGMI}$ & 36.0 & 44.4 & 50.9 & 50.4 & 50.2 & 48.8 \\                                                                                                                                              
$L_{\textrm{1}}$ & 31.3 & 23.8 & 18.7 & 15.6 & 14.4 & 13.2 \\                                                                                                                                             
$L_{\textrm{-1}}$ & 30.5 & \cellcolor{gray!25}\textbf{17.1} & \cellcolor{gray!25}\textbf{13.8} & \cellcolor{gray!25}\textbf{12.6} & \cellcolor{gray!25}\textbf{12.3} & \cellcolor{gray!25}\textbf{11.9} \\
$L_{\textrm{-10}}$ & 57.0 & 33.8 & 23.7 & 17.6 & 15.3 & 13.4 \\                                                                                                                                           
\hline                                                 
\end{tabular} 
\hfill
\hfill
\begin{tabular}{|ccccccc|}     
\hline
\multicolumn{7}{|c|}{\textbf{Citeseer}}  \\
\hline
& $1\%$ & $5\%$ & $10\%$ & $15\%$ & $20\%$ & $25\%$ \\       
\specialrule{1pt}{-1pt}{1pt}                                                      
$\textrm{TLMV}$ & \cellcolor{gray!25}51.5 & 39.4 & 36.5 & 33.7 & 31.6 & 30.3 \\                                                                                                                                                     
$\textrm{CGL}$ & 89.3 & 71.8 & 58.0 & 49.8 & 44.5 & 40.9 \\                                                                                                                                                                         
$\textrm{SMACD}$ & 90.7 & 90.4 & 67.0 & 65.5 & 66.8 & 68.9 \\                                                                                                                                                                       
$\textrm{AGML}$ & \cellcolor{gray!25}\textbf{47.3} & \cellcolor{gray!25}\textbf{32.3} & \cellcolor{gray!25}\textbf{29.6} & \cellcolor{gray!25}\textbf{28.2} & \cellcolor{gray!25}\textbf{27.5} & \cellcolor{gray!25}\textbf{27.0} \\
$\textrm{ZooBP}$ & 63.6 & 41.9 & 38.7 & 35.8 & 33.8 & 32.2 \\                                                                                                                                                                       
$\textrm{TSS}$ & 58.5 & 49.5 & 45.9 & 42.1 & 39.8 & 38.4 \\                                                                                                                                                                         
$\textrm{SGMI}$ & 59.4 & 46.8 & 44.0 & 42.3 & 40.5 & 39.2 \\                                                                                                                                                                        
$L_{\textrm{1}}$ & 56.3 & 44.1 & 41.2 & 38.5 & 36.1 & 34.7 \\                                                                                                                                                                       
$L_{\textrm{-1}}$ & 52.4 & \cellcolor{gray!25}39.0 & \cellcolor{gray!25}35.6 & \cellcolor{gray!25}32.6 & \cellcolor{gray!25}30.9 & \cellcolor{gray!25}29.5 \\                                                                       
$L_{\textrm{-10}}$ & 68.6 & 54.6 & 48.5 & 43.0 & 39.7 & 37.2 \\                                                                                                                                                                     
\hline                                                        
\end{tabular}  
\hfill
\\
\vspace{10pt}
\begin{tabular}{|ccccccc|}    
\hline
\multicolumn{7}{|c|}{\textbf{Cora}}  \\
\hline                                                         
  & $1\%$ & $5\%$ & $10\%$ & $15\%$ & $20\%$ & $25\%$ \\       
\specialrule{1pt}{-1pt}{1pt}     
$\textrm{TLMV}$ & 46.0 & 34.1 & 28.8 & 25.8 & 22.5 & 20.6 \\                                                                                                                    
$\textrm{CGL}$ & 85.5 & 70.1 & 56.5 & 49.1 & 44.2 & 40.0 \\                                                                                                                     
$\textrm{SMACD}$ & 75.6 & 76.7 & 78.7 & 78.7 & 81.0 & 87.1 \\                                                                                                                   
$\textrm{AGML}$ & 54.7 & 36.0 & 25.4 & \cellcolor{gray!25}\textbf{20.7} & \cellcolor{gray!25}\textbf{18.1} & \cellcolor{gray!25}\textbf{16.5} \\                                
$\textrm{ZooBP}$ & 54.7 & 38.0 & 32.9 & 30.2 & 27.6 & 26.2 \\                                                                                                                   
$\textrm{TSS}$ & \cellcolor{gray!25}\textbf{38.8} & \cellcolor{gray!25}\textbf{27.7} & \cellcolor{gray!25}\textbf{24.1} & 21.5 & 20.0 & 19.1 \\                                 
$\textrm{SGMI}$ & 57.3 & 47.7 & 43.0 & 41.8 & 40.1 & 38.5 \\                                                                                                                    
$L_{\textrm{1}}$ & 50.7 & 38.2 & 33.4 & 31.2 & 28.2 & 25.6 \\                                                                                                                   
$L_{\textrm{-1}}$ & \cellcolor{gray!25}43.2 & \cellcolor{gray!25}31.8 & \cellcolor{gray!25}24.5 & \cellcolor{gray!25}21.1 & \cellcolor{gray!25}18.8 & \cellcolor{gray!25}17.2 \\
$L_{\textrm{-10}}$ & 62.0 & 46.3 & 35.4 & 29.4 & 25.2 & 22.3 \\                                                                                                                 
\hline                                                  
\end{tabular} 
\hfill
\hfill
\begin{tabular}{|ccccccc|}   
\hline
\multicolumn{7}{|c|}{\textbf{WebKB}}  \\
\hline
& $1\%$ & $5\%$ & $10\%$ & $15\%$ & $20\%$ & $25\%$ \\       
\specialrule{1pt}{-1pt}{1pt}                                                      
$\textrm{TLMV}$ & 58.6 & 49.4 & 45.6 & 47.2 & 47.6 & 48.2 \\                                                                                                           
$\textrm{CGL}$ & 80.4 & 82.4 & 84.4 & 86.9 & 82.7 & 89.2 \\                                                                                                            
$\textrm{SMACD}$ & 87.3 & 87.2 & 87.2 & 87.4 & 87.8 & 87.8 \\                                                                                                          
$\textrm{AGML}$ & 56.5 & 50.3 & 46.8 & 44.7 & 47.6 & 46.8 \\                                                                                                           
$\textrm{ZooBP}$ & 52.0 & 45.0 & \cellcolor{gray!25}38.7 & 38.5 & \cellcolor{gray!25}\textbf{36.4} & \cellcolor{gray!25}\textbf{33.5} \\                               
$\textrm{TSS}$ & 60.9 & 51.0 & 50.5 & 47.3 & 49.2 & 48.7 \\                                                                                                            
$\textrm{SGMI}$ & \cellcolor{gray!25}\textbf{44.9} & \cellcolor{gray!25}\textbf{39.7} & 41.9 & \cellcolor{gray!25}\textbf{34.9} & 40.3 & 52.5 \\                       
$L_{\textrm{1}}$ & 58.5 & 49.0 & 44.8 & 44.3 & 44.5 & 44.4 \\                                                                                                          
$L_{\textrm{-1}}$ & \cellcolor{gray!25}49.9 & 45.5 & 40.7 & 39.5 & 39.9 & 40.3 \\                                                                                      
$L_{\textrm{-10}}$ & 52.3 & \cellcolor{gray!25}41.9 & \cellcolor{gray!25}\textbf{38.0} & \cellcolor{gray!25}38.1 & \cellcolor{gray!25}36.8 & \cellcolor{gray!25}39.5 \\
\hline                                                          
\end{tabular}  
\hfill
\caption{
 Experiments in real datasets. Notation:\colorbox{gray!25}{\textbf{best}} performances are marked with bold fonts and gray background and \colorbox{gray!25}{second best} performances with only gray background.
}                                                  
\label{table:experiments}       
\end{table}    

\textbf{Acknowledgement}
P.M and M.H are
supported by the DFG Cluster of Excellence “Machine Learning – New
Perspectives for Science”, EXC 2064/1, project number 390727645

\bibliography{referencesUpdated}%
\bibliographystyle{abbrv}
\ifpaper
\newpage\clearpage
\appendix
\appendixpage

This section contains all the proofs of results mentioned in the main paper, together with a detailed description of the proposed numerical scheme. It is organized as follows:
\begin{itemize}[topsep=-3pt,leftmargin=*]\setlength\itemsep{-3pt}
 \item Section~\ref{appendix:auxiliaryResults} contains auxiliary results,
 \item Section~\ref{appendix:Proof:theorem:generalization-equallySized-equallyLabelled} contains the proof of Theorem~\ref{theorem:generalization-equallySized-equallyLabelled},
 \item Section~\ref{appendix:corollary:limit_cases} contains the proof of Corollary~\ref{corollary:limit_cases},
 \item Section~\ref{appendix:corollary:contention} contains the proof of Corollary~\ref{corollary:contention},
 \item Section~\ref{appendix:theorem:arbitrary-labelled-datasets--sup} contains general version of Theorem~\ref{theorem:unequalLabels:epsilon-goes-to-zero}
 \item Section~\ref{appendix:theorem:unequalLabels:epsilon-goes-to-zero} contains the proof of Theorem~\ref{theorem:unequalLabels:epsilon-goes-to-zero},
 \item Section~\ref{appendix:theorem:generalization-equallySized-NOTequallyLabelled} contains the proof of Theorem~\ref{theorem:generalization-equallySized-NOTequallyLabelled}.
 \item Section~\ref{appendix:numerics} contains a detailed exposition of the numerical scheme presented in Section~\ref{Section:numerics}.
 \item Section~\ref{appendix:3Layers} we present a numerical analysis for the case of three layers and three classes as in Sec.~\ref{sec:independentInformation}
 \item Section~\ref{sec:OnLambda} presents a numerical analysis of the effect of the regularization parameter $\lambda$
\end{itemize}

\section{Auxiliary Results}\label{appendix:auxiliaryResults}

We first present some results that will be useful.

The following theorem states the monotonicity of the scalar power mean.
\begin{theorem}[\cite{bullen2013handbook}, Ch.~3, Thm.~1]\label{theorem:mp_monotone}
 Let $p<q$ then $m_{p}(a,b)\leq m_{q}(a,b)$
with equality if and only if $a=b$.
\end{theorem}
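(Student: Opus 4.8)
The plan is to prove the monotonicity of $p\mapsto m_p(a,b)$ by showing that its logarithm has nonnegative derivative, thereby reducing the entire statement to the strict convexity of the scalar map $\phi(x)=x\ln x$. Throughout I would assume $a,b>0$ (the degenerate nonnegative case being immediate) and set $f(p)=\ln m_p(a,b)=\tfrac1p\ln\!\big(\tfrac12(a^p+b^p)\big)$ for $p\neq 0$, with the convention $m_0(a,b)=\sqrt{ab}$.

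First I would establish continuity at $p=0$, so that it suffices to analyze $p\neq 0$ and then glue across the origin. Expanding $a^p=1+p\ln a+O(p^2)$ yields $\tfrac12(a^p+b^p)=1+\tfrac{p}{2}(\ln a+\ln b)+O(p^2)$, whence $f(p)\to\tfrac12(\ln a+\ln b)=\ln\sqrt{ab}$ as $p\to 0$; thus $m_p$ extends continuously to the geometric mean at $p=0$.

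Next comes the heart of the argument, namely the sign of $f'(p)$. Differentiating and multiplying by the positive factor $p^2$, then substituting $u=a^p$, $v=b^p$ (so that $p\,a^p\ln a=u\ln u$), I would reduce the desired inequality $f'(p)\ge 0$ to
\begin{align*}
\frac{u\ln u+v\ln v}{u+v}\;\ge\;\ln\!\Big(\frac{u+v}{2}\Big).
\end{align*}
This is precisely Jensen's inequality for $\phi(x)=x\ln x$ at the two points $u,v$ with equal weights, since it rearranges to $\phi\big(\tfrac{u+v}{2}\big)\le\tfrac12(\phi(u)+\phi(v))$, and $\phi''(x)=1/x>0$ gives strict convexity on $(0,\infty)$. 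Hence $f'(p)\ge 0$ for every $p\neq 0$, with strict inequality unless $u=v$, i.e.\ unless $a=b$.

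Finally I would assemble the pieces. Since $f$ (equivalently $m_p$) is continuous on $\R$ and has nonnegative derivative on $\R\setminus\{0\}$, it is nondecreasing on all of $\R$, giving $m_p(a,b)\le m_q(a,b)$ whenever $p<q$. For the equality clause, $a=b$ trivially gives $m_p(a,a)=a$ for all $p$; conversely, if $m_p=m_q$ for some $p<q$, then $f$ is constant on $(p,q)$, forcing $f'\equiv 0$ there, which by the strict convexity of $\phi$ forces $a^p=b^p$ and hence $a=b$. I expect the only genuinely delicate point to be the strictness bookkeeping across $p=0$, which the continuity step handles; everything else is the routine convexity computation (and extends verbatim to $T$ values via Jensen with equal weights).
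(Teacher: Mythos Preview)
Your argument is correct: differentiating $f(p)=\tfrac1p\ln\!\big(\tfrac12(a^p+b^p)\big)$, substituting $u=a^p$, $v=b^p$, and invoking Jensen for the strictly convex map $x\mapsto x\ln x$ is a clean and standard route to the power-mean inequality, and your handling of the $p=0$ continuity and the strictness clause is fine.

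There is nothing to compare against here: the paper does not prove this statement at all but simply quotes it as a known result from Bullen's handbook (Ch.~3, Thm.~1). Your proof is therefore strictly more than what the paper supplies. As a minor remark, the same convexity argument goes through verbatim for $T$ arguments with arbitrary positive weights (Jensen with those weights), which is the generality the paper actually uses later via $m_p(\boldsymbol{\rho_\epsilon})$; you note this yourself in the last line.
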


The following lemma shows the effect of the matrix power mean when matrices have a common eigenvector.
\begin{lemma}
[\cite{Mercado:2018:powerMean}]
\label{lemma:eigenvalues_and_eigenvectors_of_generalized_mean_V2}
Let $\u$ be an eigenvector of $A_1,\ldots,A_T$ with corresponding eigenvalues $\lambda_1,\ldots,\lambda_T$.
Then 
$\u$ is an eigenvector of $M_p(A_1,\ldots,A_T)$ with eigenvalue $m_p(\lambda_1,\ldots,\lambda_T)$.
\end{lemma}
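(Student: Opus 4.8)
The plan is to reduce the matrix identity to the corresponding scalar identity by exploiting that $\u$ is a \emph{common} eigenvector of every layer. In the setting of the power mean Laplacian the matrices $A_1,\ldots,A_T$ are symmetric positive definite (each is a normalized Laplacian, possibly with the diagonal shift $\varepsilon$), so each power $A_i^p$, as well as $M_p=\big(\tfrac1T\sum_i A_i^p\big)^{1/p}$, is unambiguously defined through the spectral functional calculus for every real $p\neq 0$.

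First I would prove the single-matrix fact that $A_i\u=\lambda_i\u$ implies $A_i^p\u=\lambda_i^p\u$. Writing the spectral decomposition $A_i=\sum_j \nu_{ij}P_{ij}$ with orthogonal spectral projectors $P_{ij}$, the hypothesis forces $P_{ij}\u=\u$ when $\nu_{ij}=\lambda_i$ and $P_{ij}\u=0$ otherwise; applying $A_i^p=\sum_j \nu_{ij}^p P_{ij}$ then gives $A_i^p\u=\lambda_i^p\u$. Positive definiteness guarantees $\lambda_i>0$, so $\lambda_i^p$ is well defined, and this single spectral argument handles all real $p$ uniformly (for integer $p$ it is a one-line induction, using $A_i^{-1}\u=\lambda_i^{-1}\u$ in the negative case).

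Next I would sum these relations. Setting $B=\tfrac1T\sum_{i=1}^T A_i^p$, linearity gives
\[
  B\u=\tfrac1T\sum_{i=1}^T A_i^p\u=\Big(\tfrac1T\sum_{i=1}^T \lambda_i^p\Big)\u=\mu\,\u,
\]
so $\u$ is an eigenvector of the positive definite matrix $B$ with positive eigenvalue $\mu=\tfrac1T\sum_{i=1}^T\lambda_i^p$. Finally, since $M_p=B^{1/p}$ is by definition the unique positive definite solution of $X^p=B$, it agrees with the functional-calculus $p$-th root of $B$, and the same spectral-projector argument as in the first step yields $M_p\u=\mu^{1/p}\u$. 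As $\mu^{1/p}=\big(\tfrac1T\sum_{i=1}^T\lambda_i^p\big)^{1/p}=m_p(\lambda_1,\ldots,\lambda_T)$, this is exactly the claim.

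The only point requiring care is the identity $f(A)\u=f(\lambda)\u$ for $f(z)=z^p$ and an eigenvector $\u$ of $A$; everything else is linearity and bookkeeping. This is routine from the spectral theorem once positive definiteness places the spectrum in $(0,\infty)$, where $z\mapsto z^p$ is well defined. Hence I expect no genuine obstacle: the lemma is essentially simultaneous diagonalizability restricted to the one common eigenvector $\u$, so that on $\sp\{\u\}$ the matrix power mean acts exactly as the scalar power mean on the eigenvalues.
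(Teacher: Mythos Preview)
Your argument is correct. The paper itself does not give a proof of this lemma: it is quoted as an auxiliary result from \cite{Mercado:2018:powerMean} and used as a black box. Your spectral-calculus route (show $A_i^p\u=\lambda_i^p\u$ via the spectral decomposition, sum to get an eigenpair for $B=\tfrac1T\sum_i A_i^p$, then apply the same reasoning to $B^{1/p}$) is exactly the standard proof and is how the cited reference establishes it, so there is nothing to compare beyond noting that you have filled in what the present paper omits.
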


The following Lemma states the eigenvalues and eigenvectors of expected adjacency matrices according to the Stochastic Block Model here considered.
\begin{lemma}\label{Lemma:eigvecs-eigvals}
Let $\mathcal{C}_1, \ldots, \mathcal{C}_k$ be clusters of equal size $\abs{\mathcal{C}}=n/k$.
 Let $\mathcal{W}\in\R^{n\times n}$ be defined as
\begin{align}
 \mathcal{W} = (\p-\q)\sum_{i=1}^k\one_{\mathcal{C}_i}\one_{\mathcal{C}_i}^T + \q \one\one^T
\end{align}
and let $\bchi_1,\ldots,\bchi_k \in\R^n$ be defined as
\begin{align}
 \bchi_1 = \one,\, \qquad \bchi_r = \sum_{j=1}^r \one_{\mathcal{C}_j} - r\one_{\mathcal{C}_r}
\end{align} 
for $r=2,\ldots,k$.
Then, $\bchi_1,\ldots,\bchi_k$ are orthogonal eigenvectors of $\mathcal{W}$, with eigenvalues
\begin{align}
 \lambda_1 = \abs{\mathcal{C}}(\p+(k-1)\q),\, \qquad \lambda_r = \abs{\mathcal{C}} (\p-\q)
\end{align}

\end{lemma}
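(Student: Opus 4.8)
The plan is to exploit the fact that $\mathcal{W}$ is assembled entirely from the cluster indicator vectors $\one_{\mathcal{C}_i}$, which form an orthogonal system because the clusters are disjoint and of equal size: $\one_{\mathcal{C}_i}^T\one_{\mathcal{C}_j}=\abs{\mathcal{C}}\,\delta_{ij}$ and $\one=\sum_{i=1}^k\one_{\mathcal{C}_i}$. The single computation that drives everything is the action of $\mathcal{W}$ on one such indicator. Substituting the definition of $\mathcal{W}$ and using these relations gives
\begin{align*}
  \mathcal{W}\one_{\mathcal{C}_j} = \abs{\mathcal{C}}\big[(\p-\q)\one_{\mathcal{C}_j} + \q\,\one\big],
\end{align*}
so the $k$-dimensional subspace $\sp\{\one_{\mathcal{C}_1},\ldots,\one_{\mathcal{C}_k\}}$ is invariant under $\mathcal{W}$, and it suffices to analyze $\mathcal{W}$ there by linearity.

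For the eigenvalues I would apply this formula termwise. For $\bchi_1=\one=\sum_j\one_{\mathcal{C}_j}$, summing the identity over $j$ collects a factor $(\p-\q)\one+k\q\,\one=(\p+(k-1)\q)\one$, yielding $\mathcal{W}\bchi_1=\abs{\mathcal{C}}(\p+(k-1)\q)\bchi_1$, which is the claimed $\lambda_1$. For $r\geq 2$, writing $\bchi_r=\sum_{j=1}^r\one_{\mathcal{C}_j}-r\one_{\mathcal{C}_r}$ and applying the formula to each term, the crucial point is that the $\q\,\one$ contributions enter with total coefficient $r-r=0$ and therefore cancel, leaving only the part proportional to the indicators, namely $\abs{\mathcal{C}}(\p-\q)\big(\sum_{j=1}^r\one_{\mathcal{C}_j}-r\one_{\mathcal{C}_r}\big)=\abs{\mathcal{C}}(\p-\q)\bchi_r$. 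This gives the uniform eigenvalue $\lambda_r=\abs{\mathcal{C}}(\p-\q)$ for all $r\geq 2$.

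Finally, orthogonality is a direct inner-product check using $\one_{\mathcal{C}_j}^T\one_{\mathcal{C}_l}=\abs{\mathcal{C}}\,\delta_{jl}$. The pair $\bchi_1,\bchi_r$ is orthogonal because $\one^T\bchi_r=r\abs{\mathcal{C}}-r\abs{\mathcal{C}}=0$. For $2\leq r<s$, I would expand $\bchi_r^T\bchi_s$ into its four terms and keep only the surviving Kronecker deltas: the diagonal overlap of the partial sums contributes $r\abs{\mathcal{C}}$, the term $-r\one_{\mathcal{C}_r}$ against $\sum_{l=1}^s\one_{\mathcal{C}_l}$ contributes $-r\abs{\mathcal{C}}$, and the two cross terms involving $\one_{\mathcal{C}_s}$ vanish since $s>r$; the total is $0$. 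I do not expect any genuine obstacle here: the only thing to watch is the index bookkeeping in the $\bchi_r$ computations—in particular verifying that the mean-field term $\q\,\one$ cancels for every $r\geq 2$, which is precisely the mechanism making all the non-leading eigenvalues coincide.
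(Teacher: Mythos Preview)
Your proposal is correct and follows essentially the same approach as the paper: compute the action of $\mathcal{W}$ on cluster indicators, extend by linearity to the $\bchi_r$ (the paper phrases the cancellation by first showing $\mathcal{W}(\one_{\mathcal{C}_j}-\one_{\mathcal{C}_i})=\abs{\mathcal{C}}(\p-\q)(\one_{\mathcal{C}_j}-\one_{\mathcal{C}_i})$ and writing $\bchi_r=\sum_{j=1}^r(\one_{\mathcal{C}_j}-\one_{\mathcal{C}_r})$, which is exactly your observation that the $\q\,\one$ terms drop out), and then verify orthogonality by the same four-term expansion using $\one_{\mathcal{C}_i}^T\one_{\mathcal{C}_j}=\abs{\mathcal{C}}\delta_{ij}$.
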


\begin{proof}
Please note that from the definition that the matrix $\mathcal{W}$ is equal to $\p$ in the block diagonal and $\q$ elsewhere. 
We first consider the following matrix vector products that can be easily verified:
\begin{align}
 \mathcal{W}\one &= \abs{\mathcal{C}}(\p+(k-1)\q)\one
 \\
 \mathcal{W}\one_{\mathcal{C}_i} &= \abs{\mathcal{C}}(\p\one_{\mathcal{C}_i} +\q\one_{\overline{\mathcal{C}_i}} )
\end{align}
Moreover, we can see that
\begin{align*}
 \mathcal{W} \left( \one_{\mathcal{C}_j} - \one_{\mathcal{C}_i} \right)
 &=
 \abs{\mathcal{C}} \left( \left( \p \one_{\mathcal{C}_j} + \q \one_{\overline{\mathcal{C}}_j} \right) - \left( \p \one_{\mathcal{C}_i} + \q \one_{\overline{\mathcal{C}}_i} \right) \right)
 \\
  &=
  \abs{\mathcal{C}} \left( \p \left( \one_{\mathcal{C}_j}-\one_{\mathcal{C}_i}\right) + \q \left( \one_{\overline{\mathcal{C}}_j} - \one_{\overline{\mathcal{C}}_i} \right) \right)
 \\
  &=
  \abs{\mathcal{C}} \left( \p \left( \one_{\mathcal{C}_j}-\one_{\mathcal{C}_i}\right) - \q \left( \one_{\mathcal{C}_i} - \one_{\mathcal{C}_j} \right) \right)
 \\
  &=
  \abs{\mathcal{C}} (\p-\q)\left( \one_{\mathcal{C}_j}-\one_{\mathcal{C}_i}\right)
\end{align*}

Now we show that $\bchi_2,\ldots,\bchi_k$ are eigenvectors of $\mathcal{W}$.
\begin{align*}
 \mathcal{W}\bchi_r 
 &= 
 \mathcal{W} \left( \sum_{j=1}^r \one_{\mathcal{C}_j} - r\one_{\mathcal{C}_r}  \right)
 \\
 &= 
 \mathcal{W}  \sum_{j=1}^r ( \one_{\mathcal{C}_j} - \one_{\mathcal{C}_r} )
 \\
 &= 
 \sum_{j=1}^r \mathcal{W} \left( \one_{\mathcal{C}_j} - \one_{\mathcal{C}_r} \right)
 \\
 &= 
 \sum_{j=1}^r \abs{\mathcal{C}} (\p-\q)\left( \one_{\mathcal{C}_j}-\one_{\mathcal{C}_r}\right)
 \\
 &= 
 \abs{\mathcal{C}} (\p-\q)
 \sum_{j=1}^r \left( \one_{\mathcal{C}_j}-\one_{\mathcal{C}_r}\right)
 \\
 &= 
 \abs{\mathcal{C}} (\p-\q)
 \left( \sum_{j=1}^r \one_{\mathcal{C}_j} - r\one_{\mathcal{C}_r}  \right)
 \\
 &= 
 \abs{\mathcal{C}} (\p-\q)
 \bchi_r
 \\
 &= \lambda_r\bchi_r
\end{align*}

Furthermore, we can see that eigenvectors $\bchi_2,\ldots,\bchi_k$ are orthogonal. Let $2\leq r < s \leq k$, then
\begin{align*}
 \bchi_r^T \bchi_s 
 &= 
 \left( \sum_{j_1=1}^r \one_{\mathcal{C}_{j_{1}}} - r\one_{\mathcal{C}_r}  \right)^T  
 \left( \sum_{j_2=1}^s \one_{\mathcal{C}_{j_{2}}} - s\one_{\mathcal{C}_s}  \right)
 \\
 &=
 \sum_{j_1=1}^r \sum_{j_2=1}^s  
 \one_{\mathcal{C}_{j_{1}}}^T \one_{\mathcal{C}_{j_{2}}} - 
 s\sum_{j_1=1}^r \one_{\mathcal{C}_{j_{1}}}^T\one_{\mathcal{C}_s} -
 r\sum_{j_2=1}^s \one_{\mathcal{C}_{j_{2}}}^T \one_{\mathcal{C}_{r}} + r s\one_{\mathcal{C}_{r}}^T\one_{\mathcal{C}_s} 
 \\
 &=
 \sum_{j_1=1}^r \sum_{j_2=1}^s 
 \one_{\mathcal{C}_{j_{1}}}^T \one_{\mathcal{C}_{j_{2}}} -
 r\sum_{j_2=1}^s \one_{\mathcal{C}_{j_{2}}}^T \one_{\mathcal{C}_{r}}
 \\
 &=
 \sum_{j_1=1}^r \sum_{j_2=1}^s 
 \one_{\mathcal{C}_{j_{1}}}^T \one_{\mathcal{C}_{j_{2}}} -
 r \one_{\mathcal{C}_{r}}^T \one_{\mathcal{C}_{r}}
 \\
 &=
 \sum_{j_1=1}^r \sum_{j_2=1}^s 
 \left( 
 \one_{\mathcal{C}_{j_{1}}}^T \one_{\mathcal{C}_{j_{2}}}\right) -
 r\abs{\mathcal{C}}
 \\
 &=
 \sum_{j_1=1}^r  
 \left( 
 \one_{\mathcal{C}_{j_{1}}}^T \one_{\mathcal{C}_{j_{1}}}\right) -
 r\abs{\mathcal{C}}
 \\
 &=
 \sum_{j_1=1}^r 
 \abs{\mathcal{C}}
-
 r\abs{\mathcal{C}}
\\
 &=
 r\abs{\mathcal{C}}
-
 r\abs{\mathcal{C}}
\\
&=
0
 \end{align*}
 where in the third step we used that fact that $\one_{\mathcal{C}_{r}}^T\one_{\mathcal{C}_s}=0$ as $r<s$, and
$\one_{\mathcal{C}_{j_{1}}}^T\one_{\mathcal{C}_s}=0$ as $j_1<s$.
Finally, we can see that for $2\leq r \leq k$
\begin{align*}
  \bchi_1^T \bchi_r 
 &= 
 \one^T \left( \sum_{j=1}^r \one_{\mathcal{C}_{j}} - r\one_{\mathcal{C}_r} \right)
 \\
 &=
 \sum_{j=1}^r \left( \one^T\one_{\mathcal{C}_{j}} \right) - r \one^T \one_{\mathcal{C}_r} 
 \\
 &=
 r \abs{\mathcal{C}}  - r \abs{\mathcal{C}}
 \\
 &=
 0
\end{align*}
and hence $\bchi_1,\ldots,\bchi_k$ are orthogonal eigenvectors of the matrix $\mathcal{W}$.
 
\end{proof} 

The following Lemma shows the eigenvectors and eigenvalues of the power mean Laplacian in expectation under the considered Stochastic Block Model.

\begin{lemma}\label{lemma:eigVecs-eigVals-PowerMean}
 Let $E(\mathbb{G})$ be the expected multilayer graph with $T$ layers following the multilayer SBM with
 $k$ 
 classes $\mathcal{C}_1, \ldots, \mathcal{C}_k$ 
 of equal size
 and parameters $\left(\p^{(t)},\q^{(t)}\right)_{t=1}^{T}$. 
 Then the eigenvalues of the power mean Laplacian $\mathcal{L}_p$ are
  \begin{align}\label{eq:eigenvaluesOfPowerMeanInExpectation2}
  \lambda_1(\mathcal L_p) =\varepsilon, \qquad
  \lambda_i(\mathcal L_p) = m_p(\boldsymbol{\rho_\epsilon}),  \qquad
  \lambda_j(\mathcal L_p) =1+\varepsilon
  \end{align}
with eigenvectors
\begin{align*}
 \bchi_1 = \one,\, \qquad \bchi_i = \sum_{j=1}^i \one_{\mathcal{C}_j} - i\one_{\mathcal{C}_i}
\end{align*} 
where $(\boldsymbol{\rho_\epsilon})_t = 1-(\p^{(t)} - \q^{(t)})/(\p^{(t)} + (k-1)\q^{(t)})+\epsilon$, $t=1,\ldots,T$, 
$i=2,\ldots,k$, and $j=k+1,\ldots,\abs{V}$,
\end{lemma}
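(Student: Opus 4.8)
The plan is to reduce everything to the single-layer spectra supplied by Lemma~\ref{Lemma:eigvecs-eigvals} and then to merge the layers using the common-eigenvector behaviour of the matrix power mean (Lemma~\ref{lemma:eigenvalues_and_eigenvectors_of_generalized_mean_V2}).

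First I would show that, in expectation, each layer is degree-regular. From the identity $\mathcal{W}^{(t)}\one = \abs{\mathcal{C}}(\p^{(t)}+(k-1)\q^{(t)})\one$ established in the proof of Lemma~\ref{Lemma:eigvecs-eigvals}, every row sum of $\mathcal{W}^{(t)}$ equals $d^{(t)} := \abs{\mathcal{C}}(\p^{(t)}+(k-1)\q^{(t)})$, so the expected degree matrix is the scalar matrix $\mathcal{D}^{(t)} = d^{(t)} I$. Hence the expected normalized Laplacian collapses to $\mathcal{L}^{(t)}_\sym = I - (d^{(t)})^{-1}\mathcal{W}^{(t)}$, and $\mathcal{L}^{(t)}_\sym$ and $\mathcal{W}^{(t)}$ share the same eigenvectors with linearly related eigenvalues.

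Next I would read off the spectrum of each $\mathcal{L}^{(t)}_\sym$ directly from Lemma~\ref{Lemma:eigvecs-eigvals}. The eigenvector $\bchi_1=\one$ has $\mathcal{W}^{(t)}$-eigenvalue $d^{(t)}$, and each $\bchi_i$ ($i=2,\ldots,k$) has eigenvalue $\abs{\mathcal{C}}(\p^{(t)}-\q^{(t)})$. Because $\mathcal{W}^{(t)}$ lies in the span of the $k$ orthogonal indicators $\one_{\mathcal{C}_1},\ldots,\one_{\mathcal{C}_k}$ it has rank $k$, so the remaining $n-k$ directions---namely the orthogonal complement of $\mathrm{span}\{\one_{\mathcal{C}_1},\ldots,\one_{\mathcal{C}_k}\}$---span $\ker\mathcal{W}^{(t)}$. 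Translating through $I - (d^{(t)})^{-1}\mathcal{W}^{(t)}$ gives $\mathcal{L}^{(t)}_\sym$-eigenvalues $0$ on $\bchi_1$, $\rho_t := 1-(\p^{(t)}-\q^{(t)})/(\p^{(t)}+(k-1)\q^{(t)})$ on each $\bchi_i$, and $1$ on the kernel directions; after the diagonal shift $\varepsilon$ these become $\varepsilon$, $(\boldsymbol{\rho_\epsilon})_t=\rho_t+\varepsilon$, and $1+\varepsilon$ respectively.

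The decisive structural observation is that the basis $\bchi_1,\ldots,\bchi_k$ together with any fixed basis of $\mathrm{span}\{\one_{\mathcal{C}_1},\ldots,\one_{\mathcal{C}_k}\}^\perp$ depends only on the common cluster partition and not on the layer parameters $\p^{(t)},\q^{(t)}$. Thus all shifted Laplacians $\mathcal{L}^{(t)}_\sym+\varepsilon I$ are simultaneously diagonalized by one orthogonal eigenbasis, which is exactly the hypothesis of Lemma~\ref{lemma:eigenvalues_and_eigenvectors_of_generalized_mean_V2}. Applying that lemma eigenvector by eigenvector, the $\mathcal{L}_p$-eigenvalue attached to each common eigenvector is the scalar power mean of its per-layer eigenvalues: $m_p(\varepsilon,\ldots,\varepsilon)=\varepsilon$ on $\bchi_1$; $m_p(\boldsymbol{\rho_\epsilon})$ on each $\bchi_i$, $i=2,\ldots,k$; and $m_p(1+\varepsilon,\ldots,1+\varepsilon)=1+\varepsilon$ on the $n-k$ kernel directions, which is the claimed statement. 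I expect the only delicate point to be the bookkeeping of the $n-k$ kernel eigenvectors and, above all, checking that this kernel---jointly with $\bchi_1,\ldots,\bchi_k$---is \emph{the same} subspace for every layer; the per-layer spectra themselves are immediate once degree-regularity $\mathcal{D}^{(t)}=d^{(t)}I$ is noted, so the real content is the shared-basis property, since only this licenses applying Lemma~\ref{lemma:eigenvalues_and_eigenvectors_of_generalized_mean_V2} simultaneously to all $n$ eigenvectors.
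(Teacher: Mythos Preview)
Your proposal is correct and follows essentially the same route as the paper: use Lemma~\ref{Lemma:eigvecs-eigvals} to obtain the per-layer spectra, exploit degree-regularity of the expected graphs to pass from $\mathcal{W}^{(t)}$ to $\mathcal{L}^{(t)}_\sym$, observe that the eigenbasis depends only on the partition and is therefore shared across layers, and then apply Lemma~\ref{lemma:eigenvalues_and_eigenvectors_of_generalized_mean_V2} to turn per-layer eigenvalues into scalar power means. Your emphasis on the shared $(n-k)$-dimensional kernel of the $\mathcal{W}^{(t)}$ is slightly more explicit than the paper's phrasing via simultaneous diagonalization, but the content is the same.
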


\begin{proof}
From Lemma~\ref{Lemma:eigvecs-eigvals} we know that $\boldsymbol \chi_1,\ldots,\boldsymbol \chi_k$ are eigenvectors of $\mathcal W^{(1)},\ldots,\mathcal W^{(T)}$. 
In particular, we have seen that
\begin{align*}
 \lambda^{(t)}_1 = \abs{\mathcal{C}}(\p^{(t)}+(k-1)\q^{(t)}), \quad \lambda^{(t)}_i = \abs{\mathcal{C}}(\p^{(t)}-\q^{(t)})\\
\end{align*}
for $i=2,\ldots,k$.
Further, as matrices $\mathcal W^{(1)},\ldots,\mathcal W^{(T)}$ share all their eigenvectors, they are simultaneously diagonalizable, i.e. there exists a 
non-singular matrix $\Sigma$ such that $\Sigma^{-1}\mathcal{W}^{(t)}\Sigma =\Lambda^{(t)}$, 
where $\Lambda^{(t)}$ are diagonal matrices  $\Lambda^{(t)} = \diag(\lambda_1^{(t)}, \dots, \lambda_k^{(t)}, 0, \dots, 0)$.

As we assume that all clusters are of the same size $\abs{\mathcal{C}}$, the expected layer graphs are regular graphs with degrees $d^{(1)},\ldots,d^{(T)}$. Hence, the normalized Laplacians  of the expected layer graphs can be expressed as 
\begin{align*}
 \mathcal L^{(t)}_\sym &= \Sigma(I-\frac{1}{d^{(t)}}\Lambda^{(t)})\Sigma^{-1}\\
\end{align*}
Thus, we can observe that
\begin{align*}
 \lambda^{(t)}_1(\mathcal L^{(t)}_\sym) = 0,\qquad
 \lambda^{(t)}_i(\mathcal L^{(t)}_\sym) =1-\rho^{(t)}, \qquad 
 \lambda^{(t)}_j(\mathcal L^{(t)}_\sym) =1, 
\end{align*}
for $i=2,\ldots,k$, and $j=k+1,\ldots,\abs{V}$,
where
\begin{align*}
 \rho^{(t)} &=(\p^{(t)}-\q^{(t)})/(\p^{(t)}+(k-1)\q^{(t)}) 
\end{align*}
for $t=1,\ldots,T$.
By obtaining the power mean Laplacian on diagonally shifted matrices, 
$$\mathcal L_p = M_p(\mathcal L^{(1)}_\sym+\varepsilon I, \ldots, \mathcal L^{(1)}_\sym+\varepsilon I)$$
we have by Lemma~\ref{lemma:eigenvalues_and_eigenvectors_of_generalized_mean_V2}
\begin{equation}\label{eq:eigenvaluesOfPowerMeanInExpectation}
  \begin{aligned}
  \lambda_1(\mathcal L_p) &= m_p(\lambda^{(1)}_1+\varepsilon, \ldots,\lambda^{(T)}_1+\varepsilon)=\varepsilon \\
  \lambda_i(\mathcal L_p) &= m_p(1-\rho^{(1)} +\varepsilon, \ldots, 1-\rho^{(T)} +\varepsilon) = m_p(\boldsymbol{\rho_\epsilon})  \\
  \lambda_j(\mathcal L_p) &= m_p(\lambda^{(1)}_j+\varepsilon, \ldots,\lambda^{(T)}_j+\varepsilon)=1+\varepsilon
  \end{aligned}
\end{equation}
where $(\boldsymbol{\rho_\epsilon})_t = 1-(\p^{(t)} - \q^{(t)})/(\p^{(t)} + (k-1)\q^{(t)})+\epsilon$, and $t=1,\ldots,T$, 
$i=2,\ldots,k$, and $j=k+1,\ldots,\abs{V}$,
\end{proof}
The following Lemma describes the general form of the solution matrix
$$
F = (f^{(1)}, \ldots, f^{(k)})
$$
where the columns of $F$ are obtained from the following optimization problem
$$
 f^{(r)} = \argmin_{f\in\R^n} \|f-CY^{(r)}\|^2 
 +  
 \lambda
 f^T L_pf
$$
Observe that this setting contains as a particular case the problem described in Eq.~\eqref{local-global-optimization-problem-rephrased-power-mean-laplacian}.

\begin{lemma}\label{lemma:solutionMatrix-F-supp}
 Let $E(\mathbb{G})$ be the expected multilayer graph with $T$ layers following the multilayer SBM with
 $k$ classes $\mathcal{C}_1, \ldots, \mathcal{C}_k$ of equal size and parameters $\left(\p^{(t)},\q^{(t)}\right)_{t=1}^{T}$. 
 Let $\boldsymbol{\rho_\epsilon}$ be defined as in Lemma~\ref{lemma:eigVecs-eigVals-PowerMean}.
 Let $n_1,\ldots,n_k$ be the number of labeled nodes per class.
 Let $C\in\R^{n\times n}$ be a diagonal matrix with $C_{ii} = c_r$ for $v_i\in\mathcal{C}_r$.
 Let $l(v_i)$ be the label of node $v_i$, i.e. $l(v_i)=r$ if and only if $v_i\in\mathcal{C}_r$.
%
Let the solution matrix $F = (f^{(1)}, \ldots, f^{(k)})$ where
$$
 f^{(r)} = \argmin_{f\in\R^n} \|f-CY^{(r)}\|^2 
 +  
 \mu
 f^T \mathcal{L}_pf
$$
Then the solution matrix F
is such that:
 \begin{itemize}[topsep=-3pt,leftmargin=*]\setlength\itemsep{-3pt}
 \item If $r < l(v_i)$, then
 \begin{align*}
 f_i^{(r)} = 
 c_r
 \frac{n_r}{n}\alpha
 +
 c_r
 n_r
   \beta
\left(
 (1-l(v_i))\frac{1}{\norm{\bchi_{l(v_i)}}^2}
+
\sum_{j=l(v_i)+1}^k \frac{1}{\norm{\bchi_j}^2}
\right)
\end{align*}
 \item If $r > l(v_i)$, then
\begin{align*}
 f_i^{(r)} = 
 c_r
 \frac{n_r}{n}\alpha
 +
 c_r
 n_r
   \beta
\left(
 (1-r)\frac{1}{\norm{\bchi_{r}}^2}
+
\sum_{j=r+1}^k \frac{1}{\norm{\bchi_j}^2}
\right)
\end{align*}
 \item If $r = l(v_i)$, then
\begin{align*}
 f_i^{(r)} = 
 c_r
 \frac{n_r}{n}\alpha
 +
 c_r
 n_r
   \beta
\left(
 (1-r)^2\frac{1}{\norm{\bchi_{r}}^2}
+
\sum_{j=r+1}^k \frac{1}{\norm{\bchi_j}^2}
\right)
\end{align*}
\end{itemize}
where 
$\alpha=\frac{1}{1+\mu\epsilon} - \frac{1}{1+\mu(1+\epsilon)}$, and 
$\beta=\frac{1}{1+\mu m_p(\boldsymbol{\rho_\epsilon})} - \frac{1}{1+\mu(1+\epsilon)}$.

\end{lemma}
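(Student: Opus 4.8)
The plan is to solve the normal equations for $f^{(r)}$ and expand the solution in the eigenbasis of $\mathcal{L}_p$ supplied by Lemma~\ref{lemma:eigVecs-eigVals-PowerMean}. Setting the gradient of the objective to zero, the minimizer satisfies $(I+\mu\mathcal{L}_p)f^{(r)} = CY^{(r)}$, hence $f^{(r)} = (I+\mu\mathcal{L}_p)^{-1}CY^{(r)}$. By Lemma~\ref{lemma:eigVecs-eigVals-PowerMean}, $\mathcal{L}_p$ is symmetric with orthogonal eigenvectors $\bchi_1,\ldots,\bchi_k$ of eigenvalues $\epsilon$ (for $\bchi_1$) and $m_p(\boldsymbol{\rho_\epsilon})$ (for $\bchi_2,\ldots,\bchi_k$), while its \emph{entire} orthogonal complement $\sp\{\bchi_1,\ldots,\bchi_k\}^\perp$ is a single eigenspace of eigenvalue $1+\epsilon$. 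Since $(I+\mu\mathcal{L}_p)^{-1}$ shares these eigenspaces with reciprocal eigenvalues, I would write, with $P_\perp = I - \sum_{s=1}^k \bchi_s\bchi_s^T/\norm{\bchi_s}^2$ the projector onto the complement,
\[
(I+\mu\mathcal{L}_p)^{-1} = \frac{1}{1+\mu(1+\epsilon)}\,I + \alpha\,\frac{\bchi_1\bchi_1^T}{\norm{\bchi_1}^2} + \beta\sum_{s=2}^k \frac{\bchi_s\bchi_s^T}{\norm{\bchi_s}^2},
\]
where $\alpha$ and $\beta$ are precisely the differences of reciprocal eigenvalues stated in the Lemma. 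The structural point here is that only the $k$ explicit eigenvectors are needed: the remaining $n-k$ directions contribute through a multiple of the identity, so no information about them is required.

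Next I would evaluate the two families of scalars arising after applying this operator to $CY^{(r)}$. Because $Y^{(r)}$ is supported on the $n_r$ labeled nodes of $\mathcal{C}_r$, where $C$ acts as multiplication by $c_r$, taking $\bchi_1=\one$ gives $\bchi_1^T CY^{(r)} = c_r n_r$, and for $s\geq 2$ one evaluates $\bchi_s$ on class $\mathcal{C}_r$ to obtain $\bchi_s^T CY^{(r)} = c_r n_r$ if $r<s$, $\;c_r n_r(1-r)$ if $r=s$, and $0$ if $r>s$. Symmetrically, the entry $(\bchi_s)_i$ depends only on the class $l(v_i)$ of $v_i$: it equals $1$ if $l(v_i)<s$, equals $1-l(v_i)$ if $l(v_i)=s$, and vanishes if $l(v_i)>s$. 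Restricting to an unlabeled node (the only case relevant for test error) kills the diagonal term, since $(CY^{(r)})_i=0$ there, leaving
\[
f_i^{(r)} = \alpha\,\frac{(\bchi_1)_i\,\bchi_1^T CY^{(r)}}{\norm{\bchi_1}^2} + \beta\sum_{s=2}^k \frac{(\bchi_s)_i\,\bchi_s^T CY^{(r)}}{\norm{\bchi_s}^2}.
\]
The first summand equals $\alpha c_r n_r/\norm{\one}^2 = c_r\frac{n_r}{n}\alpha$, which is the leading term common to all three cases.

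For the remaining sum I would note that a term survives only when both factors are nonzero, i.e.\ when $r\leq s$ and $l(v_i)\leq s$; the three cases of the Lemma then correspond to inserting the explicit values above and identifying which of $r$ or $l(v_i)$ gives the binding lower bound on $s$. When $r<l(v_i)$ the bound is $s\geq l(v_i)$, with only $s=l(v_i)$ carrying the factor $1-l(v_i)$; when $r>l(v_i)$ the bound is $s\geq r$, with only $s=r$ carrying $1-r$; when $r=l(v_i)$ the single index $s=r$ carries the product $(1-r)^2$ (and vanishes harmlessly when $r=1$). Collecting terms reproduces the three stated formulas.

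I expect the main obstacle to be purely the index bookkeeping in the eigenvector sum: one must track simultaneously the support condition coming from $\bchi_s^T CY^{(r)}$ and the entry condition coming from $(\bchi_s)_i$, and correctly isolate the single ``diagonal'' index that contributes the $(1-\cdot)$ factor while all larger indices contribute a plain $1$. Everything else is a direct consequence of the spectral representation above together with the orthogonality and explicit form of $\bchi_1,\ldots,\bchi_k$.
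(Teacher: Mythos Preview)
Your proposal is correct and follows essentially the same approach as the paper: diagonalize $(I+\mu\mathcal{L}_p)^{-1}$ via Lemma~\ref{lemma:eigVecs-eigVals-PowerMean}, split off the $(1+\epsilon)$-eigenspace as a multiple of the identity (giving the constants $\alpha,\beta$), and then reduce everything to the explicit values of $\bchi_s$ on each class. The only organisational difference is that the paper first computes $U\Omega U^T e_i$ for each labeled $e_i$, sums over labeled nodes of class $r$ to obtain $f^{(r)}$, and then pairs with $e_i$ for the unlabeled node, whereas you evaluate the bilinear form $e_i^T(I+\mu\mathcal{L}_p)^{-1}CY^{(r)}$ directly by computing $(\bchi_s)_i$ and $\bchi_s^TCY^{(r)}$ simultaneously; this is the same argument written more compactly.
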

\begin{proof}
Let $U\in\R^{n\times n}$ be an orthonormal matrix such that $U = (u_1, u_2, \ldots, u_n)$, with $u_i = \bchi_i/\norm{\bchi_i}$ for $i=1,\ldots,k$,
where $\bchi_1,\ldots,\bchi_k$ are eigenvectors of the power mean Laplacian as described in Lemma~\ref{lemma:eigVecs-eigVals-PowerMean}.

The power mean Laplacian $\mathcal{L}_p$ is a symmetric positive semidefinite matrix (see Lemma~\ref{lemma:eigVecs-eigVals-PowerMean}) and hence we can express $\mathcal{L}_p$ as $U\Lambda U^T$ where $\Lambda$ is a diagonal matrix with entries $\Lambda_{ii} = \lambda_i(\mathcal{L}_p)$, with $i=1,\ldots,n$.
Hence, we can see that
\begin{align*}
 (I + \mu\mathcal{L}_p)^{-1} = (I + U\Lambda U^T)^{-1} = (U (I+\Lambda) U^T)^{-1} = U (I+\Lambda)^{-1} U^T = U \Omega U^T
\end{align*}
where $\Omega$ is a diagonal matrix with entries $\Omega_{ii} = \frac{1}{1+\mu\lambda_i}$, with $i=1,\ldots,n$.

From  Lemma~\ref{lemma:eigVecs-eigVals-PowerMean} we know that 
$\lambda_{k+1} = \cdots = \lambda_n = 1+\epsilon =: \widehat{\omega}$, and hence it follows that $\Omega_{ii} = \frac{1}{1+\mu\widehat{\omega}}$ for $i=k+1,\ldots,n$. Moreover, we can express $\Omega$ as the sum of two diagonal matrices, i.e.
\begin{align*}
 \Omega = \omega I + \Theta
\end{align*}
where $\omega=\frac{1}{1+\mu\widehat{\omega}}
$ 
and 
$\Theta=\diag\left( \Omega_{11} - \omega, \ldots,\Omega_{kk} - \omega,0,\ldots,0\right)$.
Observe that 
$\Theta_{11} = \Omega_{11} - \omega = \frac{1}{1+\mu\epsilon} - \frac{1}{1+\mu(1+\epsilon)}=:\alpha$
and
$\Theta_{jj} = \Omega_{jj} - \omega = \frac{1}{1+\mu m_p(\boldsymbol{\rho_\epsilon})} - \frac{1}{1+\mu(1+\epsilon)}=:\beta$, for $j=2,\ldots,k$.
\\
Recall that we are interested in the equation 
$$
F = (I + \mu \mathcal{L}_p)^{-1}C Y = U \Omega U^T C Y\in\R^{n\times k},
$$ 
where each column of $Y=[y^{(1)}, \ldots, y^{(k)} ]$ is a class indicator of labeled nodes, i.e. 
\begin{equation}
y^{(j)}_i=
\begin{cases} 
      1 & \text{ if } \,\, l(v_i) = j \\
      0 & \text{ else }  \\
   \end{cases}
\end{equation}
Hence, each column of $Y$ can be expressed as
\begin{align}
y^{(j)} = 
\sum_{v_i\in V \mid l(v_i)=j } e_i
\end{align}
where $e_i\in\R^n$ and $(e_i)_i = 1$ and zero else. 
With this in mind, we now study the matrix-vector product $U \Omega U^T e_i$.
Recall that $U \Theta U^T$ is a $k$-rank matrix.
Hence we have
\begin{align*}
 U\Omega U^T e_i 
 &= U (\omega I + \Theta) U^T e_i 
 \\
 &= \omega e_i + U \Theta U^T e_i 
 \\
 &= 
 \omega e_i + \left( \sum_{j=1}^k \Theta_{jj} u_j u_j^T \right)e_i
 \\
 &= 
 \omega e_i + \left( \sum_{j=1}^k\frac{1}{\norm{\bchi_j}^2} \Theta_{jj}  \bchi_j \bchi_j^T \right)e_i
 \\
 &= 
 \omega e_i + \frac{1}{n} \Theta_{11}  \bchi_1 + \left( \sum_{j=2}^k\frac{1}{\norm{\bchi_j}^2} \Theta_{jj}  \bchi_j \bchi_j^T \right)e_i
 \\
 &= 
 \omega e_i + \frac{1}{n} \alpha  \bchi_1 + \beta \left( \sum_{j=2}^k\frac{1}{\norm{\bchi_j}^2}   \bchi_j \bchi_j^T \right)e_i
\end{align*}
where in the last steps we used the fact that $\bchi_1^T e_i = \one^T e_i = 1$, and define $\alpha=\Theta_{11}$ and $\beta=\Theta_{jj}$ due to the fact that $\Theta_{jj}$ are all equal for $j=2,\ldots,k$.

The remaining terms $\bchi_j \bchi_j^T e_i$ depend on the cluster to which the corresponding node $v_i$ belongs to. 

We first study the vector product $\bchi_r^T e_i$. Observe that
\begin{align*}
 \bchi_r^T e_i 
 = 
 \left( \sum_{j=1}^r \one_{\mathcal{C}_j} - r\one_{\mathcal{C}_r} \right)^T e_i
 =
  \sum_{j=1}^r \left(\one_{\mathcal{C}_j}^T e_i\right) - r\one_{\mathcal{C}_r}^T e_i
\end{align*}
Recall that $l(v_i)$ is the label of node $v_i$, i.e. 
$l(v_i)=r$ if and only if $v_i\in\mathcal{C}_r$.
Then, we have
\begin{equation}\label{eq:casesByLabel}
\sum_{j=1}^r \left(\one_{\mathcal{C}_j}^T e_i\right) - r\one_{\mathcal{C}_r}^T e_i=
\begin{cases} 
      0 & \text{ for } \quad r<l(v_i) \\
      1-l(v_i) & \text{ for } \quad r=l(v_i) \\
      1 & \text{ for } \quad r>l(v_i) \\
   \end{cases}
\end{equation}
%

Therefore,
\begin{align*}
 \left( \sum_{j=2}^k\frac{1}{\norm{\bchi_j}^2} 
 \bchi_j \bchi_j^T \right)e_i
 &=
 \left( 1-l(v_i) \right) \frac{\bchi_{l(v_i)}}{\norm{\bchi_{l(v_i)}}^2}
 +
 \sum_{j=l(v_i)+1}^k \frac{\bchi_j}{\norm{\bchi_j}^2}
\end{align*}

All in all we have
\begin{align*}
U\Omega U^T e_i =
  \omega e_i + \frac{1}{n} \alpha  \bchi_1 
  + 
    \beta
    \left(
  \left( 1-l(v_i) \right) \frac{\bchi_{l(v_i)}}{\norm{\bchi_{l(v_i)}}^2}
 +
 \sum_{j=l(v_i)+1}^k \frac{\bchi_j}{\norm{\bchi_j}^2}
 \right)
\end{align*}
Moreover, the solution matrix $F$ can now be described column-wise as follows

\begin{align*}
 f^{(r)} 
 &= 
 (I + \mu \mathcal{L}_p)^{-1} Cy^{(r)}
 \\
 &= c_r \left(\sum_{v_i\in V \mid l(v_i)=r } U\Omega U^T  e_i \right)
 \\
 &= c_r \left(
 \sum_{v_i\in V \mid l(v_i)=r }
 \omega e_i 
 \right)
 + \frac{1}{n} c_r n_r \alpha  \bchi_1 + 
  c_r n_r \beta
\left(
 (1-l(v_i)) \frac{\bchi_{l(v_i)}}{\norm{\bchi_{l(v_i)}}^2}
 +
 \sum_{j=l(v_i)+1}^k \frac{\bchi_j}{\norm{\bchi_j}^2}
 \right)
 \\
 &= 
 \omega c_r y^{(r)} + 
 c_r n_r
 \left(
 \frac{1}{n} \alpha  \bchi_1 + 
   \beta
\left(
 (1-r) \frac{\bchi_{r}}{\norm{\bchi_{r}}^2}
 +
 \sum_{j=r+1}^k \frac{\bchi_j}{\norm{\bchi_j}^2}
 \right)
 \right)
\end{align*}
We now study the columns of matrix $F$. 
For this, observe that the $i^{th}$ entry of the column corresponding to the class $r$, is obtained by $ f^{(r)}_i= \langle e_i,f^{(r)} \rangle$, and hence have
\begin{align*}
 \langle e_i,f^{(r)} \rangle=
 &
 \langle e_i,
 \omega c_r y^{(r)} + 
 c_r
 n_r
 \left(
 \frac{1}{n} \alpha  \bchi_1 + 
   \beta
\left(
 (1-r) \frac{\bchi_{r}}{\norm{\bchi_{r}}^2}
 +
 \sum_{j=r+1}^k \frac{\bchi_j}{\norm{\bchi_j}^2}
 \right)
 \right)
  \rangle
\\
 =
 &
c_r
  \frac{n_r}{n}\alpha
  +
c_r
n_r
\beta
 \langle e_i,
\left(
 (1-r) \frac{\bchi_{r}}{\norm{\bchi_{r}}^2}
 c_r
 +
 \sum_{j=r+1}^k \frac{\bchi_j}{\norm{\bchi_j}^2}
 \right)
  \rangle
\end{align*}
where  
$\langle e_i,  \omega c_r y^{(r)} \rangle = 0$ for unlabeled nodes.
Having this, we now proceed to study three different cases of the remaining inner product.
We do this by considering the following cases and making use of Eq.~\eqref{eq:casesByLabel}:

\textbf{First case: $f^{(r)}_i$ with $r < l(v_i)$}.
We first analyze the following term
\begin{align*}
\langle 
e_i,
\left(
 (1-r) \frac{\bchi_{r}}{\norm{\bchi_{r}}^2}
 +
 \sum_{j=r+1}^k \frac{\bchi_j}{\norm{\bchi_j}^2}
 \right)
 \rangle
 &=
 \langle 
e_i,
 (1-r) \frac{\bchi_{r}}{\norm{\bchi_{r}}^2}
 \rangle
 +
 \langle
 e_i,
 \sum_{j=r+1}^k \frac{\bchi_j}{\norm{\bchi_j}^2}
 \rangle
\\
 \text{\tiny{(by first case of Eq.\ref{eq:casesByLabel})}}\qquad\qquad
 &=
 \langle
 e_i,
 \sum_{j=r+1}^k \frac{\bchi_j}{\norm{\bchi_j}^2}
 \rangle
 \\
 \text{\tiny{(by cases of Eq.\ref{eq:casesByLabel})}}\qquad\qquad
 &=
(1-l(v_i))\frac{1}{\norm{\bchi_{l(v_i)}}^2}
+
\sum_{j=l(v_i)+1}^k \frac{1}{\norm{\bchi_j}^2}
 \\
\end{align*}
Thus, we have
\begin{equation*}
\begin{aligned}\label{eq:Theorem:1stcase:general}
 f_i^{(r)} = 
 c_r
 \frac{n_r}{n}\alpha
 +
 c_r
 n_r
   \beta
\left(
 (1-l(v_i))\frac{1}{\norm{\bchi_{l(v_i)}}^2}
+
\sum_{j=l(v_i)+1}^k \frac{1}{\norm{\bchi_j}^2}
\right)
\end{aligned}
\end{equation*}

\textbf{Second case: $f^{(r)}_i$ with $r > l(v_i)$}.
We first analyze the following term
\begin{align*}
\langle 
e_i,
\left(
 (1-r) \frac{\bchi_{r}}{\norm{\bchi_{r}}^2}
 +
 \sum_{j=r+1}^k \frac{\bchi_j}{\norm{\bchi_j}^2}
 \right)
 \rangle
 &=
 \langle 
e_i,
 (1-r) \frac{\bchi_{r}}{\norm{\bchi_{r}}^2}
 \rangle
 +
 \langle
 e_i,
 \sum_{j=r+1}^k \frac{\bchi_j}{\norm{\bchi_j}^2}
 \rangle
 \\
  \text{\tiny{(by third case of Eq.\ref{eq:casesByLabel})}}\qquad\qquad
 &=
(1-r)\frac{1}{\norm{\bchi_{r}}^2}
+
\sum_{j=r+1}^k \frac{1}{\norm{\bchi_j}^2}
 \\
\end{align*}
Thus, we have
\begin{equation*}
\begin{aligned}\label{eq:Theorem:2ndcase:general}
 f_i^{(r)} = 
 c_r
 \frac{n_r}{n}\alpha
 +
 c_r
 n_r
   \beta
\left(
 (1-r)\frac{1}{\norm{\bchi_{r}}^2}
+
\sum_{j=r+1}^k \frac{1}{\norm{\bchi_j}^2}
\right)
\end{aligned}
\end{equation*}

\textbf{Third case: $f^{(r)}_i$ with $r = l(v_i)$}.
We first analyze the following term
\begin{align*}
\langle 
e_i,
\left(
 (1-r) \frac{\bchi_{r}}{\norm{\bchi_{r}}^2}
 +
 \sum_{j=r+1}^k \frac{\bchi_j}{\norm{\bchi_j}^2}
 \right)
 \rangle
 &=
 \langle 
e_i,
 (1-r) \frac{\bchi_{r}}{\norm{\bchi_{r}}^2}
 \rangle
 +
 \langle
 e_i,
 \sum_{j=r+1}^k \frac{\bchi_j}{\norm{\bchi_j}^2}
 \rangle
 \\
 \text{\tiny{(by second case of Eq.\ref{eq:casesByLabel})}}\qquad\qquad
 &=
(1-r)^2\frac{1}{\norm{\bchi_{r}}^2}
+
\sum_{j=r+1}^k \frac{1}{\norm{\bchi_j}^2}
 \\
\end{align*}
Thus, we have
\begin{equation*}
\begin{aligned}\label{eq:Theorem:3rdcase:general}
 f_i^{(r)} = 
 c_r
 \frac{n_r}{n}\alpha
 +
 c_r
 n_r
   \beta
\left(
 (1-r)^2\frac{1}{\norm{\bchi_{r}}^2}
+
\sum_{j=r+1}^k \frac{1}{\norm{\bchi_j}^2}
\right)
\end{aligned}
\end{equation*}

These three cases are the desired conditions.

\end{proof}
\section{Proof Of Theorem~\ref{theorem:generalization-equallySized-equallyLabelled}}\label{appendix:Proof:theorem:generalization-equallySized-equallyLabelled}

\begin{theorem}\label{theorem:generalization-equallySized-equallyLabelled-supp}
 Let $E(\mathbb{G})$ be the expected multilayer graph with $T$ layers following the multilayer SBM with
 $k$ 
 classes $\mathcal{C}_1, \ldots, \mathcal{C}_k$ 
 of equal size
 and parameters $\left(\p^{(t)},\q^{(t)}\right)_{t=1}^{T}$. 
 Let the same number of nodes per class be labeled.
 Then, a zero test classification error is achieved if and only if 
\begin{align*}
  m_p(\boldsymbol{\rho_\epsilon}) < 1+\epsilon\, ,
\end{align*}
 where $(\boldsymbol{\rho_\epsilon})_t = 1-(\p^{(t)} - \q^{(t)})/(\p^{(t)} + (k-1)\q^{(t)})+\epsilon$, and $t=1,\ldots,T$.
 \end{theorem}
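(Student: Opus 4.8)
The plan is to reduce the statement to the explicit solution formula of Lemma~\ref{lemma:solutionMatrix-F-supp} and then read off when the $\argmax$ rule recovers the ground-truth label. The minimizer of \eqref{local-global-optimization-problem-rephrased-power-mean-laplacian} satisfies the normal equation $(I+\lambda \mathcal{L}_p)f^{(r)} = Y^{(r)}$, so $f^{(r)} = (I+\lambda\mathcal{L}_p)^{-1}Y^{(r)}$. This is exactly the setting of Lemma~\ref{lemma:solutionMatrix-F-supp} with the uniform loss $C=I$ (i.e.\ $c_r=1$ for all $r$) and $\mu=\lambda$; moreover, since equally many nodes are labeled per class, $n_r=\ell$ is independent of $r$. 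Hence for every \emph{unlabeled} node $v_i$ with ground-truth label $s=l(v_i)$ I may substitute $c_r=1$ and $n_r=\ell$ into the three-case formula for $f_i^{(r)}$.

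Next I would compare the $k$ scores $f_i^{(1)},\dots,f_i^{(k)}$ at a fixed unlabeled node. The term $\tfrac{\ell}{n}\alpha$ is common to all three cases and cancels in every pairwise difference, so $f_i^{(s)}-f_i^{(r)} = \ell\beta\,\Delta_{s,r}$, where $\Delta_{s,r}$ is a purely structural difference of the norm-weighted sums $\sum 1/\norm{\bchi_j}^2$. Using $\norm{\bchi_j}^2 = j(j-1)\abs{\mathcal{C}}$ (a direct computation from $\bchi_j=\sum_{i=1}^j\one_{\mathcal{C}_i}-j\one_{\mathcal{C}_j}$), the case $r<s$ collapses to $\Delta_{s,r} = [(1-s)^2-(1-s)]/\norm{\bchi_s}^2 = 1/\abs{\mathcal{C}}$, while for $r>s$ the two tail sums telescope to $\Delta_{s,r} = (1-s)^2/\norm{\bchi_s}^2 + \sum_{j=s+1}^r 1/\norm{\bchi_j}^2 + (r-1)/\norm{\bchi_r}^2$. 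In both cases $\Delta_{s,r}>0$, and crucially $\Delta_{s,r}$ does not depend on $\beta$.

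It then follows that $f_i^{(s)}-f_i^{(r)}=\ell\beta\,\Delta_{s,r}$ has the same sign as $\beta$ for every $r\neq s$ and every unlabeled $v_i$. Since the formula depends on $v_i$ only through its class $s$, all unlabeled nodes are classified simultaneously, and the test error is zero precisely when $\beta>0$. Recalling $\beta=\tfrac{1}{1+\lambda m_p(\boldsymbol{\rho_\epsilon})}-\tfrac{1}{1+\lambda(1+\epsilon)}$ and $\lambda>0$, the inequality $\beta>0$ is equivalent to $1+\lambda m_p(\boldsymbol{\rho_\epsilon})<1+\lambda(1+\epsilon)$, i.e.\ to $m_p(\boldsymbol{\rho_\epsilon})<1+\epsilon$, which is the stated condition; when $\beta\le 0$ the scores either tie ($\beta=0$) or reverse ($\beta<0$), yielding a nonzero test error and establishing the converse.

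The bulk of the work is carried by Lemma~\ref{lemma:solutionMatrix-F-supp}, so the only genuine obstacle is the sign bookkeeping of the second paragraph: verifying that the telescoping of the two tail sums together with the sign of the coefficient $(1-r)$ makes $\Delta_{s,r}$ strictly positive for \emph{every} ordered pair $(s,r)$ and every class count $k$, and confirming that the degenerate cases $s=1$ (no index $r<s$) and $s=k$ (empty trailing sum) are covered by the empty-sum conventions. Once positivity of $\Delta_{s,r}$ is secured, the equivalence reduces entirely to the sign of the scalar $\beta$.
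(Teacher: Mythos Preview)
Your proposal is correct and follows essentially the same route as the paper: invoke Lemma~\ref{lemma:solutionMatrix-F-supp} with $C=I$ and $n_r$ constant, observe that the common additive term $\tfrac{\ell}{n}\alpha$ cancels in every pairwise comparison, and reduce the $\argmax$ condition to the sign of $\beta$. The only cosmetic difference is that you compute $\norm{\bchi_j}^2=j(j-1)\abs{\mathcal C}$ explicitly and evaluate $\Delta_{s,r}$ (obtaining the clean value $1/\abs{\mathcal C}$ in the $r<s$ case), whereas the paper simply notes that the bracketed quantity is a sum of nonnegative terms with at least one strictly positive, and leaves it at that; both arguments land on $0<\beta\Leftrightarrow m_p(\boldsymbol{\rho_\epsilon})<1+\epsilon$ in the same way.
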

\begin{proof}

The proof of this theorem builds on top of Lemma~\ref{lemma:solutionMatrix-F-supp}, where the entries of the solution matrix 
$F = (f^{(1)}, \ldots, f^{(k)})$ are described, where
$$
 f^{(r)} = \argmin_{f\in\R^n} \|f-CY^{(r)}\|^2 
 +  
 \mu
 f^T \mathcal{L}_pf
$$
Let $l(v_i)$ be the label of node $v_i$, i.e. $l(v_i)=r$ if and only if $v_i\in\mathcal{C}_r$.
According to Lemma~\ref{lemma:solutionMatrix-F-supp}
the entries of matrix $F$ for unlabeled nodes are such that
 \begin{itemize}[topsep=-3pt,leftmargin=*]\setlength\itemsep{-3pt}
 \item If $r < l(v_i)$, then
 \begin{align*}
 f_i^{(r)} = 
 c_r
 \frac{n_r}{n}\alpha
 +
 c_r
 n_r
   \beta
\left(
 (1-l(v_i))\frac{1}{\norm{\bchi_{l(v_i)}}^2}
+
\sum_{j=l(v_i)+1}^k \frac{1}{\norm{\bchi_j}^2}
\right)
\end{align*}
 \item If $r > l(v_i)$, then
\begin{align*}
 f_i^{(r)} = 
 c_r
 \frac{n_r}{n}\alpha
 +
 c_r
 n_r
   \beta
\left(
 (1-r)\frac{1}{\norm{\bchi_{r}}^2}
+
\sum_{j=r+1}^k \frac{1}{\norm{\bchi_j}^2}
\right)
\end{align*}
 \item If $r = l(v_i)$, then
\begin{align*}
 f_i^{(r)} = 
 c_r
 \frac{n_r}{n}\alpha
 +
 c_r
 n_r
   \beta
\left(
 (1-r)^2\frac{1}{\norm{\bchi_{r}}^2}
+
\sum_{j=r+1}^k \frac{1}{\norm{\bchi_j}^2}
\right)
\end{align*}
\end{itemize}
where 
$\alpha=\frac{1}{1+\mu\epsilon} - \frac{1}{1+\mu(1+\epsilon)}$, and 
$\beta=\frac{1}{1+\mu m_p(\boldsymbol{\rho_\epsilon})} - \frac{1}{1+\mu(1+\epsilon)}$.

%

Observe that the case here considered corresponds to the case where the amount of labeled data per class is the same, i.e. $n_1=\cdots=n_k$,
and where the matrix $C$ is the identity, i.e. $c_1=\cdots c_r=1$.

Moreover, the estimated label assignment for unlabeled nodes goes by the following rule
\begin{align*}
 \hat{l}(v_i) = \argmax \{ f^{(1)}_i, \ldots, f^{(k)}_i \}
\end{align*}
Hence, we need to find conditions so that the following inequality holds
\begin{align*}
  f^{(j)}_i < f^{(l(v_i))}_i \qquad \forall \,\, j\neq l(v_i)
\end{align*}

Hence, we consider the following two cases:

\textbf{Case 1: $f^{(r)}_i < f^{(l(v_i))}_i$ for $r > l(v_i)$.}

Let $r^*=l(v_i)$, and $r=r^*+\Delta$. Then, we have
\begin{align*}
 &
 f^{(r)}_i 
 < 
 f^{(l(v_i))}_i
 \Leftrightarrow
 \\
 &
 f^{(r)}_i 
 < 
 f^{(r^*)}_i
 \Leftrightarrow
 \\
&
\beta\left(
 (1-r)\frac{1}{\norm{\bchi_{r}}^2}
+
\sum_{j=r+1}^k \frac{1}{\norm{\bchi_j}^2}
\right)
 <
 \beta\left(
 (1-r^*)^2\frac{1}{\norm{\bchi_{r^*}}^2}
+
\sum_{j=r^*+1}^k \frac{1}{\norm{\bchi_j}^2}
\right)
 \Leftrightarrow
 \\
&
 0
 < 
 \beta\left(
 (1-r^*)^2\frac{1}{\norm{\bchi_{r^*}}^2}
-
(1-r)\frac{1}{\norm{\bchi_{r}}^2}
+
\sum_{j=r^*+1}^k \frac{1}{\norm{\bchi_j}^2}
-
\sum_{j=r+1}^k \frac{1}{\norm{\bchi_j}^2}
\right)
 \Leftrightarrow
\\
&
 0
 < 
 \beta\left(
 (1-r^*)^2\frac{1}{\norm{\bchi_{r^*}}^2}
+
(r-1)\frac{1}{\norm{\bchi_{r}}^2}
+
\sum_{j=r^*+1}^k \frac{1}{\norm{\bchi_j}^2}
-
\sum_{j=r^*+\Delta+1}^k \frac{1}{\norm{\bchi_j}^2}
\right)
 \Leftrightarrow
\\
&
 0
 < 
  \beta\left(
 (1-r^*)^2\frac{1}{\norm{\bchi_{r^*}}^2}
+
(r-1)\frac{1}{\norm{\bchi_{r}}^2}
+
\sum_{j=r^*+1}^{r^*+\Delta} \frac{1}{\norm{\bchi_j}^2}
\right)
 \Leftrightarrow
\\
&
 0
 < 
  \beta
\end{align*}


\textbf{Case 2: $f^{(r)}_i < f^{(l(v_i))}_i$ for $r < l(v_i)$.}

Let $r^*=l(v_i)$, and $r^*=r+\Delta$. Then, we have
\begin{align*}
 &
 f^{(r)}_i 
 < 
 f^{(l(v_i))}_i
 \Leftrightarrow
 \\
 &
 f^{(r)}_i 
 < 
 f^{(r^*)}_i
 \Leftrightarrow
 \\
&
  \beta\left(
(1-r^*)\frac{1}{\norm{\bchi_{r^*}}^2}
+
\sum_{j=r^*+1}^k \frac{1}{\norm{\bchi_j}^2}
\right)
 <
   \beta\left(
 (1-r^*)^2\frac{1}{\norm{\bchi_{r^*}}^2}
+
\sum_{j=r^*+1}^k \frac{1}{\norm{\bchi_j}^2}
\right)
 \Leftrightarrow
 \\
&
0
 <
   \beta\left(
 (1-r^*)^2\frac{1}{\norm{\bchi_{r^*}}^2}
 -
 (1-r^*)\frac{1}{\norm{\bchi_{r^*}}^2}
 \right)
 \\
&
0
 <
   \beta\left(
 (1-r^*)^2\frac{1}{\norm{\bchi_{r^*}}^2}
 +
 (r^*-1)\frac{1}{\norm{\bchi_{r^*}}^2}
 \right)
  \Leftrightarrow
\\
&
 0
 < 
  \beta
\end{align*}
All in all, from the two considered cases we can see that
\begin{align*}
  f^{(j)}_i < f^{(l(v_i))}_i \qquad \forall \,\, j\neq l(v_i) \Longleftrightarrow 0<\beta
\end{align*}
In fact,
\begin{align*}
 0
 &
 <\beta 
 \Leftrightarrow 
 \\
 0
 &
 <\frac{1}{1+\mu m_p(\boldsymbol{\rho_\epsilon})} - \frac{1}{1+\mu(1+\epsilon)}
 \Leftrightarrow
 \\
 \frac{1}{1+\mu(1+\epsilon)}
 &
 <\frac{1}{1+\mu m_p(\boldsymbol{\rho_\epsilon})}
 \Leftrightarrow
 \\
 1+\mu m_p(\boldsymbol{\rho_\epsilon})
 &
 <1+\mu(1+\epsilon)
 \Leftrightarrow
 \\
 m_p(\boldsymbol{\rho_\epsilon})
 &
 <1+\epsilon
\end{align*}
which is the desired condition.
\end{proof}

%

\section{Proof of Corollary~\ref{corollary:limit_cases}}\label{appendix:corollary:limit_cases}

 \begin{corollary}\label{corollary:limit_cases--supp}
   Let $E(\mathbb{G})$ be an expected multilayer graph as in Theorem~\ref{theorem:generalization-equallySized-equallyLabelled}.
 Then,
 \begin{itemize}[topsep=-3pt,leftmargin=*]\setlength\itemsep{-3pt}
 \item For $p\to\infty$, the classification error is zero if and only if $\q^{(t)}<\p^{(t)}$ for \textit{all} $t=1,\ldots,T$.
 \item For $p\!\to\! -\infty$,  the classification error is zero if and only there exists a $t\!\in\!\{1,\ldots,T\}$ s.t. $\q^{(t)}<\p^{(t)}$.
\end{itemize}
\end{corollary}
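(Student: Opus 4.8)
The plan is to obtain both limit cases by substituting the two extremes of the scalar power mean into the if-and-only-if condition of Theorem~\ref{theorem:generalization-equallySized-equallyLabelled}. First I would record the classical limits $\lim_{p\to\infty} m_p(x_1,\dots,x_T)=\max_t x_t$ and $\lim_{p\to-\infty} m_p(x_1,\dots,x_T)=\min_t x_t$ for nonnegative scalars (already listed in Table~\ref{table:powerMeans}). That both limits exist is immediate from the monotonicity of $p\mapsto m_p$ in Theorem~\ref{theorem:mp_monotone}; to identify them one only needs the elementary two-sided bound $T^{-1/p}\max_t x_t \le m_p(x)\le \max_t x_t$ for $p>0$, together with its mirror $\min_t x_t \le m_p(x)\le T^{-1/p}\min_t x_t$ for $p<0$, and the fact that $T^{-1/p}\to 1$ in either limit.

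With these at hand, Theorem~\ref{theorem:generalization-equallySized-equallyLabelled} turns the zero-error condition $m_p(\boldsymbol{\rho_\epsilon})<1+\epsilon$ into $\max_t(\boldsymbol{\rho_\epsilon})_t<1+\epsilon$ for $p\to\infty$ and into $\min_t(\boldsymbol{\rho_\epsilon})_t<1+\epsilon$ for $p\to-\infty$. The crucial simplification, which is special to these two extremes, is that $\max$ and $\min$ commute with the additive shift by $\epsilon$: setting $\rho_t=1-(\p^{(t)}-\q^{(t)})/(\p^{(t)}+(k-1)\q^{(t)})$, so that $(\boldsymbol{\rho_\epsilon})_t=\rho_t+\epsilon$, the two conditions collapse to $\max_t\rho_t<1$ and $\min_t\rho_t<1$, the shift cancelling on both sides. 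I would then translate the scalar inequality $\rho_t<1$: since $\p^{(t)},\q^{(t)}\ge 0$ with $\p^{(t)}+(k-1)\q^{(t)}>0$, one has $\rho_t<1\iff(\p^{(t)}-\q^{(t)})/(\p^{(t)}+(k-1)\q^{(t)})>0\iff\q^{(t)}<\p^{(t)}$. Reading off the quantifiers finishes both bullets: $\max_t\rho_t<1$ means $\rho_t<1$ for \emph{every} $t$, i.e.\ $\q^{(t)}<\p^{(t)}$ for all $t$; while $\min_t\rho_t<1$ means $\rho_t<1$ for \emph{some} $t$, i.e.\ there is a $t$ with $\q^{(t)}<\p^{(t)}$.

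The computations are routine once this structure is in place, so I expect the only genuinely delicate point to be the passage to the limit itself: one must check that feeding the limiting value $\max$ or $\min$ into the characterization of Theorem~\ref{theorem:generalization-equallySized-equallyLabelled} correctly describes the behaviour of the limiting classifier $\mathcal{L}_{\pm\infty}$, rather than merely the pointwise limit of a family of conditions. Here I would invoke the monotonicity of $m_p$ in $p$ from Theorem~\ref{theorem:mp_monotone} to control $m_p(\boldsymbol{\rho_\epsilon})$ between its extremes, and note that the only borderline situation, $\max_t\rho_t=1$ or $\min_t\rho_t=1$, corresponds to $\p^{(t)}=\q^{(t)}$ on the relevant layer(s) and is ruled out by the strict inequalities. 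If one also wishes to allow the shift $\epsilon$ to grow with $p$ (as in the experimental choice $\epsilon=\log_{10}(1+\abs{p})+10^{-6}$), the same conclusion survives because $T^{-1/p}-1$ decays like $1/\abs{p}$ while $\epsilon$ grows only logarithmically, so the $\epsilon$-dependent term in the two-sided bound vanishes in the limit and the threshold $\rho_t<1$ is unchanged.
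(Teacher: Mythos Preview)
Your proposal is correct and follows essentially the same approach as the paper: substitute the scalar limits $m_{\infty}=\max$ and $m_{-\infty}=\min$ into the if-and-only-if condition of Theorem~\ref{theorem:generalization-equallySized-equallyLabelled} and simplify. The paper's proof is in fact much terser than yours---it records the two limits and then simply says ``applying this to the condition yields the desired result''---so your explicit cancellation of the $\epsilon$-shift, the translation $\rho_t<1\iff \q^{(t)}<\p^{(t)}$, and your remarks on the borderline case and the $p$-dependent shift are all additional care beyond what the paper provides.
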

\begin{proof}
 Observe that the limit cases of the scalar power means are
 \begin{align*}
  \lim_{p\to -\infty}m_p(x_1,\ldots,x_T) &= \min\{ x_1,\ldots,x_T \} \\
  \lim_{p\to +\infty}m_p(x_1,\ldots,x_T) &= \max\{ x_1,\ldots,x_T \} \\
 \end{align*}
 Applying this to condition 
 \begin{align*}
  m_p(\boldsymbol{\rho_\epsilon}) < 1+\epsilon\, ,
\end{align*}
 where $(\boldsymbol{\rho_\epsilon})_t = 1-(\p^{(t)} - \q^{(t)})/(\p^{(t)} + (k-1)\q^{(t)})+\epsilon$, and $t=1,\ldots,T$
 yields the desired result.

\end{proof}

\section{Proof of Corollary~\ref{corollary:contention}}\label{appendix:corollary:contention}
\begin{corollary}\label{corollary:contention--supp}
 Let $E(\mathbb{G})$ be an expected multilayer graph as in Theorem~\ref{theorem:generalization-equallySized-equallyLabelled}.
 Let $p\leq q$. If $\mathcal{L}_q$ has a zero-classification error, then $\mathcal{L}_p$ has a zero-classification error.
\end{corollary}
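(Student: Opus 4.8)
The plan is to reduce everything to the monotonicity of the scalar power mean together with the characterization of zero test error already established in Theorem~\ref{theorem:generalization-equallySized-equallyLabelled}. The key observation is that both conditions---for $\mathcal{L}_p$ and for $\mathcal{L}_q$---are expressed in terms of the \emph{same} vector $\boldsymbol{\rho_\epsilon}$, whose entries $(\boldsymbol{\rho_\epsilon})_t = 1-(\p^{(t)} - \q^{(t)})/(\p^{(t)} + (k-1)\q^{(t)})+\epsilon$ do not depend on the power parameter. The only thing that changes between the two cases is which scalar power mean $m_p$ versus $m_q$ is applied to this fixed vector, and the threshold $1+\epsilon$ on the right-hand side is identical in both.

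First I would invoke Theorem~\ref{theorem:generalization-equallySized-equallyLabelled}: since $\mathcal{L}_q$ yields zero test error by hypothesis, the characterization gives
\begin{align*}
  m_q(\boldsymbol{\rho_\epsilon}) < 1+\epsilon.
\end{align*}
Next I would apply the monotonicity of the scalar power mean, Theorem~\ref{theorem:mp_monotone}. Although that theorem is stated for two scalar arguments, its conclusion extends verbatim to the $T$-tuple $\boldsymbol{\rho_\epsilon}$ (the map $p\mapsto m_p(x_1,\ldots,x_T)$ is nondecreasing for any fixed nonnegative scalars). Since $p\leq q$, this yields
\begin{align*}
  m_p(\boldsymbol{\rho_\epsilon}) \leq m_q(\boldsymbol{\rho_\epsilon}).
\end{align*}

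Chaining the two displays gives $m_p(\boldsymbol{\rho_\epsilon}) \leq m_q(\boldsymbol{\rho_\epsilon}) < 1+\epsilon$, and applying the ``if'' direction of Theorem~\ref{theorem:generalization-equallySized-equallyLabelled} again---now for the parameter $p$---concludes that $\mathcal{L}_p$ also yields zero test error. There is no real obstacle here: the entire argument is a two-line consequence of the fact that the zero-error condition is a threshold on $m_p(\boldsymbol{\rho_\epsilon})$ and that $m_p$ is monotone in $p$. The only point requiring a word of care is the passage from the two-argument statement of Theorem~\ref{theorem:mp_monotone} to the $T$-argument monotonicity, which I would either state as a standard fact about power means or note follows by the same convexity argument entrywise.
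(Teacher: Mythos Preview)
Your proposal is correct and follows essentially the same route as the paper: invoke Theorem~\ref{theorem:generalization-equallySized-equallyLabelled} to translate the zero-error hypothesis for $\mathcal{L}_q$ into $m_q(\boldsymbol{\rho_\epsilon})<1+\epsilon$, apply the monotonicity Theorem~\ref{theorem:mp_monotone} to get $m_p(\boldsymbol{\rho_\epsilon})\leq m_q(\boldsymbol{\rho_\epsilon})$, and conclude via the ``if'' direction of Theorem~\ref{theorem:generalization-equallySized-equallyLabelled}. You are in fact slightly more careful than the paper in flagging that Theorem~\ref{theorem:mp_monotone} is stated for two arguments while it is applied here to a $T$-tuple; the paper uses the $T$-argument monotonicity without comment.
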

\begin{proof}
 By Theorem~\ref{theorem:mp_monotone} we have that if $p\leq q$ then $m_p(x_1,\ldots,x_T) \leq m_p(x_1,\ldots,x_T)$.
 Therefore, applying this to our case we can see that
 \begin{align*}
  m_p(\boldsymbol{\rho_\epsilon}) \leq m_q(\boldsymbol{\rho_\epsilon}) < 1+\epsilon
 \end{align*}
A zero test classification error with parameter $q$ is achieved if and only if $m_q(\boldsymbol{\rho_\epsilon})<1+\epsilon$,
hence we can see that zero test classification error with parameter $p$ is achieved if it is achieved with parameter $q$ and $p\leq q$.
 
\end{proof}


\section{General version of Theorem~\ref{theorem:unequalLabels:epsilon-goes-to-zero}}\label{appendix:theorem:arbitrary-labelled-datasets--sup}
\begin{theorem}\label{theorem:arbitrary-labelled-datasets--sup}
 Let $E(\mathbb{G})$ be the expected multilayer graph with $T$ layers following the multilayer SBM with
 two 
 classes $\mathcal{C}_1,\mathcal{C}_2$
 of equal size 
 and parameters $\left(\p^{(t)},\q^{(t)}\right)_{t=1}^{T}$. 
 Let $n_1,n_2$ nodes from classes 
 $\mathcal{C}_1,\mathcal{C}_2$
 be labeled, respectively. 
 Let $\mu=1$.
 Then, a zero test classification error is achieved if and only if 
\begin{align*}
 m_p(\boldsymbol{\rho_\epsilon})< \min\left\{ \frac{(n_1+n_2)((1+\epsilon)^2+1)-2n_2 }{2n_2 + (n_1+n_2)\epsilon} , \frac{(n_1+n_2)((1+\epsilon)^2+1)-2n_1 }{2n_1 + (n_1+n_2)\epsilon} \right\}
\end{align*}
 where $(\boldsymbol{\rho_\epsilon})_l = 1-(\p^{(l)} - \q^{(l)})/(\p^{(l)} + (k-1)\q^{(l)})+\epsilon$, and $l=1,\ldots,T$.
\end{theorem}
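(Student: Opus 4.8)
The plan is to specialize the explicit solution furnished by Lemma~\ref{lemma:solutionMatrix-F-supp} to the two-class setting and read off the classification conditions. Since problem~\eqref{local-global-optimization-problem-rephrased-power-mean-laplacian} uses the unweighted loss, I would apply the lemma with $C=I$, i.e.\ $c_1=c_2=1$, together with $k=2$ and $\mu=1$. For $k=2$ the only relevant eigenvectors are $\bchi_1=\one$ and $\bchi_2=\one_{\mathcal{C}_1}-\one_{\mathcal{C}_2}$, and because the classes have equal size both satisfy $\norm{\bchi_1}^2=\norm{\bchi_2}^2=n$. Substituting $k=2$ into the three case formulas of Lemma~\ref{lemma:solutionMatrix-F-supp} collapses each sum $\sum_{j=r+1}^{k}$ to at most a single term, so for an unlabeled node $v_i$ the entries $f^{(1)}_i,f^{(2)}_i$ reduce to simple expressions in $\alpha$ and $\beta$; concretely, a node $v_i\in\mathcal{C}_1$ yields $f^{(1)}_i=\frac{n_1}{n}\alpha+\frac{n_1}{n}\beta$ and $f^{(2)}_i=\frac{n_2}{n}\alpha-\frac{n_2}{n}\beta$, with the roles of $n_1,n_2$ and the sign of the $\beta$-term swapped for $v_i\in\mathcal{C}_2$. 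Note these values depend only on $l(v_i)$, so checking one representative node per class suffices.

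Next I would impose the decision rule $\hat l(v_i)=\argmax\{f^{(1)}_i,f^{(2)}_i\}$. Correct classification of the $\mathcal{C}_1$ nodes is equivalent to $f^{(1)}_i>f^{(2)}_i$, which after clearing the common factor $1/n$ becomes the linear inequality $(n_1-n_2)\alpha+(n_1+n_2)\beta>0$; correct classification of the $\mathcal{C}_2$ nodes gives the companion inequality $(n_2-n_1)\alpha+(n_1+n_2)\beta>0$. Because every unlabeled node lies in $\mathcal{C}_1$ or $\mathcal{C}_2$, zero test error is equivalent to both inequalities holding simultaneously, and the $\argmax$ rule forces \emph{strict} inequalities (a tie at the boundary is counted as an error), which is exactly what produces the strict ``$<$'' in the final statement.

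Finally I would substitute the closed forms $\alpha=\frac{1}{1+\epsilon}-\frac{1}{2+\epsilon}=\frac{1}{(1+\epsilon)(2+\epsilon)}$ and $\beta=\frac{1}{1+m_p(\boldsymbol{\rho_\epsilon})}-\frac{1}{2+\epsilon}$ (both with $\mu=1$) and solve each inequality for $m_p(\boldsymbol{\rho_\epsilon})$. Clearing the positive denominators $(1+\epsilon)(2+\epsilon)$ and $(1+m_p(\boldsymbol{\rho_\epsilon}))(2+\epsilon)$ --- the latter positive since $m_p(\boldsymbol{\rho_\epsilon})>0$ is a power mean of nonnegative numbers --- turns the first inequality into an affine inequality in $m_p(\boldsymbol{\rho_\epsilon})$ whose leading coefficient is $-(2n_2+(n_1+n_2)\epsilon)<0$; dividing through and flipping the inequality yields $m_p(\boldsymbol{\rho_\epsilon})<\frac{(n_1+n_2)((1+\epsilon)^2+1)-2n_2}{2n_2+(n_1+n_2)\epsilon}$, and the symmetric inequality gives the bound with $n_1,n_2$ exchanged, so the two together yield the stated minimum.

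The main obstacle I anticipate is purely bookkeeping: carrying the two denominators through cleanly, confirming that the coefficient of $m_p(\boldsymbol{\rho_\epsilon})$ is negative so the inequality reverses in the correct direction, and recognizing the rearrangement $(n_1-n_2)+(n_1+n_2)(1+\epsilon)^2=(n_1+n_2)((1+\epsilon)^2+1)-2n_2$ that brings the bound into the published form. As a sanity check I would set $n_1=n_2$, where both bounds collapse to $1+\epsilon$, recovering exactly the condition of Theorem~\ref{theorem:generalization-equallySized-equallyLabelled}.
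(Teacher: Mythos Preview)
Your proposal is correct and follows essentially the same route as the paper: specialize Lemma~\ref{lemma:solutionMatrix-F-supp} to $k=2$, $C=I$, $\mu=1$, obtain the four values $f_i^{(r)}$, impose the $\argmax$ rule to get the pair of inequalities $n_1(\alpha+\beta)>n_2(\alpha-\beta)$ and $n_2(\alpha+\beta)>n_1(\alpha-\beta)$ (equivalently your $(n_1-n_2)\alpha+(n_1+n_2)\beta>0$ and its companion), and then substitute the explicit $\alpha,\beta$ to solve for $m_p(\boldsymbol{\rho_\epsilon})$. The only differences are cosmetic algebraic rearrangements; your sanity check at $n_1=n_2$ recovering Theorem~\ref{theorem:generalization-equallySized-equallyLabelled} is a nice addition.
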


\begin{proof}

The proof of this theorem builds on top of Lemma~\ref{lemma:solutionMatrix-F-supp}, where the entries of the solution matrix 
$F = (f^{(1)}, \ldots, f^{(k)})$ are described, where
$$
 f^{(r)} = \argmin_{f\in\R^n} \|f-CY^{(r)}\|^2 
 +  
 \mu
 f^T \mathcal{L}_pf
$$
Let $l(v_i)$ be the label of node $v_i$, i.e. $l(v_i)=r$ if and only if $v_i\in\mathcal{C}_r$.
According to Lemma~\ref{lemma:solutionMatrix-F-supp}
the entries of matrix $F$ for unlabeled nodes are such that
 \begin{itemize}[topsep=-3pt,leftmargin=*]\setlength\itemsep{-3pt}
 \item If $r < l(v_i)$, then
 \begin{align*}
 f_i^{(r)} = 
 c_r
 \frac{n_r}{n}\alpha
 +
 c_r
 n_r
   \beta
\left(
 (1-l(v_i))\frac{1}{\norm{\bchi_{l(v_i)}}^2}
+
\sum_{j=l(v_i)+1}^k \frac{1}{\norm{\bchi_j}^2}
\right)
\end{align*}
 \item If $r > l(v_i)$, then
\begin{align*}
 f_i^{(r)} = 
 c_r
 \frac{n_r}{n}\alpha
 +
 c_r
 n_r
   \beta
\left(
 (1-r)\frac{1}{\norm{\bchi_{r}}^2}
+
\sum_{j=r+1}^k \frac{1}{\norm{\bchi_j}^2}
\right)
\end{align*}
 \item If $r = l(v_i)$, then
\begin{align*}
 f_i^{(r)} = 
 c_r
 \frac{n_r}{n}\alpha
 +
 c_r
 n_r
   \beta
\left(
 (1-r)^2\frac{1}{\norm{\bchi_{r}}^2}
+
\sum_{j=r+1}^k \frac{1}{\norm{\bchi_j}^2}
\right)
\end{align*}
\end{itemize}
where 
$\alpha=\frac{1}{1+\mu\epsilon} - \frac{1}{1+\mu(1+\epsilon)}$, and 
$\beta=\frac{1}{1+\mu m_p(\boldsymbol{\rho_\epsilon})} - \frac{1}{1+\mu(1+\epsilon)}$.

%

Observe that the case here considered corresponds to the case with two classes, i.e. $k=2$ with equal size classes $\mathcal{C}_1$ and $\mathcal{C}_2$ where the amount of labeled data per class is $n_1$ and $n_2$, respectively, 
with the matrix $C$ as the identity, i.e. $c_1=c_2=1$, and regularization parameter $\mu = 1$.

Moreover, the estimated label assignment for unlabeled nodes goes by the following rule
\begin{align*}
 \hat{l}(v_i) = \argmax \{ f^{(1)}_i, f^{(2)}_i \}
\end{align*}
Hence, we need to find conditions so that the following inequality holds
\begin{align*}
  f^{(j)}_i < f^{(l(v_i))}_i \qquad \forall \,\, j\neq l(v_i)
\end{align*}

Let 
$l(v_i)=1 \Leftrightarrow v_i\in\mathcal{C}_1$, and
$l(v_i)=2 \Leftrightarrow v_i\in\mathcal{C}_2$.
A quick computation following Lemma~\ref{lemma:solutionMatrix-F-supp} yields
\begin{itemize}
 \item 
 $
f_i^{(1)} = \frac{n_1}{n}\alpha + n_1\beta(\frac{1}{\norm{\bchi_2}^2})
$
for $v_i\in\mathcal{C}_1$, i.e. $l(v_i)=1$
 \item 
 $
f_i^{(1)} = \frac{n_1}{n}\alpha - n_1\beta(\frac{1}{\norm{\bchi_2}^2})
$
for $v_i\in\mathcal{C}_2$, i.e. $l(v_i)=2$
 \item 
 $
f_i^{(2)} = \frac{n_2}{n}\alpha - n_2\beta(\frac{1}{\norm{\bchi_2}^2})
$
for $v_i\in\mathcal{C}_1$, i.e. $l(v_i)=1$
 \item 
 $
f_i^{(2)} = \frac{n_2}{n}\alpha + n_2\beta(\frac{1}{\norm{\bchi_2}^2})
$
for $v_i\in\mathcal{C}_2$, i.e. $l(v_i)=2$
\end{itemize}
Observing that $\norm{\bchi_2}^2=n$
these conditions can be rephrase as follows
\begin{align*}
 f^{(1)} 
 &=
 \frac{n_1}{n} 
 \left( \left( \alpha+\beta \right) \one_{\mathcal{C}}  + \left( \alpha-\beta \right) \one_{\mathcal{\overline{C}}} \right)
 \\
 f^{(2)} 
 &=
 \frac{n_2}{n} 
 \left( \left( \alpha-\beta \right) \one_{\mathcal{C}}  + \left( \alpha+\beta \right) \one_{\mathcal{\overline{C}}} \right)
\end{align*}
Hence, the conditions for correct label assignment of unlabeled nodes are
\begin{align*}
 n_1\left( \alpha+\beta \right) > n_2\left( \alpha-\beta \right) 
 \text{ and }
 n_2\left( \alpha+\beta \right) > n_1\left( \alpha-\beta \right)
\end{align*}
Let 
$
\Omega_{11}=\frac{1}{1+\epsilon},
\Omega_{22}=\frac{1}{1+m_p(\boldsymbol{\rho_\epsilon})}
$, and
$
\omega = \frac{1}{1+(1+\epsilon)}.
$
Then, 
$
\alpha = \Omega_{11}-\omega,
$
and
$
\beta = \Omega_{22}-\omega
$.

By studying the first condition we observe
\begin{align*}
 n_1\left( \alpha+\beta \right) 
 &> n_2\left( \alpha-\beta \right) 
 \Leftrightarrow
 \\
 n_1\left( \Omega_{11}-\omega +\Omega_{22}-\omega \right) 
 &> n_2\left( \Omega_{11}-\omega-(\Omega_{22}-\omega) \right) 
 \Leftrightarrow
 \\
 n_1\left( \Omega_{11}+\Omega_{22}-2\omega\right) 
 &> n_2( \Omega_{11}-\Omega_{22})  
 \Leftrightarrow
 \\
 (n_1-n_2)\Omega_{11} + (n_1+n_2)\Omega_{22} 
 &> 2n_1\omega
 \Leftrightarrow
 \\
 \Omega_{22} 
 &> \frac{1}{n_1+n_2} \left( 2n_1\omega - (n_1-n_2)\Omega_{11} \right)
 \Leftrightarrow
 \\
 \frac{1}{1+m_p(\boldsymbol{\rho_\epsilon})} 
 &> \frac{1}{n_1+n_2} \left( 2n_1\frac{1}{1+(1+\epsilon)} - (n_1-n_2)\Omega_{11} \right)
 \Leftrightarrow
 \\
 \frac{1}{1+m_p(\boldsymbol{\rho_\epsilon})} 
 &> \frac{1}{n_1+n_2} \left( 2n_1\frac{1}{2+\epsilon} - (n_1-n_2)\frac{1}{1+\epsilon} \right)
 \Leftrightarrow
 \\
 \frac{1}{1+m_p(\boldsymbol{\rho_\epsilon})} 
 &> \frac{1}{n_1+n_2} \left( \frac{2n_2 + (n_1+n_2)\epsilon}{(2+\epsilon)(1+\epsilon)} \right)
 \Leftrightarrow
 \\
 1+m_p(\boldsymbol{\rho_\epsilon})
 &< (n_1+n_2) \left( \frac{(2+\epsilon)(1+\epsilon)}{2n_2 + (n_1+n_2)\epsilon} \right)
 \Leftrightarrow
 \\
 m_p(\boldsymbol{\rho_\epsilon})
 &< (n_1+n_2) \left( \frac{(2+\epsilon)(1+\epsilon)}{2n_2 + (n_1+n_2)\epsilon} \right)-1
 \Leftrightarrow
 \\
 m_p(\boldsymbol{\rho_\epsilon})
 &< (n_1+n_2) \left( \frac{(2+\epsilon)(1+\epsilon)-(2n_2 + (n_1+n_2)\epsilon)}{2n_2 + (n_1+n_2)\epsilon} \right)
 \Leftrightarrow
 \\
 m_p(\boldsymbol{\rho_\epsilon})
 &< \frac{(n_1+n_2)((2+\epsilon)(1+\epsilon)-\epsilon)-2n_2 }{2n_2 + (n_1+n_2)\epsilon} 
 \Leftrightarrow
 \\
 m_p(\boldsymbol{\rho_\epsilon})
 &< \frac{(n_1+n_2)((1+\epsilon)^2+1)-2n_2 }{2n_2 + (n_1+n_2)\epsilon} 
 \Leftrightarrow
 \\
\end{align*}
The corresponding condition for $\mathcal{C}_2$ can be obtained in a similar way, yielding
\begin{align*}
 m_p(\boldsymbol{\rho_\epsilon}) < \frac{(n_1+n_2)((1+\epsilon)^2+1)-2n_1 }{2n_1 + (n_1+n_2)\epsilon} 
\end{align*}
Hence, both conditions hold if and only if
\begin{align*}
 m_p(\boldsymbol{\rho_\epsilon}) = m_p(\boldsymbol{\rho_\epsilon})< \min\left\{ \frac{(n_1+n_2)((1+\epsilon)^2+1)-2n_2 }{2n_2 + (n_1+n_2)\epsilon} , \frac{(n_1+n_2)((1+\epsilon)^2+1)-2n_1 }{2n_1 + (n_1+n_2)\epsilon} \right\}
\end{align*}
\end{proof}
%
%

\section{Proof of Theorem~\ref{theorem:unequalLabels:epsilon-goes-to-zero}}\label{appendix:theorem:unequalLabels:epsilon-goes-to-zero}

\begin{theorem}\label{theorem:epsilon-goes-to-zero--sup}
 Let $E(\mathbb{G})$ be the expected multilayer graph with $T$ layers following the multilayer SBM with
 two 
 classes $\mathcal{C}_1,\mathcal{C}_2$ 
 of equal size 
 and parameters $\left(\p^{(t)},\q^{(t)}\right)_{t=1}^{T}$. 
 Let $n_1,n_2$ nodes from clusters $\mathcal{C}_1,\mathcal{C}_2$ be labeled, respectively. 
 Let $\mu=1$.
 Then, a zero test classification error is achieved if 
\begin{align*}
 m_p(\boldsymbol{\rho_\epsilon})< \min\left\{ \frac{n_1}{n_2}, \frac{n_2}{n_1} \right\}
\end{align*}
 where $(\boldsymbol{\rho_\epsilon})_l = 1-(\p^{(l)} - \q^{(l)})/(\p^{(l)} + (k-1)\q^{(l)})+\epsilon$, and $l=1,\ldots,T$.
\end{theorem}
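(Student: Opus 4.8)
The plan is to derive this sufficient condition directly from the exact characterization already established in Theorem~\ref{theorem:arbitrary-labelled-datasets--sup}. That theorem states that, under the present two-class hypotheses with $\mu=1$, zero test error holds \emph{if and only if} $m_p(\boldsymbol{\rho_\epsilon}) < B(\epsilon)$, where $B(\epsilon) = \min\{B_1(\epsilon), B_2(\epsilon)\}$ and $B_1,B_2$ are the two explicit rational expressions in $\epsilon, n_1, n_2$ appearing there. Consequently it suffices to show that the simpler, $\epsilon$-free bound is dominated by the exact threshold, i.e.\ $\min\{n_1/n_2, n_2/n_1\} \le B(\epsilon)$ for every $\epsilon\ge 0$. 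Granting this, the hypothesis $m_p(\boldsymbol{\rho_\epsilon}) < \min\{n_1/n_2, n_2/n_1\}$ forces $m_p(\boldsymbol{\rho_\epsilon}) < B(\epsilon)$, and Theorem~\ref{theorem:arbitrary-labelled-datasets--sup} yields zero error.

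First I would simplify each branch. Writing $s = n_1+n_2$ and expanding $(1+\epsilon)^2$, the numerator of $B_1$ collapses via $2s - 2n_2 = 2n_1$ to $2n_1 + 2s\epsilon + s\epsilon^2$ over denominator $2n_2 + s\epsilon$, and symmetrically $B_2(\epsilon) = (2n_2 + 2s\epsilon + s\epsilon^2)/(2n_1 + s\epsilon)$. Evaluating at $\epsilon=0$ gives $B_1(0) = n_1/n_2$ and $B_2(0) = n_2/n_1$, so $B(0) = \min\{n_1/n_2, n_2/n_1\}$ exactly; this is the precise sense in which the stated bound is the $\epsilon\to 0$ specialization of the exact threshold. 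It then remains to verify that $B_1,B_2$ do not drop below this value as $\epsilon$ grows.

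Assuming without loss of generality $n_1 \le n_2$, so that $\min\{n_1/n_2, n_2/n_1\} = n_1/n_2$, I would clear the (positive) denominators and examine the two difference-numerators. For $B_1(\epsilon) - n_1/n_2$ this numerator factors as $s\epsilon\,(2n_2 - n_1 + n_2\epsilon)$, nonnegative because $n_2 \ge n_1$; for $B_2(\epsilon) - n_1/n_2$ it equals $2(n_2^2 - n_1^2) + (2n_2 - n_1)s\epsilon + n_2 s\epsilon^2$, a sum of nonnegative terms under the same ordering. Hence $B(\epsilon) = \min\{B_1(\epsilon), B_2(\epsilon)\} \ge n_1/n_2$, and the case $n_2 \le n_1$ follows by interchanging the indices. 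The only genuine work is this sign analysis of the two numerators; the main (minor) obstacle is keeping the algebra organized, in particular spotting the cancellation $2(n_1+n_2) - 2n_2 = 2n_1$ that makes $B_1(0)$ collapse to $n_1/n_2$ and confirming that each coefficient in the difference-numerators remains nonnegative once the ordering is fixed.
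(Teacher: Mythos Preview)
Your proposal is correct and follows essentially the same route as the paper: both invoke Theorem~\ref{theorem:arbitrary-labelled-datasets--sup}, observe that the exact threshold evaluates to $\min\{n_1/n_2,\,n_2/n_1\}$ at $\epsilon=0$, and argue that the threshold only increases with $\epsilon$. If anything, your explicit sign analysis of the difference numerators is more careful than the paper's one-line monotonicity claim (``quadratic numerator over linear denominator''), which is not by itself a valid justification.
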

\begin{proof}
We first analyze the first condition of the right hand side of Theorem~\ref{theorem:arbitrary-labelled-datasets--sup}.
Let $g(\epsilon) = \frac{(n_1+n_2)((1+\epsilon)^2+1)-2n_2 }{2n_2 + (n_1+n_2)\epsilon}$.
Then,
 \begin{align*}
  g(0)
  =
  \frac{2(n_1+n_2)-2n_2 }{2n_2}
  =
  \frac{n_1 }{n_2}
 \end{align*}
 Moreover, it is clear that $g$ is monotone,  as it is quadratic on $\epsilon$ on the numerator and linear on the denominator, and hence $g(0)<g(\epsilon)$.
 
 A similar procedure with the second condition of the right hand side of Theorem~\ref{theorem:arbitrary-labelled-datasets--sup} leads to the condition $\frac{n_2}{n_1}$, leading to the desired result.
\end{proof}

\section{Proof of Theorem~\ref{theorem:generalization-equallySized-NOTequallyLabelled}}\label{appendix:theorem:generalization-equallySized-NOTequallyLabelled}
\begin{theorem}\label{theorem:generalization-equallySized-NOTequallyLabelled--supp}
 Let $E(\mathbb{G})$ be the expected multilayer graph with $T$ layers following the multilayer SBM with
 $k$ classes $\mathcal{C}_1, \ldots, \mathcal{C}_k$ of equal size and parameters $\left(\p^{(t)},\q^{(t)}\right)_{t=1}^{T}$. 
 Let $n_1,\ldots,n_k$ be the number of nodes per class be labeled.
 Let $C\in\R^{n}$ be a cost vector where with $C_i = n/n_r$ for $v_i\in\mathcal{C}_r$.
 Then, a zero test classification error is achieved if and only if 
 $$m_p(\boldsymbol{\rho_\epsilon}) < 1+\epsilon\, ,$$
 where $(\boldsymbol{\rho_\epsilon})_l = 1-(\p^{(l)} - \q^{(l)})/(\p^{(l)} + (k-1)\q^{(l)})+\epsilon$, and $l=1,\ldots,T$.
\end{theorem}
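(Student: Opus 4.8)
The plan is to reduce this weighted problem to the balanced case already established in Theorem~\ref{theorem:generalization-equallySized-equallyLabelled-supp}, by exploiting the fact that the class-mass rescaling $C_{ii}=n/n_r$ is exactly calibrated to cancel the dependence on the label counts $n_r$. First I would apply Lemma~\ref{lemma:solutionMatrix-F-supp}, which expresses the entries of the solution matrix $F=(f^{(1)},\dots,f^{(k)})$ of \eqref{eq:objFun:weightedLoss} in terms of the cost values $c_r$. Specializing to $c_r=n/n_r$, both coefficients that appear in all three cases of the lemma become independent of $r$: indeed $c_r\tfrac{n_r}{n}=1$ and $c_rn_r=n$.

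Substituting these simplifications, every entry takes the form $f^{(r)}_i=\alpha+n\beta(\cdots)$, where the additive term $\alpha$ is identical for all classes $r$ and the prefactor $n\beta$ is likewise common across classes. Since the predicted label is $\hat l(v_i)=\argmax\{f^{(1)}_i,\dots,f^{(k)}_i\}$, only the differences $f^{(r)}_i-f^{(l(v_i))}_i$ are relevant: the common $\alpha$ cancels and the common positive factor $n$ divides out. Consequently the correct-classification inequalities coincide exactly with those treated in the proof of Theorem~\ref{theorem:generalization-equallySized-equallyLabelled-supp}, and the two regimes $r>l(v_i)$ and $r<l(v_i)$ each collapse (using $\norm{\bchi_j}^2>0$) to the single scalar requirement $0<\beta$.

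It then remains only to translate $0<\beta$ into the stated spectral condition. With $\beta=\frac{1}{1+\mu m_p(\boldsymbol{\rho_\epsilon})}-\frac{1}{1+\mu(1+\epsilon)}$ and $\mu>0$, clearing the positive denominators shows $0<\beta\Longleftrightarrow m_p(\boldsymbol{\rho_\epsilon})<1+\epsilon$, precisely as in the balanced proof. The one conceptual step worth emphasizing, rather than any genuine obstacle, is the cancellation $c_rn_r=n$: the choice of $C$ removes the asymmetry introduced by unequal label counts, so that both the $\bchi_1$-component (governed by $\alpha$) and the discriminative components (governed by $\beta$) enter uniformly across classes. Once this is noticed, no computation beyond that of Theorem~\ref{theorem:generalization-equallySized-equallyLabelled-supp} is required; the argument is a specialization of Lemma~\ref{lemma:solutionMatrix-F-supp} followed by the same argmax analysis.
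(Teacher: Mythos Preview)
Your proposal is correct and follows essentially the same route as the paper: invoke Lemma~\ref{lemma:solutionMatrix-F-supp}, observe that the choice $c_r=n/n_r$ makes the class-dependent prefactors constant in $r$, and then reuse the argmax analysis of Theorem~\ref{theorem:generalization-equallySized-equallyLabelled-supp} to arrive at $0<\beta \Longleftrightarrow m_p(\boldsymbol{\rho_\epsilon})<1+\epsilon$. If anything, you are slightly more explicit than the paper, which only flags the cancellation $c_r\tfrac{n_r}{n}=1$; your additional remark that $c_r n_r=n$ is likewise independent of $r$ is exactly what is needed for the $\beta$-term comparison to go through unchanged.
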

\begin{proof}
 The proof is similar to the one of Theorem~\ref{theorem:generalization-equallySized-equallyLabelled} (see Section~\ref{appendix:Proof:theorem:generalization-equallySized-equallyLabelled}). 
 The only change is in the terms
 $c_r \frac{n_r}{n}$. Since we have by definition that $c_r = \frac{n}{n_r}$ we have that
 $c_r \frac{n_r}{n}=1$, leading to the conditions obtained by Theorem~\ref{theorem:generalization-equallySized-equallyLabelled}.
\end{proof}

\section{A scalable matrix-free method for the linear system $(I+\lambda L_p)f=Y$}\label{appendix:numerics}
Computing the generalized matrix mean of $T$ positive definite matrices $A_1,\dots,A_T$ requires to compute $T+1$ matrix functions: $A_1^p, \dots, A_T^p$ and $(\sum_i A_i^p)^{1/p}$. Typically, the matrices $A_i^p$ are full even though each $A_i$ is a sparse matrix and so, computing $L_p$ explicitly is unfeasible if the $A_i$'s have large dimensions. Given a vector $\y$ and a negative integer $p$, here we propose a matrix-free method for solving the linear system $(I+\lambda L_p)^{-1}\y$. The method exploits the sparsity of the Laplacians of each layer and is matrix-free in the sense that it requires only to compute the matrix-vector product $A_i\times vector$, without requiring to store the matrices $A_i$ themselves nor to compute any matrix function $A_i^p$ explicitly. Thus, when the layers are sparse, the method scales to large datasets. 
Below we give further details about the method presented in the short version of the paper. We present the method for a general set of positive definite matrices $A_1,\dots, A_T$, and for a general vector $\y$, for the sake of generality.

Let $S_p = A_1^p + \dots + A_T^p$, $\phi:\mathbb C \to \mathbb C$ be the complex function $\phi(z)=z^{1/p}$ and let $L_p$ be the matrix function $L_p= T^{-1/p}\phi(S_p)$.  
The proposed method essentially transforms the original problem into a series of subproblems which thus allow us to solve the linear system $(I+\lambda L_p)^{-1}\y$ by solving several different linear systems with $A_i$ as coefficient matrices. The method consists of three main nested inner--steps which we present below.

\textbf{1.} First, we solve the linear system $(I+\lambda L_p)^{-1}\y$ by a Krylov method (GMRES in our case \cite{saad1986gmres}). At each iteration, this method projects the problem into the Krylov subspace spanned by $\{\y, \lambda L_p\y, (\lambda L_p)^2\y, \dots,  (\lambda L_p)^{h}\y\}$. If $\kappa =\lambda_{\max}(L_p)/\lambda_{\min}(L_p)$, then the method converges as 
$$
O\left(\Big(\frac{\kappa^2-1}{\kappa^2}\Big)^{h/2}\right)\, .
$$
Thus, if $L_p$ is well conditioned, a relatively small $h$ is required. In order to build the appropriate Krylov subspace, at each iteration we need to efficiently perform one matrix--vector product $L_p\y$. 

\textbf{2.} Second, in order to compute  $L_p\y = T^{-1/p}\phi(S_p)\y$ we use the Cauchy integral form of the function $\phi$, transformed via a conformal map,  to approximate $\phi(S_p)$ via the trapezoidal rule, as proposed in \cite{hale2008computing}.
 Let $m,M>0$ be such that the interval $[m,M]$ contains the whole spectrum of $S_p$ and let $t_1,\dots, t_N$ be $N$ equally spaced contour points to be used in the trapezoidal rule. As $\phi$ has a singularity at $z = 0$ but just a brunch cut on $(-\infty, 0)$,  we can approximate $\phi(S_p)\y$ via \cite{hale2008computing}
\begin{equation*}
\phi_N(S_p)\y = \frac{-8K(mM)^{1/4}}{\pi Nk}S_p\,\, \mathrm{Im}\left\{\sum_{i=1}^N \frac{\phi(z_i^2)c_id_i}{z_i(k^{-1}-s_i)^2} (z_i^2I-S_p)^{-1}\y  \right\} 
\end{equation*}
where $\mathrm{Im}$ denotes the imaginary part, $k = \big((M/m)^{1/4}-1\big)/\big((M/m)^{1/4}+1\big)$, $K$ is the value of the complete elliptic  integral of the first kind, evaluated at $ke^2$,   $s_i=\mathrm{sn}(t_i)$ is the Jacobi elliptic sine function evaluated on the $i$-th contour point $t_i$,  and
$$
z_i = (mM)^{1/4} \left(\frac{k^{-1}+s_i}{k^{-1}-s_i}\right), \quad  c_i = \sqrt{1-s_i^2}, \quad  d_i = \sqrt{1-k^2s_i^2}\, ,
$$
for $i=1,\dots,N$. This approximation converges geometrically as the number of points increases. Precisely, it holds
\begin{equation*}
\|\phi(S_p)\y - \phi_N(S_p)\y\| = O(e^{-2\pi^2N/(\ln(M/m)+6)})\, .
\end{equation*}
Thus, the computation of $\phi(S_p)\y$ is reduced to $N$ linear systems $(z_i^2I-S_p)^{-1}\y$. Note that these systems are independent and thus they can be solved in parallel.

\textbf{3.} Finally, in order to solve the linear system $(zI-S_p)^{-1}\y$ we employ again a Krylov method. In order to build the Krylov space for $(zI-S_p)$ and $\y$ we need to efficiently perform one multiplication $S_p$ times a vector per iteration. As $S_p = \sum_{i=1}^TA_i^{p} = \sum_{i=1}^T (A_i^{-1})^{|p|}$, this problem reduces to solving $q$ linear systems with $A_i$ as coefficient matrix, for $i=1,\dots,T$. As the matrices $A_i$ are assumed sparse and positive definite, we can very efficiently solve each of these systems via the Preconditioned Conjugate Gradient method with an incomplete Cholesky preconditioner.

The pseudocode for the proposed algorithm is presented in Algorithms \ref{alg:main}--\ref{alg:Mp}. 

\noindent\fbox{
\begin{minipage}{.97\textwidth}
\RestyleAlgo{plain}	
$\,$\hspace{1em}\begin{algorithm}[H]
\DontPrintSemicolon
{\KwIn{$A_1,\dots,A_T,p,\y, \lambda$}}
Compute preconditioners $P_1, \dots, P_T$ for $A_1,\dots,A_T$\;
Compute estimates for $m$ and $M$ such that $\mathrm{eigenvalues}(S_p)\subseteq [m,M]$\;
Choose number of contour points $N$\;
Compute contour coefficients $z_i,s_i,K,k$\;
Solve $(I+\lambda L_p)^{-1}\y$ with GMRES, using Alg.\ref{alg:Lp} as subroutine\;
{\KwOut{$\u = (I+\lambda L_p)^{-1}\y$}}
\vspace{.5em}
\caption{Solve $(I+\lambda L_p)^{-1}\y$}
\label{alg:main}
\end{algorithm}
\end{minipage}
}

\noindent\fbox{
\begin{minipage}{.5\textwidth}
\RestyleAlgo{plain}	
$\,$\hspace{1em}\begin{algorithm}[H]
\DontPrintSemicolon
{\KwIn{$A_1,\dots,A_T,p,\y,N,m,M$, contour coefficients $z_i,s_i,c_i,d_i,k,K$}}
$\u \gets S_p\y$, using Alg.\ref{alg:Mp}\;
\For{$i=1,\dots,N$}{
$\u \gets \mathrm{solve}(z_iI-S_p,\y)$ with GMRES, using Alg.\ref{alg:Mp} as subroutine\;
$\u \gets \frac{(z_i^2)^{1/p}c_id_i}{z_i(k^{-1}-s_i)^2}\u$\;
$\u_{k+1} = \|\v_{k+1}\|_q^{1-q}|\v_{k+1}|^{q-2} \v_{k+1}$\;
}
$\u \gets \frac 1 {T^{1/p}} \frac{-8K(mM)^{1/4}}{\pi Nk} \mathrm{Im}(\u)$\;
\vspace{.5em}
\KwOut{$\u = L_p\y$}
\vspace{.5em}
\caption{Multiply $L_p$ times a vector}
\label{alg:Lp}
\end{algorithm}
\end{minipage}
\begin{minipage}{.465\textwidth}
\RestyleAlgo{plain}	
$\,$\hspace{1em}\begin{algorithm}[H]
\DontPrintSemicolon
{\KwIn{$A_1, \dots, A_T,P_1,\dots,P_T,\y$}}
\vspace{1em}
\For{$k=1,\dots,T$}{
$\u \gets \u + \mathrm{solve}(A_i^{|p|},\y)$ using CG preconditioned with $P_i$\;
}
\KwOut{$\u = S_p\y$}
\vspace{.5em}
\caption{Multiply $S_p$ times a vector}
\label{alg:Mp}
\end{algorithm}\vspace{6em}
\end{minipage}
}

\subsection{Implementation details and computational complexity}
Few implementation details are in order: 

The preconditioners $P_i$ can be computed using an incomplete Cholesky factorization. In our test we observe that a \verb!1e-4! threshold is enough to ensure convergence  of Alg.\ref{alg:Mp} to \verb!1e-8! precision in just 2 or 3 iterations. As in our case the $A_i$ are Laplacians, another excellent preconditioner can be obtained using a Combinatorial Multi Grid method (CMG). In our experiments, the CMG preconditioner performed similarly (but slightly worse) than the incomplete Cholesky. 

A precise estimate of $M$ in Alg.\ref{alg:main} step 2 can be obtained using a Krylov eigensolver with Alg.\ref{alg:Mp} as subroutine. As for $m$, since each $A_i^p$ is positive definite and $p$ is a negative integer, a  good estimate can be obtained by exploiting the Weyl's inequality (see e.g.\ \cite{wilkinson1965algebraic})
$$
m = \lambda_{\max}(A_1)^p +\cdots +\lambda_{\max}(A_T)^p \leq \lambda_{\min}(S_p) \, . 
$$
The number of contour points $N$ can be chosen using  the geometric convergence of $\phi_N$. In our experiments, we chose a precision $\tau=$\verb!1e-8! and we set
$$
N = |(\ln(M/m)+6)\ln(\tau)/2\pi^2|\, .
$$
The contour points have been calculated using the code from \cite{driscoll2005algorithm}.

Concerning the computational cost of the method, the following analysis shows that it is proportional to the number of edges in each layer, i.e.\ Alg.\ref{alg:main} scales to large sparse datasets. Let $c(A_i)$ be the cost of multiplying $c(A_i)$ times a vector (which is proportional to the number of nonzeros in $A_i$, i.e.\ the number of edges in the layer $i$ when $A_i$ is the normalized Laplacian of the $i$-th layer).  Let $K_1,K_2,K_3$ be the number of iterations of GMRES,GMRES and PCG in lines 5, 3 and 2 of Algorithms \ref{alg:main}, \ref{alg:Lp} and \ref{alg:Mp}, respectively. Each instance of $\mathrm{solve}(A_i^{|p|},\y)$ in Alg.\ref{alg:Mp} requires $K_3 p \, c(A_i)$ operations per step. So The cost of Alg.\ref{alg:Mp} is roughly $p K_3 \sum_{i=1}^T c(A_i)$. This implies that the cost of Alg.\ref{alg:Lp} is $NK_2K_3 p \,\sum_{i=1}^T c(A_i)$. Therefore, the cost of solving the linear system $(I+\lambda L_p)^{-1}\y$ with Alg.\ref{alg:main} is 
$$
K_1 NK_2K_3 p \big(c(A_1)+\cdots +c(A_T)\big)\, ,
$$
showing that the method scales as the number of nonzeros in each layer, as claimed. It is important to notice that the Algorithm allows for a high level of parallelism. In fact, the computation of the preconditioners $P_i$ at step 1 of Alg.\ref{alg:main}, the \textbf{for} at step 2 of Alg.\ref{alg:Lp} and the \textbf{for} at step 1 of Alg.\ref{alg:Mp} can all be run in parallel.
\section{Analysis on Three Layers with Three Classes}\label{appendix:3Layers}
In this section we give a more detailed exposition of experiments presented in Section~\ref{sec:independentInformation}.
We consider the cases where $\p-\q\in\{0.03, 0.04, \ldots, 0.1\}$ which are depicted in Fig.\ref{fig:3Layers-supplementary}. 
In the $x$-axis we have the amount of labeled nodes and in the $y$- we have the classification error.
We can see that in general there is a trend between the performance of our proposed method (colorful curves) and state of the art methods (black curves). We can see that the larger the gap $\p-\q$ the larger the difference is between our proposed method and state of the art methods.
Moreover, one can see that the smaller the value of $p$ the better the performance of our proposed method. Moreover, there is a set of state of the art methods that do not improve their performance with larger amounts of labeled nodes. Yet, one can observe that there are three methods from the state of the art that perform close to our methods: TLMV, ZooBP and AGML, which performs similarly to our method $L_1$ (i.e. the arithmetic mean of Laplacians).

\begin{figure*}[t]
 \centering
 \centering
 \includegraphics[width=0.50\linewidth, clip,trim=0 190 30 403]{./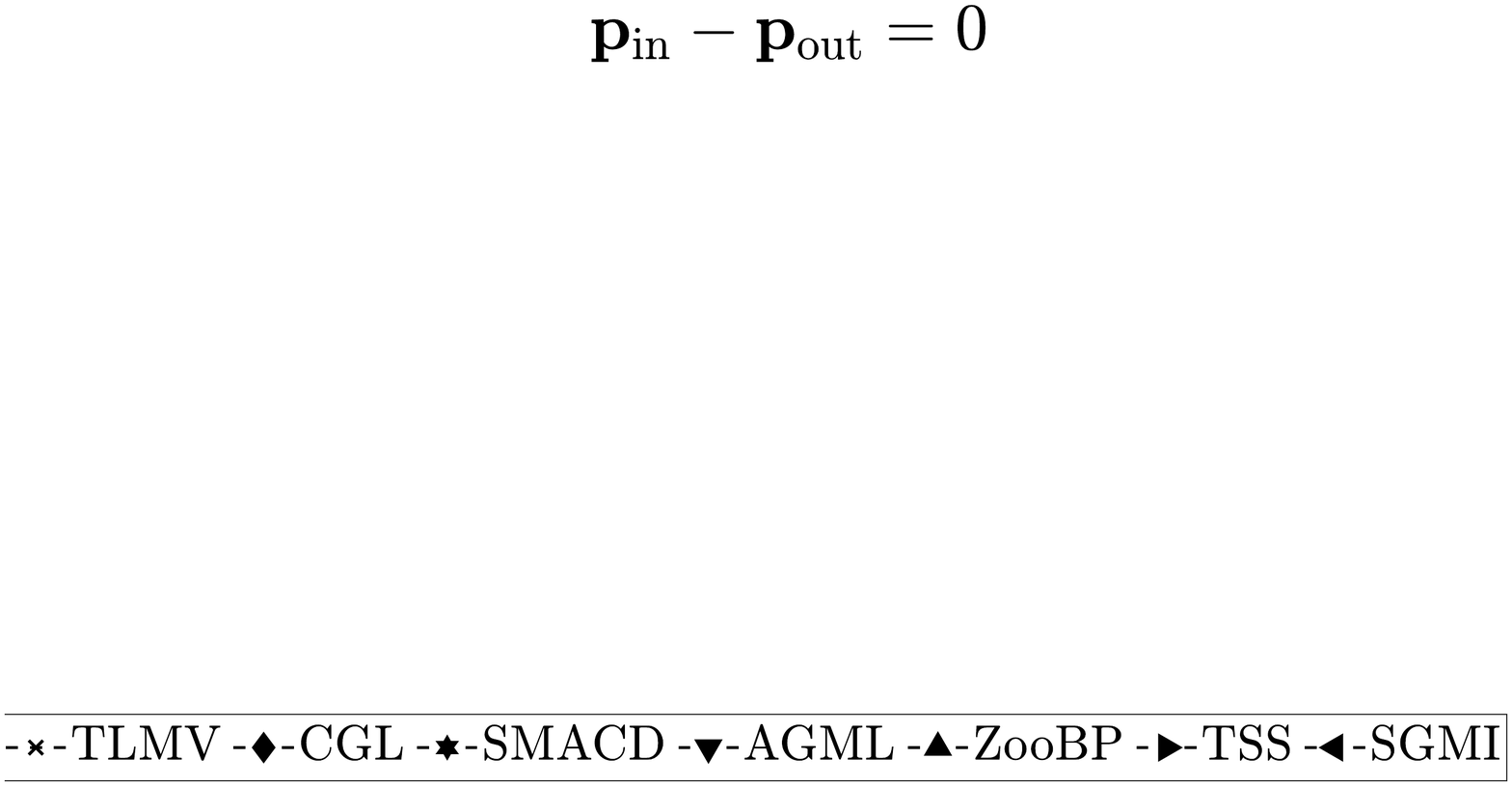}
 \includegraphics[width=0.245\linewidth, clip,trim=179 192 195 391]{./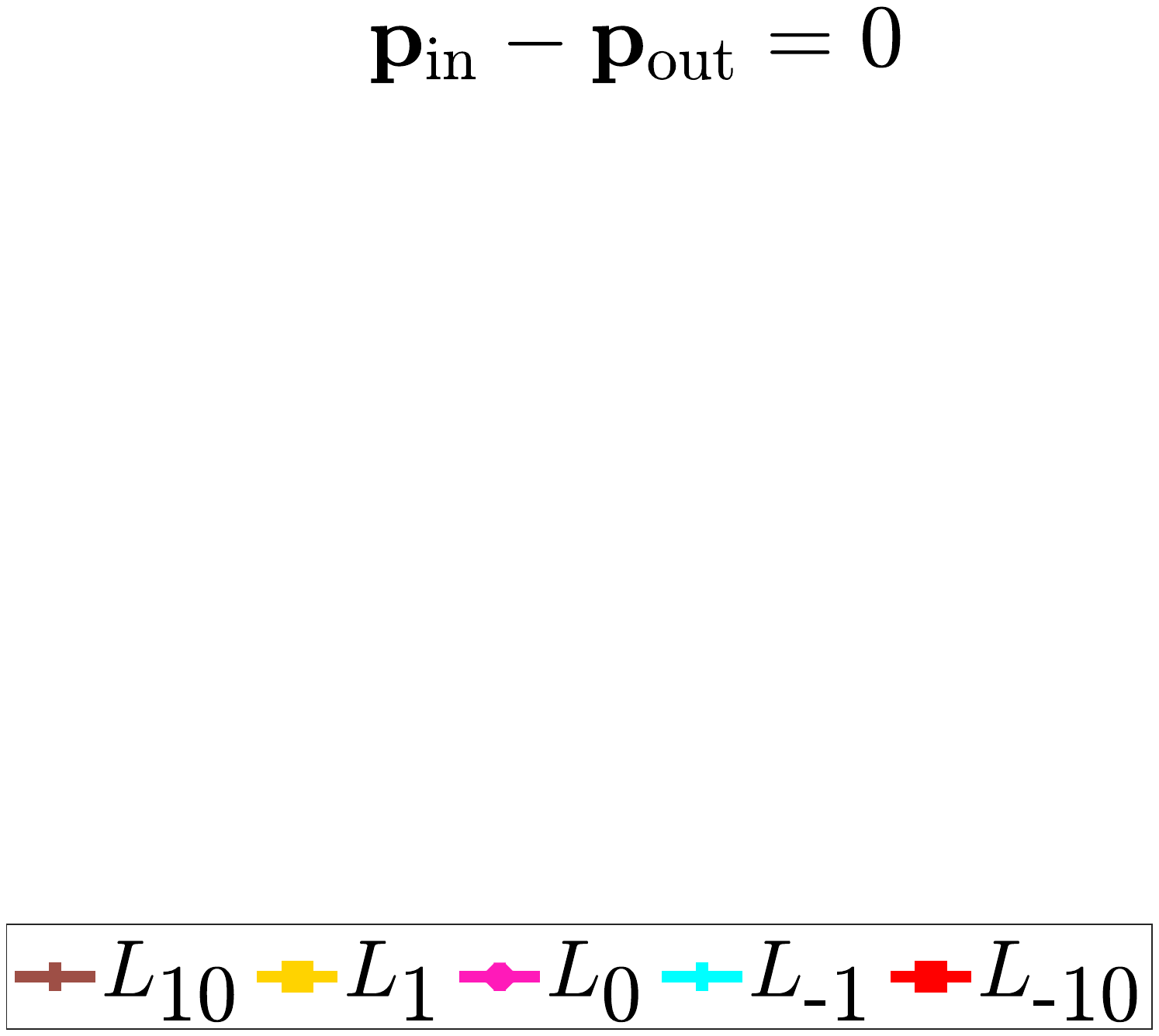}
 \vspace{20pt}

 \hfill 
 \begin{subfigure}[b]{0.23\textwidth}
 \includegraphics[angle=-00,width=1\textwidth,trim=160 100 10 90]{./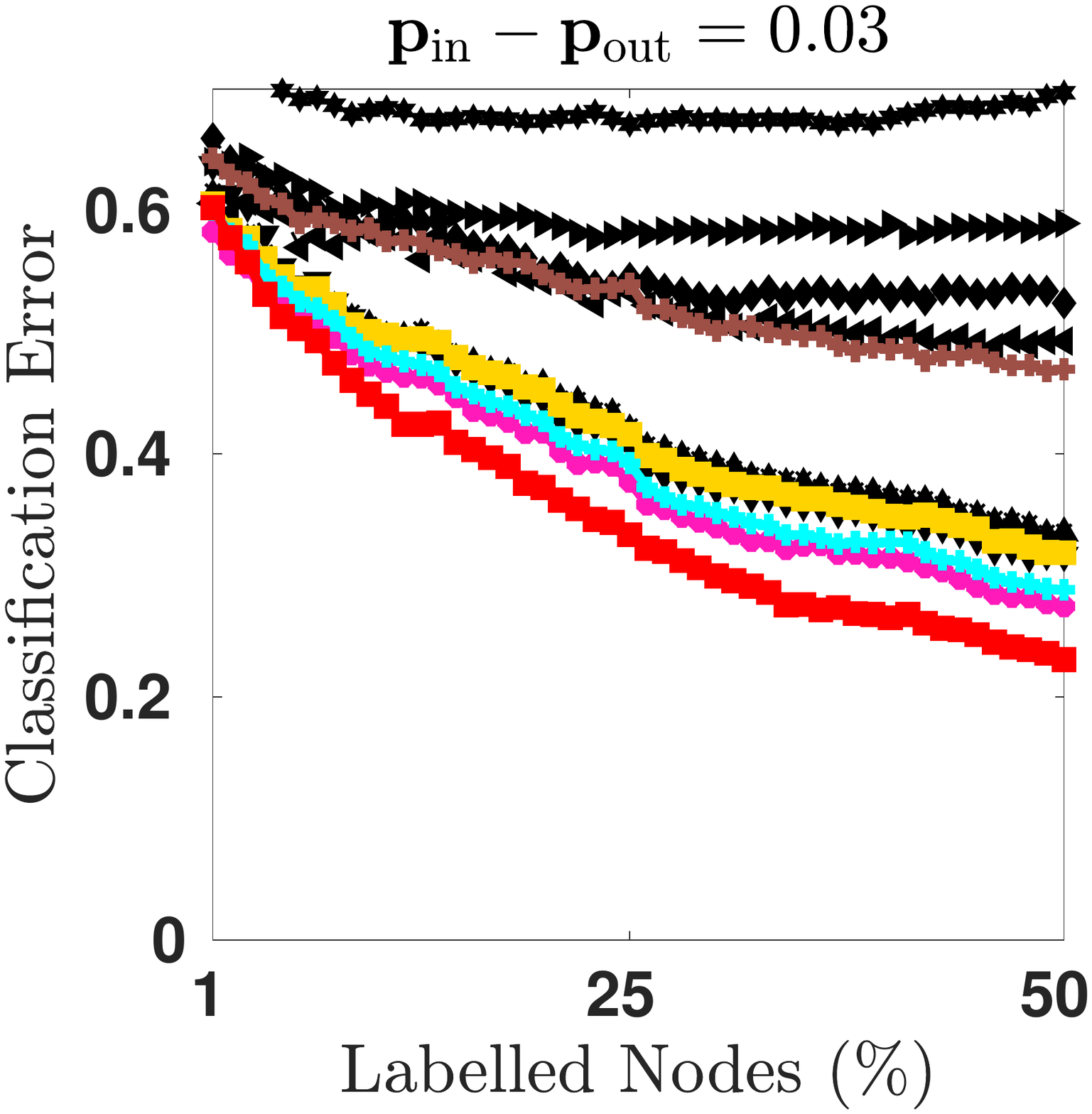}
 \end{subfigure}
 \hfill
 \begin{subfigure}[b]{0.23\textwidth}
 \includegraphics[angle=-00,width=1\textwidth,trim=160 100 10 90]{./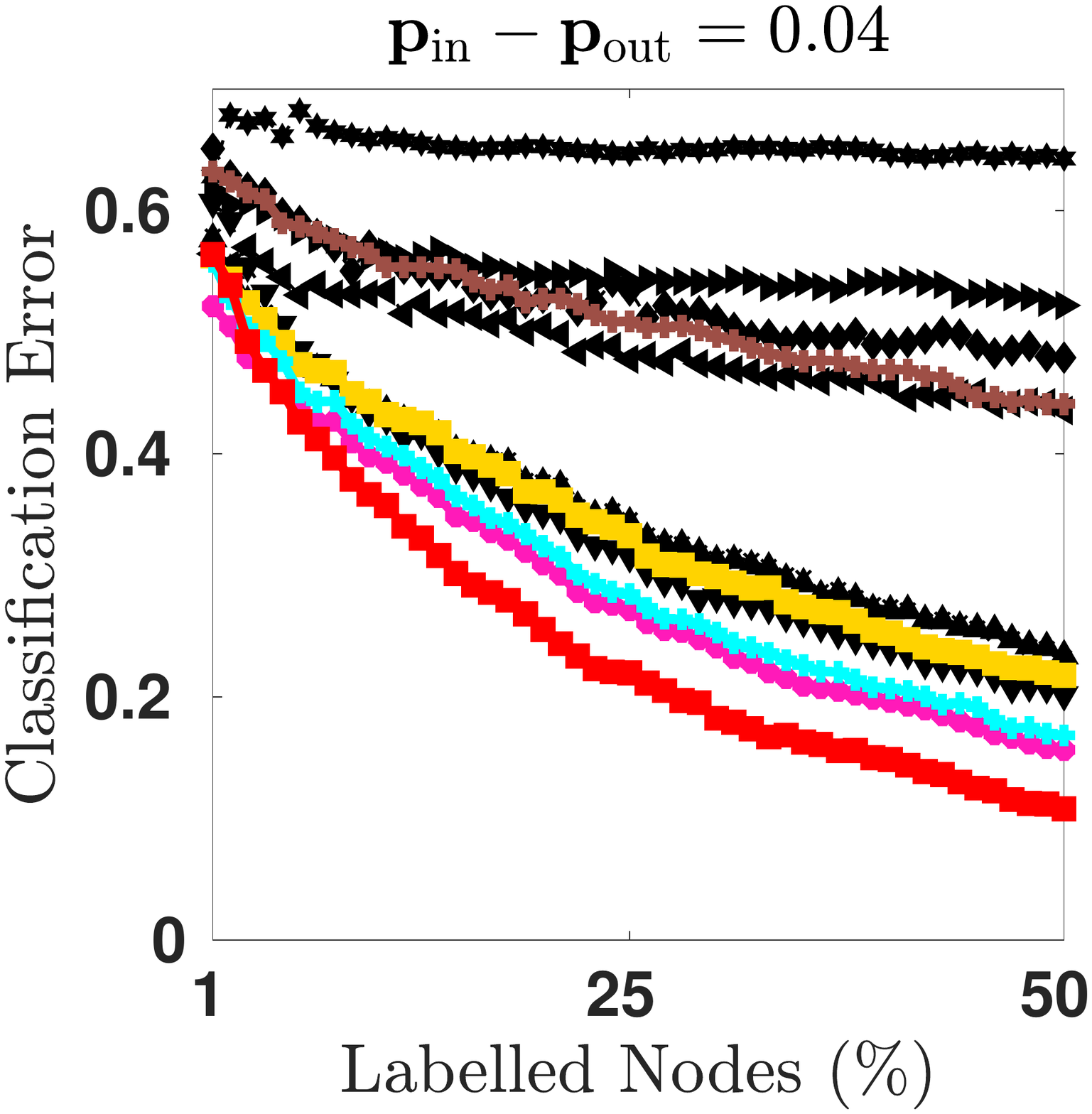}
 \end{subfigure}
 \hfill
 \begin{subfigure}[b]{0.23\textwidth}
 \includegraphics[angle=-00,width=1\textwidth,trim=160 100 10 90]{./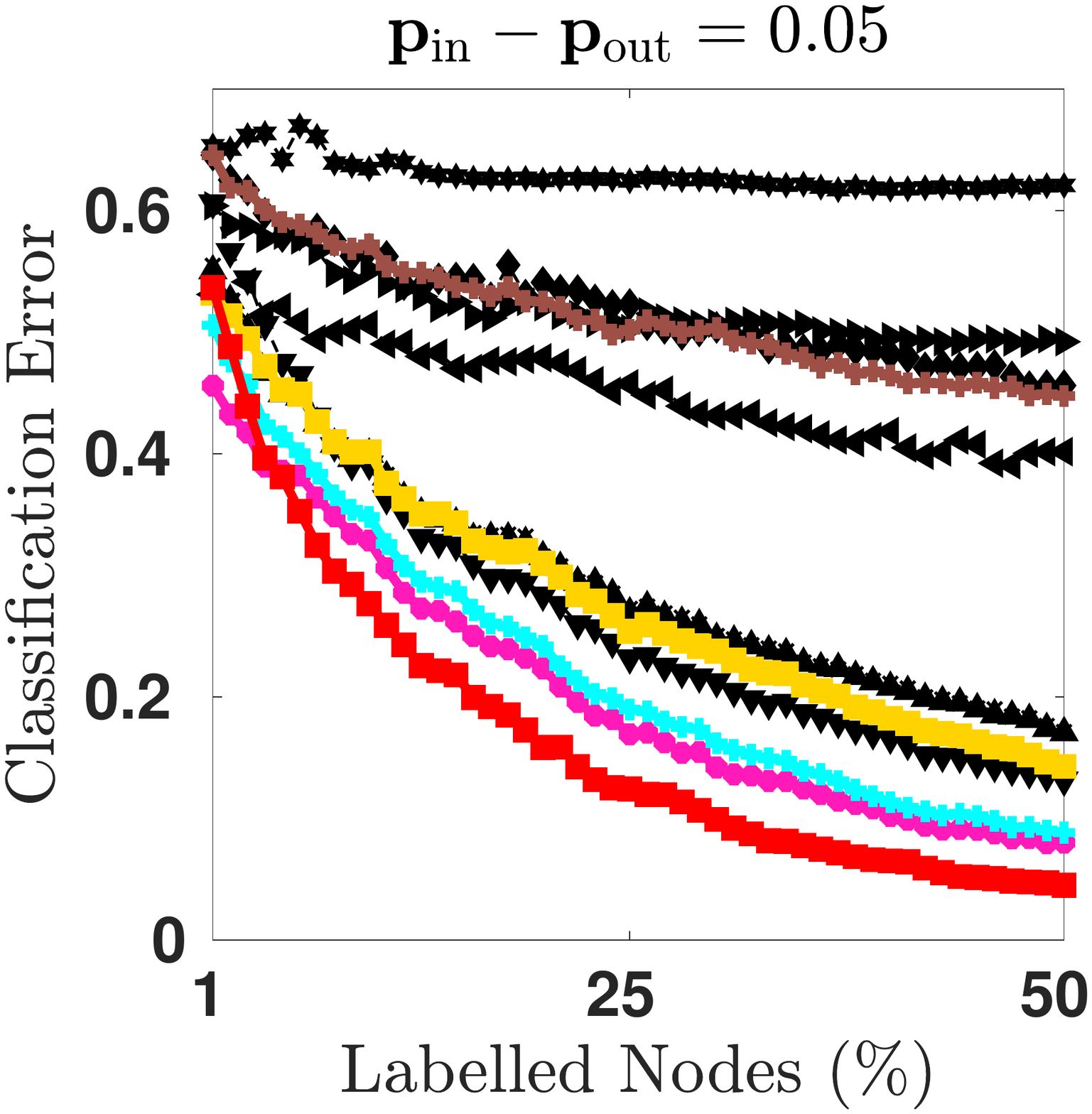}
 \end{subfigure}
 \hfill
 \begin{subfigure}[b]{0.23\textwidth}
 \includegraphics[angle=-00,width=1\textwidth,trim=160 100 10 90]{./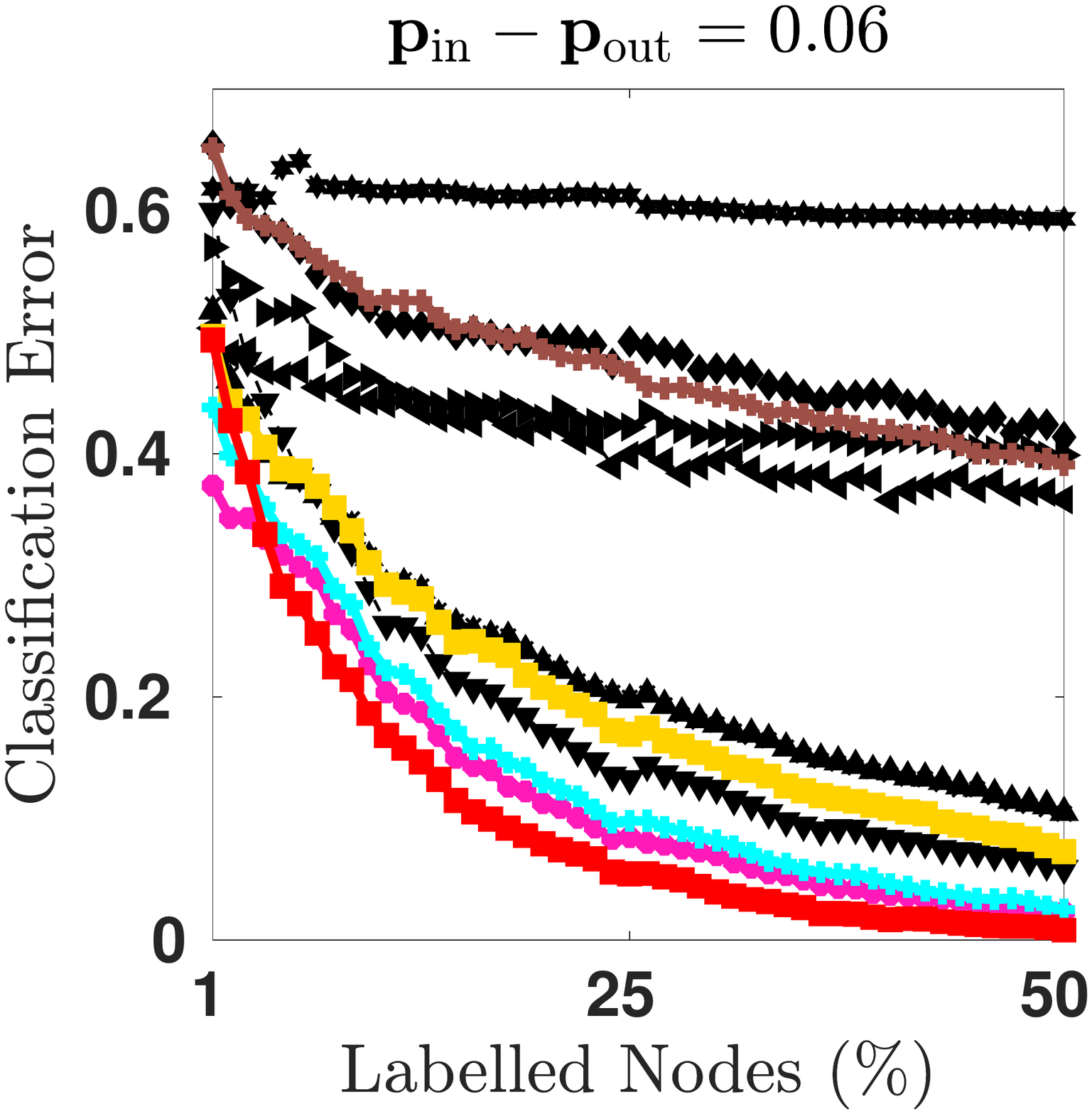}
 \end{subfigure}
 \hfill

 \vspace{25pt}
 
 \hfill
 \begin{subfigure}[b]{0.23\textwidth}
 \includegraphics[angle=-00,width=1\textwidth,trim=160 100 10 90]{./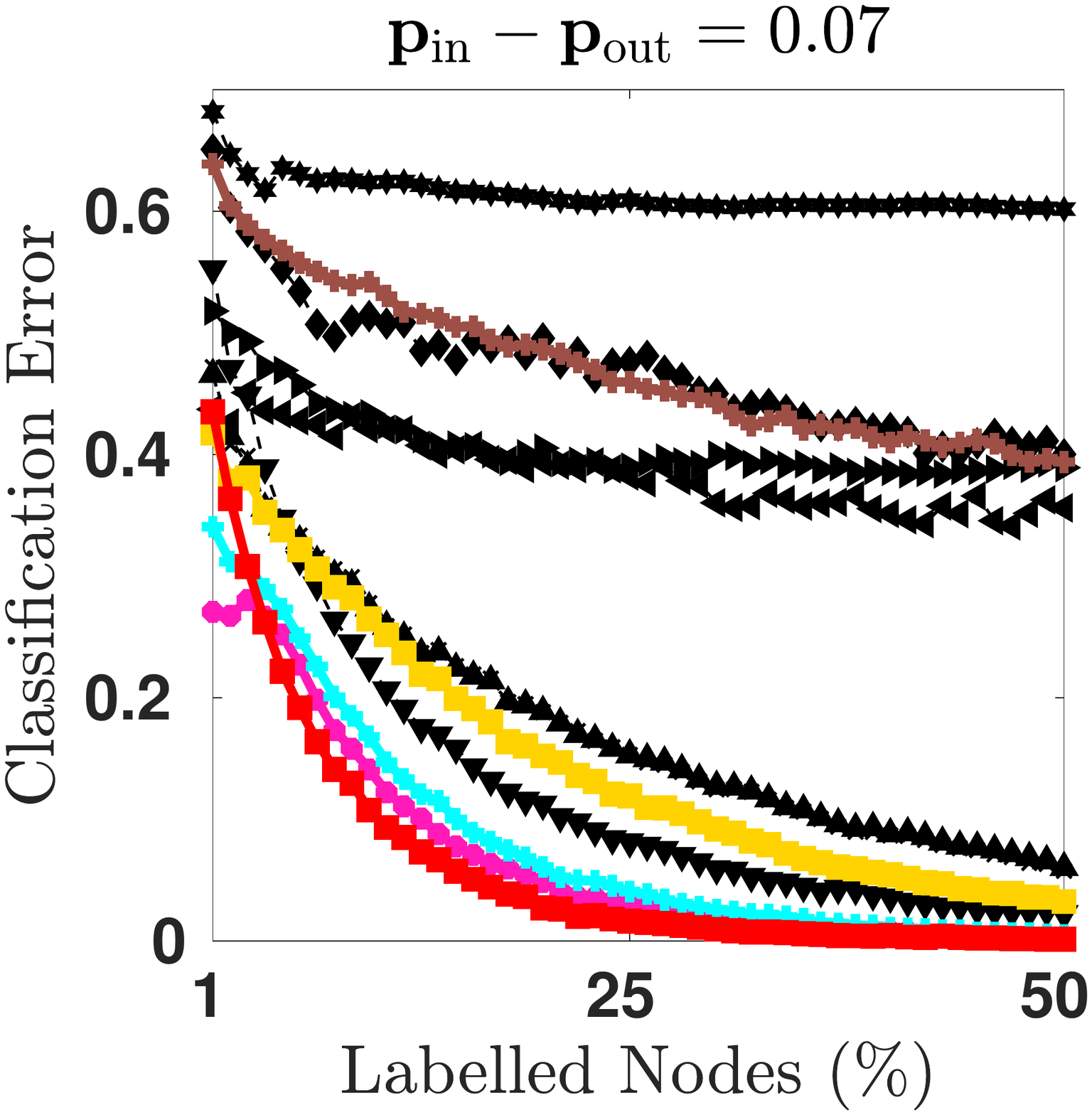}
 \end{subfigure}
 \hfill
 \begin{subfigure}[b]{0.23\textwidth}
 \includegraphics[angle=-00,width=1\textwidth,trim=160 100 10 90]{./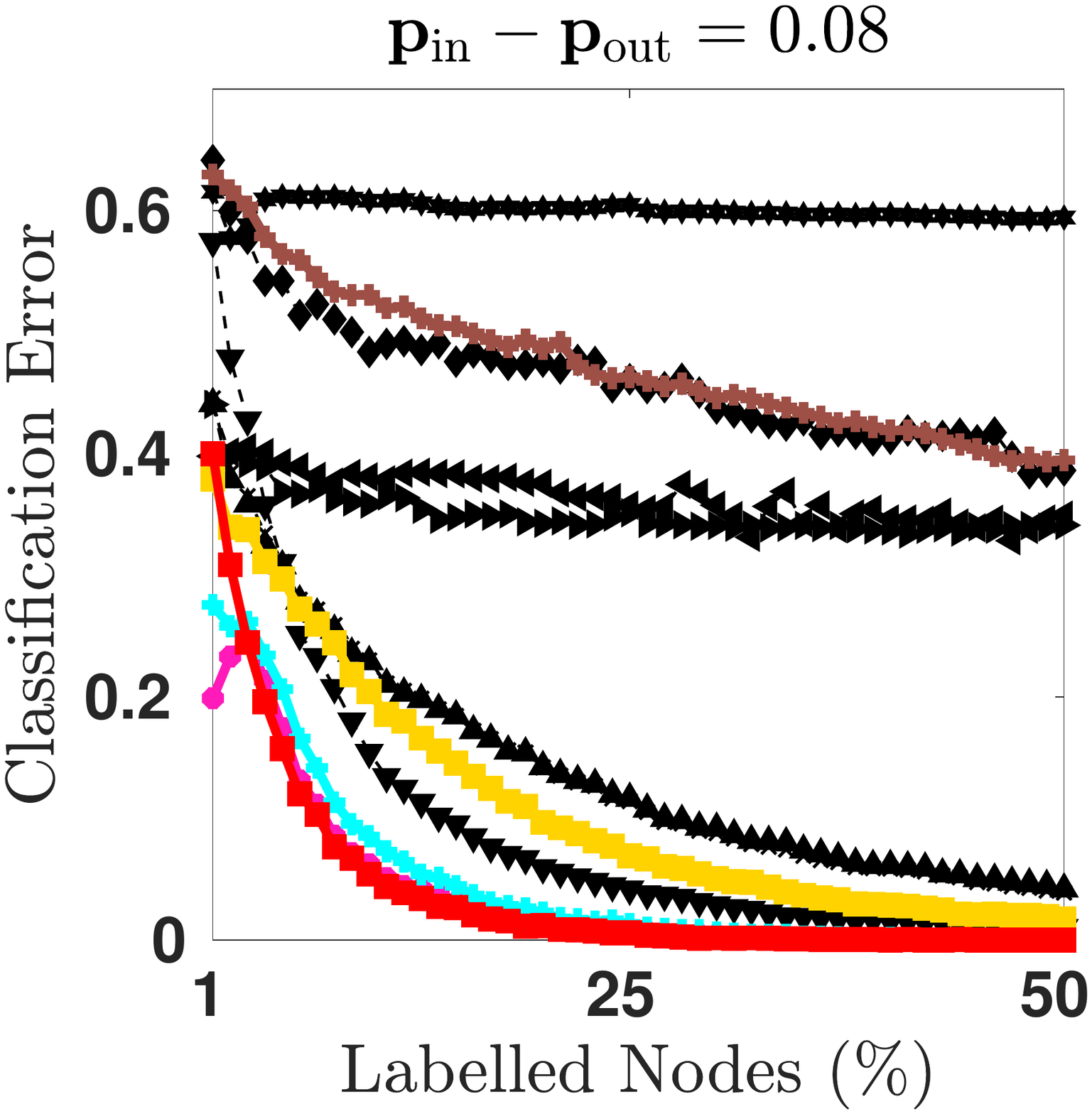}
 \end{subfigure}
 \hfill
 \begin{subfigure}[b]{0.23\textwidth}
 \includegraphics[angle=-00,width=1\textwidth,trim=160 100 10 90]{./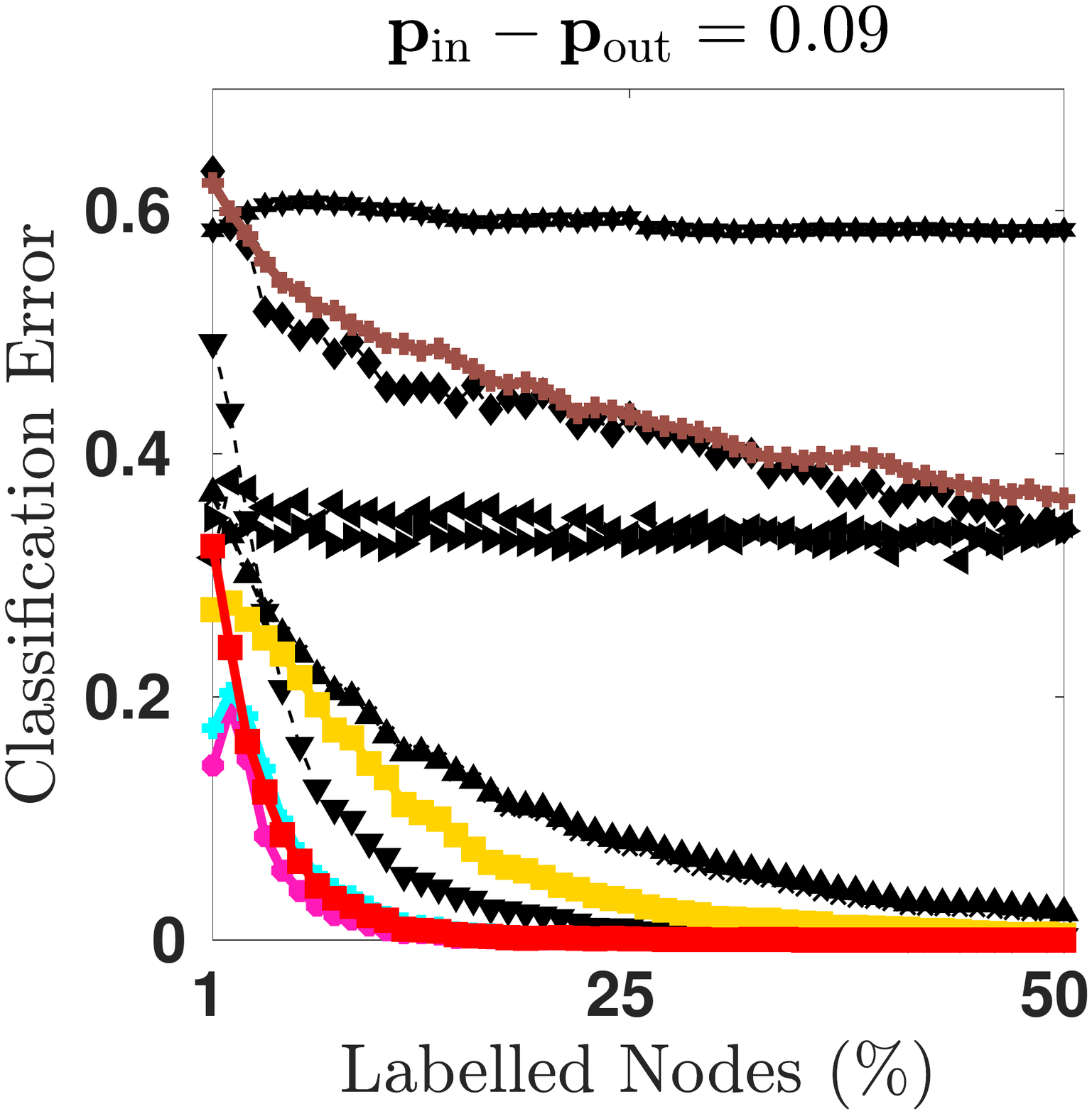}
 \end{subfigure}
 \hfill
 \begin{subfigure}[b]{0.23\textwidth}
 \includegraphics[angle=-00,width=1\textwidth,trim=160 100 10 90]{./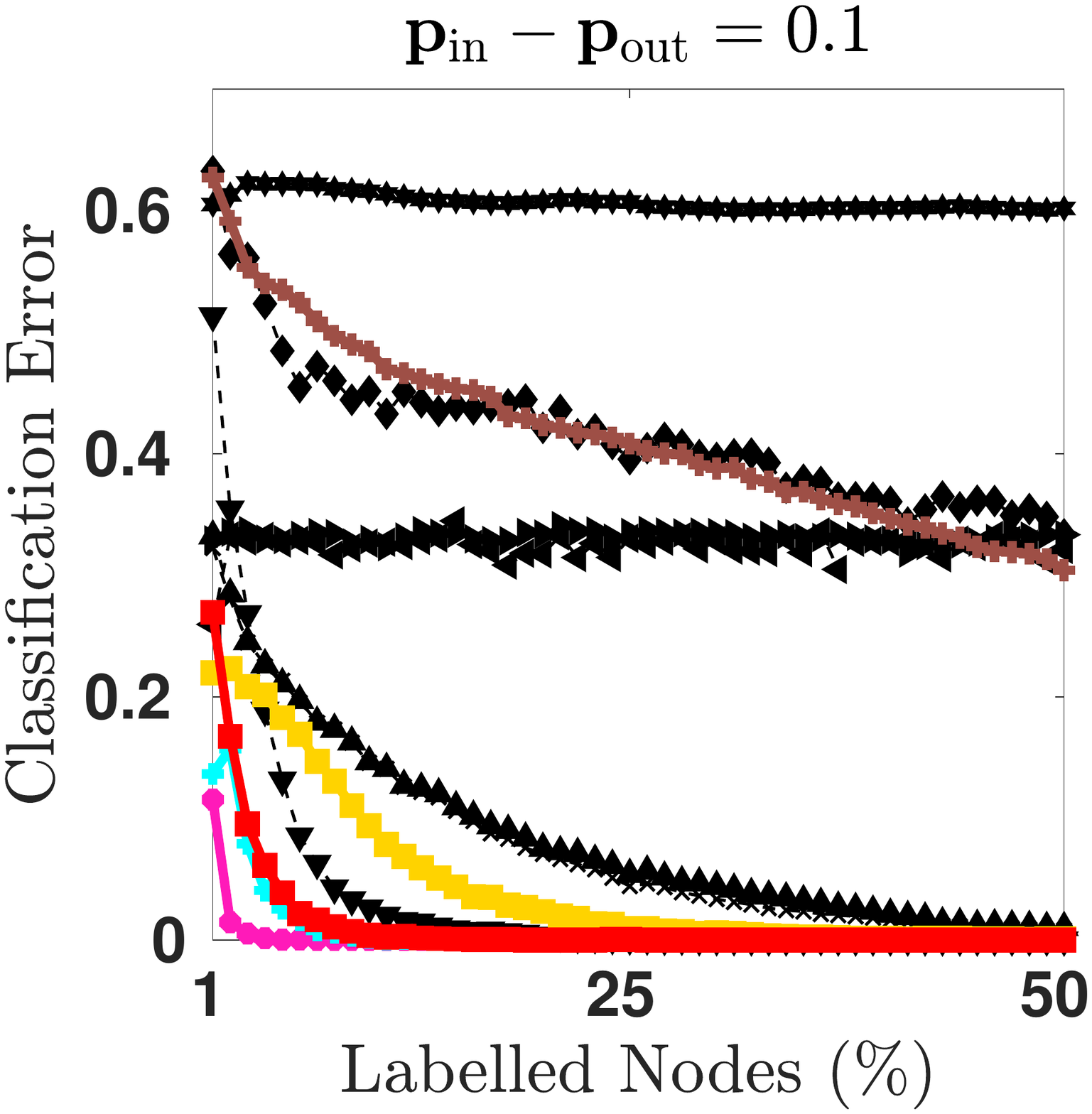}
 \end{subfigure}
 \hfill
 \caption{
Experiments with three Layers and three classes as in Section~\ref{sec:independentInformation} and Figure.\ref{fig:3layers}.
}
 \label{fig:3Layers-supplementary}
\end{figure*}
\section{Analysis on Effect of Regularization Parameter}\label{sec:OnLambda}
In this section we present a numerical evaluation on the effect of the regularization parameter $\lambda$ under the multilayer stochastic block model and on real world datasets. The corresponding results are depicted in Fig.~\ref{fig:SBM:lambda_effect} and Fig.~\label{fig:RealDataSets:lambda_effect}.

\textbf{Experiments under Multilayer Stochastic Block Model}.
We analyze the effect of the regularization parameter $\lambda$ under the Multilayer Stochastic Block Model.
The experimental setting is as follows: We fix the parameters of the first layer $G^{(1)}$ and second layer $G^{(2)}$ to 
$\p^{(1)}=0.09,\q^{(1)}=0.01, \p^{(2)}=0.05,\q^{(2)}=0.05$. We consider values of $\lambda\in\{10^{-3}, 10^{-2}, 10^{-1}, 10^{0},10^{1}, 10^{2}, 10^{3}\}$, 
different amount of labeled nodes $\{1\%,\ldots,50\%\}$. We sample five random multilayer graphs with the corresponding parameters and 5 random samples of labeled nodes with a fixed percentage, and present the average classification error. In Fig.~\ref{fig:SBM:lambda_effect} we can see that in general the larger the value of $\lambda$ the smaller the classification error. In particular we can see that the performance does not present any relevant changes with $\lambda \leq 10^{-1}$.

\textbf{Experiments with real world datasets}.
We analyze the effect of the regularization parameter $\lambda$ with real world datasets considered in Section~\ref{Section:experiments}.
For each dataset we build the corresponding layer adjacency matrices by the taking symmetric $k$-nearest neighbour graph and take as similarity measure the Pearson linear correlation, (i.e. we take the $k$ neighbours with highest correlation), and take the unweighted version of it.

We fix nearest neighbourhood size to $k=10$ and generate 10 samples of labeled nodes, where the percentage of labeled nodes per class is in the range $\{1\%,2\%,\ldots,25\%\}$. The average test errors are presented in Fig.~\ref{fig:RealDataSets:lambda_effect}, for power mean Laplacian regularizers $L_{-1}, L_{-2},L_{-5}$, and $L_{-10}$.
We can see that in general the best performance, i.e. smallest mean test classificaton error corresponds to values of $\lambda=10,10^2,10^3$, verifying the choice of $\lambda=10$ presented in Section~\ref{Section:experiments}.
Moreover, we can see that the mean test error in general decreases with larger amounts of labeled data, which verifies our previous experiments on multilayer graphs following the Multilayer Stochastic Block Model.

\begin{figure*}[t]
\centering
\vskip.0em
\begin{subfigure}[]{0.24\linewidth}
\includegraphics[width=1\linewidth, clip,trim=130 40 170 40]{./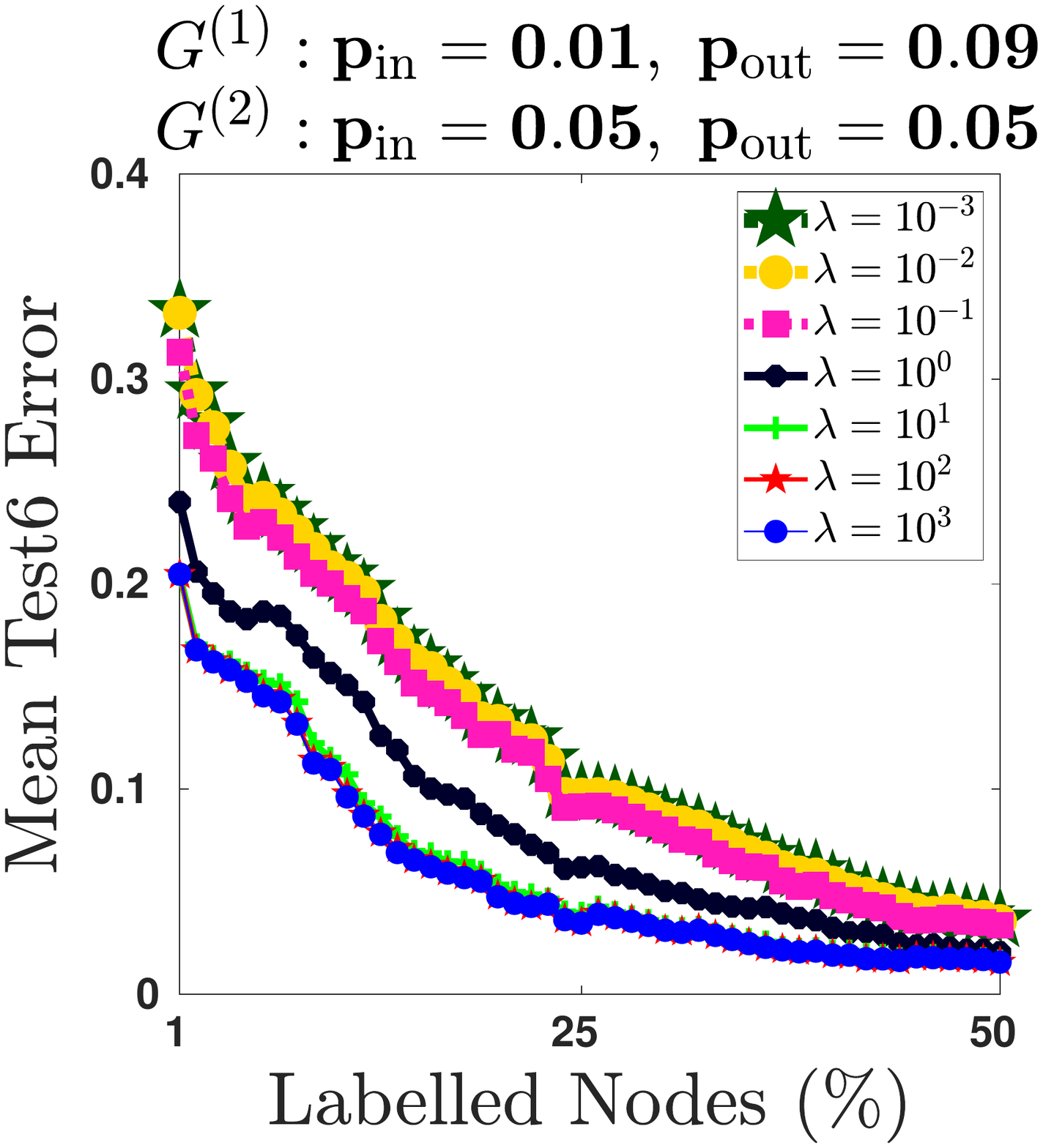}\hspace*{\fill}
\caption{$L_{-1}$}
\label{subfig:fig:SBM:diagonal_shift:fix_Wpos:L_{-1}}
\end{subfigure}%
\hfill
\begin{subfigure}[]{0.24\linewidth}
\includegraphics[width=1\linewidth, clip,trim=130 40 170 40]{./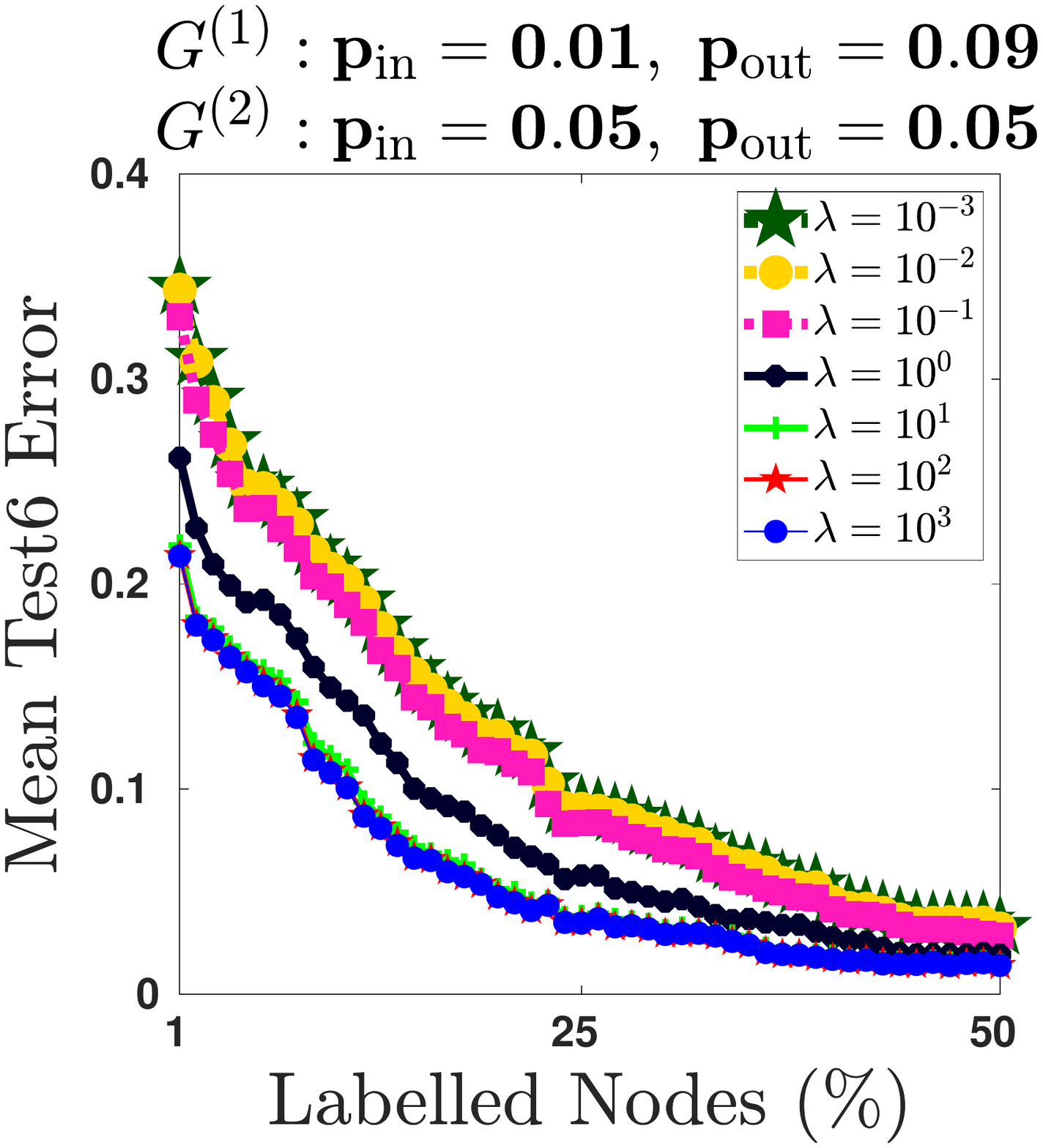}\hspace*{\fill}
\caption{$L_{-2}$}
\label{subfig:fig:SBM:diagonal_shift:fix_Wpos:L_{-2}}
\end{subfigure}%
\hfill
\begin{subfigure}[]{0.24\linewidth}
\includegraphics[width=1\linewidth, clip,trim=130 40 170 40]{./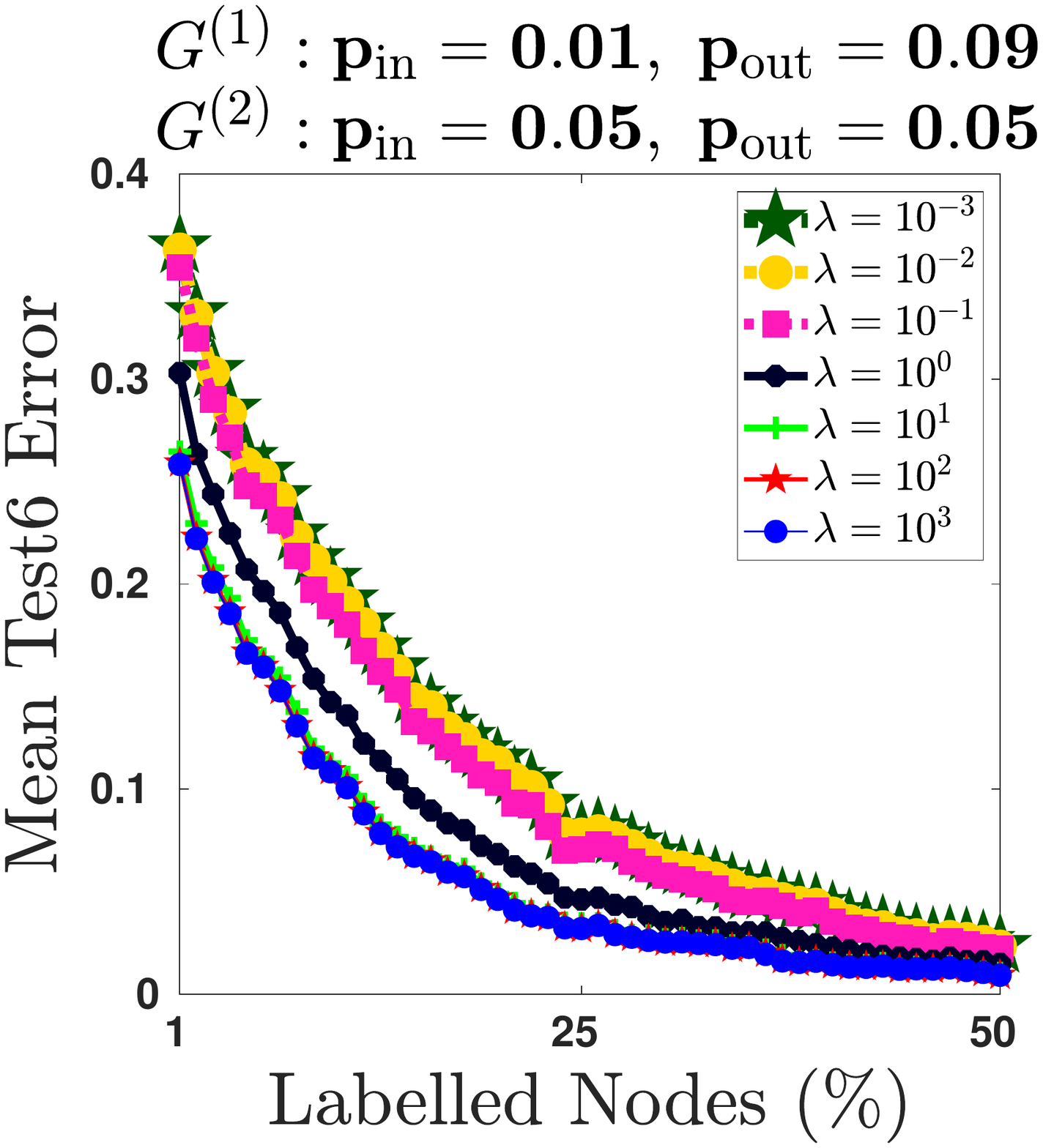}\hspace*{\fill}
\caption{$L_{-5}$}
\label{subfig:fig:SBM:diagonal_shift:fix_Wpos:L_{-5}}
\end{subfigure}%
\hfill
\begin{subfigure}[]{0.24\linewidth}
\includegraphics[width=1\linewidth, clip,trim=130 40 170 40]{./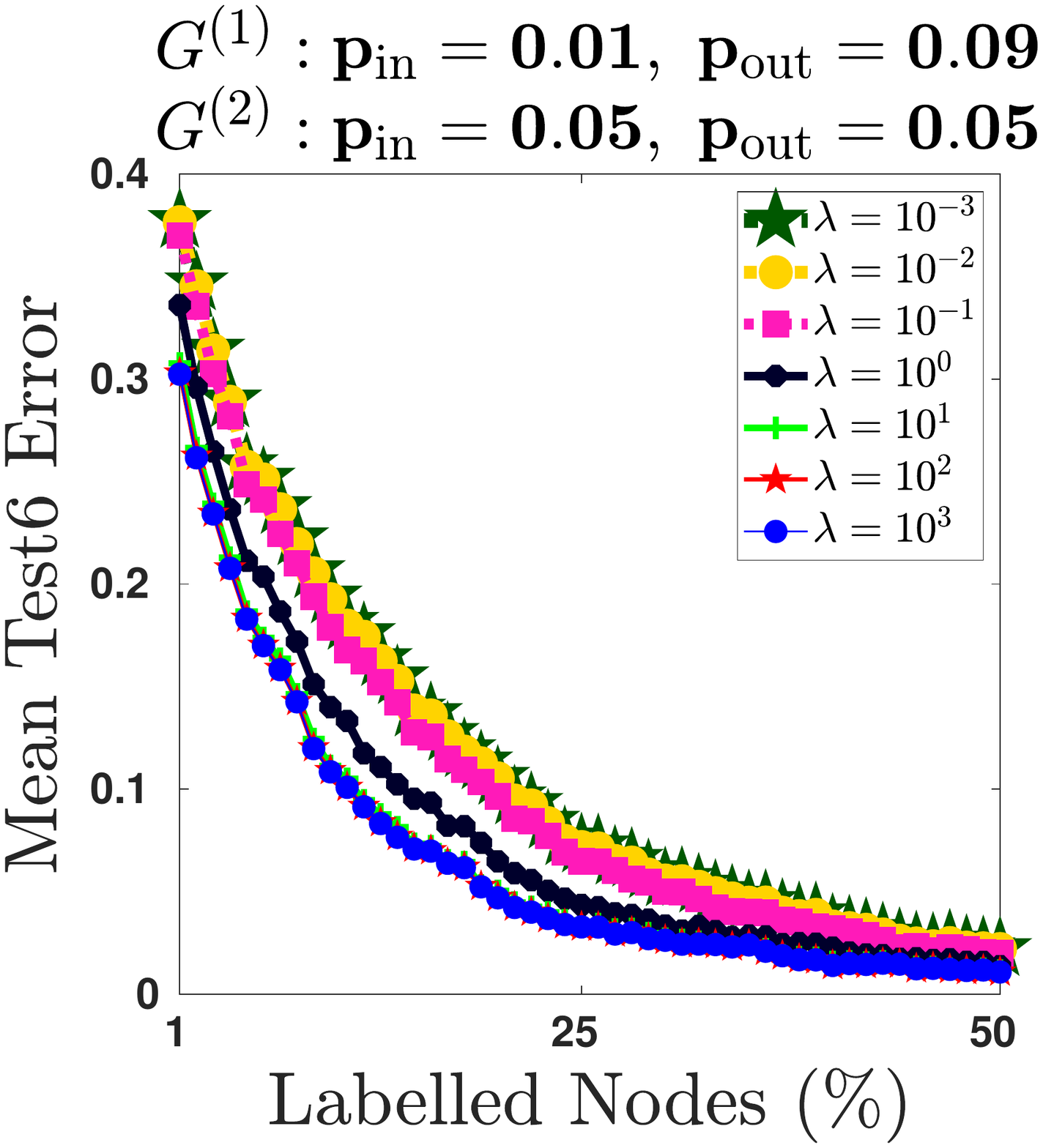}\hspace*{\fill}
\caption{$L_{-10}$}
\label{subfig:fig:SBM:diagonal_shift:fix_Wpos:L_{-10}}
\end{subfigure}
\caption{
Mean test classification error under MSBM for different values of $\lambda$. Details in Sec.~\ref{sec:OnLambda}.
}
\label{fig:SBM:lambda_effect}
\end{figure*}
\begin{figure*}[t]
\centering
\vskip.0em
\begin{subfigure}[t]{1\linewidth}
  \begin{subfigure}[]{0.24\linewidth}
  \renewcommand\thesubfigure{\alph{subfigure}1}
  \includegraphics[width=1\linewidth, clip,trim=110 40 155 40]{./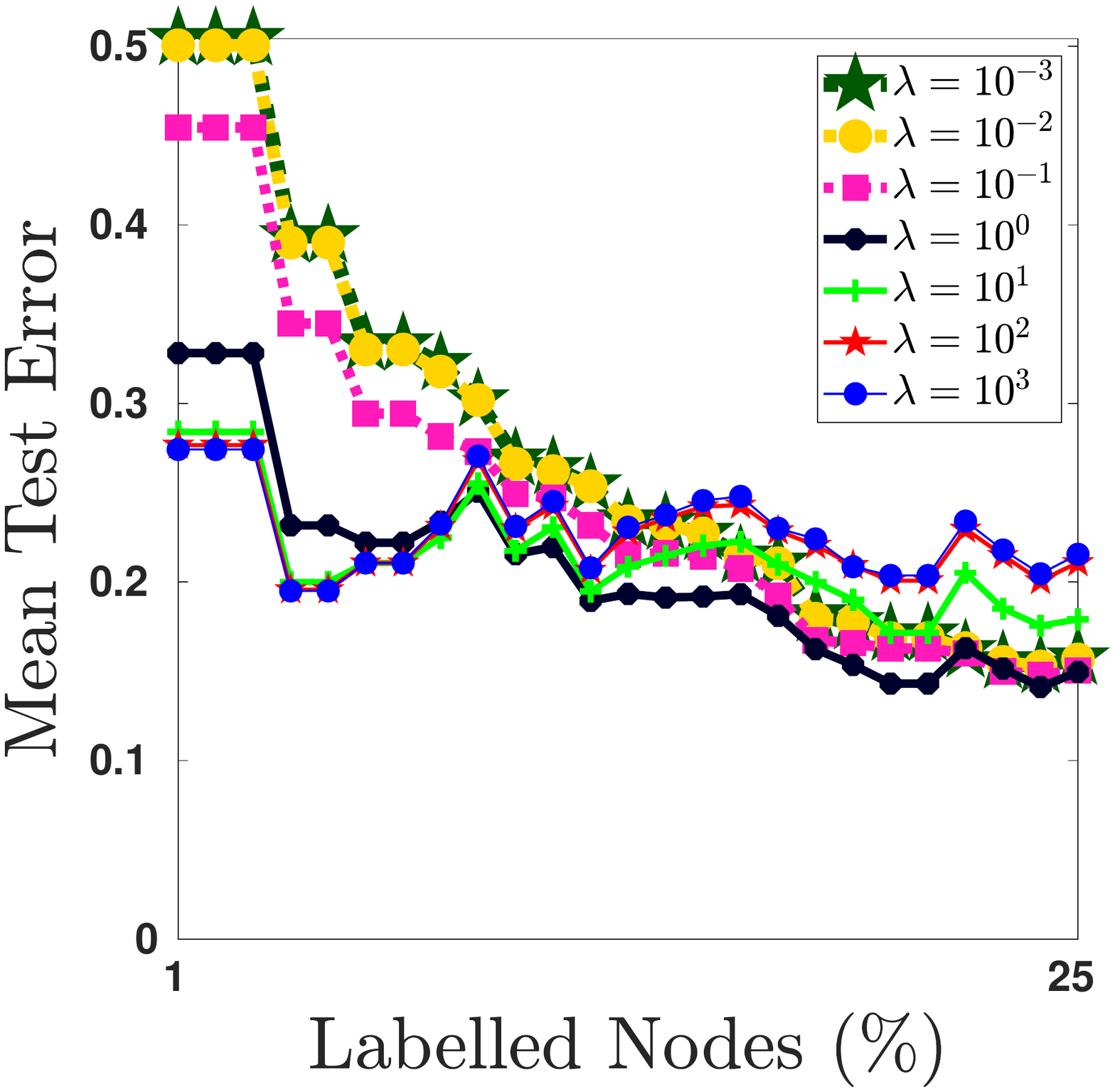}\hspace*{\fill}
  \caption{$L_{-1}$}
  \label{subfig:fig:SBM:diagonal_shift:fix_Wpos:L_{-1}}
  \end{subfigure}%
  \hfill
  \begin{subfigure}[]{0.24\linewidth}
  \addtocounter{subfigure}{-1}
  \renewcommand\thesubfigure{\alph{subfigure}2}
  \includegraphics[width=1\linewidth, clip,trim=110 40 155 40]{./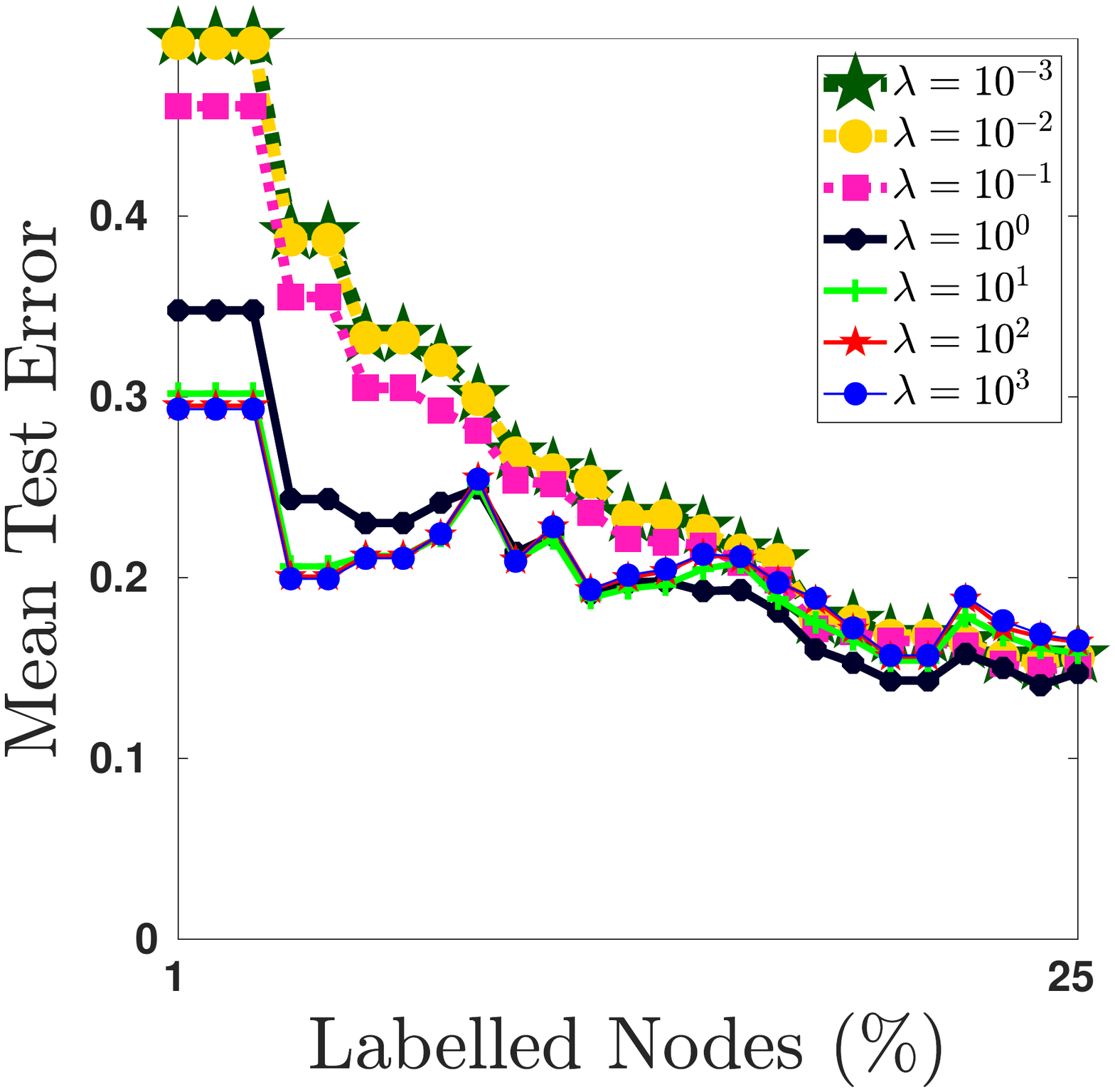}\hspace*{\fill}
  \caption{$L_{-2}$}
  \label{subfig:fig:SBM:diagonal_shift:fix_Wpos:L_{-2}}
  \end{subfigure}%
  \hfill
  \begin{subfigure}[]{0.24\linewidth}
    \addtocounter{subfigure}{-1}
  \renewcommand\thesubfigure{\alph{subfigure}3}
  \includegraphics[width=1\linewidth, clip,trim=110 40 155 40]{./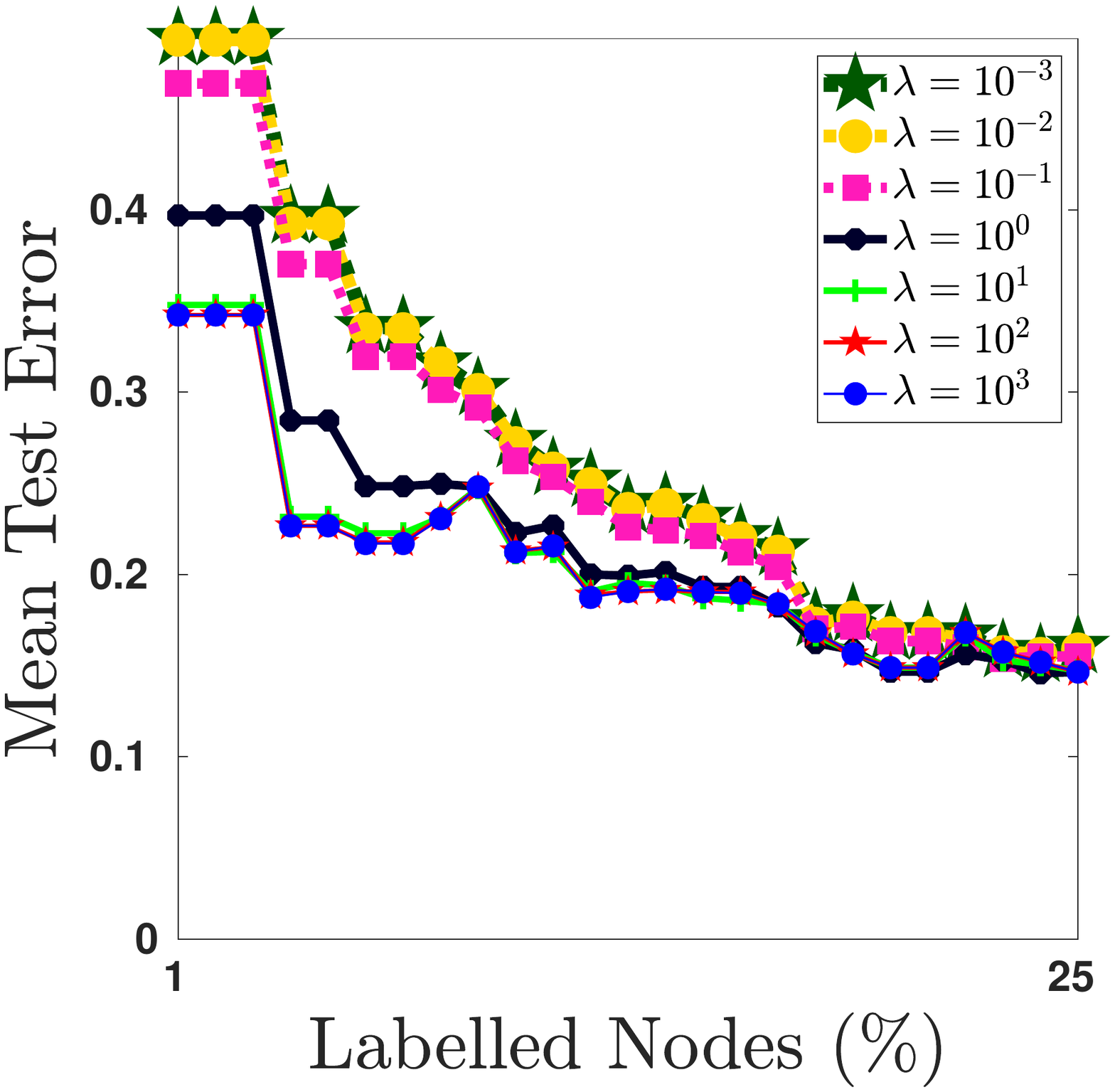}\hspace*{\fill}
  \caption{$L_{-5}$}
  \label{subfig:fig:SBM:diagonal_shift:fix_Wpos:L_{-5}}
  \end{subfigure}%
  \hfill
  \begin{subfigure}[]{0.24\linewidth}
    \addtocounter{subfigure}{-1}
  \renewcommand\thesubfigure{\alph{subfigure}4}
  \includegraphics[width=1\linewidth, clip,trim=110 40 155 40]{./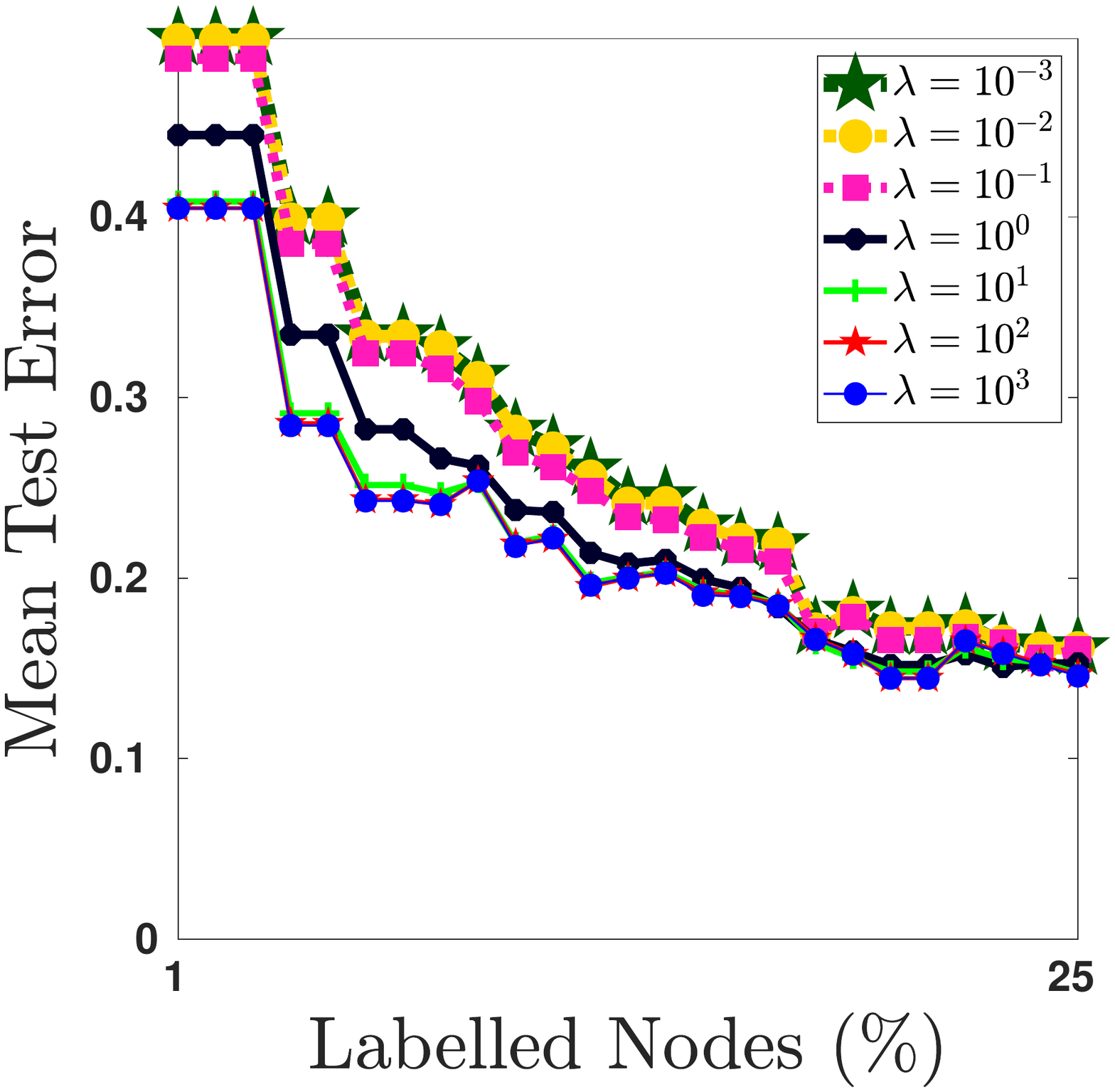}\hspace*{\fill}
  \caption{$L_{-10}$}
  \label{subfig:fig:SBM:diagonal_shift:fix_Wpos:L_{-10}}
  \end{subfigure}
  \setcounter{subfigure}{0}
  \caption{Dataset: 3sources}
\end{subfigure}  
\begin{subfigure}[t]{1\linewidth}
  \begin{subfigure}[]{0.24\linewidth}
  \renewcommand\thesubfigure{\alph{subfigure}1}
  \includegraphics[width=1\linewidth, clip,trim=110 40 155 40]{./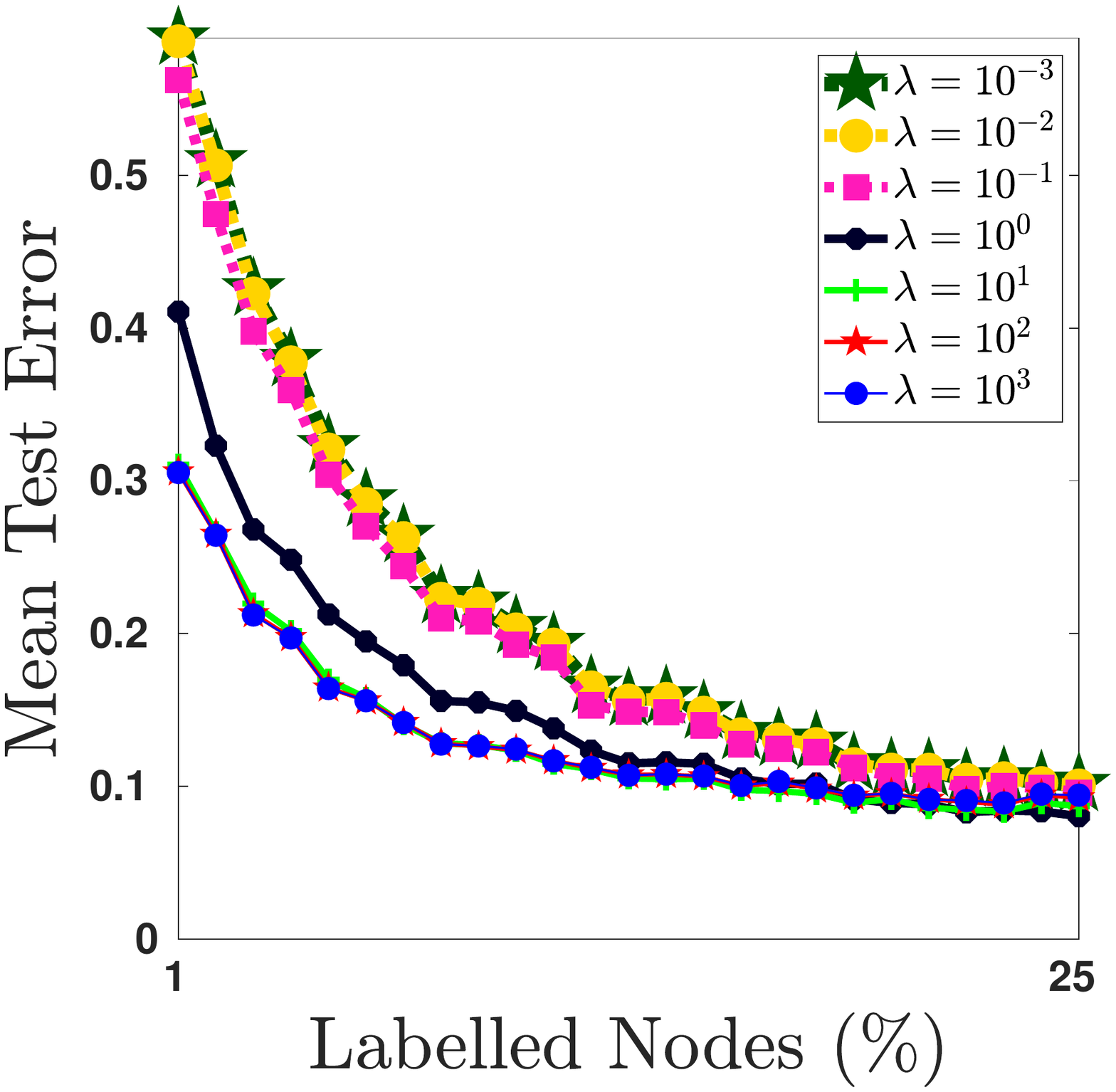}\hspace*{\fill}
  \caption{$L_{-1}$}
  \label{subfig:fig:SBM:diagonal_shift:fix_Wpos:L_{-1}}
  \end{subfigure}%
  \hfill
  \begin{subfigure}[]{0.24\linewidth}
  \addtocounter{subfigure}{-1}
  \renewcommand\thesubfigure{\alph{subfigure}2}
  \includegraphics[width=1\linewidth, clip,trim=110 40 155 40]{./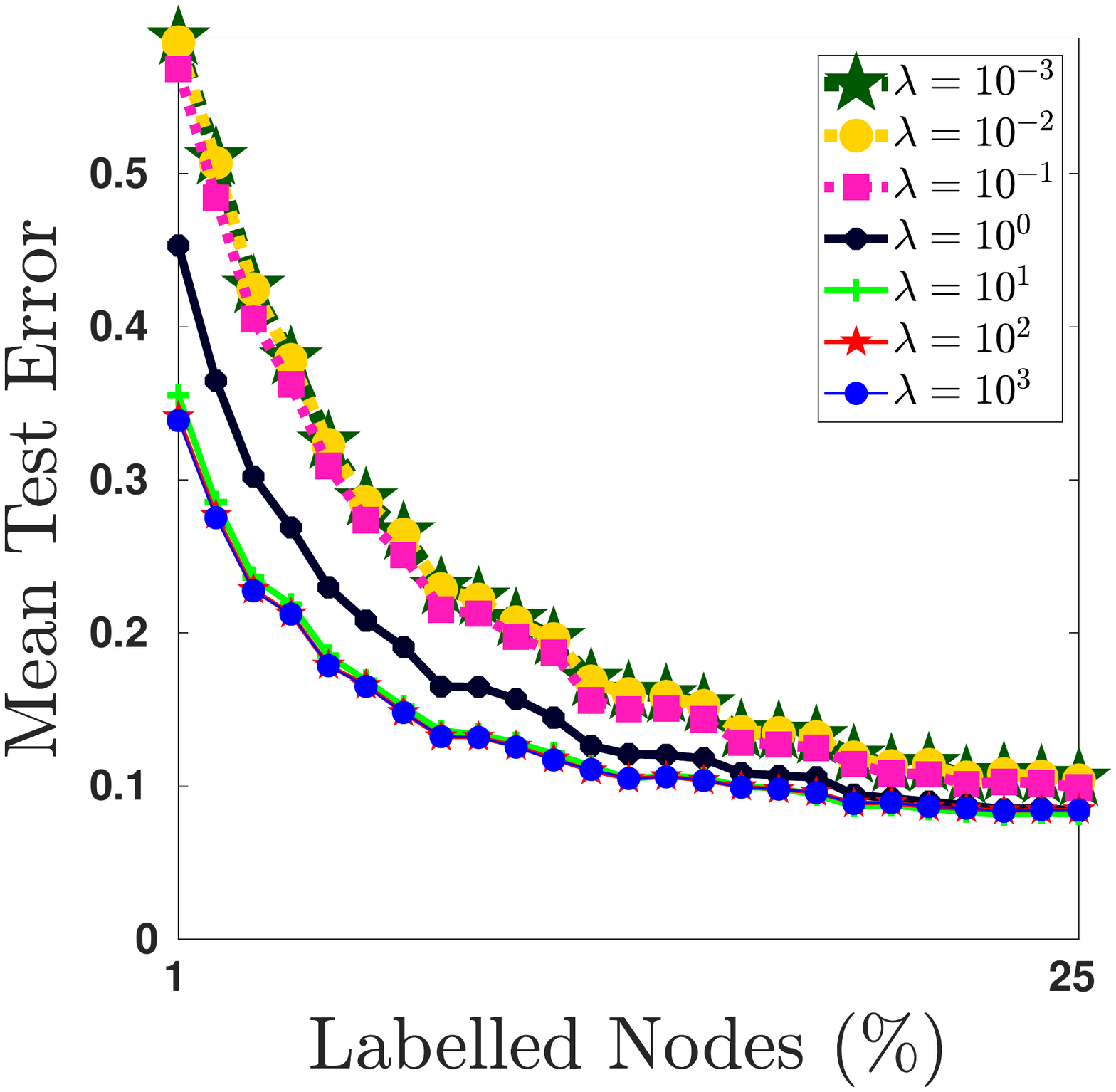}\hspace*{\fill}
  \caption{$L_{-2}$}
  \label{subfig:fig:SBM:diagonal_shift:fix_Wpos:L_{-2}}
  \end{subfigure}%
  \hfill
  \begin{subfigure}[]{0.24\linewidth}
    \addtocounter{subfigure}{-1}
  \renewcommand\thesubfigure{\alph{subfigure}3}
  \includegraphics[width=1\linewidth, clip,trim=110 40 155 40]{./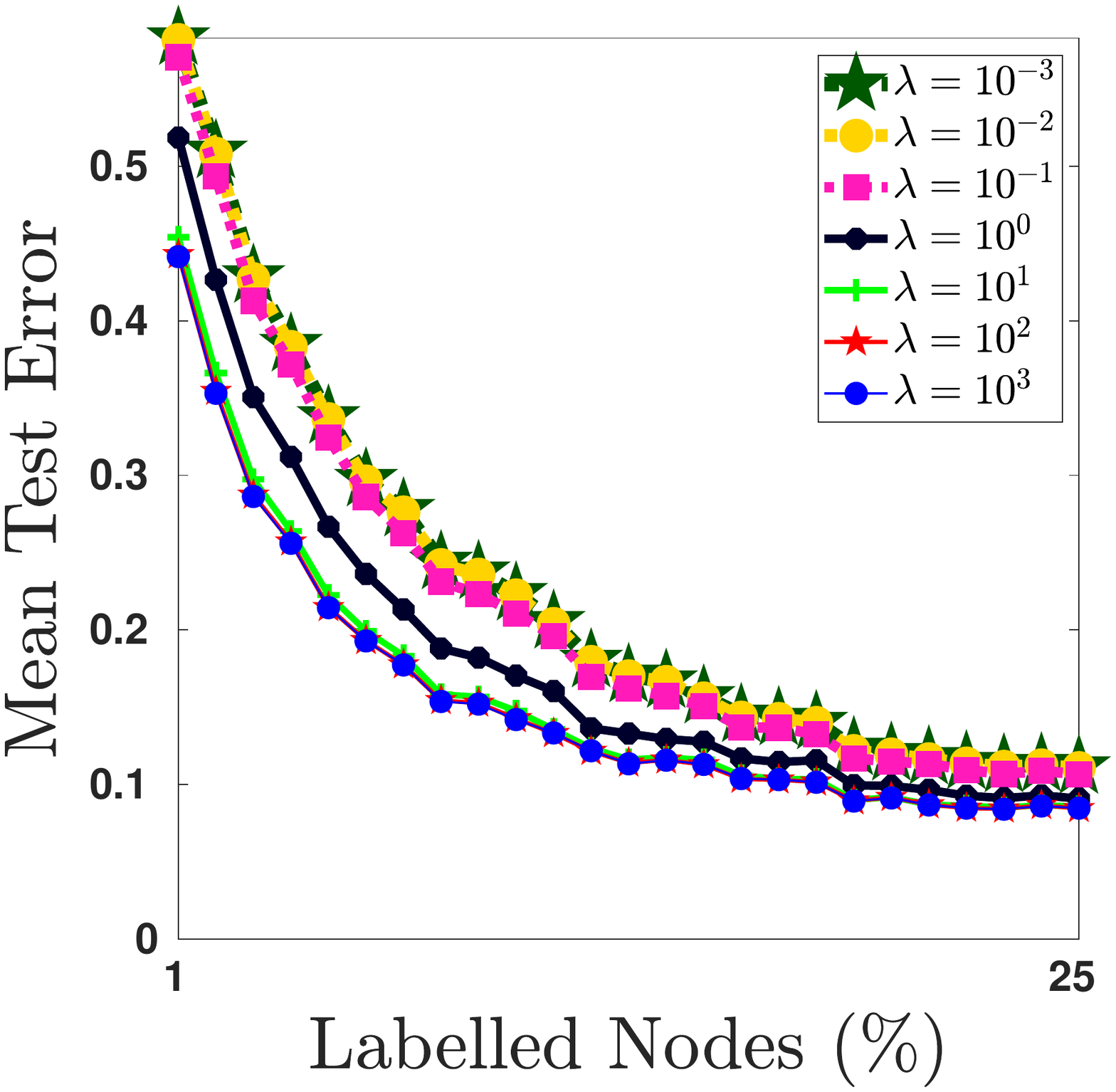}\hspace*{\fill}
  \caption{$L_{-5}$}
  \label{subfig:fig:SBM:diagonal_shift:fix_Wpos:L_{-5}}
  \end{subfigure}%
  \hfill
  \begin{subfigure}[]{0.24\linewidth}
    \addtocounter{subfigure}{-1}
  \renewcommand\thesubfigure{\alph{subfigure}4}
  \includegraphics[width=1\linewidth, clip,trim=110 40 155 40]{./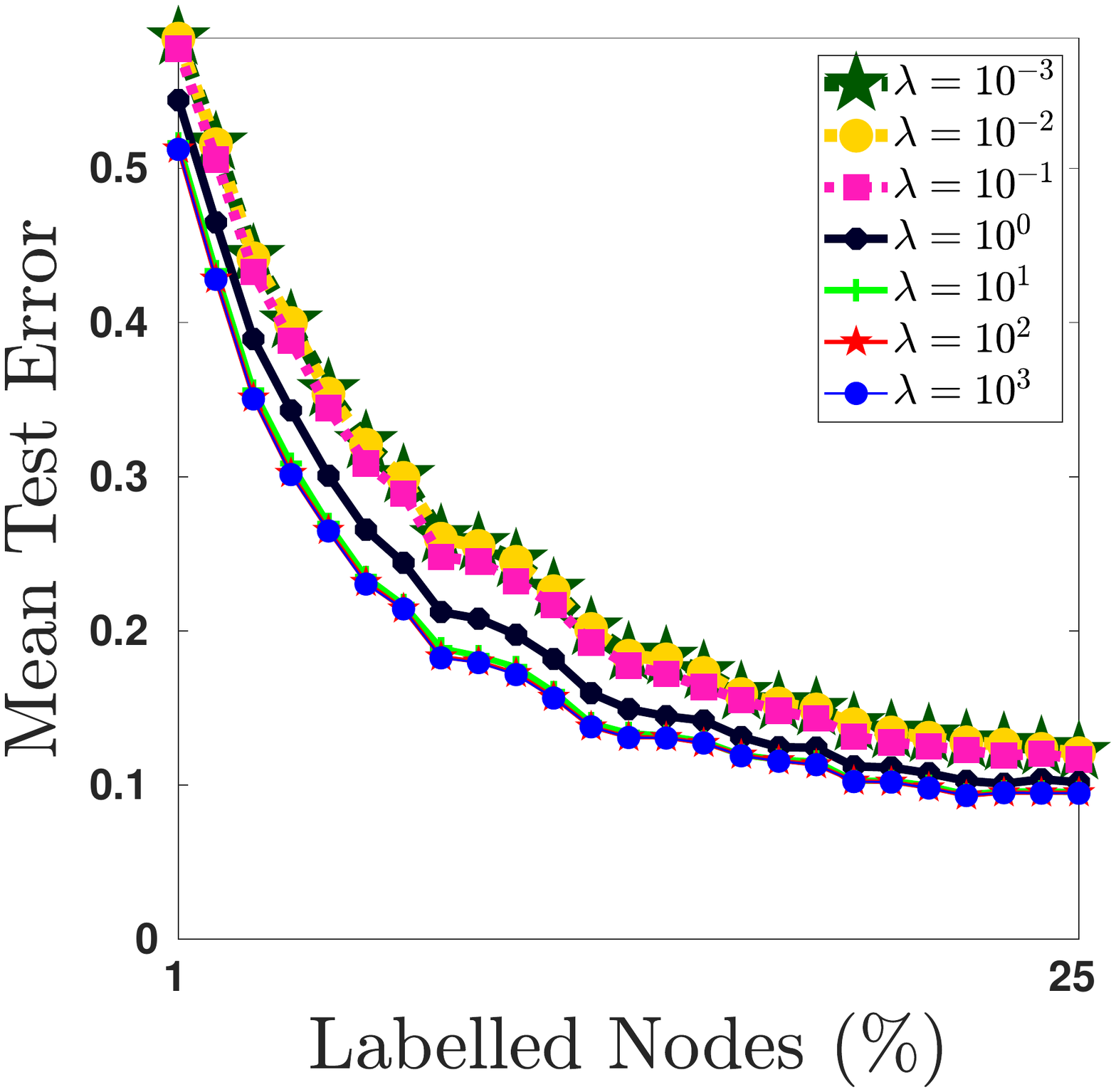}\hspace*{\fill}
  \caption{$L_{-10}$}
  \label{subfig:fig:SBM:diagonal_shift:fix_Wpos:L_{-10}}
  \end{subfigure}
  \setcounter{subfigure}{1}
  \caption{Dataset: BBC}
\end{subfigure}  
\begin{subfigure}[t]{1\linewidth}
  \begin{subfigure}[]{0.24\linewidth}
  \renewcommand\thesubfigure{\alph{subfigure}1}
  \includegraphics[width=1\linewidth, clip,trim=110 40 155 40]{./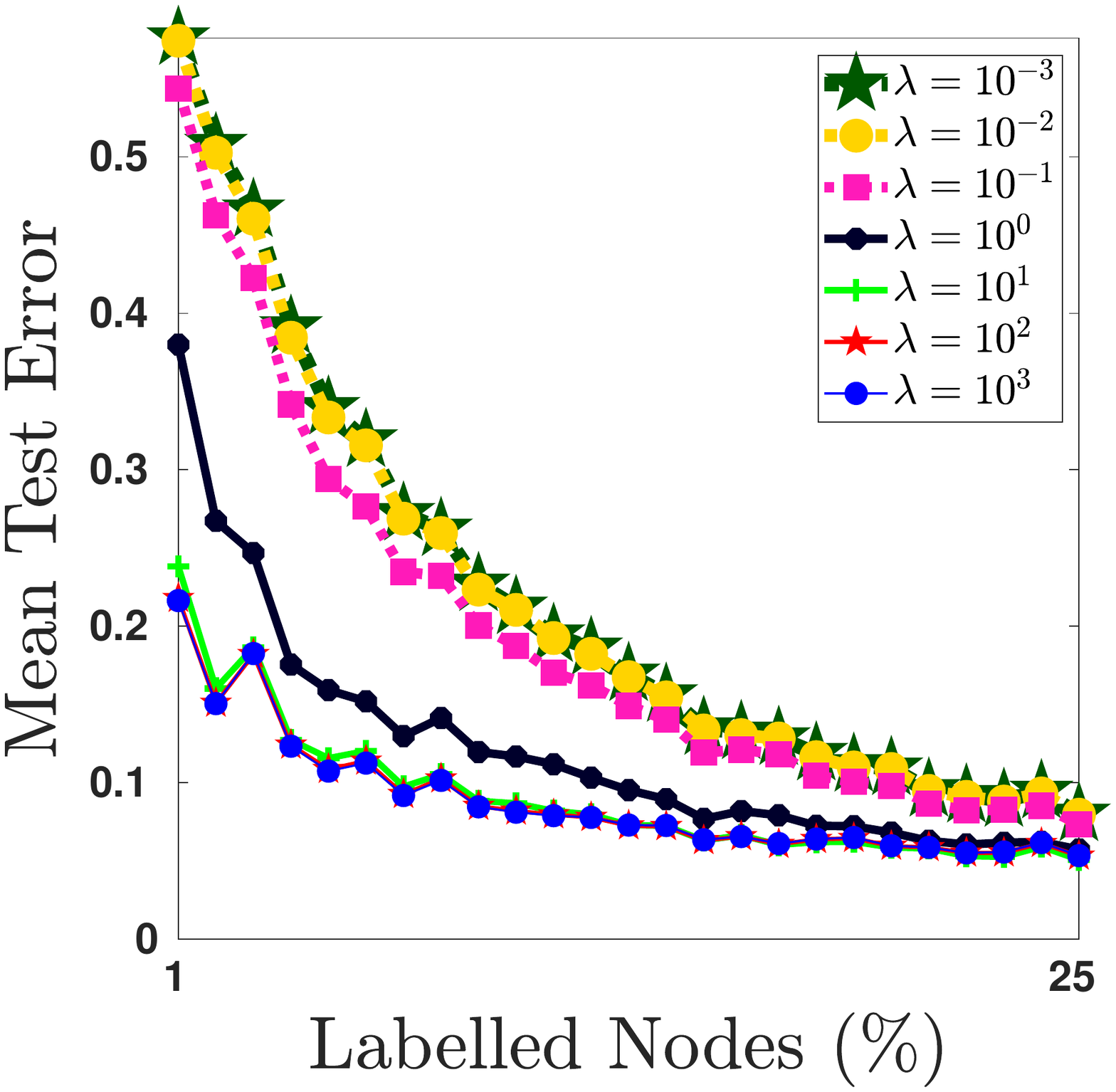}\hspace*{\fill}
  \caption{$L_{-1}$}
  \label{subfig:fig:SBM:diagonal_shift:fix_Wpos:L_{-1}}
  \end{subfigure}%
  \hfill
  \begin{subfigure}[]{0.24\linewidth}
  \addtocounter{subfigure}{-1}
  \renewcommand\thesubfigure{\alph{subfigure}2}
  \includegraphics[width=1\linewidth, clip,trim=110 40 155 40]{./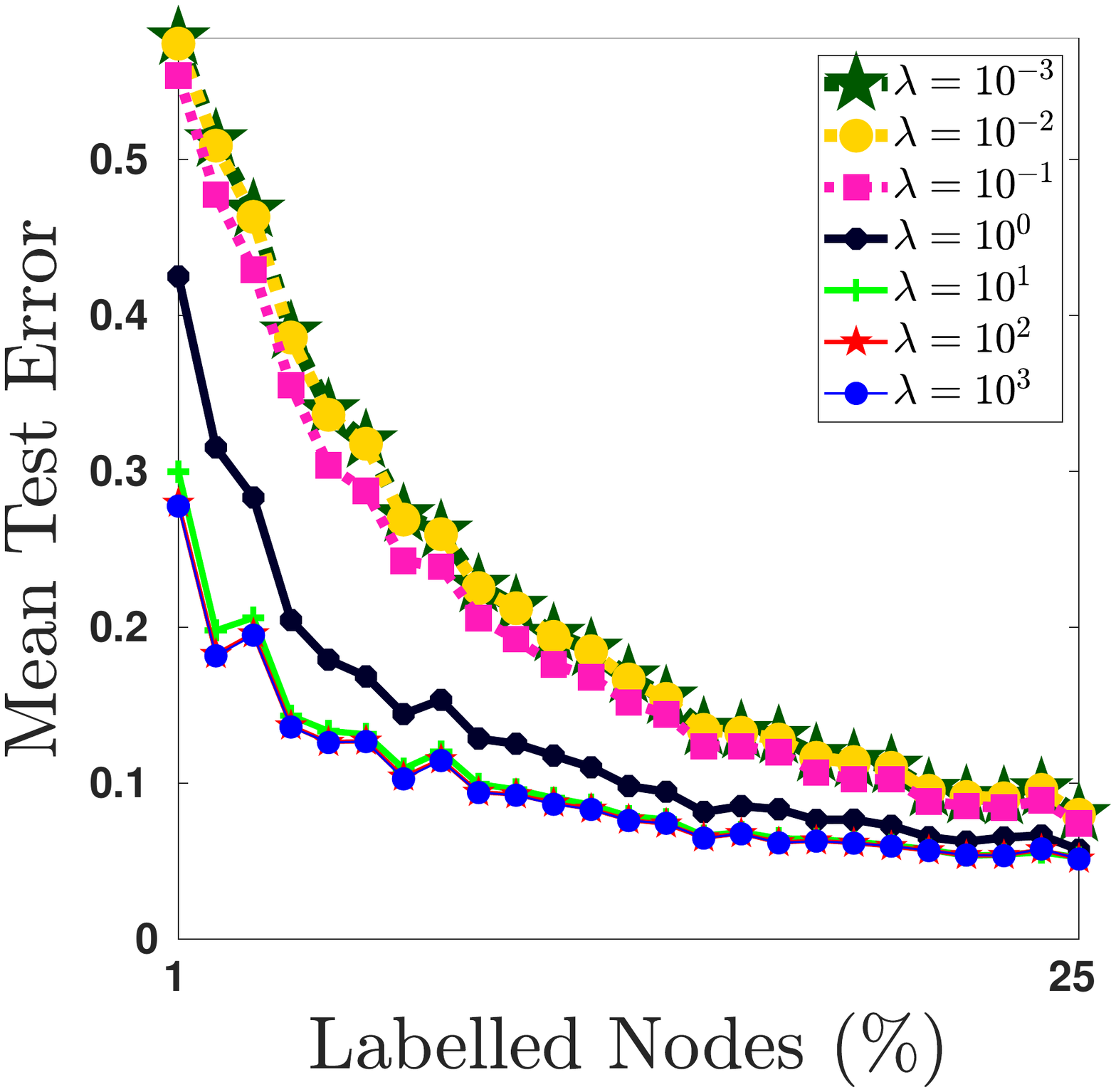}\hspace*{\fill}
  \caption{$L_{-2}$}
  \label{subfig:fig:SBM:diagonal_shift:fix_Wpos:L_{-2}}
  \end{subfigure}%
  \hfill
  \begin{subfigure}[]{0.24\linewidth}
    \addtocounter{subfigure}{-1}
  \renewcommand\thesubfigure{\alph{subfigure}3}
  \includegraphics[width=1\linewidth, clip,trim=110 40 155 40]{./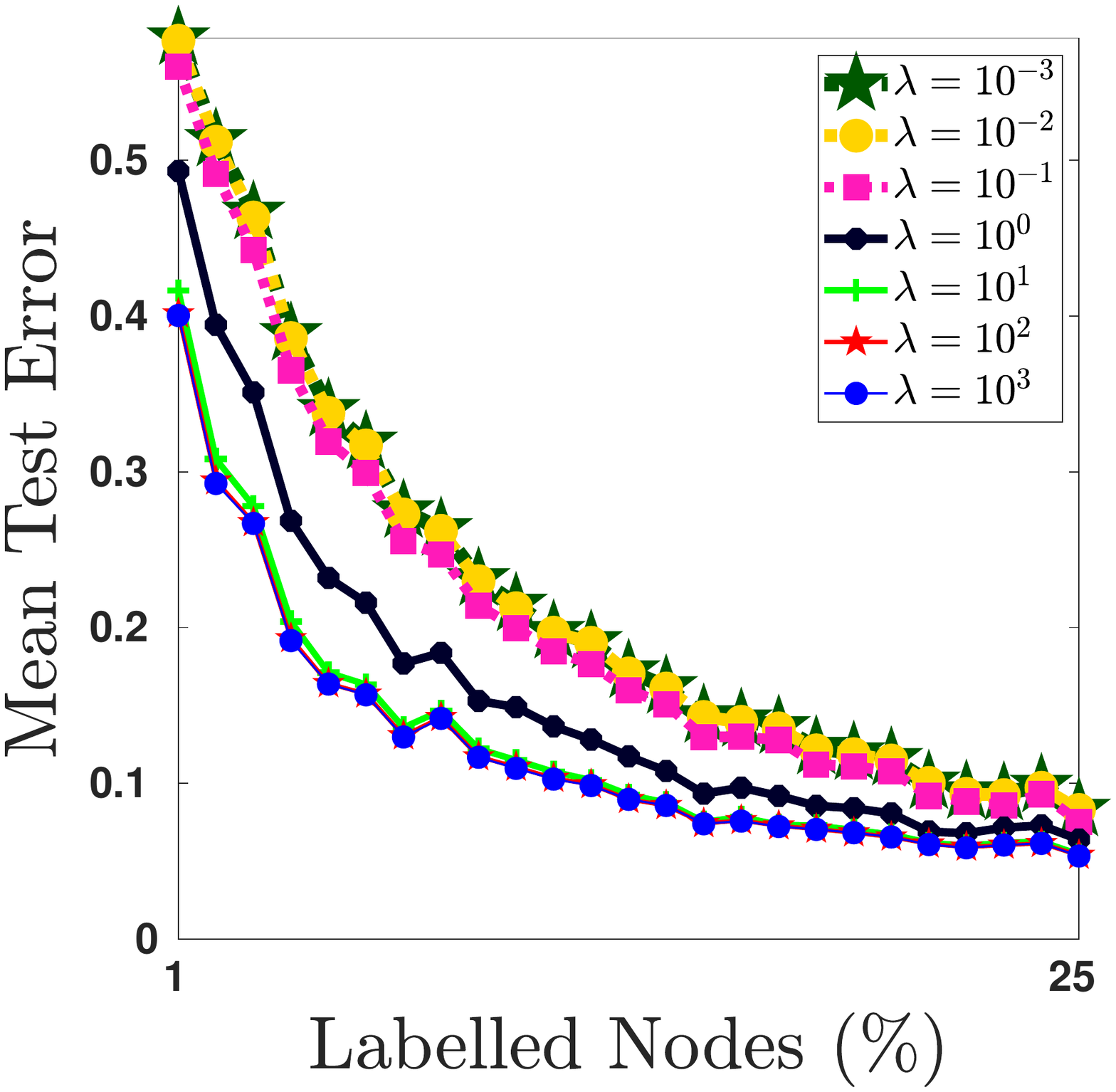}\hspace*{\fill}
  \caption{$L_{-5}$}
  \label{subfig:fig:SBM:diagonal_shift:fix_Wpos:L_{-5}}
  \end{subfigure}%
  \hfill
  \begin{subfigure}[]{0.24\linewidth}
    \addtocounter{subfigure}{-1}
  \renewcommand\thesubfigure{\alph{subfigure}4}
  \includegraphics[width=1\linewidth, clip,trim=110 40 155 40]{./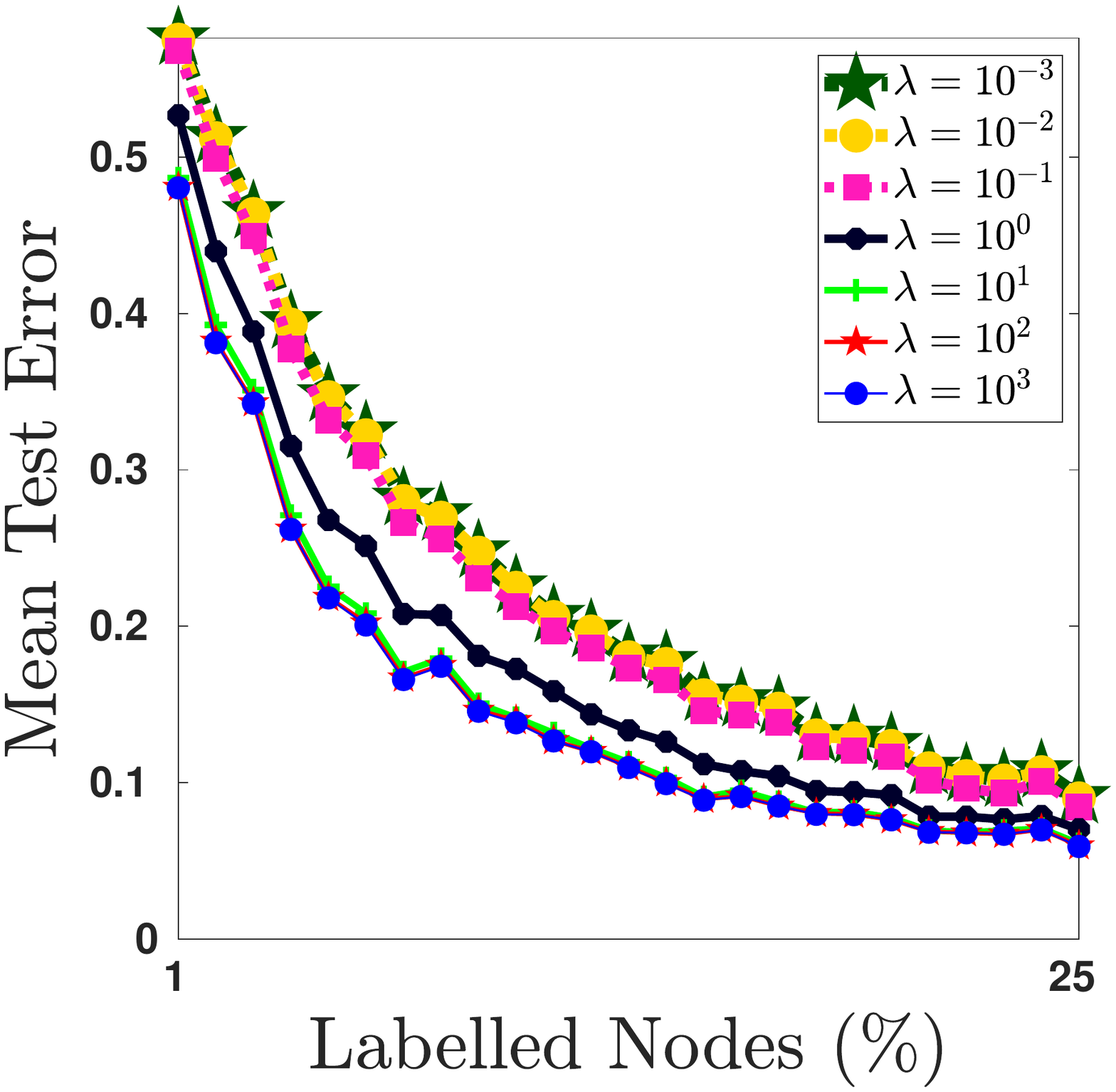}\hspace*{\fill}
  \caption{$L_{-10}$}
  \label{subfig:fig:SBM:diagonal_shift:fix_Wpos:L_{-10}}
  \end{subfigure}
  \setcounter{subfigure}{2}
  \caption{Dataset: BBCS}
\end{subfigure}  
\begin{subfigure}[t]{1\linewidth}
  \begin{subfigure}[]{0.24\linewidth}
  \renewcommand\thesubfigure{\alph{subfigure}1}
  \includegraphics[width=1\linewidth, clip,trim=110 40 155 40]{./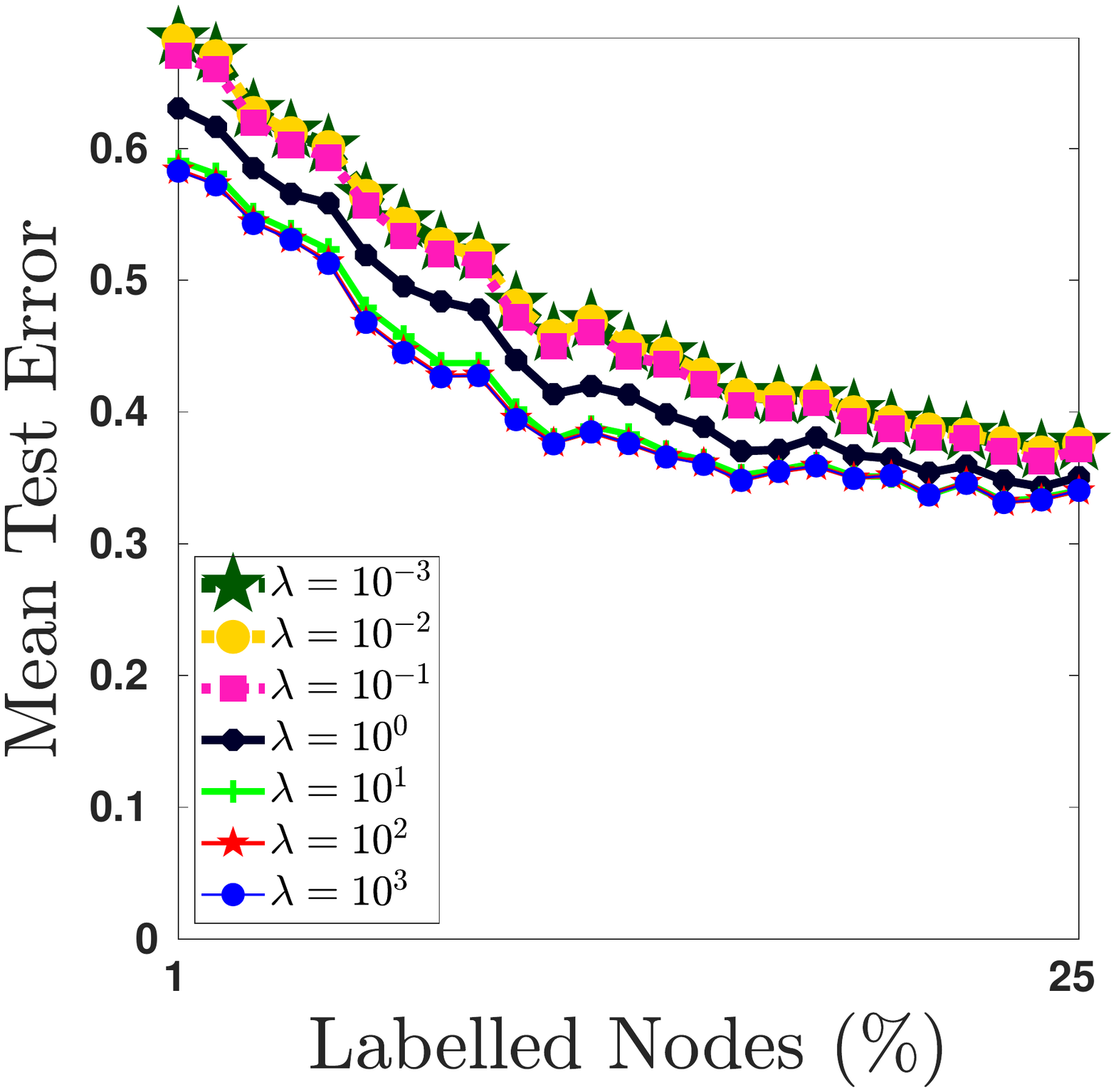}\hspace*{\fill}
  \caption{$L_{-1}$}
  \label{subfig:fig:SBM:diagonal_shift:fix_Wpos:L_{-1}}
  \end{subfigure}%
  \hfill
  \begin{subfigure}[]{0.24\linewidth}
  \addtocounter{subfigure}{-1}
  \renewcommand\thesubfigure{\alph{subfigure}2}
  \includegraphics[width=1\linewidth, clip,trim=110 40 155 40]{./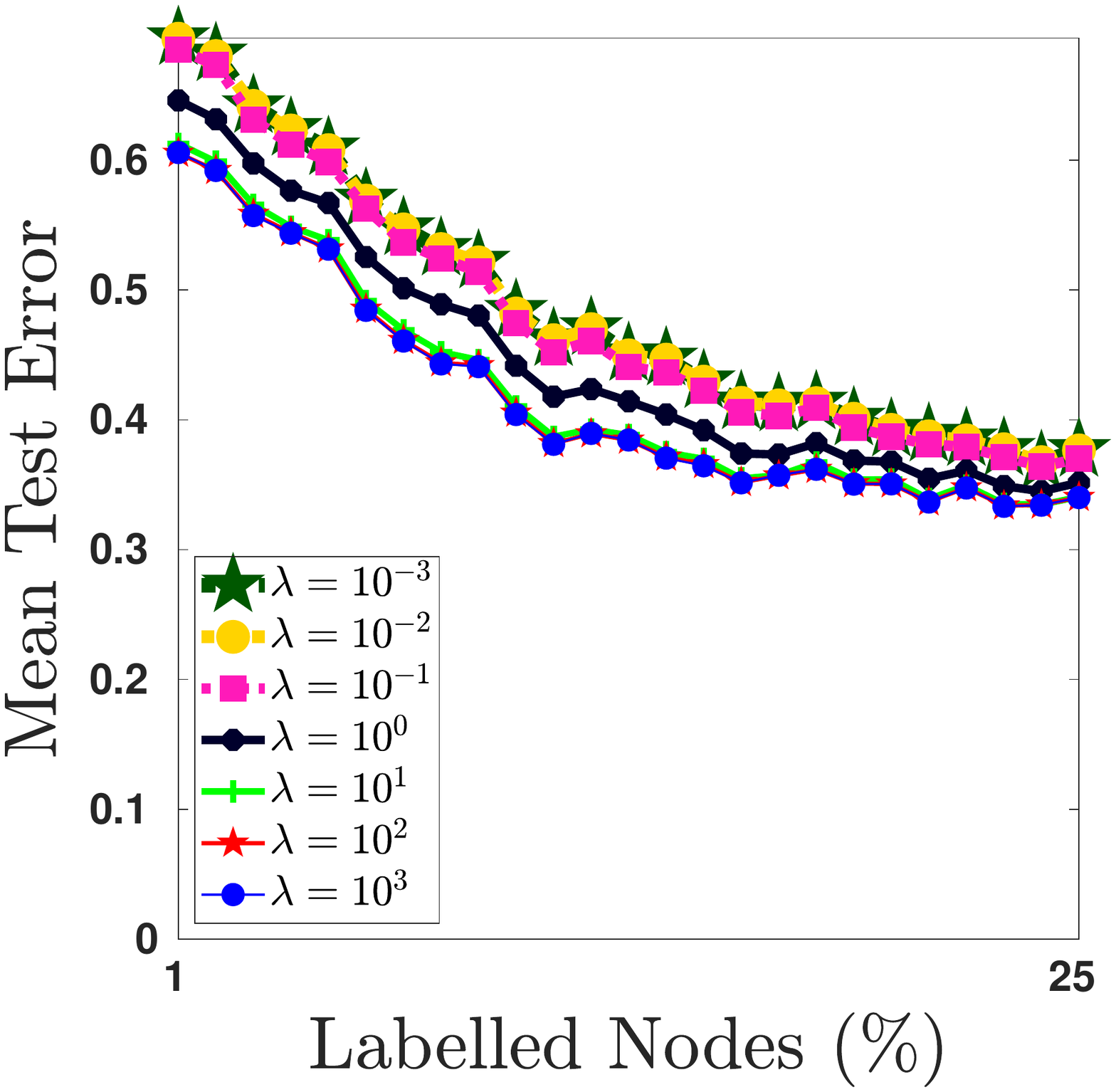}\hspace*{\fill}
  \caption{$L_{-2}$}
  \label{subfig:fig:SBM:diagonal_shift:fix_Wpos:L_{-2}}
  \end{subfigure}%
  \hfill
  \begin{subfigure}[]{0.24\linewidth}
    \addtocounter{subfigure}{-1}
  \renewcommand\thesubfigure{\alph{subfigure}3}
  \includegraphics[width=1\linewidth, clip,trim=110 40 155 40]{./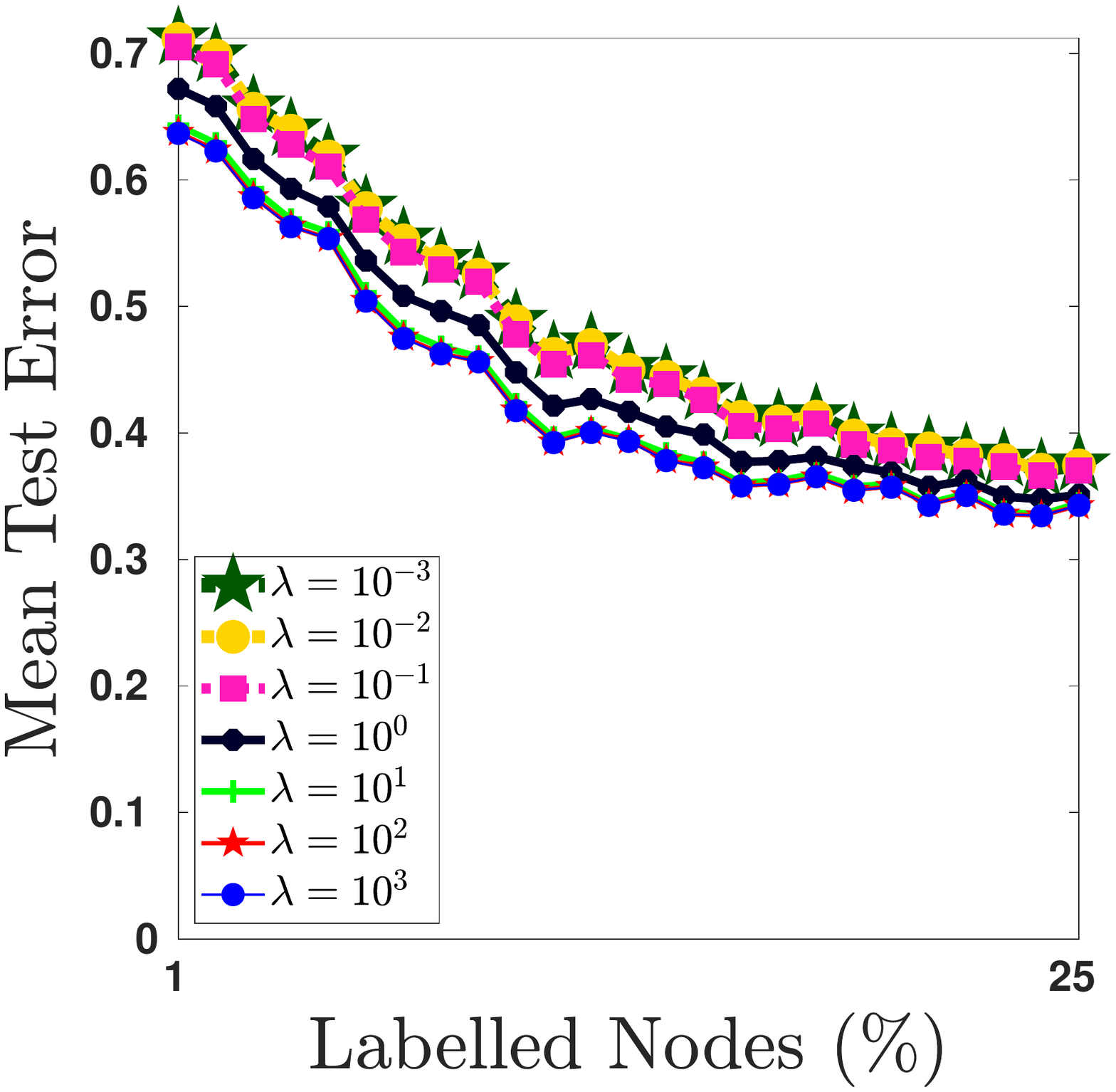}\hspace*{\fill}
  \caption{$L_{-5}$}
  \label{subfig:fig:SBM:diagonal_shift:fix_Wpos:L_{-5}}
  \end{subfigure}%
  \hfill
  \begin{subfigure}[]{0.24\linewidth}
    \addtocounter{subfigure}{-1}
  \renewcommand\thesubfigure{\alph{subfigure}4}
  \includegraphics[width=1\linewidth, clip,trim=110 40 155 40]{./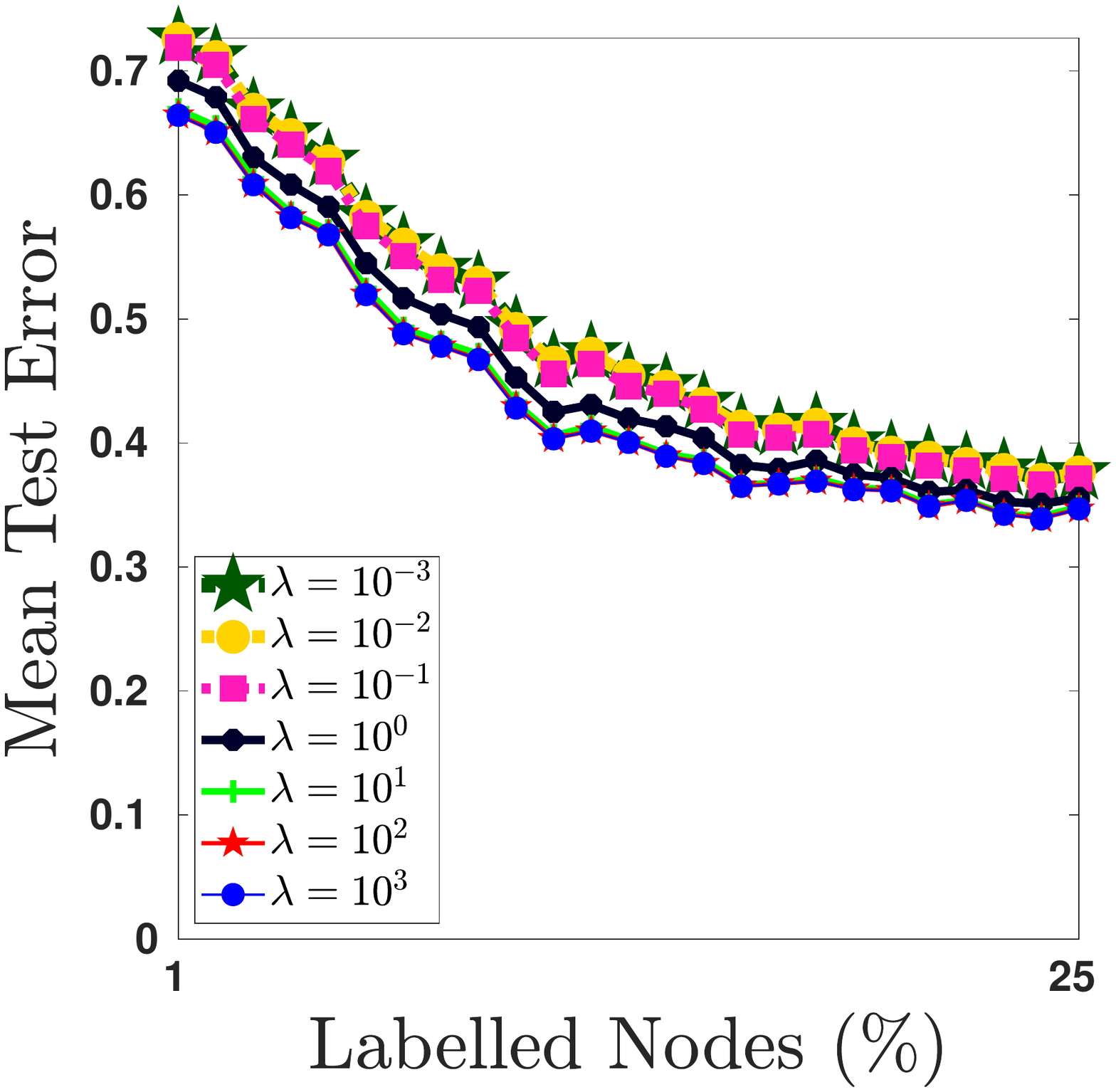}\hspace*{\fill}
  \caption{$L_{-10}$}
  \label{subfig:fig:SBM:diagonal_shift:fix_Wpos:L_{-10}}
  \end{subfigure}
  \setcounter{subfigure}{3}
  \caption{Dataset: Wikipedia}
\end{subfigure}  
\begin{subfigure}[t]{1\linewidth}
  \begin{subfigure}[]{0.24\linewidth}
  \renewcommand\thesubfigure{\alph{subfigure}1}
  \includegraphics[width=1\linewidth, clip,trim=110 40 155 40]{./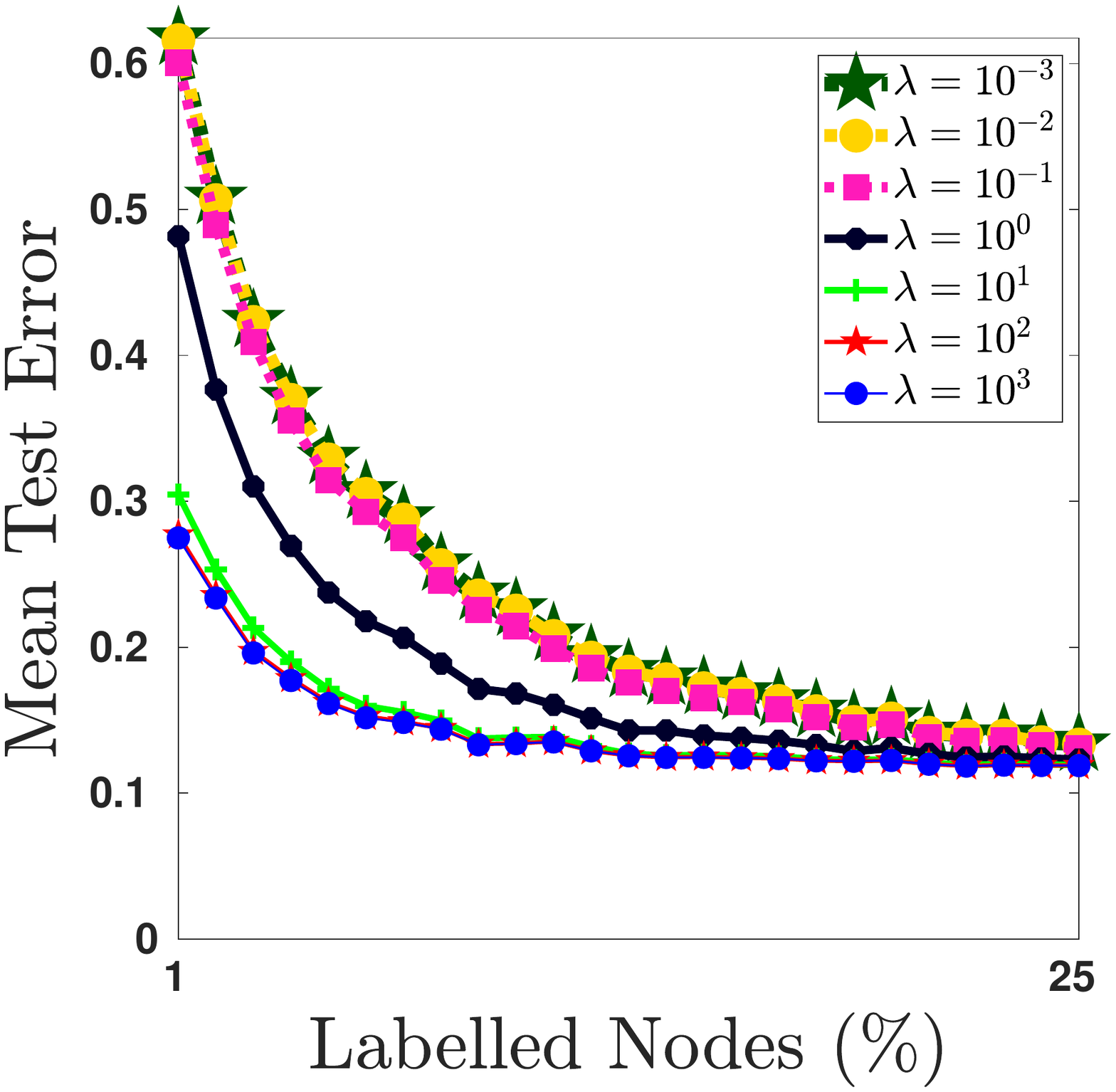}\hspace*{\fill}
  \caption{$L_{-1}$}
  \label{subfig:fig:SBM:diagonal_shift:fix_Wpos:L_{-1}}
  \end{subfigure}%
  \hfill
  \begin{subfigure}[]{0.24\linewidth}
  \addtocounter{subfigure}{-1}
  \renewcommand\thesubfigure{\alph{subfigure}2}
  \includegraphics[width=1\linewidth, clip,trim=110 40 155 40]{./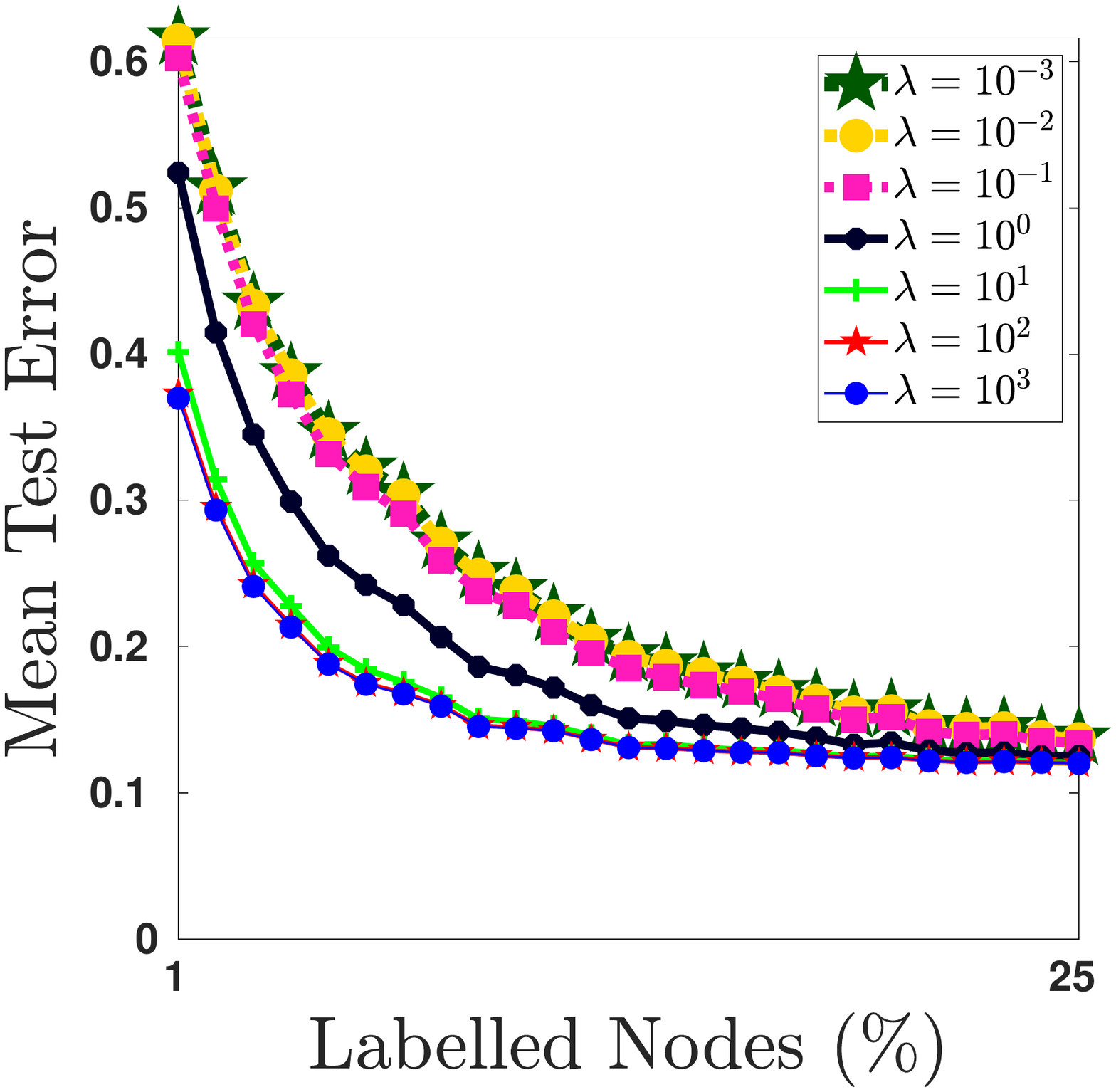}\hspace*{\fill}
  \caption{$L_{-2}$}
  \label{subfig:fig:SBM:diagonal_shift:fix_Wpos:L_{-2}}
  \end{subfigure}%
  \hfill
  \begin{subfigure}[]{0.24\linewidth}
    \addtocounter{subfigure}{-1}
  \renewcommand\thesubfigure{\alph{subfigure}3}
  \includegraphics[width=1\linewidth, clip,trim=110 40 155 40]{./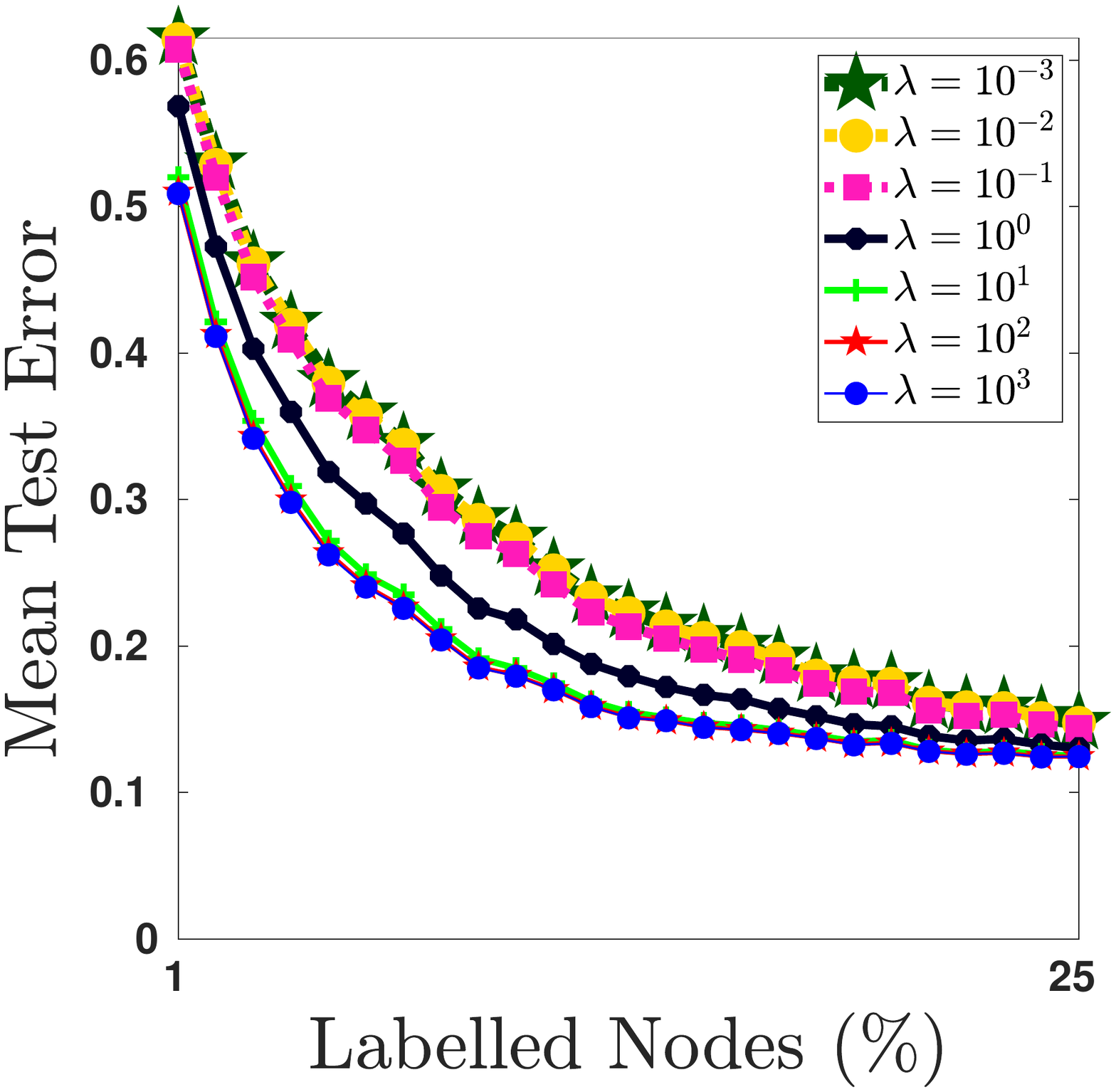}\hspace*{\fill}
  \caption{$L_{-5}$}
  \label{subfig:fig:SBM:diagonal_shift:fix_Wpos:L_{-5}}
  \end{subfigure}%
  \hfill
  \begin{subfigure}[]{0.24\linewidth}
    \addtocounter{subfigure}{-1}
  \renewcommand\thesubfigure{\alph{subfigure}4}
  \includegraphics[width=1\linewidth, clip,trim=110 40 155 40]{./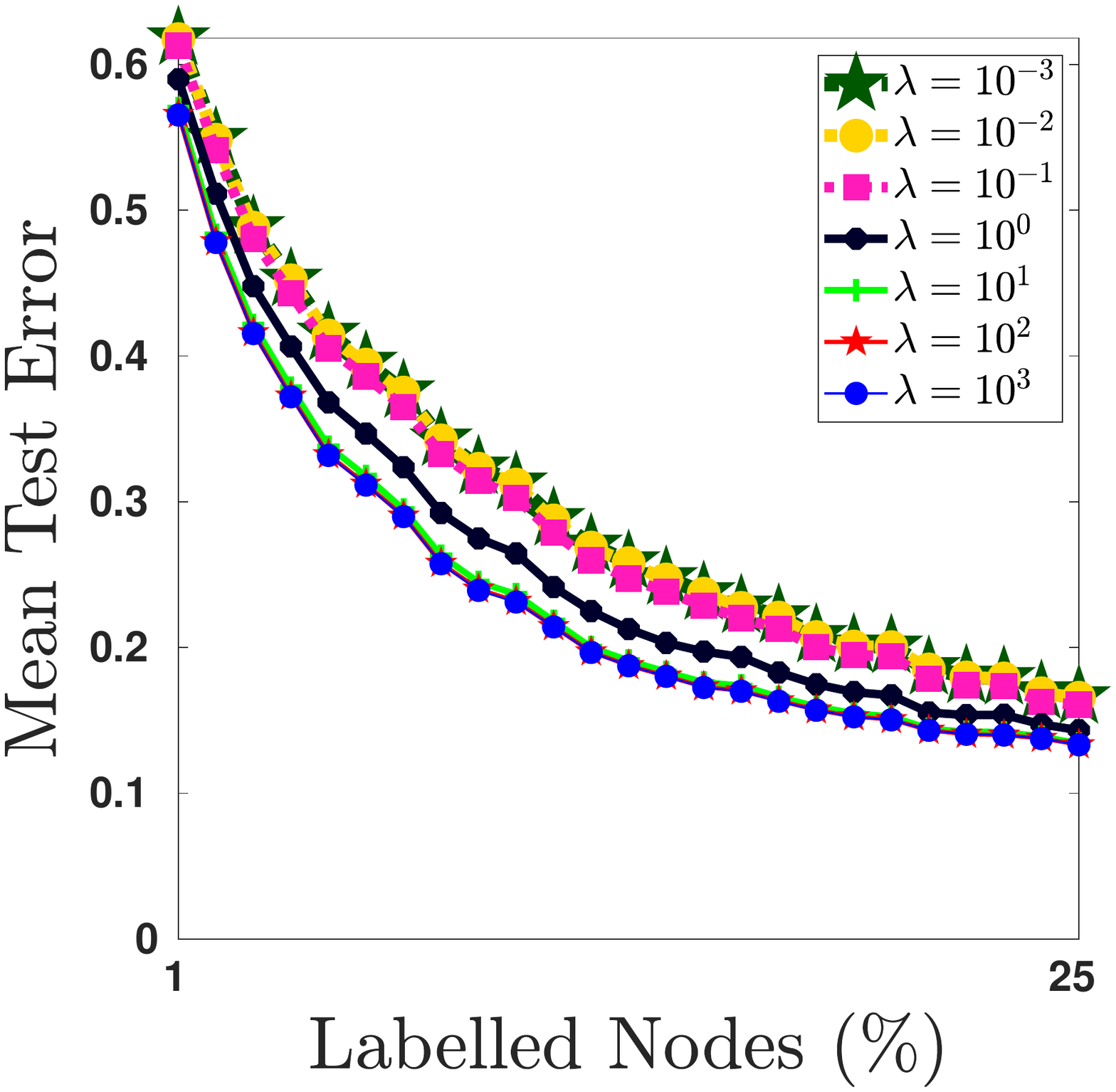}\hspace*{\fill}
  \caption{$L_{-10}$}
  \label{subfig:fig:SBM:diagonal_shift:fix_Wpos:L_{-10}}
  \end{subfigure}
  \setcounter{subfigure}{4}
  \caption{Dataset: UCI}
\end{subfigure}  
\end{figure*}
\newpage\clearpage
\begin{figure}[H]
\ContinuedFloat
\centering
\vskip.0em
%
\begin{subfigure}[t]{1\linewidth}
  \begin{subfigure}[]{0.24\linewidth}
  \renewcommand\thesubfigure{\alph{subfigure}1}
  \includegraphics[width=1\linewidth, clip,trim=110 40 155 40]{./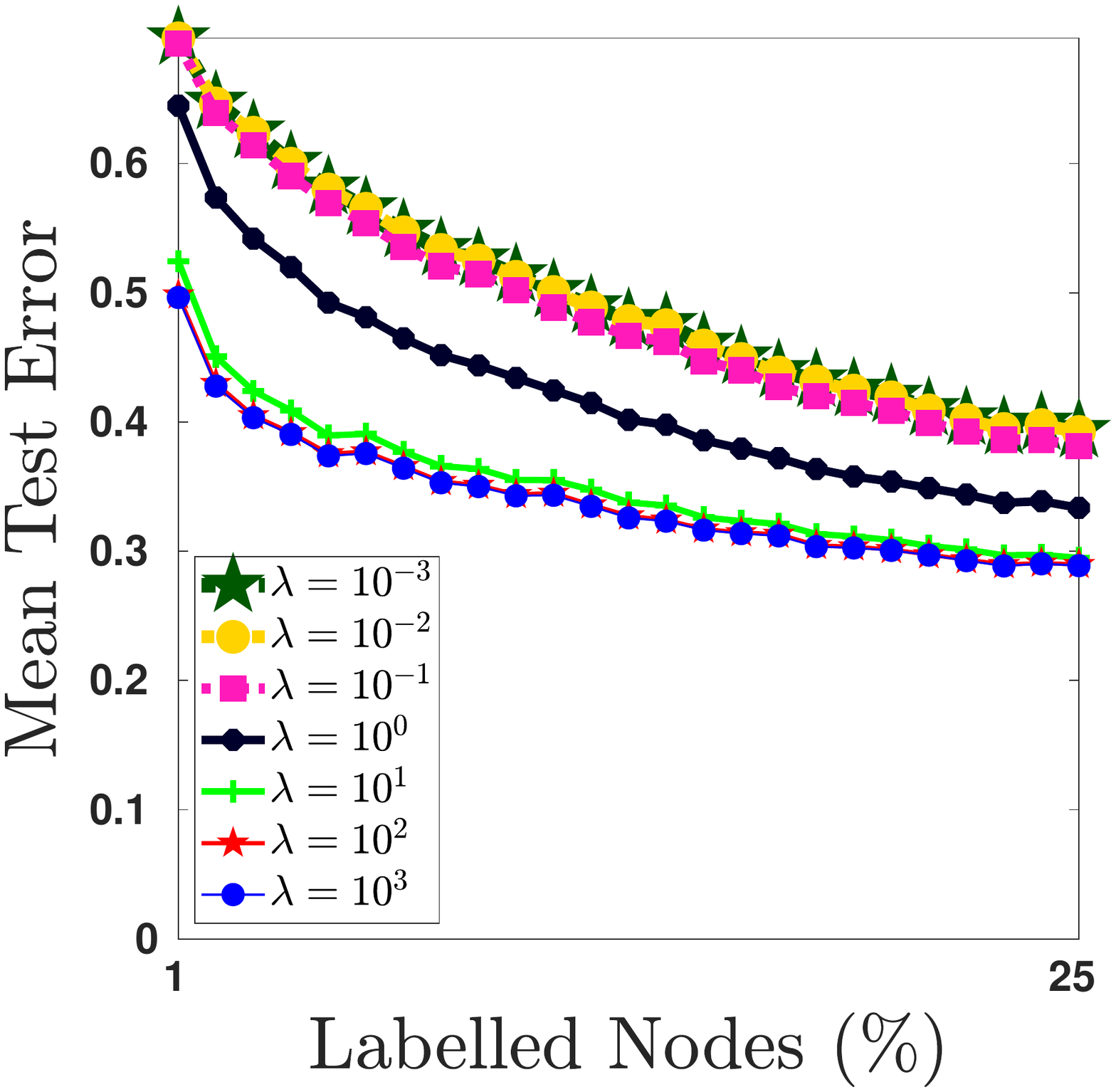}\hspace*{\fill}
  \caption{$L_{-1}$}
  \label{subfig:fig:SBM:diagonal_shift:fix_Wpos:L_{-1}}
  \end{subfigure}%
  \hfill
  \begin{subfigure}[]{0.24\linewidth}
  \addtocounter{subfigure}{-1}
  \renewcommand\thesubfigure{\alph{subfigure}2}
  \includegraphics[width=1\linewidth, clip,trim=110 40 155 40]{./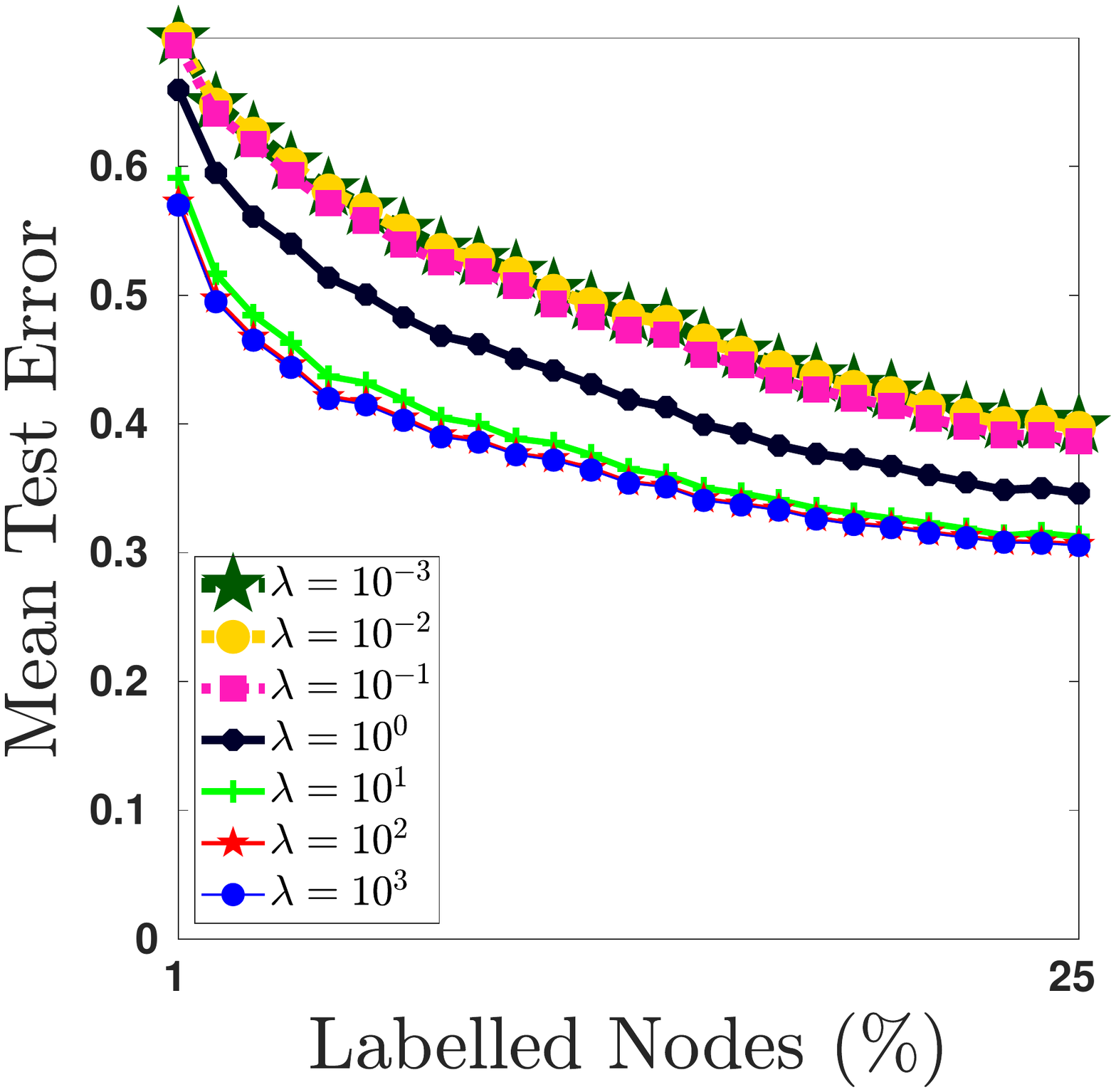}\hspace*{\fill}
  \caption{$L_{-2}$}
  \label{subfig:fig:SBM:diagonal_shift:fix_Wpos:L_{-2}}
  \end{subfigure}%
  \hfill
  \begin{subfigure}[]{0.24\linewidth}
    \addtocounter{subfigure}{-1}
  \renewcommand\thesubfigure{\alph{subfigure}3}
  \includegraphics[width=1\linewidth, clip,trim=110 40 155 40]{./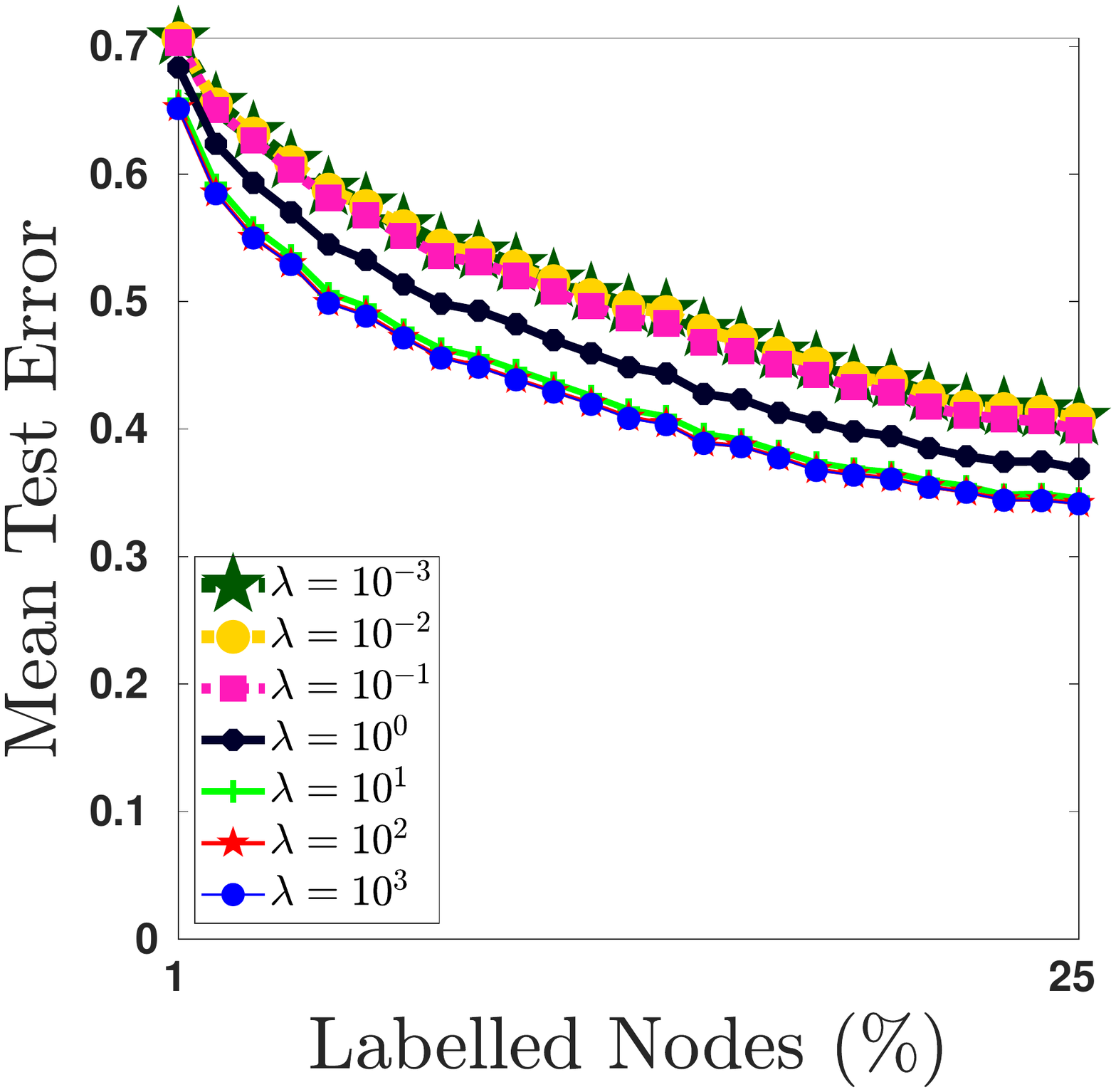}\hspace*{\fill}
  \caption{$L_{-5}$}
  \label{subfig:fig:SBM:diagonal_shift:fix_Wpos:L_{-5}}
  \end{subfigure}%
  \hfill
  \begin{subfigure}[]{0.24\linewidth}
    \addtocounter{subfigure}{-1}
  \renewcommand\thesubfigure{\alph{subfigure}4}
  \includegraphics[width=1\linewidth, clip,trim=110 40 155 40]{./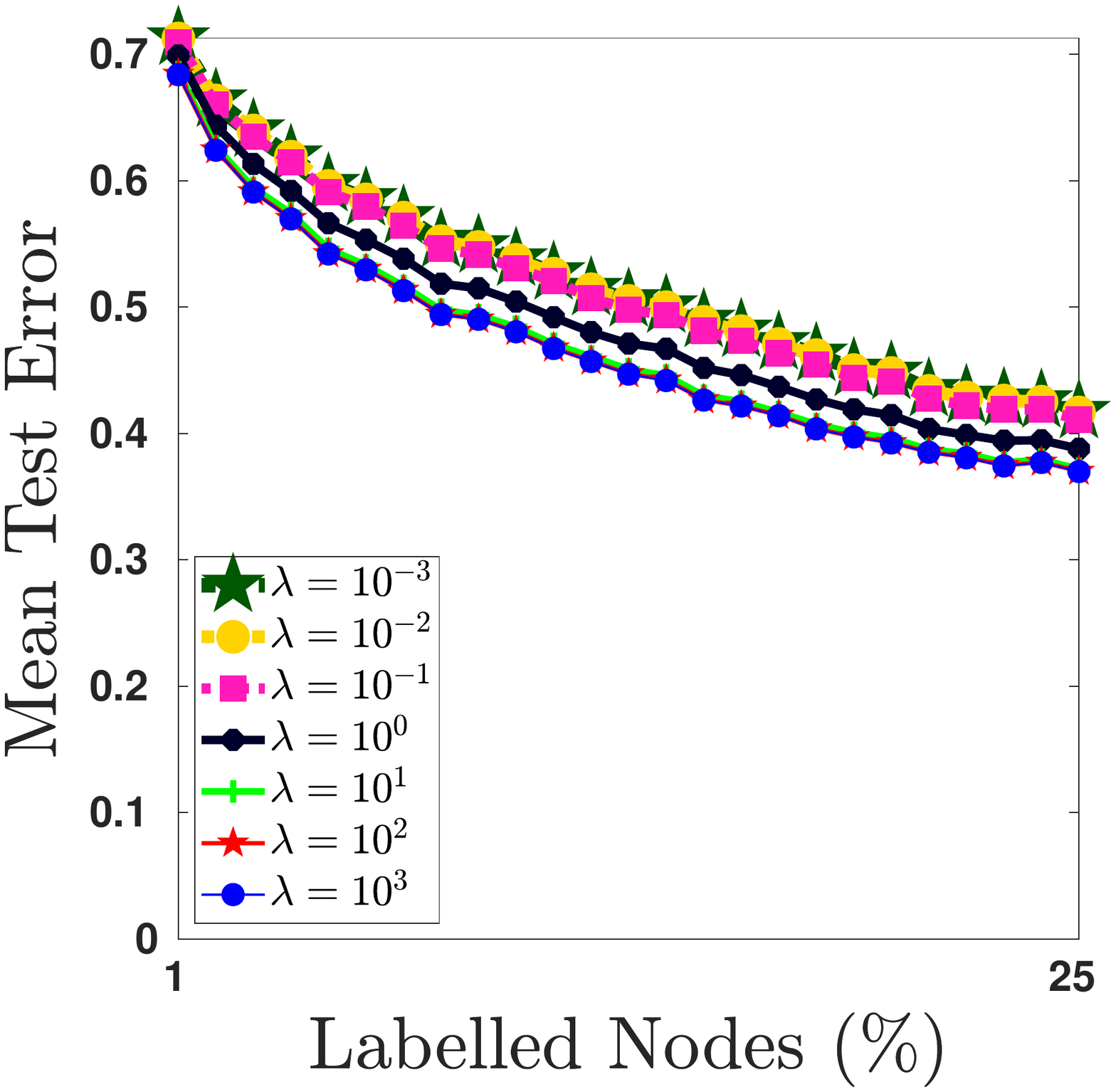}\hspace*{\fill}
  \caption{$L_{-10}$}
  \label{subfig:fig:SBM:diagonal_shift:fix_Wpos:L_{-10}}
  \end{subfigure}
  \setcounter{subfigure}{5}
  \caption{Dataset: Citeseer}
\begin{subfigure}[t]{1\linewidth}
  \begin{subfigure}[]{0.24\linewidth}
  \renewcommand\thesubfigure{\alph{subfigure}1}
  \includegraphics[width=1\linewidth, clip,trim=110 40 155 40]{./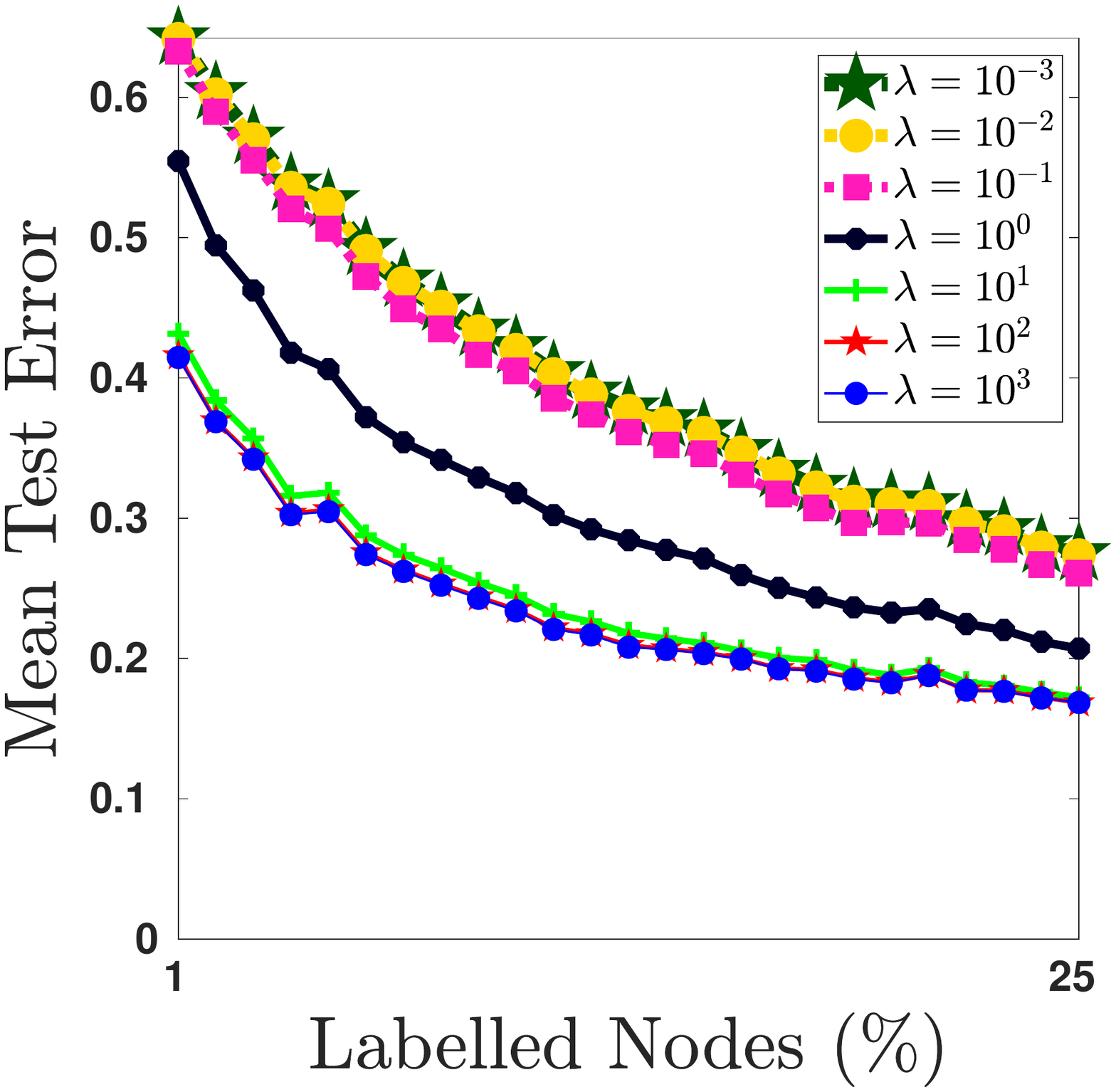}\hspace*{\fill}
  \caption{$L_{-1}$}
  \label{subfig:fig:SBM:diagonal_shift:fix_Wpos:L_{-1}}
  \end{subfigure}%
  \hfill
  \begin{subfigure}[]{0.24\linewidth}
  \addtocounter{subfigure}{-1}
  \renewcommand\thesubfigure{\alph{subfigure}2}
  \includegraphics[width=1\linewidth, clip,trim=110 40 155 40]{./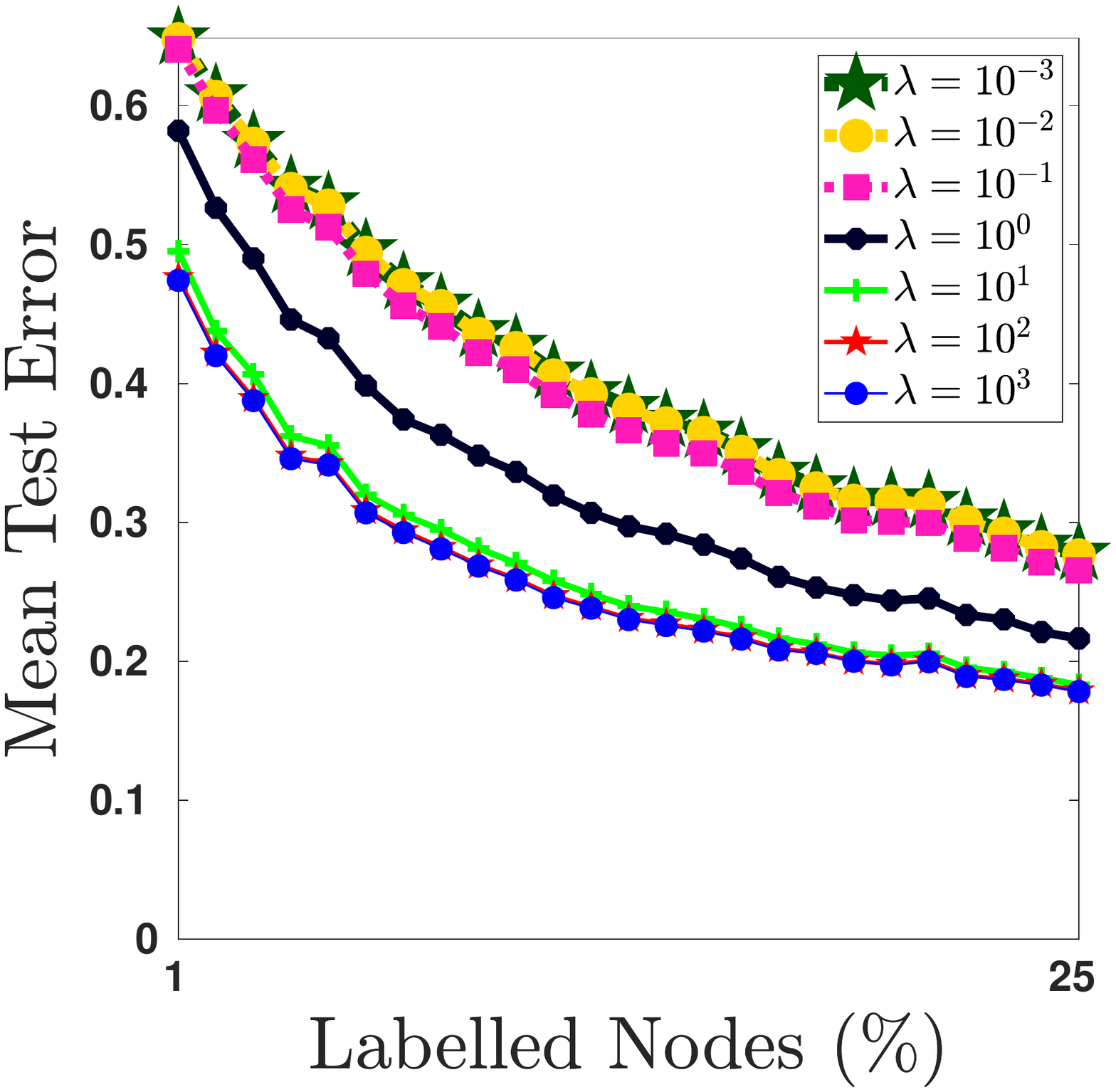}\hspace*{\fill}
  \caption{$L_{-2}$}
  \label{subfig:fig:SBM:diagonal_shift:fix_Wpos:L_{-2}}
  \end{subfigure}%
  \hfill
  \begin{subfigure}[]{0.24\linewidth}
    \addtocounter{subfigure}{-1}
  \renewcommand\thesubfigure{\alph{subfigure}3}
  \includegraphics[width=1\linewidth, clip,trim=110 40 155 40]{./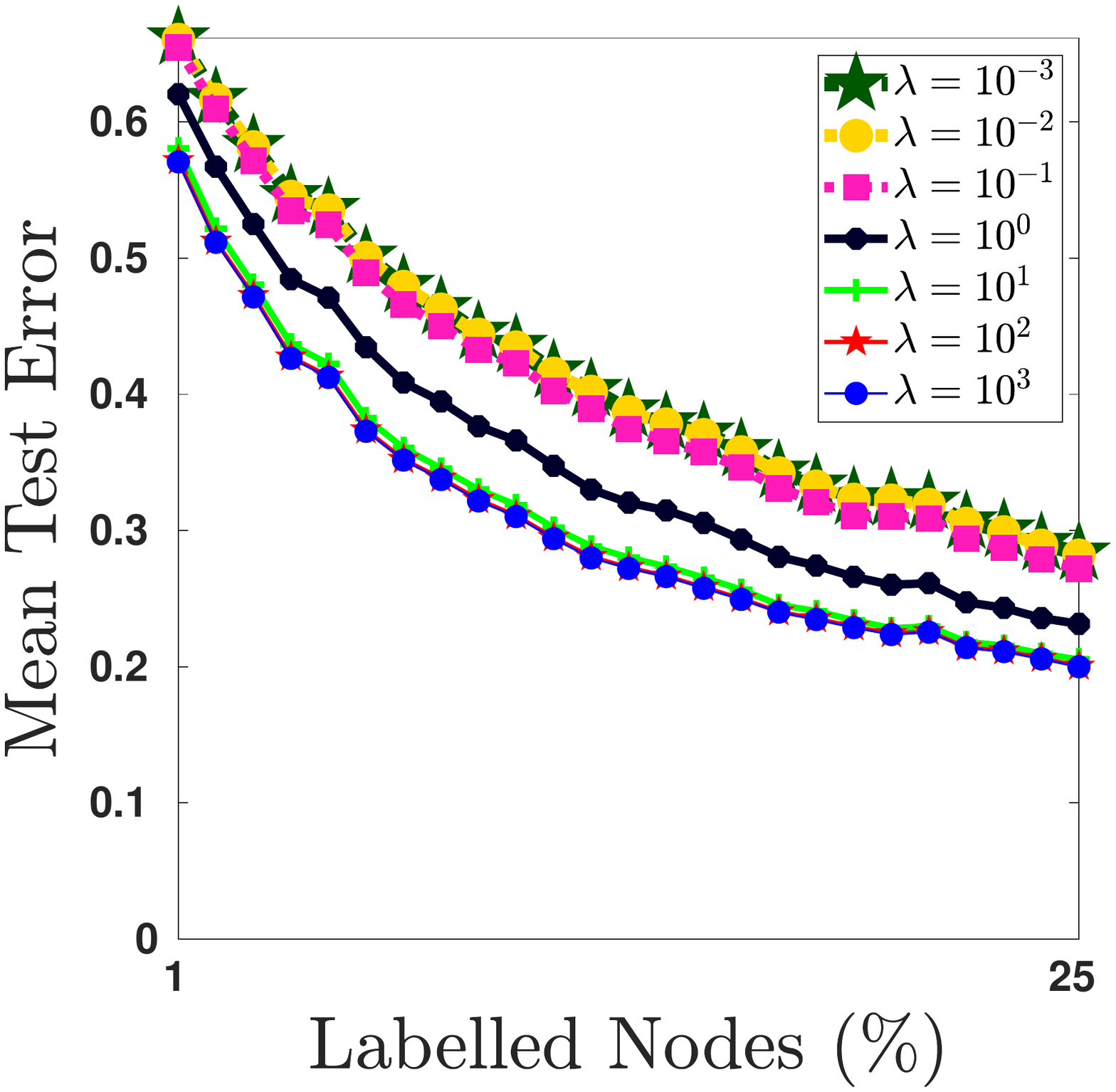}\hspace*{\fill}
  \caption{$L_{-5}$}
  \label{subfig:fig:SBM:diagonal_shift:fix_Wpos:L_{-5}}
  \end{subfigure}%
  \hfill
  \begin{subfigure}[]{0.24\linewidth}
    \addtocounter{subfigure}{-1}
  \renewcommand\thesubfigure{\alph{subfigure}4}
  \includegraphics[width=1\linewidth, clip,trim=110 40 155 40]{./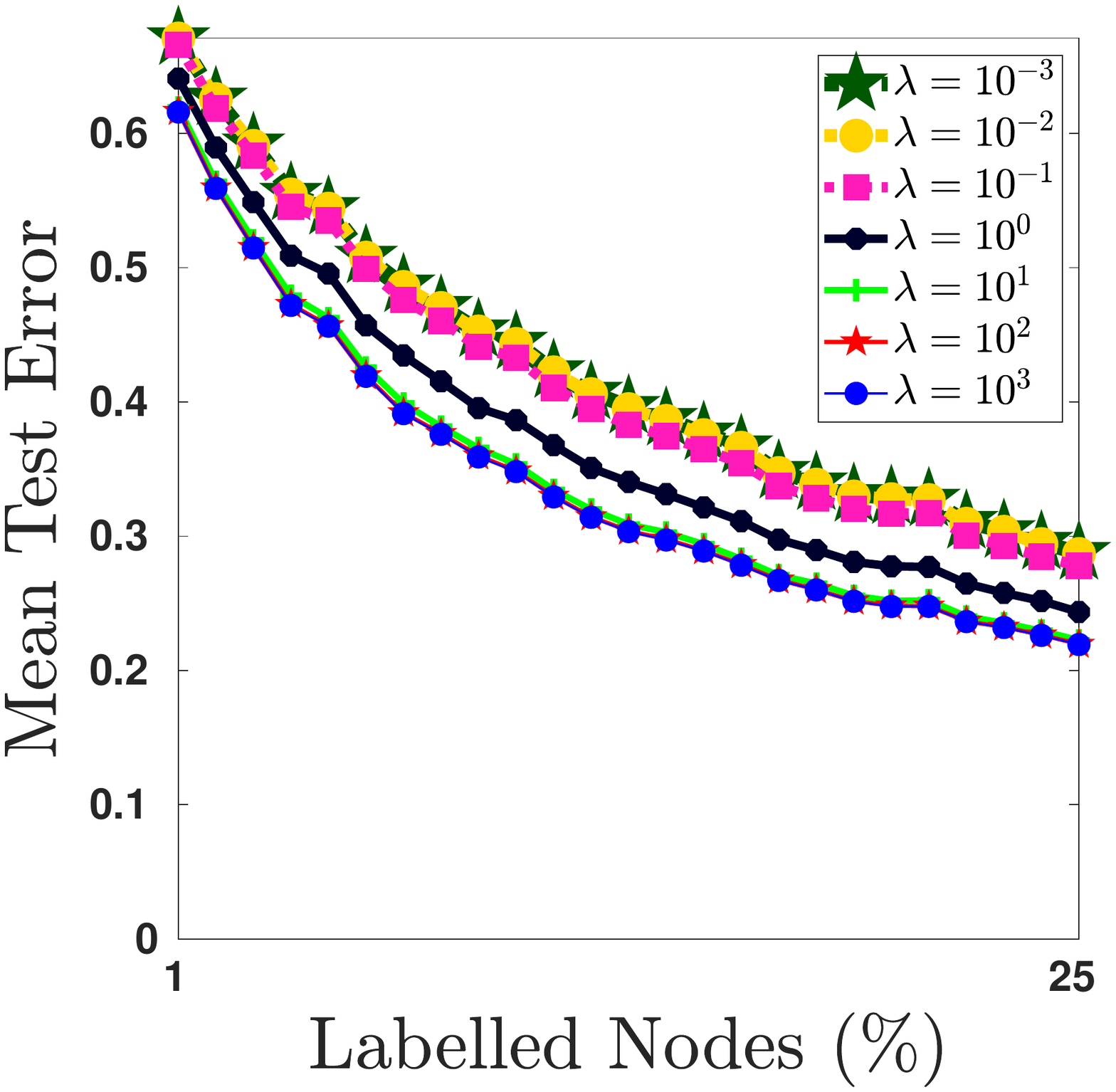}\hspace*{\fill}
  \caption{$L_{-10}$}
  \label{subfig:fig:SBM:diagonal_shift:fix_Wpos:L_{-10}}
  \end{subfigure}
  \setcounter{subfigure}{6}
  \caption{Dataset: Cora}
\end{subfigure}  
\end{subfigure}  
\begin{subfigure}[t]{1\linewidth}
  \begin{subfigure}[]{0.24\linewidth}
  \renewcommand\thesubfigure{\alph{subfigure}1}
  \includegraphics[width=1\linewidth, clip,trim=110 40 155 40]{./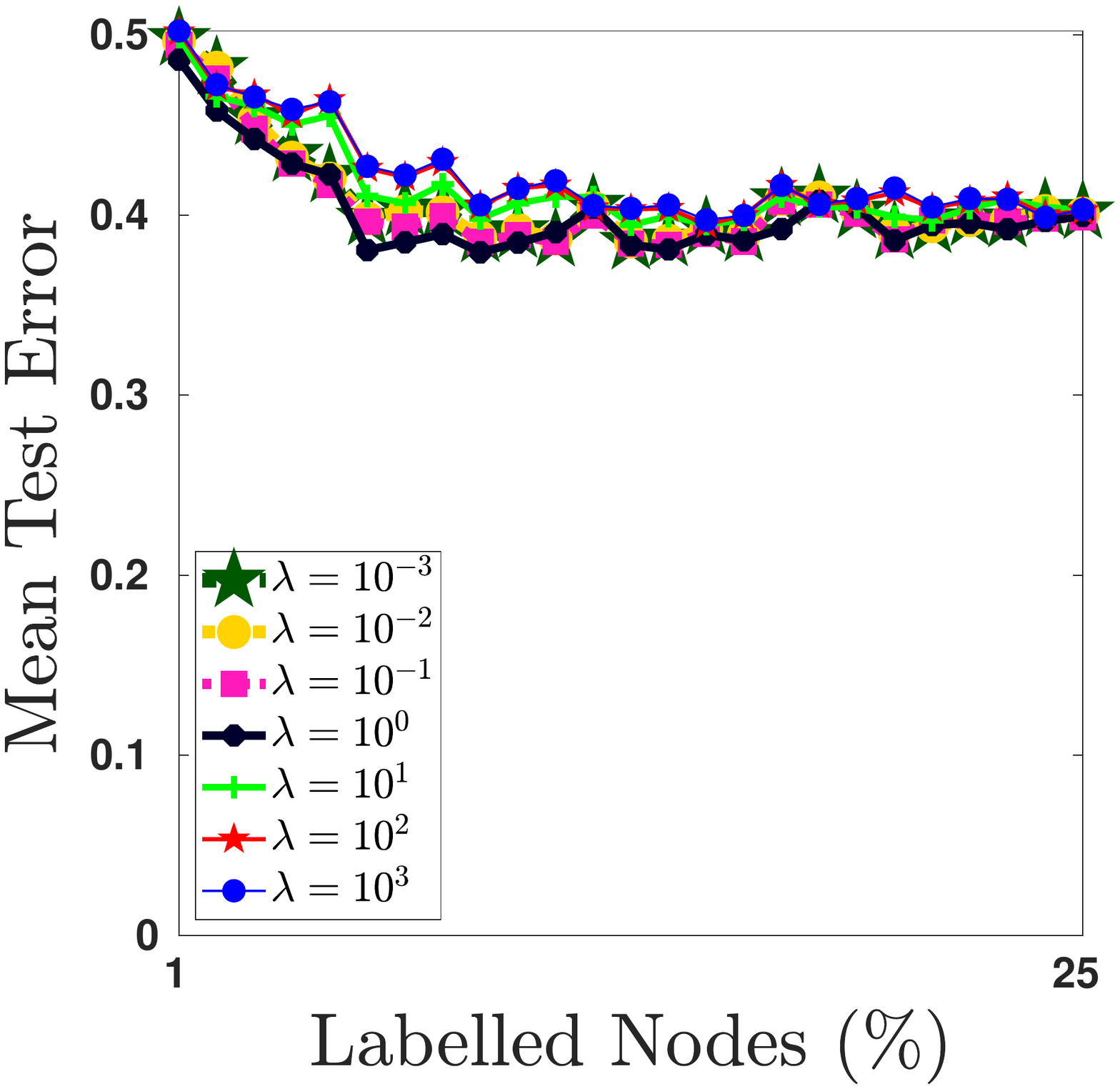}\hspace*{\fill}
  \caption{$L_{-1}$}
  \label{subfig:fig:SBM:diagonal_shift:fix_Wpos:L_{-1}}
  \end{subfigure}%
  \hfill
  \begin{subfigure}[]{0.24\linewidth}
  \addtocounter{subfigure}{-1}
  \renewcommand\thesubfigure{\alph{subfigure}2}
  \includegraphics[width=1\linewidth, clip,trim=110 40 155 40]{./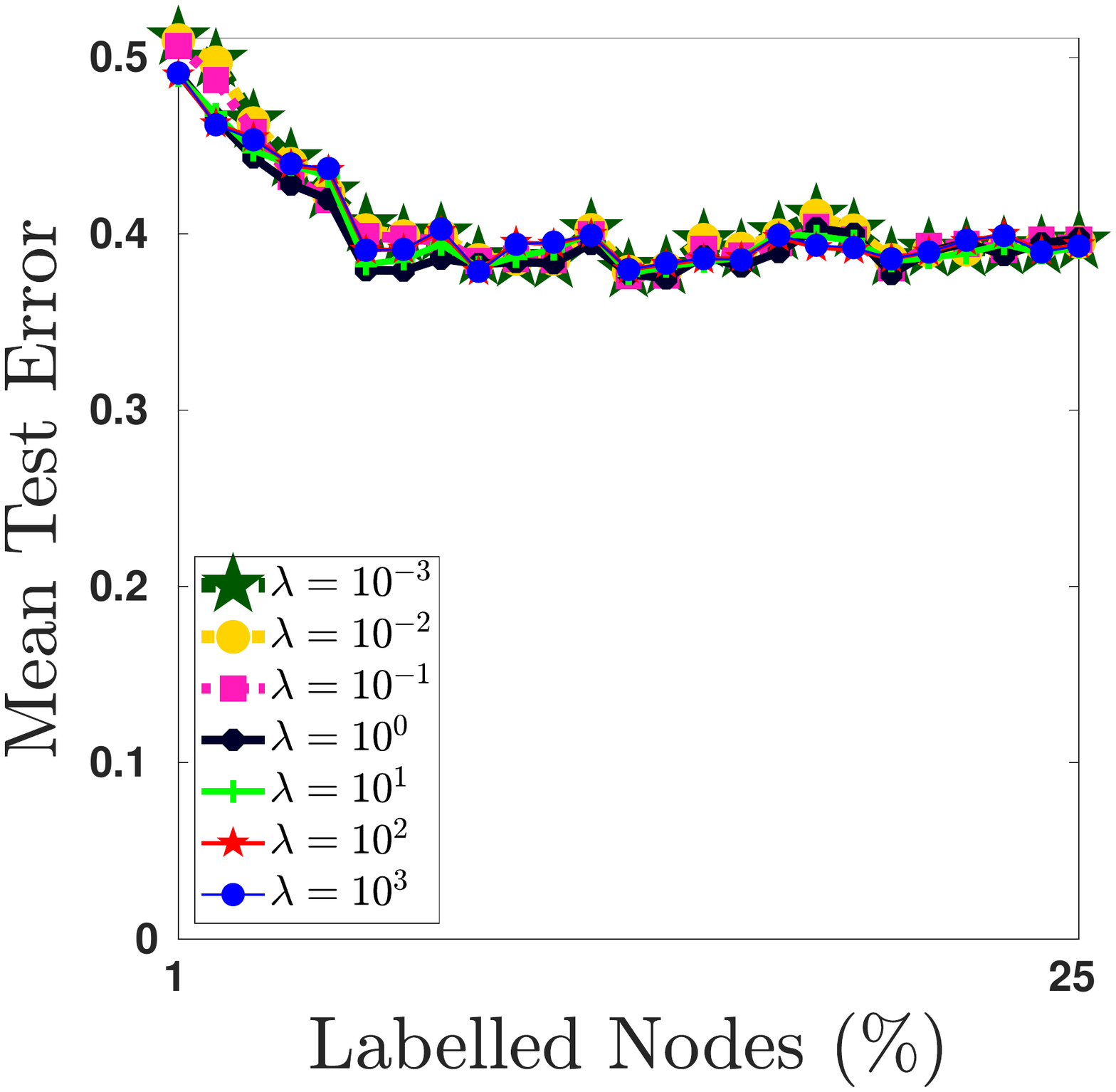}\hspace*{\fill}
  \caption{$L_{-2}$}
  \label{subfig:fig:SBM:diagonal_shift:fix_Wpos:L_{-2}}
  \end{subfigure}%
  \hfill
  \begin{subfigure}[]{0.24\linewidth}
    \addtocounter{subfigure}{-1}
  \renewcommand\thesubfigure{\alph{subfigure}3}
  \includegraphics[width=1\linewidth, clip,trim=110 40 155 40]{./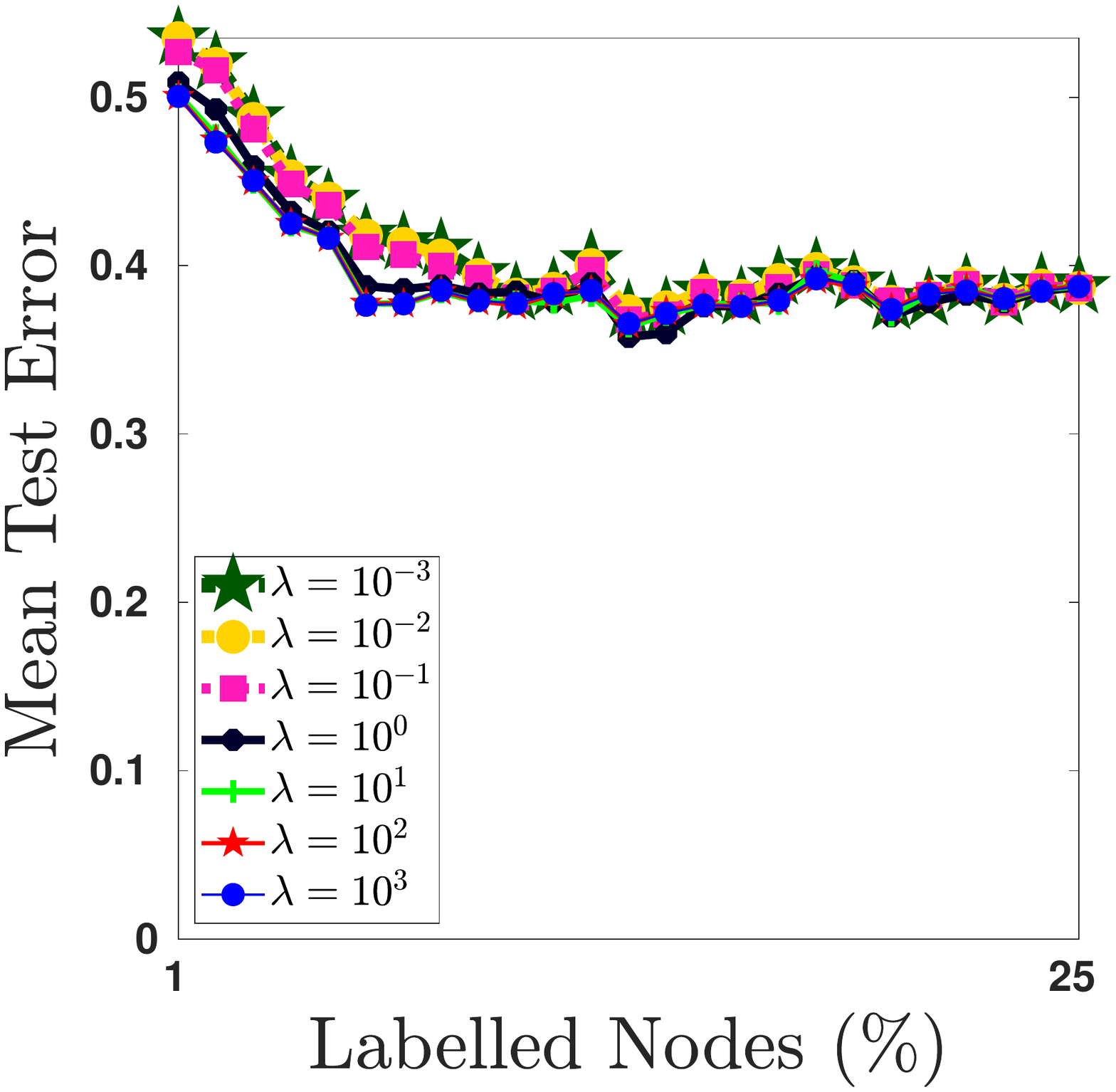}\hspace*{\fill}
  \caption{$L_{-5}$}
  \label{subfig:fig:SBM:diagonal_shift:fix_Wpos:L_{-5}}
  \end{subfigure}%
  \hfill
  \begin{subfigure}[]{0.24\linewidth}
    \addtocounter{subfigure}{-1}
  \renewcommand\thesubfigure{\alph{subfigure}4}
  \includegraphics[width=1\linewidth, clip,trim=110 40 155 40]{./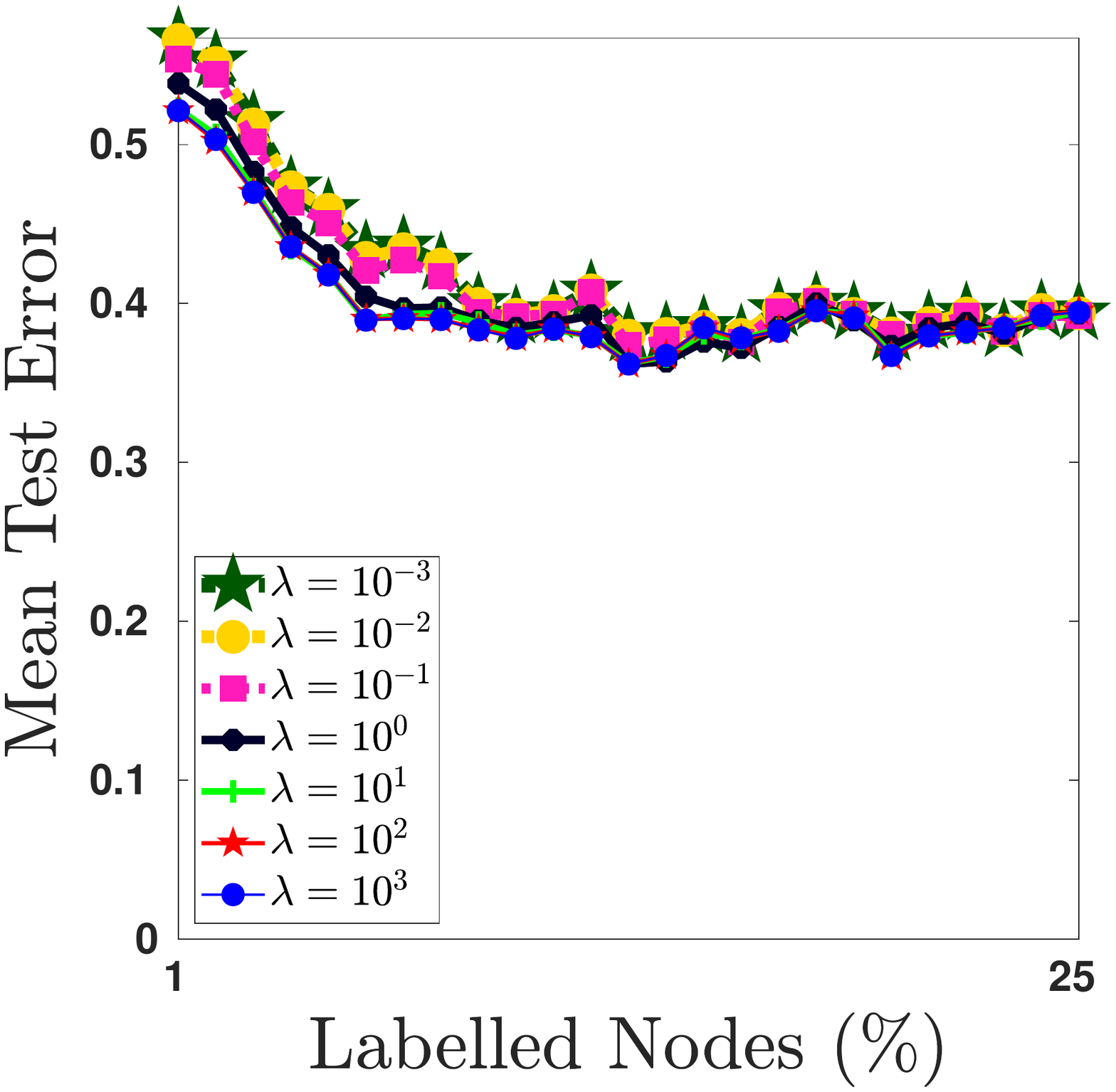}\hspace*{\fill}
  \caption{$L_{-10}$}
  \label{subfig:fig:SBM:diagonal_shift:fix_Wpos:L_{-10}}
  \end{subfigure}
  \setcounter{subfigure}{7}
  \caption{Dataset: WebKB}
\end{subfigure}  
%
\caption{
Mean test classification error on real world datasets for different values of $\lambda$. Details in Sec.~\ref{sec:OnLambda}.
}
\label{fig:RealDataSets:lambda_effect}
\end{figure}

\fi
\end{document}